\newcommand{\Regret}{\kR}
\newcommand{\EE}[1]{\bE\left[#1\right]}
\newcommand{\Prob}[1]{\bP\left\{#1\right\}}
\newcommand{\R}{\bR}
\newcommand{\one}[1]{\mathds{1}_{\left\{#1\right\}}}
\newcommand{\bsym}{\boldsymbol}
\newcommand{\ind}[1]{\mathbbmss{1}{\Lp #1 \Rp} }
\renewcommand{\phi}{\varphi}
\renewcommand{\epsilon}{\varepsilon}
\newcommand{\norm}[1]{\left\|#1\right\|}
\newcommand{\eq}[1]{ \begin{equation} #1  \end{equation}}
\newcommand{\als}[1]{ \begin{align*} #1  \end{align*}}
\newcommand{\eqs}[1]{ \begin{equation*} #1  \end{equation*}}
\newcommand{\Lp}{\left(}
\newcommand{\Rp}{\right)}
\newcommand{\Lb}{\left[}
\newcommand{\Rb}{\right]}
\newcommand{\el}{\end{flushleft}}
\newcommand{\bl}{\begin{flushleft}}
\newcommand{\argmax}{\arg\!\max}
\newcommand{\bE}{\mathbb{E}}
\newcommand{\bP}{\mathbb{P}}
\newcommand{\bR}{\mathbb{R}}
\newcommand{\cA}{\mathcal{A}}
\newcommand{\cC}{\mathcal{C}}
\newcommand{\cF}{\mathcal{F}}
\newcommand{\cL}{\mathcal{L}}
\newcommand{\cX}{\mathcal{X}}
\newcommand{\kR}{\mathfrak{R}}
\theoremstyle{plain}
\newtheorem{thm}{Theorem}
\newtheorem{lem}{Lemma}
\newtheorem{defi}{Definition}
\newtheorem{assu}{Assumption}
\title{
    Neural Dueling Bandits: Preference-Based Optimization with Human Feedback
}
\author{Arun Verma$^{1}$\thanks{Equal contribution and corresponding authors.}\hspace{1.6mm},
    Zhongxiang Dai$^{2\hspace{0.3mm}*}$, 
    Xiaoqiang Lin$^{3}$, \\
    \textbf{Patrick Jaillet}$^{4}$\textbf{,} 
    \textbf{Bryan Kian Hsiang Low}$^{1,3}$\\
    $^{1}$Singapore-MIT Alliance for Research and Technology, Republic of Singapore \\
    $^{2}$The Chinese University of Hong Kong, Shenzhen, China \\
    $^{3}$Department of Computer Science, National University of Singapore, Republic of Singapore\\
    $^{4}$LIDS and EECS, Massachusetts Institute of Technology, USA\\
    \texttt{arun.verma@smart.mit.edu}, ~~\texttt{daizhongxiang@cuhk.edu.cn}, \\ \texttt{xiaoqiang.lin@u.nus.edu},
    ~~\texttt{jaillet@mit.edu}, ~~\texttt{lowkh@comp.nus.edu.sg}
}
\begin{document} 
    \maketitle

    \begin{abstract}
        Contextual dueling bandit is used to model the bandit problems, where a learner's goal is to find the best arm for a given context using observed noisy human preference feedback over the selected arms for the past contexts. However, existing algorithms assume the reward function is linear, which can be complex and non-linear in many real-life applications like online recommendations or ranking web search results. To overcome this challenge, we use a neural network to estimate the reward function using preference feedback for the previously selected arms. We propose upper confidence bound- and Thompson sampling-based algorithms with sub-linear regret guarantees that efficiently select arms in each round. We also extend our theoretical results to contextual bandit problems with binary feedback, which is in itself a non-trivial contribution. Experimental results on the problem instances derived from synthetic datasets corroborate our theoretical results.
    \end{abstract}

    % Keywords:Contextual Dueling Bandits, Preferences Learning, Human Feedback, Neural Bandits, Thompson Sampling

    % TD;LR: We study contextual dueling bandits problem and propose upper confidence bound- and Thompson sampling-based algorithms that use a neural network to estimate the reward function using only preference feedback and have sub-linear regret guarantees.

    \section{Introduction}
    \label{sec:introduction}
    %!TEX root =  main.tex

Contextual dueling bandits (or preference-based bandits) \citep{NeurIPS21_saha2021optimal, ICML22_bengs2022stochastic, arXiv24_li2024feelgood} is a sequential decision-making framework that is widely used to model the contextual bandit problems \citep{WWW10_li2010contextual, AISTATS11_chu2011contextual, NIPS11_krause2011contextual, zhou2020neural, zhang2020neural} in which a learner's goal is to find an optimal arm by sequentially selecting a pair of arms (also refers as a \emph{duel}) and then observing noisy human preference feedback (i.e., one arm is preferred over another) for the selected arms.
Contextual dueling bandits has many real-life applications, such as online recommendation, ranking web search, fine-tuning large language models, and rating two restaurants or movies, especially in the applications where it is easier to observe human preference between two options (arms) than knowing the absolute reward for the selected option (arm). 
The preference feedback between two arms\footnote{For more than two arms, the preferences are assumed to follow the Plackett-Luce model \citep{NeurIPS21_saha2021optimal, ICML14_soufiani2014computing}.} is often assumed to follow the Bradley-Terry-Luce (BTL) model \citep{AS04_hunter2004mm, Book_luce2005individual,NeurIPS21_saha2021optimal,ICML22_bengs2022stochastic, arXiv24_li2024feelgood} in which the probability of preferring an arm is proportional to the exponential of its reward.

Since the number of contexts (e.g., users of online platforms) and arms (e.g., movies/search results to recommend) can be very large (or infinite), the reward of an arm is assumed to be parameterized by an unknown function, e.g., a linear function \citep{NeurIPS21_saha2021optimal, ICML22_bengs2022stochastic, arXiv24_li2024feelgood}. However, the reward function may not always be linear in practice. 
To overcome this challenge, this paper parameterizes the reward function via a non-linear function, which needs to be estimated using the available preference feedback for selected arms. To achieve this, we can estimate the non-linear function by using either a Gaussian processes \citep{Book_williams2006gaussian, ICML10_srinival2010gaussian, ICML17_chowdhury2017kernelized} or a neural network \citep{zhou2020neural, zhang2020neural}. 
However, due to the limited expressive power of the Gaussian processes, it fails when optimizing highly complex functions. In contrast, neural networks (NNs) possess strong expressive power and can model highly complex functions \citep{ICLR23_dai2022federated,ArXiv23_lin2023use}.

In this paper, we first introduce the problem setting of neural dueling bandits, in which we use a neural network to model the unknown reward function in contextual dueling bandits. 
As compared to the existing work on neural contextual bandits \citep{zhou2020neural,zhang2020neural}, we have to use cross-entropy loss as an objective function for training the neural network to estimate the unknown non-linear reward function due to the preference feedback (i.e., $0$/$1$). 
We next propose two neural dueling bandit algorithms based on, respectively, upper confidence bound (UCB) \citep{ML02_auer2002finite,WWW10_li2010contextual, AISTATS11_chu2011contextual,NIPS11_abbasi2011improved, ICML17_li2017provably,zhou2020neural} and Thompson sampling (TS) \citep{NIPS11_krause2011contextual,AISTATS13_agrawal2013further, ICML13_agrawal2013thompson,ICML17_chowdhury2017kernelized,zhang2020neural} (more details are in \cref{sec:algorithms}). 
Note that the existing contextual dueling bandit works \citep{NeurIPS21_saha2021optimal, ICML22_bengs2022stochastic, arXiv24_li2024feelgood} use different ways to select the pair of arms (more details are given in \cref{asec:algos_cdb}), hence leading to different arm-selection strategies as compared to our work. Due to the differences in arm-selection strategy and reward function (which is non-linear and estimated using NNs), our regret analysis is also completely different.

Furthermore, existing neural contextual bandit works use root mean square error (RMSE) as an objective function for training the neural networks due to the assumption of real-valued reward or feedback. Therefore, their key results, especially the confidence ellipsoid, only hold for RMSE and can not be straightforwardly extended to our setting, which uses cross-entropy loss.
Nevertheless, we derive an upper bound on the estimation error (i.e., represented as a confidence ellipsoid)  of \emph{the difference between the reward values of any pair of arms} (\cref{thm:confBound}) predicted by the trained neural network, which is valid as long as the neural network is sufficiently wide.
This result provides a theoretical assurance of the quality of our trained neural network using the preference feedback to minimize the cross-entropy loss.
Based on the theoretical guarantee on the estimation error, we derive upper bounds on the cumulative regret of both of our algorithms (\cref{theorem:regret:bound:ucb} and \cref{theorem:regret:bound:ts}), which are sub-linear under some mild conditions. 
Our regret upper bounds lead to several interesting and novel insights (more details are given in \cref{subsec:analysis}).

As a special case, we extend our results to neural contextual bandit problems with binary feedback in \cref{sec:neural_binary}, which is itself of independent interest.
Interestingly, our theoretical results also provide novel theoretical insights regarding the \emph{reinforcement learning with human feedback} (RLHF) algorithm (\cref{sec:connections:with:rlhf}). Specifically, our \cref{thm:confBound} naturally provides \emph{a theoretical guarantee on the quality of the learned reward model} in terms of its accuracy in estimating the reward differences between pairs of responses.
Finally, we empirically validate the different performance aspects of our proposed algorithms in \cref{sec:experiments} using problem instances derived from synthetic datasets.

    \section{Problem Setting}
    \label{sec:problem}
    %!TEX root =  main.tex

\textbf{Contextual dueling bandits.}~
We consider a contextual dueling bandit problem in which a learner selects two arms (also refers as a \emph{duel}) for a given context and observes preference feedback over selected arms. 
The learner's goal is to find the best arm for each context. 
Our problem differs from standard contextual bandits in which a learner selects a single arm and observes an absolute numerical reward for that arm.
Let $\cC \subset \R^{d_c}$ be the context set and $\cA \subset \R^{d_a}$ be finite arm set, where $d_c \ge 1$ and $d_a \ge 1$.
At the beginning of round $t$, the environment generates a context $c_t \in \cC$ and the learner selects two arms (i.e., $a_{t,1}$, and $a_{t,2}$) from the finite arm set $\cA$.
After selecting two arms, the learner observes stochastic preference feedback $y_t$ for the selected arms, where $y_t = 1$ implies the arm $a_{t,1}$ is preferred over arm $a_{t,2}$ and $y_t = 0$ otherwise.
We assume that the preference feedback depends on an unknown non-linear reward function $f: \cC \times \cA \rightarrow \R$. 
For brevity, we denote the set of all context-arm feature vectors in the round $t$ by $\cX_t$.
We also use $\mathcal{X}$ to denote the set of all feature vectors: $\mathcal{X}_t\subset\mathcal{X},\forall t$ and $x_{t,a}$ to represent the context-arm feature vector for context $c_t$ and an arm $a$.

\textbf{Stochastic preference model.}~
We assume the preference has a Bernoulli distribution that follows the Bradley-Terry-Luce (BTL) model \citep{AS04_hunter2004mm, Book_luce2005individual}, which is commonly used in the dueling bandits \citep{NeurIPS21_saha2021optimal, ICML22_bengs2022stochastic, arXiv24_li2024feelgood}. 
Under the BTL preference model, the probability that the first selected arm $(x_{t,1})$ is preferred over the second selected arm $(x_{t,2})$ for the the given context $c_t$ and latent reward function $f$ is given by
\[
    \Prob{x_{t,1} \succ x_{t,2}} = \Prob{y_t = 1| x_{t,1}, x_{t,2}} = \frac{\exp\Lp f(x_{t,1}) \Rp}{\exp\Lp f(x_{t,1}) \Rp + \exp\Lp f(x_{t,2}) \Rp} = \mu\Lp f(x_{t,1}) - f(x_{t,2}) \Rp.
\]
where $x_{t,1} \succ  x_{t,2}$ denotes that $x_{t,1}$ is preferred over $x_{t,2}$, $\mu(x) = 1/(1 + \mathrm{e}^{-x} )$ is the sigmoid function and $f(x_{t,i})$ is the latent reward of the $i$-th selected arm. 
Our results hold for other preference models like the Thurstone-Mosteller model and Exponential Noise as long as the stochastic transitivity holds \citep{ICML22_bengs2022stochastic}. To generalize our results across preference models, we make the following assumptions on function $\mu$ (also known as a {\em link function} \citep{ICML17_li2017provably,ICML22_bengs2022stochastic}):
\begin{assu}
\label{assup:link:function}
	\begin{itemize}
		\setlength{\itemsep}{3pt}
		\setlength{\parskip}{0pt}
		\item $\kappa_\mu \doteq \inf_{x,x' \in \cX} \dot{\mu}(f(x) - f(x')) > 0$ for all pairs of context-arm.
		\item The link function $\mu : \R \rightarrow [0,1]$ is continuously differentiable and Lipschitz with constant $L_\mu$. For sigmoid function, $L_\mu \le 1/4$.
	\end{itemize}	
\end{assu}

\vspace{-0.5mm}
\textbf{Performance measure.}~
After selecting two arms, denoted by $x_{t,1}$ and $x_{t,2}$, in round $t$, the learner incurs an instantaneous regret. 
There are two common notions of instantaneous regret in the dueling bandits setting \citep{NeurIPS21_saha2021optimal, ICML22_bengs2022stochastic, arXiv24_li2024feelgood}, i.e., average instantaneous regret: $r_t^a \doteq f(x_t^\star) - \Lp{f(x_{t,1}) + f(x_{t,2})}\Rp/{2}$, and weak instantaneous regret: $r_t^w \doteq f(x_t^\star) - \max\left\{f(x_{t,1}), f(x_{t,2})\right\}$, where $x_t^\star = \argmax_{x \in \cX_t} f(x)$ denotes the best arm for a given context that maximizes the value of the underlying reward function. After observing preference feedback for $T$ pairs of arms, the \emph{cumulative regret} (or regret, in short) of a sequential policy is given by 
$
	\Regret_T^\tau = \sum_{t=1}^T r_t^\tau,
$
where $\tau \doteq \{a,w\}$. Note that $	\Regret_T^w  \le \Regret_T^a$.
Any good policy should have sub-linear regret, i.e., $\lim_{T \to \infty}{\Regret_T^\tau}/T = 0.$
A policy with a sub-linear regret implies that the policy will eventually find the best arm and recommend only the best arm in the duel for the given contexts.

    \section{Neural Dueling Bandits}
    \label{sec:neural_db}
    %!TEX root =  main.tex

Having a good reward function estimator is the key for any contextual bandit algorithm to achieve good performance, i.e., smaller regret. 
As the underlying reward function is complex and non-linear, we use fully connected neural networks \citep{zhou2020neural, zhang2020neural} to estimate the reward function only using the preference feedback. 
Using this estimated reward function, we propose two algorithms based on the UCB and TS with sub-linear regret guarantees.

\subsection{Reward Function Estimation Using Neural Network}
\label{subsec:reward:estimation:nn}
To estimate the unknown reward function $f$, we use a fully connected neural network (NN) with depth $L \ge 2$, the width of hidden layer $m$, and ReLU activations as done in \cite{zhou2020neural} and \cite{zhang2020neural}. 
Let $h(x;\theta)$ represent the output of a full-connected neural network with parameters $\theta$ for context-arm feature vector $x$, which is defined as follows:
\[
    h(x;\theta) = \bsym{W}_L \text{ReLU}\Lp \bsym{W}_{L-1} \text{ReLU}\Lp\cdots \text{ReLU}\Lp\bsym{W}_1 x\Rp \Rp \Rp,
\]
where $\text{ReLU}(x) \doteq \max\{ x, 0 \}$, $\bsym{W}_1 \in \R^{m \times d}$, $\bsym{W}_l \in \R^{m \times m}$ for $2 \le l < L$, $\bsym{W}_L \in \R^{m \times 1}$. 
We denote the parameters of NN by $\theta = \Lp \text{vec}\Lp \bsym{W}_1 \Rp;\cdots \text{vec}\Lp \bsym{W}_L \Rp \Rp$, where $\text{vec}\Lp A \Rp$ converts a $M \times N$ matrix $A$ into a $MN$-dimensional vector.
We use $m$ to denote the width of every layer of the NN, use $p$ to represent the total number of NN parameters, i.e., $p = dm + m^2(L-1) + m$, and use $g(x;\theta)$ to denote the gradient of $h(x;\theta)$ with respect to $\theta$.

The arms selected by the learner for context received in round $s$ is denoted by $x_{s,1}, x_{s,2} \in \cX_s$ and the observed stochastic preference feedback is denoted by $y_s = \ind{x_{s,1}\succ x_{s,2}}$, which is equal to $1$ if the arm $x_{s,1}$ is preferred over the arm $x_{s,2}$ and $0$ otherwise. 
At the beginning of round $t$, we use the current history of observations $\left\{(x_{s,1}, x_{s,2}, y_s)\right\}_{s=1}^{t-1}$ to train the neural network (NN) using gradient descent to minimize the following loss function:
\eq{
    \cL_t(\theta) = - \frac{1}{m} \sum^{t-1}_{s=1} \Big[ \log \mu \left( (-1)^{1-y_s}\left[ h(x_{s,1};\theta) - h(x_{s,2};\theta) \right] \right)  \Big] +  \frac{1}{2}\lambda \norm{\theta  - \theta_0}^2_{2},
    \label{eq:customized:loss:function}
}
Here $\theta_0$ represents the initial parameters of the NN, and we initialize $\theta_0$ following the standard practice of neural bandits \citep{zhou2020neural, zhang2020neural} (refer to Algorithm 1 in \cite{zhang2020neural} for details).
Here, minimizing the first term in the loss function (i.e., the term involving the summation from $t-1$ terms) corresponds to the maximum log likelihood estimate (MLE) of the parameters $\theta$. 
Next, we develop algorithms that use the trained NN with parameter $\theta_t$ to select the best arms (duel) for each context.

\subsection{Neural Dueling Bandit Algorithms}
\label{sec:algorithms}
With the trained NN as an estimate for the unknown reward function, the learner has to decide which two arms (or duel) must be selected for the subsequent contexts. We use UCB- and TS-based algorithms that handle the exploration-exploitation trade-off efficiently.

\textbf{UCB-based algorithm.}~
Using upper confidence bound for dealing with the exploration-exploitation trade-off is common in many sequential decision-making problems \citep{ICML22_bengs2022stochastic, zhou2020neural, ML02_auer2002finite}. We propose a UCB-based algorithm named \ref{alg:NDB-UCB}, which works as follows: At the beginning of the round $t$, the algorithm trains the NN using available observations. 
After receiving the context, it selects the first arm greedily (i.e., by maximizing the output of the trained NN with parameter $\theta_t$) as follows:
\eq{
    x_{t,1} = \arg\max_{x\in\mathcal{X}_t} h(x;\theta_t).
    \label{eq:choose:arm:1}
}
\begin{algorithm}[!ht]
	\renewcommand{\thealgorithm}{\bf NDB-UCB}
	\floatname{algorithm}{}
	\caption{Algorithm for Neural Dueling Bandit based on Upper Confidence Bound}
	\label{alg:NDB-UCB}
	\begin{algorithmic}[1]
		\STATE \hspace{-0.44cm}\textbf{Tuning parameters:} $\delta \in (0,1)$, $\lambda > 0$, and $m>0$
		\FOR{$t= 1, \ldots, T$}
            \STATE Train the NN using $\left\{(x_{s,1}, x_{s,2}, y_s)\right\}_{s=1}^{t-1}$ by minimizing the loss function defined in \cref{eq:customized:loss:function}
            \STATE Receive a context and $\mathcal{X}_t$ denotes the corresponding context-arm feature vectors
            \STATE Select $x_{t,1} = \arg\max_{x\in\mathcal{X}_t} h(x;\theta_t)$ as given in \cref{eq:choose:arm:1})
            \STATE Select $x_{t,2} = \arg\max_{x\in\mathcal{X}_t} \Lb h(x;\theta_t) + \nu_T \sigma_{t-1}(x,x_{t,1}) \Rb$ (as given in \cref{eq:choose:arm:2})
		\STATE Observe preference feedback $y_t = \one{x_{t,1} \succ x_{t,2}}$
		\ENDFOR
	\end{algorithmic}
\end{algorithm}
Next, the second arm $x_{t,2}$ is selected optimistically, i.e., by maximizing the UCB value: 
\eq{
    x_{t,2} = \arg\max_{x\in\mathcal{X}_t} \Lb h(x;\theta_t) + \nu_T \sigma_{t-1}(x,x_{t,1}) \Rb,
    \label{eq:choose:arm:2}
}
where $\nu_T \doteq (\beta_T + B \sqrt{\lambda / \kappa_\mu} + 1) \sqrt{\kappa_\mu / \lambda}$ in which $\beta_T \doteq \frac{1}{\kappa_\mu} \sqrt{ \widetilde{d} + 2\log(1/\delta)}$ and $\widetilde{d}$ is the \emph{effective dimension}. We define the effective dimension in \cref{subsec:analysis} (see \cref{eq:eff:dimension}). 
We define 
\eqs{
    \sigma_{t-1}^2(x_{1},x_{2}) \doteq \frac{\lambda}{\kappa_\mu} \norm{\frac{1}{\sqrt{m}}(\phi(x_{1}) - \phi(x_{2}))}^2_{V_{t-1}^{-1}},
}
where $V_t \doteq \sum_{s=1}^t \phi'(x_s) \phi'(x_s)^\top \frac{1}{m} + \frac{\lambda}{\kappa_{\mu}} \mathbf{I}$. Here, $\phi'(x_s) \doteq \phi(x_{s,1}) - \phi(x_{s,2}) = g(x_{s,1};\theta_0)-g(x_{s,2};\theta_0)$ and $g(x;\theta_0) / \sqrt{m}$ is used as the random features approximation for context-arm feature vector $x$.
Intuitively, after the first arm $x_{t,1}$ is selected, \emph{a larger $\sigma_{t-1}^2(x,x_{t,1})$ indicates that $x$ is very different from $x_{t,1}$ given the information of the previously selected pairs of arms.}
Hence, the second term in \cref{eq:choose:arm:2} encourages the second selected arm to be different from the first arm.

\paragraph{TS-based algorithm.}
Thompson sampling \citep{arXiv24_li2024feelgood,ICML13_agrawal2013thompson} selects an arm according to its probability of being the best. 
Many works \citep{arXiv24_li2024feelgood,ICML17_chowdhury2017kernelized,ICML13_agrawal2013thompson,NIPS11_chapelle2011empirical} have shown that TS is empirically superior than to its counterpart UCB-based bandit algorithms. 
Therefore, in addition, we also propose another algorithm based on TS named \textbf{NDB-TS}, which works similarly to \ref{alg:NDB-UCB} except that the second arm $x_{t,2}$ is selected differently. 
To select the second arm $x_{t,2}$, for every arm $x\in\mathcal{X}_t$, it firstly samples a reward $r_t(x) \sim \mathcal{N}\left(h(x;\theta_t) - h(x_{t,1};\theta_t), \nu_T^2 \sigma_{t-1}^2(x,x_{t,1}) \right)$ and then selects the second arm as $x_{t,2} = {\arg\max}_{x\in\mathcal{X}_t} {r}_t(x)$.

\subsection{Regret analysis}
\label{subsec:analysis}
Let $K$ denote the finite number of available arms in each round, $\mathbf{H}$ denote the NTK matrix for all $T\times K$ context-arm feature vectors in the $T$ rounds, and $h = \Lp f(x_1^1), \ldots, f(x_T^K) \Rp$. The NTK matrix $\mathbf{H}$ definition is adapted to our setting from Definition 4.1 of \cite{zhou2020neural}. 
We denote the $j$-th element of the vector $x$ by $x_j$.
We now introduce the assumptions needed for our regret analysis, all of which are standard assumptions in neural bandits \cite{zhou2020neural,zhang2020neural}. 
\begin{assu}
    \label{assumption:main}
    Without loss of generality, we assume that 
    \begin{itemize}[topsep=0pt]    
		\setlength{\itemsep}{3pt}
		\setlength{\parskip}{0pt}
        \item the reward function is bounded: $|f(x)| \leq 1,\forall x\in\mathcal{X}_t, t \in [T]$,

        \item there exists $\lambda_0 > 0$ s.t.~$\mathbf{H} \succeq \lambda_0 I$, and 

        \item all context-arm feature vectors satisfy $\norm{x}_{2}=1$ and $x_{j}=x_{j+d/2}$, $\forall x\in\mathcal{X}_{t},\forall t\in[T]$.
    \end{itemize}
\end{assu}
The last assumption in \cref{assumption:main} above, together with the way we initialize $\theta_0$ (i.e., following standard practice in neural bandits \cite{zhou2020neural,zhang2020neural}), ensures that $h(x;\theta_0)=0,\forall x\in\mathcal{X}_{t},\forall t\in[T]$.
The assumption of $x_{j}=x_{j+d/2}$ is a mild assumption and commonly used in the neural bandits literature \cite{zhou2020neural,zhang2020neural}. This assumption is just for convenience in regret analysis: for any context $x$, $||x|| = 1$, we can always construct a new context $x' = (x^\top,x^\top)^\top/\sqrt{2}$ that satisfies this assumption \cite{zhou2020neural}.

Let $\mathbf{H}' \doteq \sum_{s=1}^T \sum_{(i, j) \in C^K_2} z^i_j(s)z^i_j(s)^\top  \frac{1}{m}$, in which $z^i_j(s) = \phi(x_{s,i}) - \phi(x_{s,j})$ and $C^K_2$ denotes all pairwise combinations of $K$ arms. 
We now define the \emph{effective dimension} as follows:
\eq{
    \widetilde{d} = \log \det  \left(\frac{\kappa_\mu}{\lambda}  \mathbf{H}' + \mathbf{I}\right).
    \label{eq:eff:dimension}
}
Compared to the previous works on neural bandits, our definition of $\widetilde{d}$ features extra dependencies on $\kappa_\mu$.
Moreover, our $\mathbf{H}'$ contains $T\times K\times (K-1)$ contexts, which is more than the $T\times K$ contexts of \cite{zhou2020neural} and \cite{zhang2020neural}.\footnote{\label{footnote:eff:dim} The effective dimension in \cite{zhou2020neural} and \cite{zhang2020neural} is defined using $\mathbf{H}$: $\widetilde{d}' = \log \det  \left( \mathbf{H} / \lambda + \mathbf{I}\right) / \log(1+TK/\lambda)$.
However, it is of the same order (up to log factors) as $\log \det  \left( \widetilde{\mathbf{H}}/\lambda + \mathbf{I}\right)$, with $\widetilde{\mathbf{H}} \doteq \sum_{s=1}^T \sum^K_{i=1} g(x_{s,i};\theta_0) g(x_{s,i};\theta_0)^\top / m$ (see Lemma B.7 of \cite{zhang2020neural}).
}
Hence, our $\widetilde{d}$ is expected to be larger than their standard effective dimension. 
It follows from the Determinant-Trace Inequality (Lemma 10 of \cite{NIPS11_abbasi2011improved}), the determinant of a covariance matrix in directly proportional to the number of feature vectors.
In our setting, the effective dimension $\widetilde{d}$ is $\log \det  \left(\frac{\kappa_\mu}{\lambda}  \mathbf{H}' + \mathbf{I}\right)$ and $\mathbf{H}'$ contains $T\times K\times (K-1)$ contexts (i.e., feature vectors), which is more than the $T\times K$ contexts in the standard neural bandits.
Therefore, the effective dimension $\widetilde{d}$ in the neural dueling bandits is larger than that of the standard neural bandits.

Note that placing an assumption on $\widetilde{d}$ above is analogous to the assumption on the eigenvalue of the matrix $M_t$ in the work on linear dueling bandits \cite{ICML22_bengs2022stochastic}.
For example, in order for our final regret bound to be sub-linear, we only need to assume that $\widetilde{d} = \widetilde{o}(\sqrt{T})$, which is analogous to the assumption from \cite{ICML22_bengs2022stochastic}: $\sum^T_{t=\tau+1}\lambda_{\min}^{-1/2}(M_t) \leq c\sqrt{T}$.

We use the above fact to prove our following result, which is equivalent to the confidence ellipsoid results used in the existing bandit algorithms \citep{ICML17_li2017provably}. 
\begin{restatable}{thm}{confBound}
	\label{thm:confBound}  
    Let $\delta\in(0,1)$, $\varepsilon'_{m,t} \doteq C_2 m^{-1/6}\sqrt{\log m} L^3 \left(\frac{t}{\lambda}\right)^{4/3}$ for some absolute constant $C_2>0$.
    As long as $m \geq \text{poly}(T, L, K, 1/\kappa_\mu, L_\mu, 1/\lambda_0, 1/\lambda, \log(1/\delta))$, then with probability of at least $1-\delta$, 
    \[
        |\left[f(x) - f(x')\right] - \left[h(x;\theta_t) - h(x';\theta_t)\right]| \leq \nu_T \sigma_{t-1}(x, x') + 2\varepsilon'_{m,t},
    \]
    for all $x,x'\in\mathcal{X}_t, t\in[T]$. 
\end{restatable}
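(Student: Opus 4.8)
The plan is to reduce \cref{thm:confBound} to a confidence-ellipsoid bound for a \emph{generalized linear} (logistic-type) model living in the random-feature space $\bar\phi(x) \doteq g(x;\theta_0)/\sqrt m$ induced by the neural tangent kernel, and then to account for the finite-width approximation errors, which are precisely what the $\varepsilon'_{m,t}$ terms absorb. The structural observation is that, after linearizing the network around $\theta_0$, the first (data-fit) term of the loss in \cref{eq:customized:loss:function} is exactly the $\ell_2$-regularized negative log-likelihood of a logistic model whose feature in round $s$ is the \emph{pairwise-difference} feature $\bar\phi'(x_s) \doteq \bar\phi(x_{s,1}) - \bar\phi(x_{s,2})$ and whose link is $\mu$ (\cref{assup:link:function}). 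This is the reason the argument departs from the RMSE-based neural-bandit analyses of \cite{zhou2020neural,zhang2020neural}: the relevant object is a generalized linear model on difference features, as in linear dueling bandits \cite{ICML22_bengs2022stochastic}, not least squares.

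First I would invoke the standard over-parameterization lemmas, adapted from \cite{zhou2020neural,zhang2020neural} and the underlying NTK theory. For $m \geq \mathrm{poly}(T,L,K,1/\kappa_\mu,L_\mu,1/\lambda_0,1/\lambda,\log(1/\delta))$ these give, uniformly over $t\le T$ and $x\in\cX_t$: (i) $\|\theta_t-\theta_0\|_2 = \widetilde O(\sqrt{t/(m\lambda)})$, so the trained network stays in the region where its linearization around $\theta_0$ is valid, and $|h(x;\theta_t)-\langle g(x;\theta_0),\theta_t-\theta_0\rangle|$ is of order $\varepsilon'_{m,t}$; and (ii) there is a $\theta^\star$ whose scaled version $w^\star \doteq \sqrt m(\theta^\star-\theta_0)$ satisfies $\|w^\star\|_2 \le B \doteq \sqrt{2\,h^\top \mathbf H^{-1} h}$ (finite because $\mathbf H \succeq \lambda_0 I$ and $|f(x)|\le 1$ by \cref{assumption:main}) and represents the true reward, $f(x) = \langle \bar\phi(x), w^\star\rangle$, up to an error of the same $\varepsilon'_{m,t}$ order. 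Since $h(x;\theta_0)=0$ under \cref{assumption:main}, subtracting these two representations rewrites the target quantity as a purely linear estimation gap plus an additive $2\varepsilon'_{m,t}$.

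Next I would prove the generalized-linear confidence ellipsoid for the exact minimizer $\widehat\theta_t$ of the linearized regularized log-likelihood, writing $\widehat w_t \doteq \sqrt m(\widehat\theta_t - \theta_0)$. Combining the strong convexity of the logistic loss on the relevant domain (quantified by $\kappa_\mu$ from \cref{assup:link:function}) with a self-normalized martingale bound on the score $\sum_{s<t}\bar\phi'(x_s)\big(y_s-\mu(\langle\bar\phi'(x_s),w^\star\rangle)\big)$, in the style of \cite{ICML17_li2017provably,ICML22_bengs2022stochastic}, yields with probability at least $1-\delta$ a bound of the form $\|\widehat w_t - w^\star\|_{V_{t-1}} \le \beta_T + B\sqrt{\lambda/\kappa_\mu}$, where $V_{t-1}=\frac1m\sum_{s<t}\phi'(x_s)\phi'(x_s)^\top + \frac{\lambda}{\kappa_\mu}\mathbf I$. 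The crucial point is that the log-determinant factor produced by this martingale bound, $\log\det\big(\tfrac{\kappa_\mu}{\lambda}V_{t-1}\big)$, is controlled by the effective dimension $\widetilde d=\log\det\big(\tfrac{\kappa_\mu}{\lambda}\mathbf H'+\mathbf I\big)$ of \cref{eq:eff:dimension}: the finite-$m$ difference features $\bar\phi'(x_s)$ differ from their infinite-width NTK limit only by a $\mathrm{poly}(1/m)$ perturbation, which is folded in using the Determinant--Trace inequality (Lemma 10 of \cite{NIPS11_abbasi2011improved}); this is what puts $\sqrt{\widetilde d + 2\log(1/\delta)}$ inside $\beta_T$. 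Finally, bounding the gap between the finitely-trained $\theta_t$ and the exact minimizer $\widehat\theta_t$ and absorbing it into the additive constant $+1$ appearing in $\nu_T$ lets me replace $\widehat\theta_t$ by $\theta_t$ where needed.

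It remains to assemble. By the triangle inequality, $|[f(x)-f(x')]-[h(x;\theta_t)-h(x';\theta_t)]|$ is at most the $f$-representation error ($\le\varepsilon'_{m,t}$), plus the $h$-linearization error ($\le\varepsilon'_{m,t}$), plus $|\langle\bar\phi(x)-\bar\phi(x'),\,\widehat w_t-w^\star\rangle|$; Cauchy--Schwarz in the $V_{t-1}$ geometry bounds the last term by $\|\bar\phi(x)-\bar\phi(x')\|_{V_{t-1}^{-1}}\,\|\widehat w_t-w^\star\|_{V_{t-1}} = \sqrt{\kappa_\mu/\lambda}\,\sigma_{t-1}(x,x')\cdot(\beta_T+B\sqrt{\lambda/\kappa_\mu}+1) = \nu_T\,\sigma_{t-1}(x,x')$, which matches the definitions of $\sigma_{t-1}$ and $\nu_T$ exactly. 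A union bound over the finite arm sets and the $T$ rounds (part of what forces the polynomial width requirement) then completes the proof. I expect the main obstacle to be the reduction and the confidence-ellipsoid step taken together: establishing the generalized-linear self-normalized concentration with the \emph{correct} dependence on $\widetilde d$ and $\kappa_\mu$ in the dueling setting, where the effective features are pairwise differences and the true features only approximate the NTK limit, and showing that the generalized-linear concentration and the random-feature perturbation compose without the approximation error contaminating $\widetilde d$ or $\kappa_\mu$ --- this is exactly what dictates the polynomial lower bound on $m$ and makes the analysis genuinely different from both the RMSE-based neural-bandit and the exactly-linear dueling-bandit arguments.
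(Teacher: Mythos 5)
Your proposal follows essentially the same route as the paper's proof: NTK linearization lemmas around $\theta_0$, a GLM-style confidence ellipsoid built from the strong convexity constant $\kappa_\mu$ and a self-normalized martingale bound whose log-determinant is controlled by the effective dimension $\widetilde{d}$ via $\mathbf{H}'$, and a final triangle inequality plus Cauchy--Schwarz in the $V_{t-1}$ geometry yielding $\nu_T\sigma_{t-1}(x,x') + 2\varepsilon'_{m,t}$. The only real differences are cosmetic: the paper works directly with the stationarity condition of the actual (nonlinear) loss at $\theta_t$ and absorbs the resulting linearization/gradient-drift terms ($A_1$, $A_2$) into the ``$+1$'' of $\nu_T$, rather than introducing an exact minimizer of the linearized likelihood as an intermediate object, and the representation $f(x)=\langle g(x;\theta_0),\theta_f-\theta_0\rangle$ from Lemma B.3 of \cite{zhang2020neural} is exact (so the $2\varepsilon'_{m,t}$ budget is spent entirely on linearizing $h$ at $x$ and $x'$, not split with an $f$-representation error as in your accounting).
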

The detailed proof of \cref{thm:confBound} and all other missing proofs are given in the Appendix. 
Note that as long as the width $m$ of the NN is large enough (i.e., if the conditions on $m$ in \eqref{eq:conditions:on:m} are satisfied), we have that $\varepsilon'_{m,t} = \mathcal{O}(1/T)$.
\cref{thm:confBound} ensures that when using our trained NN $h$ to estimate the latent reward function $f$, the estimation error of the reward difference between any pair of arms is upper-bounded.
Of note, it is reasonable that our confidence ellipsoid in \cref{thm:confBound} is in terms of the \emph{difference} between reward values, because the only observations we receive are pairwise comparisons.
Now, we state the regret upper bounds of our proposed algorithms.

\begin{thm}[\ref{alg:NDB-UCB}]\label{theorem:regret:bound:ucb}
    Let $\lambda > \kappa_\mu$, 
    $B$ be a constant such that $\sqrt{2\mathbf{h}^{\top} \mathbf{H}^{-1} \mathbf{h}} \leq B$, 
    and $c_0 > 0$ be an absolute constant such that $\frac{1}{m} \norm{\phi(x) - \phi(x')}_2^2 \leq c_0,\forall x,x'\in\mathcal{X}_t,t\in[T]$.
    For $m \geq \text{poly}(T, L, K, 1/\kappa_\mu, L_\mu, 1/\lambda_0, 1/\lambda, \log(1/\delta))$, then with probability of at least $1-\delta$, we have 
    \[
        \Regret_T \leq  \frac{3}{2} \left( \beta_T + B \sqrt{\frac{\lambda}{\kappa_\mu}} + 1 \right) \sqrt{T 2 c_0 \widetilde{d} }  + 1
        = \widetilde{O}\left( \left( \frac{\sqrt{\widetilde{d}}}{\kappa_\mu} + B \sqrt{\frac{\lambda}{\kappa_\mu}}\right) \sqrt{T  \widetilde{d} } \right).
    \]
\end{thm}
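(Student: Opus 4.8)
The plan is to (i) condition on the probability-$(1-\delta)$ event of \cref{thm:confBound}, (ii) derive a per-round bound $r_t^a \le \tfrac{3}{2}\,\nu_T\,\sigma_{t-1}(x_{t,1},x_{t,2}) + 2\varepsilon'_{m,t}$ purely from that confidence bound together with the two arm-selection rules, and (iii) sum over $t$ using Cauchy--Schwarz and an elliptical-potential estimate for $\sum_{t\le T}\sigma_{t-1}^2(x_{t,1},x_{t,2})$. Two elementary facts will be used throughout: that $\sigma_{t-1}(\cdot,\cdot)$ is a pseudo-metric — it is the $V_{t-1}^{-1}$-weighted Euclidean norm of $\tfrac{1}{\sqrt{m}}(\phi(x)-\phi(x'))$, hence obeys the triangle inequality — and that $h(\cdot;\theta_0)\equiv 0$ under \cref{assumption:main}, so the estimated reward difference of a pair is exactly $h(x;\theta_t)-h(x';\theta_t)$.

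For the per-round bound I would treat $f(x_t^\star)-f(x_{t,1})$ and $f(x_t^\star)-f(x_{t,2})$ separately and average. For the first, chain \cref{thm:confBound} on the pair $(x_t^\star,x_{t,1})$, then the optimality of $x_{t,2}$ for the map $x\mapsto h(x;\theta_t)+\nu_T\sigma_{t-1}(x,x_{t,1})$ evaluated at $x_t^\star$, then the greedy optimality of $x_{t,1}$ for $h(\cdot;\theta_t)$; this telescopes $h(x_t^\star;\theta_t)+\nu_T\sigma_{t-1}(x_t^\star,x_{t,1})$ down past $h(x_{t,1};\theta_t)$ and yields $f(x_t^\star)-f(x_{t,1})\le \nu_T\sigma_{t-1}(x_{t,1},x_{t,2})+2\varepsilon'_{m,t}$. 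For the second term I first insert $\sigma_{t-1}(x_t^\star,x_{t,2})\le \sigma_{t-1}(x_t^\star,x_{t,1})+\sigma_{t-1}(x_{t,1},x_{t,2})$ and then repeat the same chain, paying one extra $\nu_T\sigma_{t-1}(x_{t,1},x_{t,2})$, to get $f(x_t^\star)-f(x_{t,2})\le 2\nu_T\sigma_{t-1}(x_{t,1},x_{t,2})+2\varepsilon'_{m,t}$. Averaging gives $r_t^a\le \tfrac32\nu_T\sigma_{t-1}(x_{t,1},x_{t,2})+2\varepsilon'_{m,t}$, and since $\Regret_T^w\le\Regret_T^a$ the same estimate controls the weak-regret notion.

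Summing over $t\in[T]$, the error terms contribute $2\sum_{t\le T}\varepsilon'_{m,t}$, which is $O(1)$ — indeed at most $1$ once $m$ is a large enough polynomial in $T$, since $\varepsilon'_{m,t}=\widetilde{O}(1/T)$ for each $t$ under the width condition (as noted after \cref{thm:confBound}). For the leading term I would apply Cauchy--Schwarz, $\sum_{t\le T}\sigma_{t-1}(x_{t,1},x_{t,2})\le\sqrt{T\sum_{t\le T}\sigma_{t-1}^2(x_{t,1},x_{t,2})}$, and bound the quadratic sum by a determinant/elliptical-potential argument: since $V_{t-1}\succeq\tfrac{\lambda}{\kappa_\mu}\mathbf{I}$ and $\tfrac1m\|\phi(x)-\phi(x')\|_2^2\le c_0$ we have $\sigma_{t-1}^2(x_{t,1},x_{t,2})\le c_0$, and with $\lambda>\kappa_\mu$ the per-step log-determinant increments telescope so that $\sum_{t\le T}\sigma_{t-1}^2(x_{t,1},x_{t,2})\le 2c_0\log\det\big(\tfrac{\kappa_\mu}{\lambda}\sum_{s\le T}\tfrac1m\phi'(x_s)\phi'(x_s)^\top+\mathbf{I}\big)$; because every $\phi'(x_s)=\phi(x_{s,1})-\phi(x_{s,2})$ is one of the pairwise-difference vectors whose outer products build $\mathbf{H}'$, this is at most $2c_0\log\det(\tfrac{\kappa_\mu}{\lambda}\mathbf{H}'+\mathbf{I})=2c_0\widetilde{d}$. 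Substituting, using $\nu_T=(\beta_T+B\sqrt{\lambda/\kappa_\mu}+1)\sqrt{\kappa_\mu/\lambda}\le \beta_T+B\sqrt{\lambda/\kappa_\mu}+1$ (as $\lambda>\kappa_\mu$) and $\beta_T=\tfrac{1}{\kappa_\mu}\sqrt{\widetilde{d}+2\log(1/\delta)}$, then gives $\Regret_T\le \tfrac32\big(\beta_T+B\sqrt{\lambda/\kappa_\mu}+1\big)\sqrt{2Tc_0\widetilde{d}}+1$ and hence the stated $\widetilde{O}$ form.

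The step I expect to be the main obstacle is the per-round bound, precisely because the optimal arm $x_t^\star$ is never played, so $\phi(x_t^\star)$ never enters the design matrix $V_t$ and $\sigma_{t-1}(x_t^\star,\cdot)$ cannot be controlled on its own. The two-stage selection rule is what resolves this: the optimality of the second arm $x_{t,2}$ for the optimistic functional $h(\cdot;\theta_t)+\nu_T\sigma_{t-1}(\cdot,x_{t,1})$ transfers the uncertainty at $x_t^\star$ onto the queried pair $(x_{t,1},x_{t,2})$, with the triangle inequality for $\sigma_{t-1}$ bridging the remaining gap; obtaining the constant $\tfrac32$ (rather than a larger one) requires combining these ingredients in exactly this order and using the greedy choice of $x_{t,1}$ to kill the residual $h$-difference. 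A secondary technical point, inherited entirely from \cref{thm:confBound}, is that the neural-tangent linearization error $\varepsilon'_{m,t}$ must be driven below $O(1/T)$, which is what forces $m$ to be a sufficiently large polynomial in $T$ and the remaining problem parameters.
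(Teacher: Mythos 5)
Your proposal is correct and follows essentially the same route as the paper's proof: the per-round bound is obtained from \cref{thm:confBound} together with the triangle inequality for $\sigma_{t-1}$, the optimistic choice of $x_{t,2}$ and the greedy choice of $x_{t,1}$ (giving the same constant $\tfrac{3}{2}$, with an even slightly smaller $\varepsilon'_{m,t}$ contribution than the paper's $3\varepsilon'_{m,t}$), and the summation uses Cauchy--Schwarz plus the same log-determinant potential bounded by $\widetilde{d}$. One bookkeeping remark: the potential argument actually yields $\sum_{t\le T}\sigma_{t-1}^2(x_{t,1},x_{t,2})\le 2c_0\tfrac{\lambda}{\kappa_\mu}\widetilde{d}$ rather than $2c_0\widetilde{d}$ as you wrote; this extra factor $\tfrac{\lambda}{\kappa_\mu}$ is exactly cancelled by the factor $\sqrt{\kappa_\mu/\lambda}$ inside $\nu_T$ that you discarded when bounding $\nu_T\le\beta_T+B\sqrt{\lambda/\kappa_\mu}+1$, so your final bound coincides with the theorem.
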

The detailed requirements on the width $m$ of the NN are given by \cref{eq:conditions:on:m} in \cref{app:sec:theoretical:analysis}.

\begin{thm}[\textbf{NDB-TS}]
    \label{theorem:regret:bound:ts}
    Let $c_1$ and $c_2$ be two constants.
    Under the same conditions as those in \cref{theorem:regret:bound:ucb},
    then with probability of at least $1-\delta$, we have 
    \[
        \Regret_T = \widetilde{O}\left( \left( \frac{\sqrt{\widetilde{d}}}{\kappa_\mu} + B \sqrt{\frac{\lambda}{\kappa_\mu}}\right) \sqrt{T  \widetilde{d} } \right).
    \]
\end{thm}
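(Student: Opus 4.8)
The plan is to follow the standard Thompson-sampling regret template (as in the analyses of linear and neural TS), using \cref{thm:confBound} as the high-probability good event, and to exploit the key structural observation that once the greedy first arm $x_{t,1}$ has been committed, the choice of $x_{t,2}$ is exactly Thompson sampling for a one-round bandit over $\cX_t$ in which arm $x$ has mean reward $f(x)-f(x_{t,1})$, posterior mean $h(x;\theta_t)-h(x_{t,1};\theta_t)$, and posterior standard deviation $\nu_T\sigma_{t-1}(x,x_{t,1})$; the best arm of that one-round problem is still $x_t^\star$, and \cref{thm:confBound} is precisely the statement that posterior mean and true mean differ by at most $\nu_T\sigma_{t-1}(x,x_{t,1})+2\varepsilon'_{m,t}$. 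I bound the average regret $r_t^a$; the weak-regret bound then follows from $\Regret_T^w\le\Regret_T^a$.

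First I would decompose the instantaneous regret. Writing $\Delta_{t,i}\doteq f(x_t^\star)-f(x_{t,i})$ we have $2r_t^a=\Delta_{t,1}+\Delta_{t,2}$; applying \cref{thm:confBound} to the pair $(x_{t,2},x_{t,1})$ and using $h(x_{t,1};\theta_t)\ge h(x_{t,2};\theta_t)$ (greedy first arm) gives $f(x_{t,2})-f(x_{t,1})\le\nu_T\sigma_{t-1}(x_{t,1},x_{t,2})+2\varepsilon'_{m,t}$, hence $\Delta_{t,1}\le\Delta_{t,2}+\nu_T\sigma_{t-1}(x_{t,1},x_{t,2})+2\varepsilon'_{m,t}$ and $r_t^a\le\Delta_{t,2}+\tfrac12\nu_T\sigma_{t-1}(x_{t,1},x_{t,2})+\varepsilon'_{m,t}$. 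It therefore suffices to bound $\sum_t\nu_T\sigma_{t-1}(x_{t,1},x_{t,2})$ and $\sum_t\Delta_{t,2}$. The first sum is handled exactly as in the proof of \cref{theorem:regret:bound:ucb}: Cauchy--Schwarz followed by the elliptical-potential (determinant) lemma \citep{NIPS11_abbasi2011improved} applied to $V_t$, together with $\sum_s\tfrac1m\phi'(x_s)\phi'(x_s)^\top\preceq\mathbf{H}'$, $\lambda>\kappa_\mu$ and $\tfrac1m\|\phi(x)-\phi(x')\|_2^2\le c_0$, yields $\sum_t\nu_T\sigma_{t-1}(x_{t,1},x_{t,2})\le(\beta_T+B\sqrt{\lambda/\kappa_\mu}+1)\sqrt{2c_0T\widetilde{d}}$ and likewise $\sum_t\nu_T^2\sigma_{t-1}^2(x_{t,1},x_{t,2})\le(\beta_T+B\sqrt{\lambda/\kappa_\mu}+1)^2\,2c_0\widetilde{d}$, using the symmetry $\sigma_{t-1}(x_{t,1},x_{t,2})=\sigma_{t-1}(x_{t,2},x_{t,1})$.

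The substantive part is bounding $\sum_t\Delta_{t,2}$ by the Thompson-sampling machinery. I would condition on the event $E^f$ of \cref{thm:confBound} ($\Prob{E^f}\ge1-\delta$), and for each $t$ introduce the Gaussian-concentration event $E^{\mathrm{TS}}(t)=\{\,|r_t(x)-(h(x;\theta_t)-h(x_{t,1};\theta_t))|\le g_t\,\nu_T\sigma_{t-1}(x,x_{t,1})\text{ for all }x\in\cX_t\,\}$ with $g_t\doteq c_2\sqrt{\log(Kt^2)}$, so that $\Prob{\neg E^{\mathrm{TS}}(t)\mid\cF_{t-1}}\le t^{-2}$ by a Gaussian tail bound and a union bound over the $K$ arms. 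Calling $x$ \emph{saturated} at round $t$ if $f(x_t^\star)-f(x)>(1+g_t)\nu_T\sigma_{t-1}(x,x_{t,1})+4\varepsilon'_{m,t}$, one checks that on $E^f\cap E^{\mathrm{TS}}(t)$ every saturated arm has $r_t(x)<f(x_t^\star)-f(x_{t,1})-2\varepsilon'_{m,t}$, whereas a Gaussian anti-concentration bound combined with the $E^f$ lower bound $h(x_t^\star;\theta_t)-h(x_{t,1};\theta_t)\ge f(x_t^\star)-f(x_{t,1})-\nu_T\sigma_{t-1}(x_t^\star,x_{t,1})-2\varepsilon'_{m,t}$ gives $r_t(x_t^\star)\ge f(x_t^\star)-f(x_{t,1})-2\varepsilon'_{m,t}$ with probability at least an absolute constant $1/c_1$, irrespective of the size of $\sigma_{t-1}(x_t^\star,x_{t,1})$ (the $\nu_T\sigma_{t-1}(x_t^\star,x_{t,1})$ in the $E^f$ bound cancels the one needed to exceed the threshold, so only $\Prob{\mathcal{N}(0,1)\ge1}$ is used). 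Hence the maximizer $x_{t,2}$ is unsaturated with conditional probability $\ge 1/c_1-t^{-2}$; letting $\bar x_t$ be the unsaturated arm minimizing $\sigma_{t-1}(\cdot,x_{t,1})$, this forces $\nu_T\sigma_{t-1}(\bar x_t,x_{t,1})\le c_1'\,\EE{\nu_T\sigma_{t-1}(x_{t,2},x_{t,1})\mid\cF_{t-1},E^f}$ for all $t$ past a constant. Splitting into the cases $x_{t,2}$ unsaturated (use the definition), $x_{t,2}$ saturated on $E^{\mathrm{TS}}(t)$ (compare $r_t(x_{t,2})\ge r_t(\bar x_t)$ and apply \cref{thm:confBound} to the pairs $(x_{t,2},x_{t,1})$ and $(\bar x_t,x_{t,1})$), and $\neg E^{\mathrm{TS}}(t)$ (use $\Delta_{t,2}\le2$), one obtains the standard inequality $\EE{\Delta_{t,2}\mid\cF_{t-1},E^f}\le c\,g_t\,\EE{\nu_T\sigma_{t-1}(x_{t,2},x_{t,1})\mid\cF_{t-1},E^f}+c'\varepsilon'_{m,t}+2t^{-2}$. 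Summing over $t$, applying Cauchy--Schwarz with the variance-sum bound above (and $\sum_t\varepsilon'_{m,t}=O(1)$, $\sum_t t^{-2}=O(1)$ once $m$ is large enough), gives $\EE{\sum_t\Delta_{t,2}\mid E^f}=\widetilde{O}\!\big((\beta_T+B\sqrt{\lambda/\kappa_\mu})\sqrt{T\widetilde{d}}\big)$. Combining with the decomposition, and converting the expectation to a high-probability statement by Azuma--Hoeffding on the bounded martingale differences $r_t^a-\EE{r_t^a\mid\cF_{t-1}}$ (using $r_t^a\in[0,2]$ and $\Prob{\neg E^f}\le\delta$), then substituting $\beta_T=\widetilde{O}(\sqrt{\widetilde{d}}/\kappa_\mu)$, yields the claimed rate.

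The step I expect to be the main obstacle is the dueling adaptation of the saturated-set / anti-concentration argument: because $x_{t,1}$ is fixed before $x_{t,2}$ is sampled, every comparison must be phrased in terms of reward \emph{differences} relative to $x_{t,1}$ and all uncertainties are of the form $\sigma_{t-1}(\cdot,x_{t,1})$, so one must verify that the sample at $x_t^\star$ is optimistic with constant probability uniformly over all scales of $\sigma_{t-1}(x_t^\star,x_{t,1})$ --- this is what the $O(\varepsilon'_{m,t})$ slack in the definition of ``saturated'' buys, and it is harmless since $\varepsilon'_{m,t}=O(1/T)$ once $m\ge\mathrm{poly}(T,L,K,1/\kappa_\mu,L_\mu,1/\lambda_0,1/\lambda,\log(1/\delta))$, so no width requirement beyond that of \cref{theorem:regret:bound:ucb} is introduced. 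The remaining pieces --- the elliptical-potential bound and the martingale concentration --- are routine once the confidence ellipsoid of \cref{thm:confBound} is granted.
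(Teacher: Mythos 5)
Your proposal follows essentially the same route as the paper's proof: conditioning on the \cref{thm:confBound} event, a per-round Gaussian concentration event over the $K$ arms, a saturated-set definition relative to $x_{t,1}$ with the $4\varepsilon'_{m,t}$ slack, Gaussian anti-concentration at $x_t^\star$ to get a constant unsaturation probability, the reference arm $\bar{x}_t$ minimizing $\sigma_{t-1}(\cdot,x_{t,1})$, the greedy choice of $x_{t,1}$ to control $f(x_{t,2})-f(x_{t,1})$, the elliptical-potential bound on $\sum_t\sigma_{t-1}(x_{t,1},x_{t,2})$, and a martingale/Azuma--Hoeffding step to pass from conditional expectations to a high-probability bound (the paper packages this as an explicit super-martingale $Y_t$ that also absorbs the $\sigma_{t-1}$ terms, which is the clean way to finish your last step). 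The argument is correct and matches the paper's adaptation of the GP-TS analysis to the dueling setting.
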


Note that in terms of asymptotic dependencies (ignoring the log factors), our UCB- and TS- algorithms have the same growth rates.
The upper regret bounds also hold for weak cumulative regret as it is upper-bounded by average cumulative regret.

When we assume that the effective dimension $\widetilde{d} = \widetilde{o}(\sqrt{T})$ (which is analogous to the assumption on the minimum eigenvalue from \cite{ICML22_bengs2022stochastic}), then the regret upper bounds for both \textbf{NDB-UCB} and \textbf{NDB-TS} are sub-linear.
The dependence of our regret bounds on $\frac{1}{\kappa_\mu}$ and $L_\mu$ (i.e., the parameters of the link function defined in \cref{assup:link:function}) are consistent with the previous works on generalized linear bandits \citep{ICML17_li2017provably} and linear dueling bandits \cite{ICML22_bengs2022stochastic}.

Compared with the regret upper bounds of NeuralUCB \cite{zhou2020neural} and NeuralTS \cite{zhang2020neural}, the effective dimension $\widetilde{d}$ in \cref{theorem:regret:bound:ucb} and \cref{theorem:regret:bound:ts} are expected to be larger than the effective dimension $\widetilde{d}'$ from \cite{zhou2020neural,zhang2020neural} because our $\widetilde{d}$ results from the summation of a significantly larger number of contexts.
Therefore, our regret upper bounds (\cref{theorem:regret:bound:ucb} and \cref{theorem:regret:bound:ts}) are expected to be worse than that of \cite{zhou2020neural,zhang2020neural}: $\widetilde{O}(\widetilde{d}' \sqrt{T})$.
This downside can be attributed to the difficulty of our neural dueling bandits setting, in which we can only access preference feedback.

\subsection{Proof Sketch}
In this section, we give a brief proof sketch of our regret analysis in \cref{subsec:analysis}.

\subsubsection{Sketch of Proof for \texorpdfstring{\cref{thm:confBound}}{Confidence Bound}}
We start by presenting an outline for the proof of our \cref{thm:confBound}.
To begin with, by applying the theory of the NTK \citep{NeurIPS18_jacot2018neural} to our NN for reward estimation (\cref{subsec:reward:estimation:nn}), we show that as long as the NN is wide enough, its output can be approximated by a linear function (see \cref{app:subsec:theory:NN} for details). More specifically, for a pair of arms $x$ and $x'$, the difference between their predicted latent reward values $h(x;\theta_t) - h(x';\theta_t)$ can be approximated by the difference between the linear approximations of the NN:
\begin{restatable}{lem}{linearNN}
\label{lemma:bound:approx:error:linear:nn:duel}
    Let $\varepsilon'_{m,t} \triangleq C_2 m^{-1/6}\sqrt{\log m} L^3 \left(\frac{t}{\lambda}\right)^{4/3}$ where $C_2>0$ is an absolute constant.
    Then
    \[
   		|\langle \phi(x) -\phi(x'), \theta_t - \theta_0 \rangle - (h(x;\theta_t) - h(x';\theta_t)) | \leq  2\varepsilon'_{m,t}, \,\,\, \forall t\in[T], x,x'\in\mathcal{X}_t.
    \]
\end{restatable}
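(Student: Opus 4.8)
The plan is to reduce the two-point claim to a one-point linearization statement and then invoke the standard near-linearity of sufficiently wide networks. Recall two facts from the setup: by the chosen initialization together with the symmetry assumption $x_j=x_{j+d/2}$ we have $h(x;\theta_0)=0$ for every context-arm vector $x$, and the feature map is the gradient at initialization, $\phi(x)=g(x;\theta_0)$. Writing $\Delta_t(z)\doteq\langle g(z;\theta_0),\theta_t-\theta_0\rangle-\big(h(z;\theta_t)-h(z;\theta_0)\big)$ for the network's linearization error at a point $z$, these two facts give $\Delta_t(z)=\langle\phi(z),\theta_t-\theta_0\rangle-h(z;\theta_t)$, hence
\[
    \langle \phi(x)-\phi(x'),\theta_t-\theta_0\rangle-\big(h(x;\theta_t)-h(x';\theta_t)\big)=\Delta_t(x)-\Delta_t(x').
\]
By the triangle inequality it therefore suffices to show $|\Delta_t(z)|\le\varepsilon'_{m,t}$ for all $z\in\mathcal{X}_t$ and $t\in[T]$.

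To bound $|\Delta_t(z)|$ I would apply the over-parameterized network perturbation estimates used in the neural bandit literature \cite{zhou2020neural,zhang2020neural} (which rest on the NTK analysis of \cite{NeurIPS18_jacot2018neural}): on a high-probability event over the random initialization $\theta_0$, whenever $\|\theta-\theta_0\|_2\le\omega$ for $\omega$ within the radius permitted by the width, one has
\[
    \big|\,h(z;\theta)-h(z;\theta_0)-\langle g(z;\theta_0),\theta-\theta_0\rangle\,\big|\le C\,\omega^{4/3}L^3\sqrt{m\log m}
\]
for every $z$ with $\|z\|_2=1$. This event depends only on $\theta_0$, which is fixed across rounds, so it holds simultaneously for all $t\in[T]$; the hypothesis $m\ge\text{poly}(T,L,K,1/\kappa_\mu,L_\mu,1/\lambda_0,1/\lambda,\log(1/\delta))$ is precisely what ensures both that this estimate is valid at the radius we will use and that the failure probability is at most $\delta$.

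It remains to control the distance $\|\theta_t-\theta_0\|_2$ travelled during training. Here I would exploit the structure of the regularized loss in \cref{eq:customized:loss:function}: since $h(\cdot;\theta_0)=0$, every log-likelihood term in $\mathcal{L}_t(\theta_0)$ equals $-\log\mu(0)=\log 2$, so $\mathcal{L}_t(\theta_0)=(t-1)\log 2/m$. As the log-likelihood part of $\mathcal{L}_t$ is nonnegative and $\theta_t$ (approximately) minimizes $\mathcal{L}_t$, we get $\tfrac{\lambda}{2}\|\theta_t-\theta_0\|_2^2\le\mathcal{L}_t(\theta_t)\le\mathcal{L}_t(\theta_0)=(t-1)\log 2/m$, whence $\|\theta_t-\theta_0\|_2=O(\sqrt{t/(\lambda m)})$. (If $\theta_t$ is the output of finitely many gradient-descent steps rather than the exact minimizer, the same radius, up to a constant, follows from the standard gradient-descent convergence lemma for this loss, which also certifies that the iterates never leave this ball.) Plugging $\omega=O(\sqrt{t/(\lambda m)})$ into the perturbation estimate gives $|\Delta_t(z)|=O\big(m^{-1/6}\sqrt{\log m}\,L^3(t/\lambda)^{2/3}\big)$, which is at most $\varepsilon'_{m,t}=C_2 m^{-1/6}\sqrt{\log m}\,L^3(t/\lambda)^{4/3}$ for a suitably large absolute constant $C_2$ in the regime of interest (the exponent $4/3$ being a deliberately conservative choice); combined with the first paragraph this yields the claimed $2\varepsilon'_{m,t}$.

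The main obstacle is the second step: making the near-linearity estimate rigorous with explicit constants and an explicit width threshold, uniformly over all $x,x'\in\mathcal{X}_t$ and $t\in[T]$, and verifying that the radius $O(\sqrt{t/(\lambda m)})$ indeed lies in the regime where it holds. This part is essentially inherited from the over-parameterization literature; what genuinely needs checking here is only that the cross-entropy loss and the duel structure cause no trouble, and they do not --- the perturbation estimate is a statement about the map $\theta\mapsto h(\cdot;\theta)$ alone, independent of which loss produced $\theta_t$, with the loss entering only through the elementary radius bound of the previous paragraph.
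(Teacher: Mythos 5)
Your proposal is correct and follows essentially the same route as the paper: the paper likewise bounds $\|\theta_t-\theta_0\|_2 \le 2\sqrt{t/(m\lambda)}$ by comparing $\mathcal{L}_t(\theta_t)$ with $\mathcal{L}_t(\theta_0)$ (using $h(\cdot;\theta_0)=0$ so each log-likelihood term at $\theta_0$ is $\log 2$), then invokes the NTK perturbation lemma of \cite{zhou2020neural} at radius $\tau=2\sqrt{t/(m\lambda)}$ to get the one-point bound $|h(z;\theta_t)-\langle \phi(z),\theta_t-\theta_0\rangle|\le\varepsilon'_{m,t}$, and finishes with the same triangle-inequality decomposition into $\Delta_t(x)-\Delta_t(x')$. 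Your side remark about the exponent ($\tau^{4/3}\sqrt{m\log m}$ actually yields $(t/\lambda)^{2/3}$, dominated by the stated $(t/\lambda)^{4/3}$ when $t\ge\lambda$) matches how the paper absorbs this into $\varepsilon'_{m,t}$, so there is no gap.
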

Of note, the approximation error $\varepsilon'_{m,t}$ becomes smaller as the width $m$ of the NN is increased.

\paragraph{Proof of the Confidence Ellipsoid.}
Next, we derive the confidence ellipsoid for our algorithms, which is formalized by \cref{lemma:conf:ellip} (see its detailed proof in \cref{app:proof:subsec:conf:ellip}).
\cref{lemma:conf:ellip} allows us to derive the following inequality, which shows that for a pair of arms $x$ and $x'$, the difference between their latent reward values $f(x) - f(x')$ can be approximated by the difference between their predicted values by the linearized NN:
\begin{equation}
    |f(x) - f(x') - \langle \phi(x)-\phi(x'), \theta_t - \theta_0 \rangle| \leq \norm{\frac{1}{\sqrt{m}}\left(\phi(x)-\phi(x')\right)}_{V_{t-1}^{-1}} \left( \beta_T + B \sqrt{\frac{\lambda}{\kappa_\mu}} + 1 \right),
\label{eq:diff:between:reward:func:and:linear:nn}
\end{equation}
Importantly, \cref{eq:diff:between:reward:func:and:linear:nn} guarantees that our trained NN with parameters $\theta_t$ (after linearization) is an accurate approximation of the latent reward function $f$ in terms of pairwise differences. This crucial theoretical guarantee is achieved thanks to our carefully designed loss function \cref{eq:customized:loss:function}. Specifically, a key step in the proof of our confidence ellipsoid (i.e., \cref{lemma:conf:ellip}) is the following equality:
\eq{
    \frac{1}{m}\sum_{s=1}^{t-1} \left(\mu\left( h(x_{s,1};\theta_t) - h(x_{s,2};\theta_t)\right) - y_s\right) (g(x_{s,1};\theta_t) - g(x_{s,2};\theta_t)) + \lambda (\theta_t - \theta_0) = 0,
	\label{eq:setting:loss:gradient:to:zero}
}
which is achieved by obtaining the NN parameters $\theta_t$ by minimizing our loss function \cref{eq:customized:loss:function}.
This is, in fact, analogous to using maximum likelihood to estimate the unknown vector $\theta$ in the linear reward function in linear dueling bandits \citep{ICML22_bengs2022stochastic} and linear bandits with binary feedback \citep{ICML17_li2017provably}.
In addition, another important step in the proof of \cref{lemma:conf:ellip} is the recognition that the observation noise $\epsilon_t = y_t - \mu(f(x_{t,1}) - f(x_{t,2}))$ is conditionally $1$-sub-Gaussian (more details can be found in the proof of \cref{lemma:conf:ellip:final} in \cref{app:proof:subsec:conf:ellip}).
After the proof of the confidence ellipsoid as described above, we then combine \cref{eq:diff:between:reward:func:and:linear:nn} and \cref{lemma:bound:approx:error:linear:nn:duel} to derive an upper bound on the difference between $f(x) - f(x')$ and $h(x;\theta_t) - h(x';\theta_t)$, which completes the proof of \cref{thm:confBound}.

\subsubsection{Sketch of Proof for \texorpdfstring{\cref{theorem:regret:bound:ucb}}{UCB} and \texorpdfstring{\cref{theorem:regret:bound:ts}}{TS}}
To obtain a regret upper bound for the \ref{alg:NDB-UCB} algorithm (i.e., to prove \cref{theorem:regret:bound:ucb}), we firstly use \cref{thm:confBound} to derive an upper bound on the average instantaneous regret
$r_t^a = f(x_t^\star) - \Lp{f(x_{t,1}) + f(x_{t,2})}\Rp/{2}$ (see the detailed proof in \cref{app:proof:regret:analysis}):
\eq{
		r_t^a \leq \frac{3}{2} \left( \beta_T + B \sqrt{\frac{\lambda}{\kappa_\mu}} + 1 \right) \sqrt{\frac{\kappa_\mu}{\lambda}} \sigma_{t-1}(x_{t,1},x_{t,2})  + 3\varepsilon'_{m,t}.
  \label{eq:upper:bound:inst:regret:main:paper}
}
Our strategies to select the two arms $x_{t,1}$ and $x_{t,2}$ (i.e., \cref{eq:choose:arm:1} and \cref{eq:choose:arm:2}) are essential for deriving the upper bound in \cref{eq:upper:bound:inst:regret:main:paper}.
Subsequently, we follow similar steps to the analysis of the kernelized bandit algorithms (e.g., GP-UCB \citep{ICML10_srinival2010gaussian}) to obtain an upper bound on $\sum^T_{t=1}\sigma_{t-1}(x_{t,1},x_{t,2})$ in terms of $\widetilde{d}$.
This then provides us with an upper bound on the average regret $\Regret_T^a=\sum^T_{t=1}r^a_t$ (as well as the weak regret $\Regret_T^w$ since $\Regret_T^w \leq \Regret_T^a$) and hence completes the proof of \cref{theorem:regret:bound:ucb}.
The proof of \cref{theorem:regret:bound:ts} for the NDB-TS algorithm is also based on \cref{thm:confBound} and largely follows from the analysis of the GP-TS algorithm \citep{ICML17_chowdhury2017kernelized}.
However, non-trivial modifications need to be made in order to incorporate our strategies to select the two arms $x_{t,1}$ and $x_{t,2}$.
The detailed regret analysis for \cref{theorem:regret:bound:ucb} and \cref{theorem:regret:bound:ts} can be found in \cref{app:proof:regret:analysis} and \cref{app:sec:proof:ts}, respectively.

\section{Further Implications of Our Algorithms and Theory}
Our results to learn the latent non-linear reward function using preference feedback is not only limited to dueling bandits, but it can be adapted to other settings as discussed in the following sections.

\subsection{Neural Contextual Bandits with Binary Feedback}
In many real-world applications, the learner can only observe binary feedback, e.g., whether a user clicks in an online recommendation system or whether a treatment is effective in clinical trials, and these problems are commonly modeled as contextual GLM bandits \citep{NIPS10_filippi2010parametric, ICML17_li2017provably,ICML20_faury2020improved}, which assume the probability of success (i.e., $1$) is proportional to exponential of its reward that is a linear function of action-context features. 
However, in many real-life applications, the reward can be a non-linear function of action-context features, which can modeled using a suitable neural network. We refer to this problem as neural contextual bandits with binary feedback.
Therefore, our results can be extended to the neural contextual bandit problem, where the learner only receives binary feedback for the selected arms, which depends on the non-linear reward function. 
This adaption is not straightforward as the learner selects only one arm and observes binary feedback (success or failure) for the selected action. 
This difference leads to different arm selection strategies as only one needs to be selected in neural contextual bandits, and different loss functions must be optimized to train the neural network to estimate the unknown reward functions.
The main challenge is to derive the confidence ellipsoid result, which is a crucial component for proving the regret bounds. While some ideas are common, the derivations differ due to the underlying differences in the problem settings. 
We have given more details of the neural contextual bandit with binary feedback problem and our regret bounds of algorithms proposed for this setting in \cref{sec:neural_binary}.

\subsection{Reinforcement Learning from Human Feedback}
\label{ssec:rlhf}
Our algorithms and theoretical results can also provide insights on the celebrated reinforcement learning with human feedback (RLHF) algorithm \citep{chaudhari2024rlhf}, which has been the most widely used method for aligning large language models (LLMs). Firstly, our Theorem 1 provides a theoretical guarantee on the quality of the learned reward model during RLHF. This is because Theorem 1 serves as an upper bound on the estimation error of the estimated reward differences between any pair of prompts. Secondly, our proposed algorithms can be used to improve the quality of the human preference dataset for RLHF. That is when collecting the dataset for RLHF, we can let the LLM generate a large number of responses, from which we can use our algorithms for neural dueling bandits to select two responses (corresponding to two arms) to be shown to the user for preference feedback. More details can be found in \cref{sec:connections:with:rlhf}.

    \section{Experiments}
    \label{sec:experiments}
    %!TEX root =  main.tex

To corroborate our theoretical results, we empirically demonstrate the performance of our proposed algorithms on different synthetic reward functions. 
We adopt the following two synthetic functions from earlier work on neural bandits \citep{zhou2020neural, zhang2020neural,ICLR23_dai2022federated}: $f(x)=10(x^\top\theta)^2$ (Square) and $f(x) = \cos(3x^\top\theta)$ (Cosine).
We repeat all our experiments 20 times and show the average and weak cumulative regret with a 95\% confidence interval (represented by the vertical line on each curve)
Due to space constraints, additional results are given in the \cref{sec:more_experiments}.

The details of our experiments are as follows:
We use a $d$-dimensional space to generate the sample features of each context-arm pair. We denote the context-arm feature vector for context $c_t$ and arm $a$ by $x_{t}^a$, where $x_{t}^a = \Lp x_{t,a}^1, \ldots, x_{t,a}^d \Rp, ~\forall ~t \ge 1$.
The value of $i$-the element of $x_{t}^a$ vector is sampled uniformly at random from $\Lp -1, 1 \Rp$.
Note that the number of arms remains the same across the rounds, i.e., $K$. 
We then select a $d$-dimensional vector $\theta$ by sampling uniformly at random from $(-1, 1)^d$.
In all our experiments, the binary preference feedback about $x_1$ being preferred over $x_2$ (or binary feedback in \cref{sec:neural_binary}) is sampled from a Bernoulli distribution with parameter $p = \mu \Lp f(x_1) - f(x_2)\Rp$).
In all our experiments, we use a NN with $2$ hidden layers with width $50$, $\lambda = 1.0$, $\delta=0.05$, $d = 5$, $K=5$, and fixed value of $\nu_T = \nu = 1.0$.
For having a fair comparison, we choose the value of $\nu$ after doing a hyperparameter search over set $\{10.0, 5.0, 1.0, 0.1, 0.01, 0.001, 0.0\}$ for linear baselines, i.e., LinDB-UCB and LinDB-TS. As shown in \cref{fig:hyper_lin}, the average cumulative regret is minimum for $\nu=1.0$. 
Note that we did not perform any hyperparameter search for \textbf{NDB-UCB} and \textbf{NDB-TS}, whose performance can be further improved by doing the hyperparameter search.
\begin{figure}[!ht]
    \vspace{-5mm}
	\centering
    \subfloat[Square (UCB)]{\label{fig:hyp_ucb_sq}
		\includegraphics[width=0.24\linewidth]{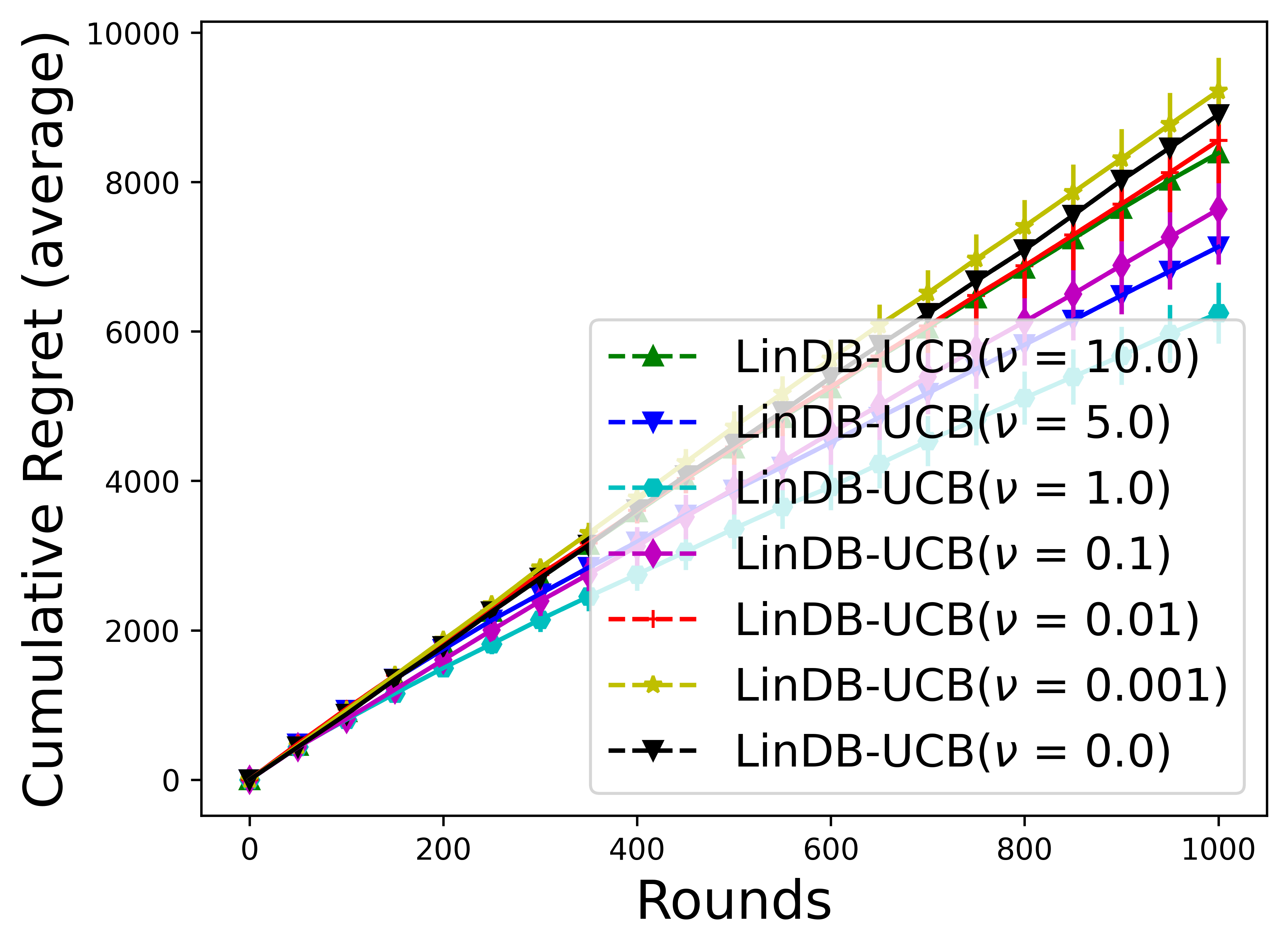}}
	\subfloat[Square (TS)]{\label{fig:hyp_ts_sq}
		\includegraphics[width=0.24\linewidth]{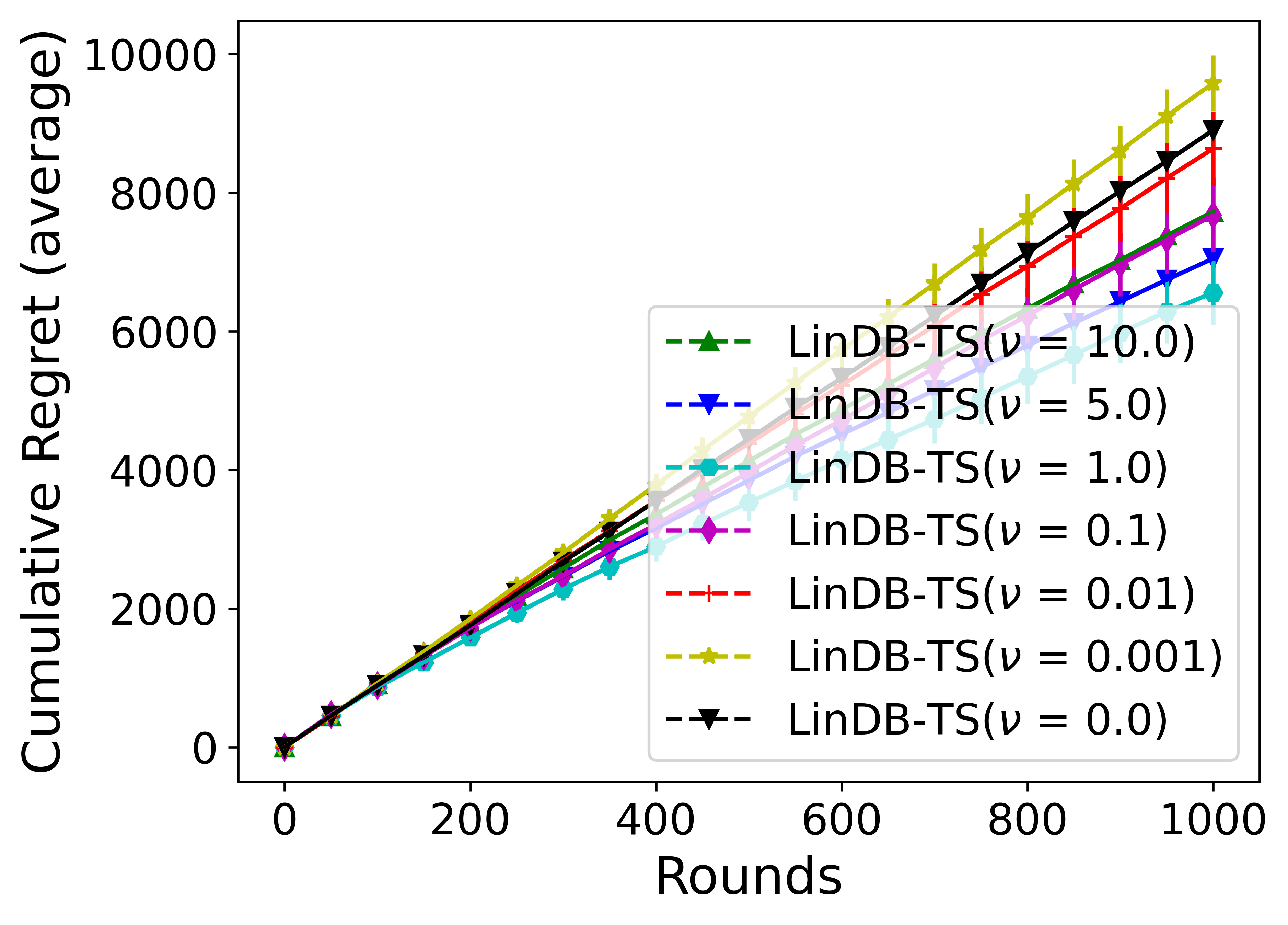}}
    \subfloat[Cosine (UCB)]{\label{fig:hyp_ucb_cos}
		\includegraphics[width=0.24\linewidth]{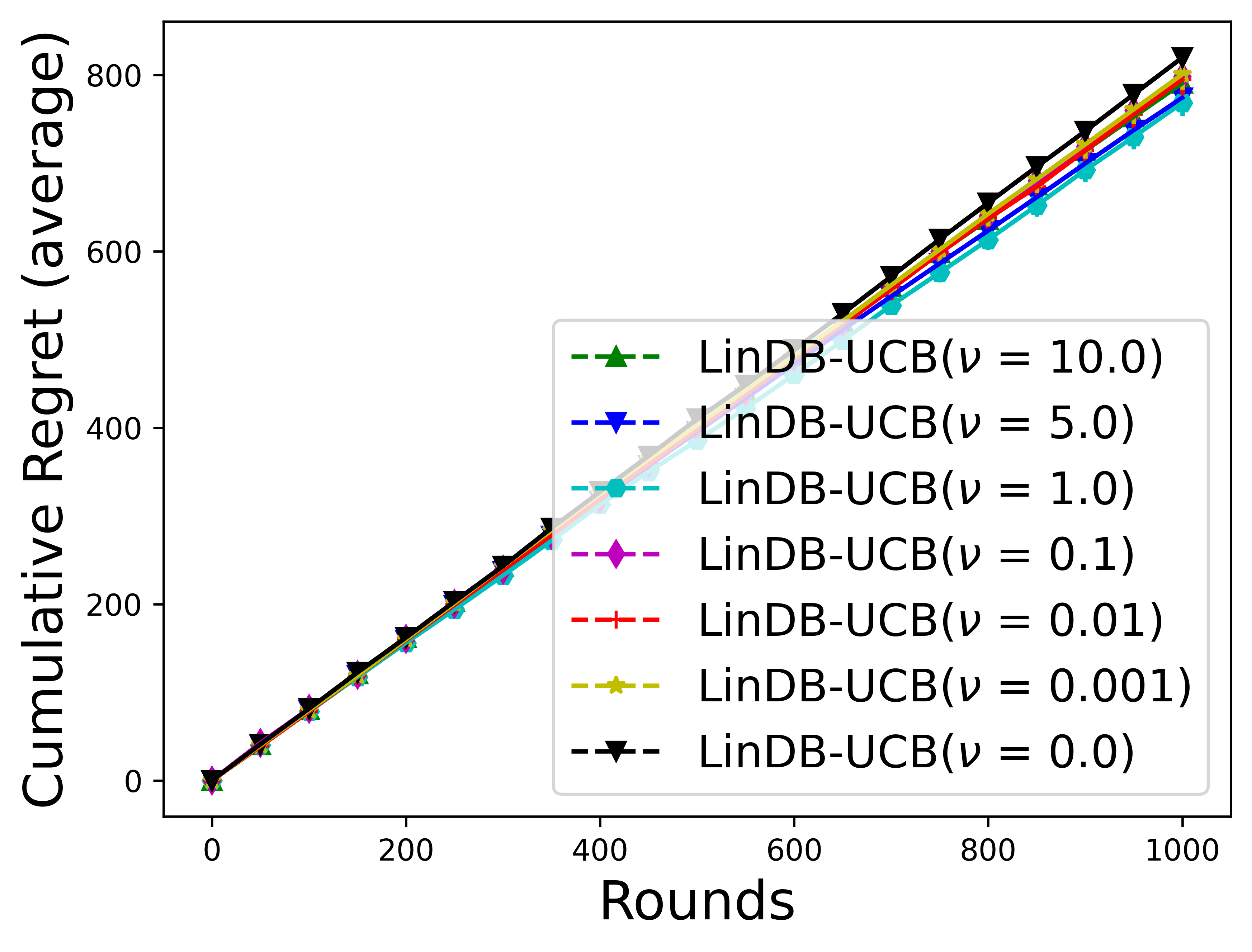}}
	\subfloat[Cosine (TS)]{\label{fig:hyp_ts_cos}
		\includegraphics[width=0.24\linewidth]{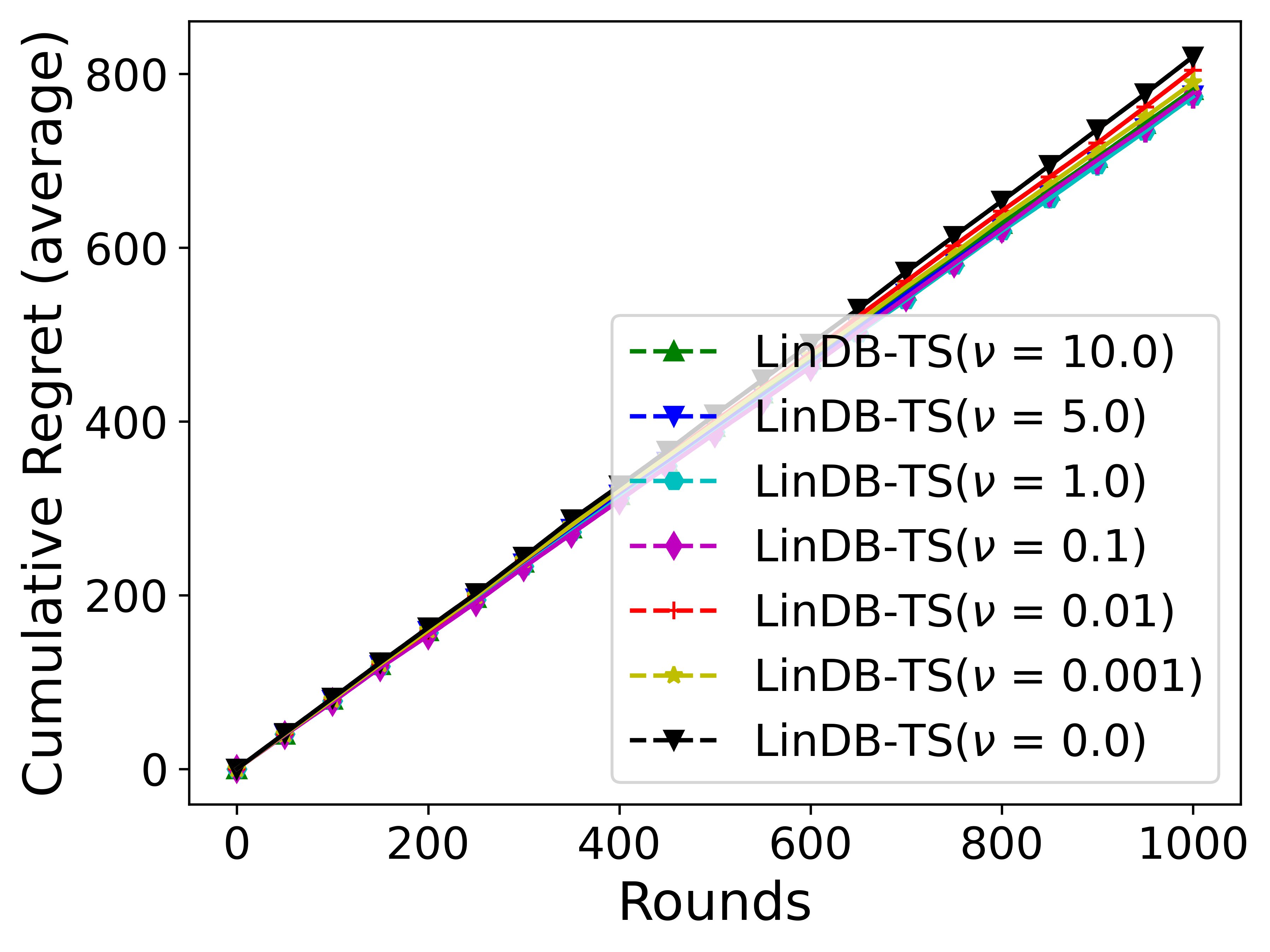}}
	\caption{
        Average cumulative regret of LinDB-UCB and LinDB-TS vs. different values of $\nu$ for Square reward function $(i.e., 10(x^\top \theta)^2)$. 
	}
	\label{fig:hyper_lin}
    \vspace{-3mm}
\end{figure}

As supported by the neural tangent kernel (NTK) theory, we can substitute the initial gradient $g(x;\theta_0)$ for the original feature vector $x$ as $g(x;\theta_0)$ represents the random Fourier features for the NTK \cite{NeurIPS18_jacot2018neural}. In this paper, we use the feature vectors $g(x;\theta_t)$  instead of $g(x;\theta_0)$ and recompute all $g(x;\theta_t)$ in each round for all past context-arm pairs.
Additionally, compared to NTK theory, we have designed our algorithm to be more practical by adhering to the common practices in neural bandits \cite{zhou2020neural,zhang2020neural}. Specifically, in the loss function for training our NN (as defined in \cref{eq:customized:loss:function}), we replaced the theoretical regularization parameter $\frac{1}{2} m \lambda \norm{\theta - \theta_0}^2_{2}$ (where $m$ is the width of the NN) with the simpler $\lambda \norm{\theta}^2_{2}$. Similarly, for the random features of the NTK, we replaced the theoretical $\frac{1}{\sqrt{m}}g(x;\theta_t)$ with $g(x;\theta_t)$.
We retrain the NN after every $20$ rounds and set the number of gradient steps to $50$ in all our experiments.

\textbf{Regret comparison with baselines.}~
We compare regret (average/weak of our proposed algorithms with three baselines: LinDB-UCB (adapted from \citep{NeurIPS21_saha2021optimal}), LinDB-TS, and CoLSTIM \citep{ICML22_bengs2022stochastic}. 
LinDB-UCB and LinDB-TS can be treated as variants of \textbf{NDB-UCB} and \textbf{NDB-TS}, respectively, in which a linear function approximates the reward function.
As expected, \textbf{NDB-UCB} and \textbf{NDB-TS} outperform all linear baselines as these algorithms cannot estimate the non-linear reward function and hence incur linear regret. 
For a fair comparison, we used the best hyperparameters of LinDB-UCB and LinDB-TS (see \cref{fig:hyper_lin}) for \textbf{NDB-UCB} and \textbf{NDB-TS}. 
We observe the same trend for different non-linear reward functions (see \cref{fig:compareAlgosSquare} and \cref{fig:compareAlgosCosine} in \cref{sec:more_experiments}) and also for neural contextual bandit with binary feedback (see \cref{fig:compareAlgosGLM} in \cref{sec:neural_binary}).
\begin{figure}[!ht]
    \vspace{-5mm}
	\centering
	\subfloat[Square (Average)]{\label{fig:square_avg_2k}
		\includegraphics[width=0.24\linewidth]{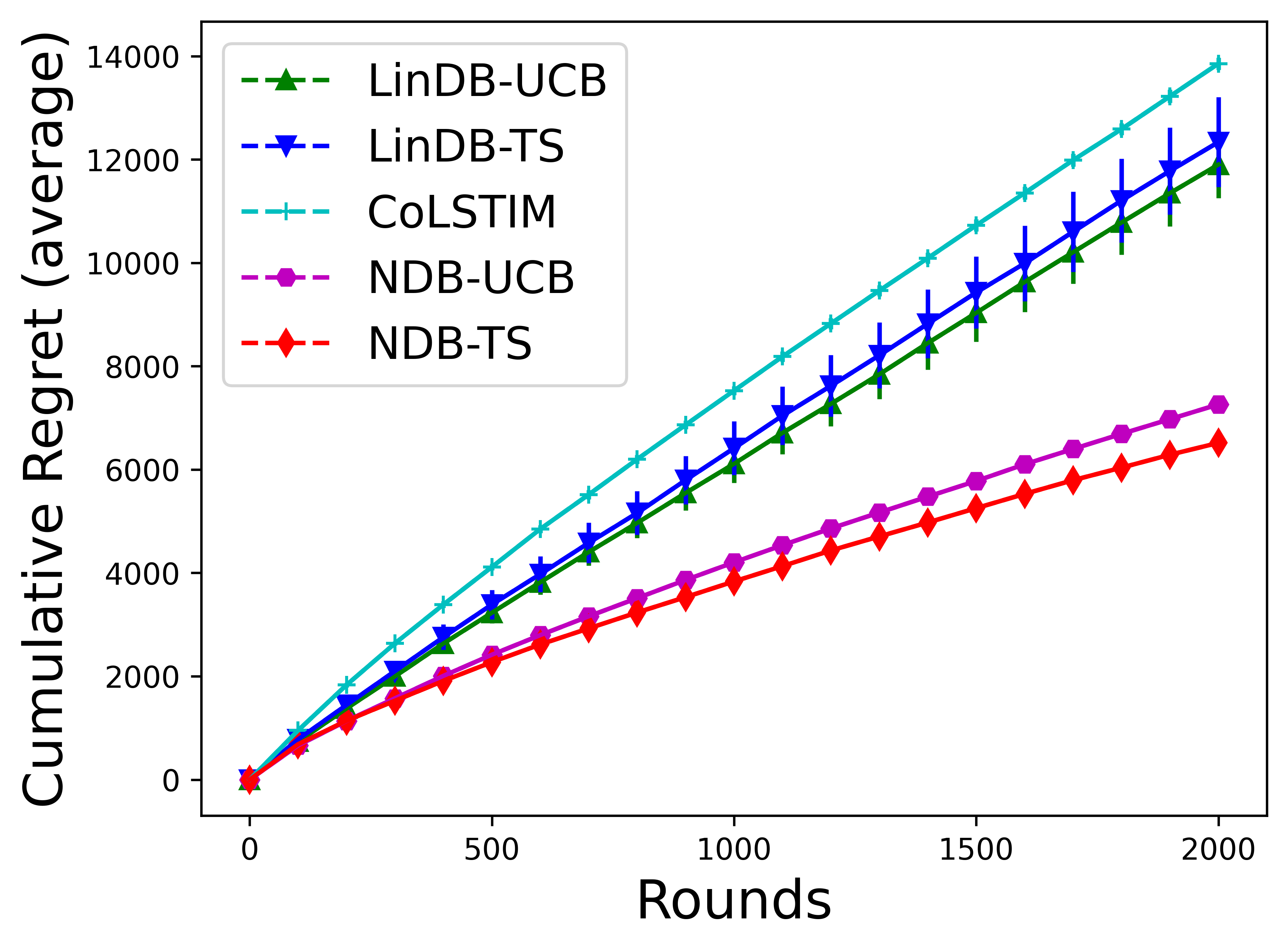}}
	\subfloat[Square (Weak)]{\label{fig:square_weak_2k}
		\includegraphics[width=0.24\linewidth]{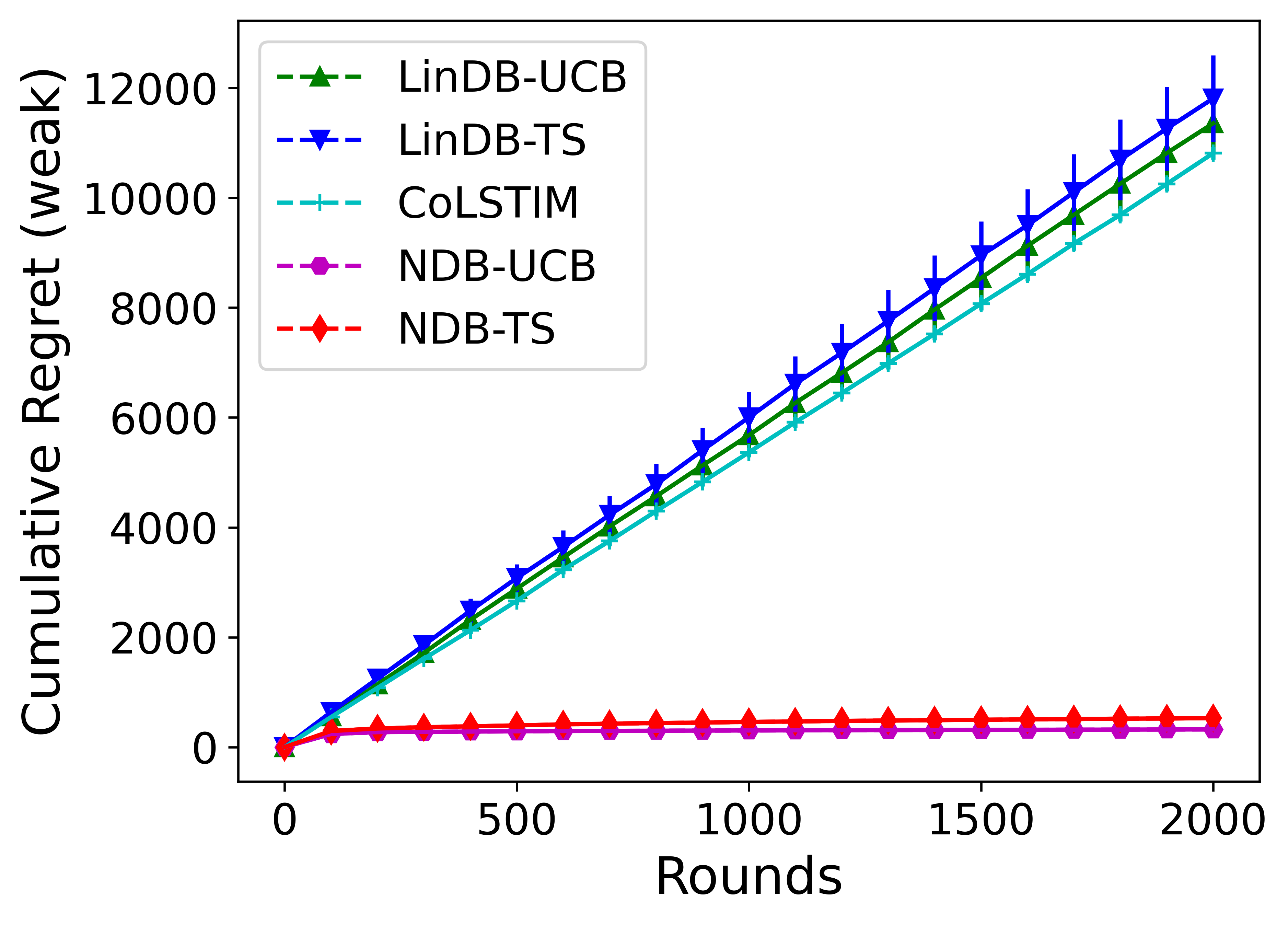}}
    \subfloat[Cosine (Average)]{\label{fig:cosine_avg_2k}
		\includegraphics[width=0.24\linewidth]{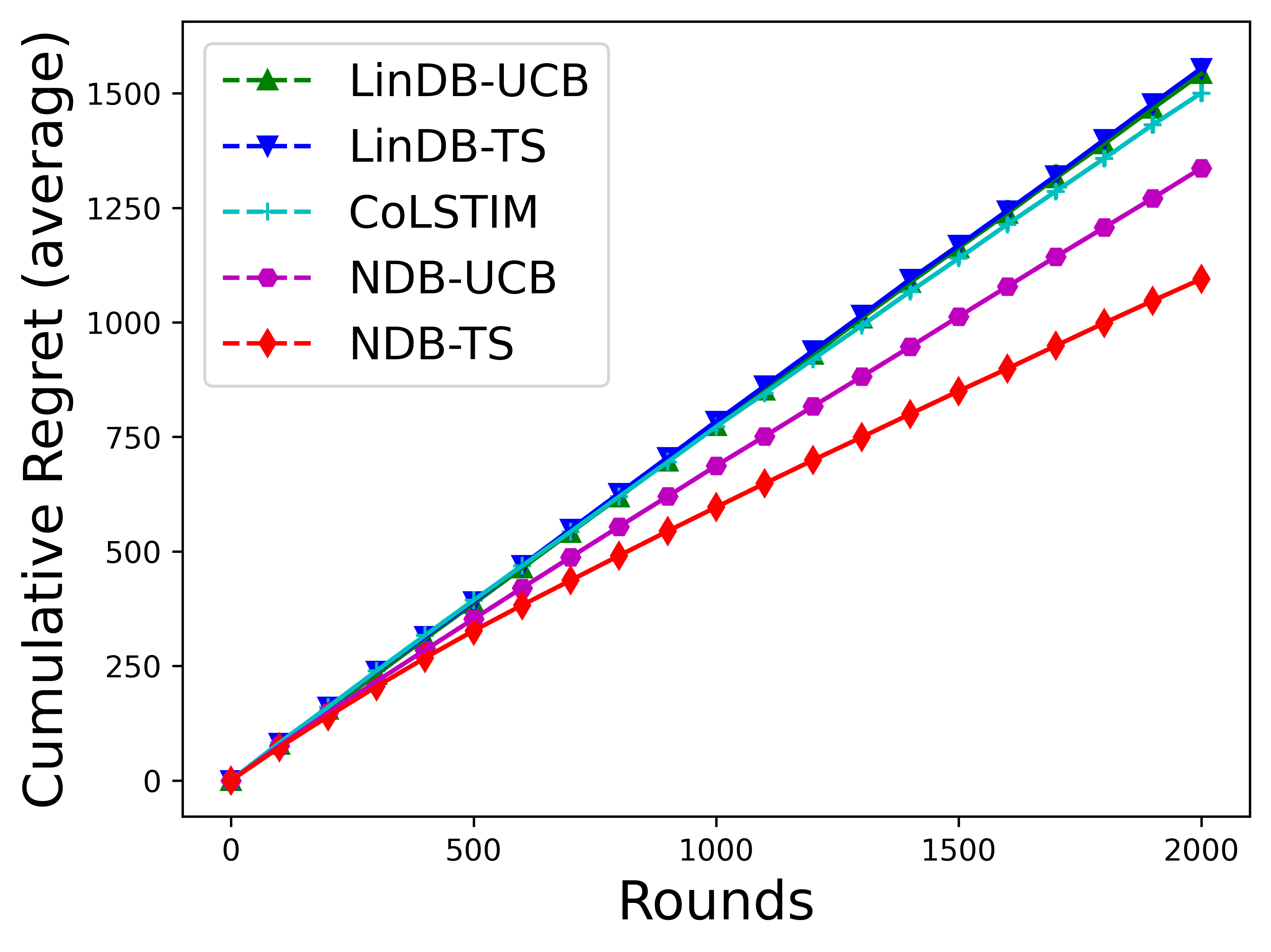}}
	\subfloat[Cosine (Weak)]{\label{fig:cosine_weak_2k}
		\includegraphics[width=0.24\linewidth]{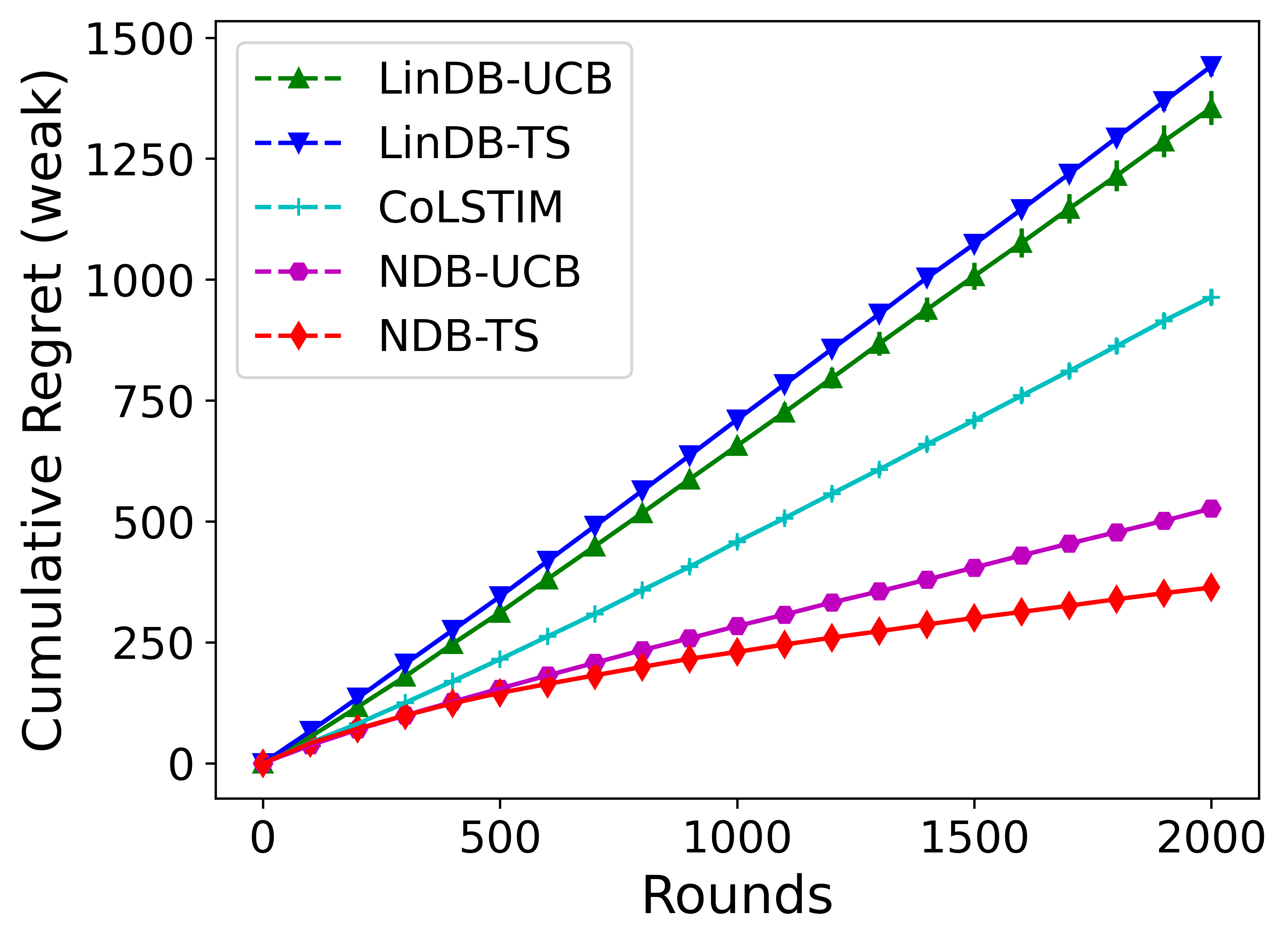}}
	\caption{
        Comparisons of cumulative regret (average and weak) of different dueling bandits algorithms for non-linear reward functions: Square  $(10(x^\top \theta)^2)$ and Cosine $(cos(3x^\top \theta))$. 
	}
	\label{fig:compareAlgos2k}
    \vspace{-3mm}
\end{figure}

\textbf{Varying dimension and arms vs. regret}~
Increasing the number of arms ($K$) and the dimension of the context-arm feature vectors $(d)$ makes the problem more difficult.
To see how increasing $K$ and $d$ affects the regret of our proposed algorithms, we vary the $K = \{ 5, 10, 15, 20, 25 \}$ and $d = \{5, 10, 15, 20, 25 \}$ while keeping the other problem parameters constant.
As expected, the regret of \textbf{NDB-UCB} increases with increase in $K$ and $d$ as shown in \cref{fig:neuraldb_ucb_ablations}. We also observe the same behavior for \textbf{NDB-TS} as shown \cref{fig:neuraldb_ts_ablations}. All missing figures from this section are in the \cref{sec:more_experiments}.
\begin{figure}[!ht]
    \vspace{-5mm}
	\centering
    \subfloat[Varying $K$  (Average)]{\label{fig:ucb_square_arms_avg}
		\includegraphics[width=0.24\linewidth]{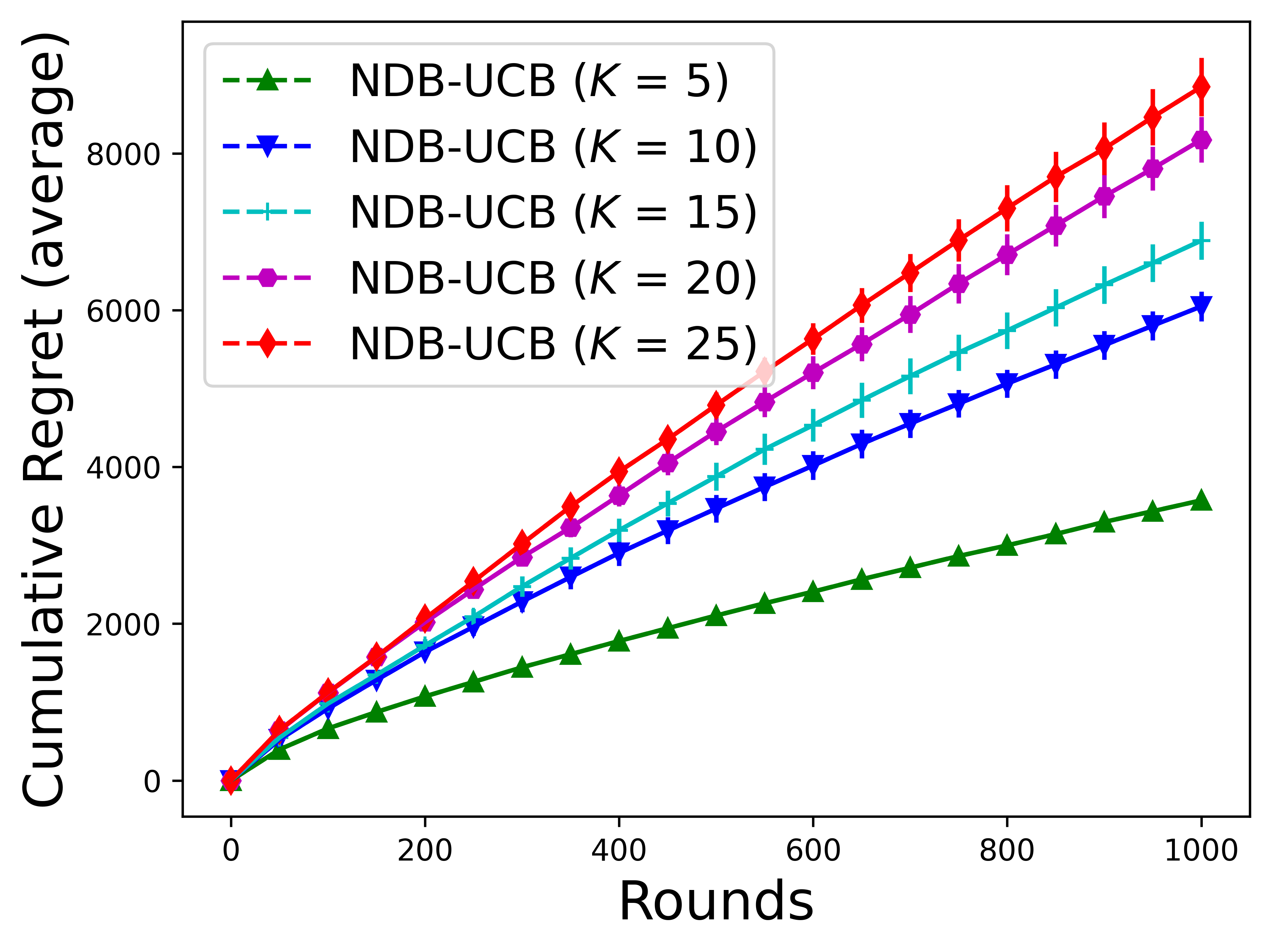}}
	\subfloat[Varying $K$  (Weak)]{\label{fig:ucb_square_arms_weak}
		\includegraphics[width=0.24\linewidth]{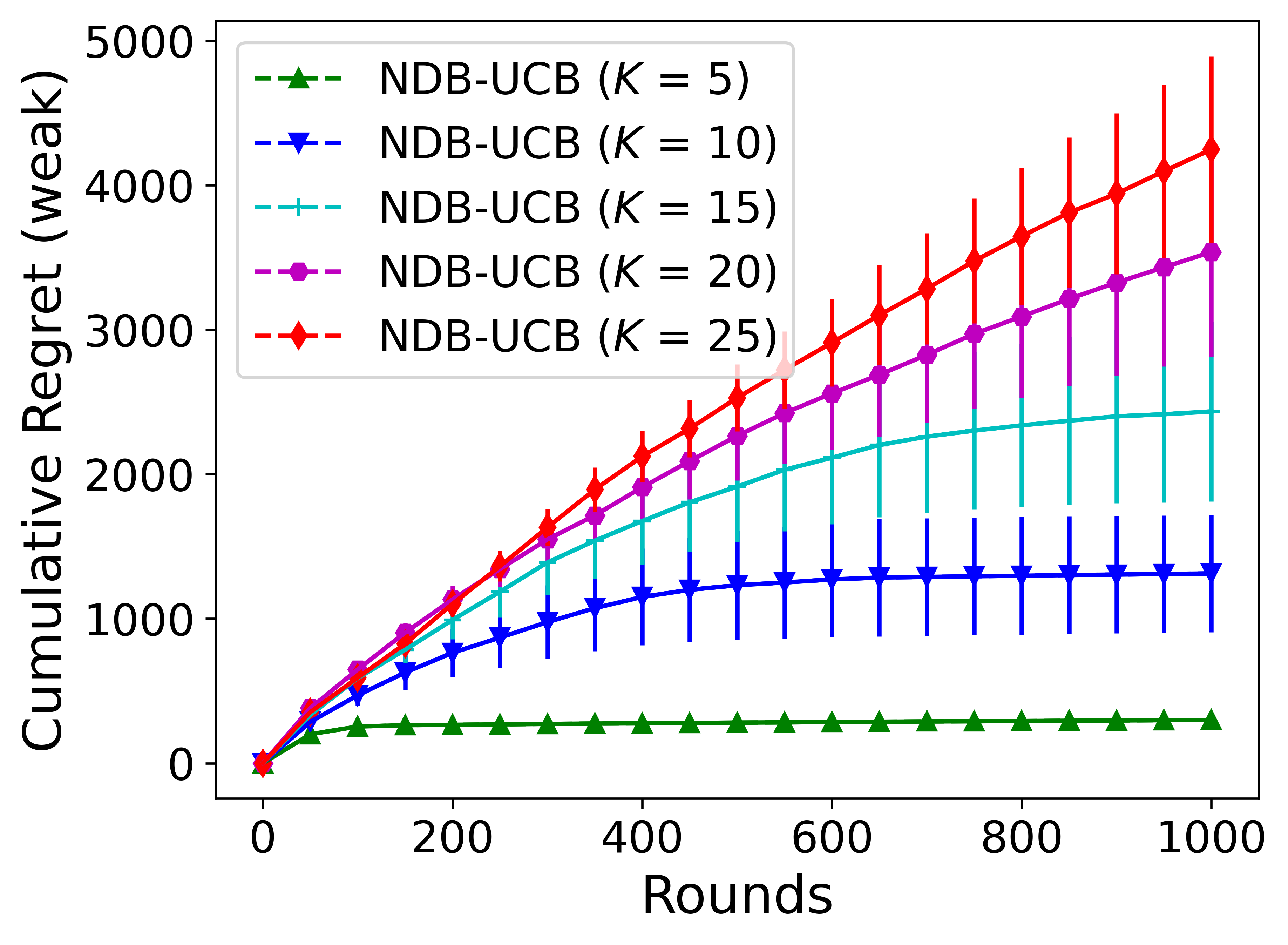}}
    \subfloat[Varying $d$ (Average)]{\label{fig:ucb_square_dim_avg}
		\includegraphics[width=0.24\linewidth]{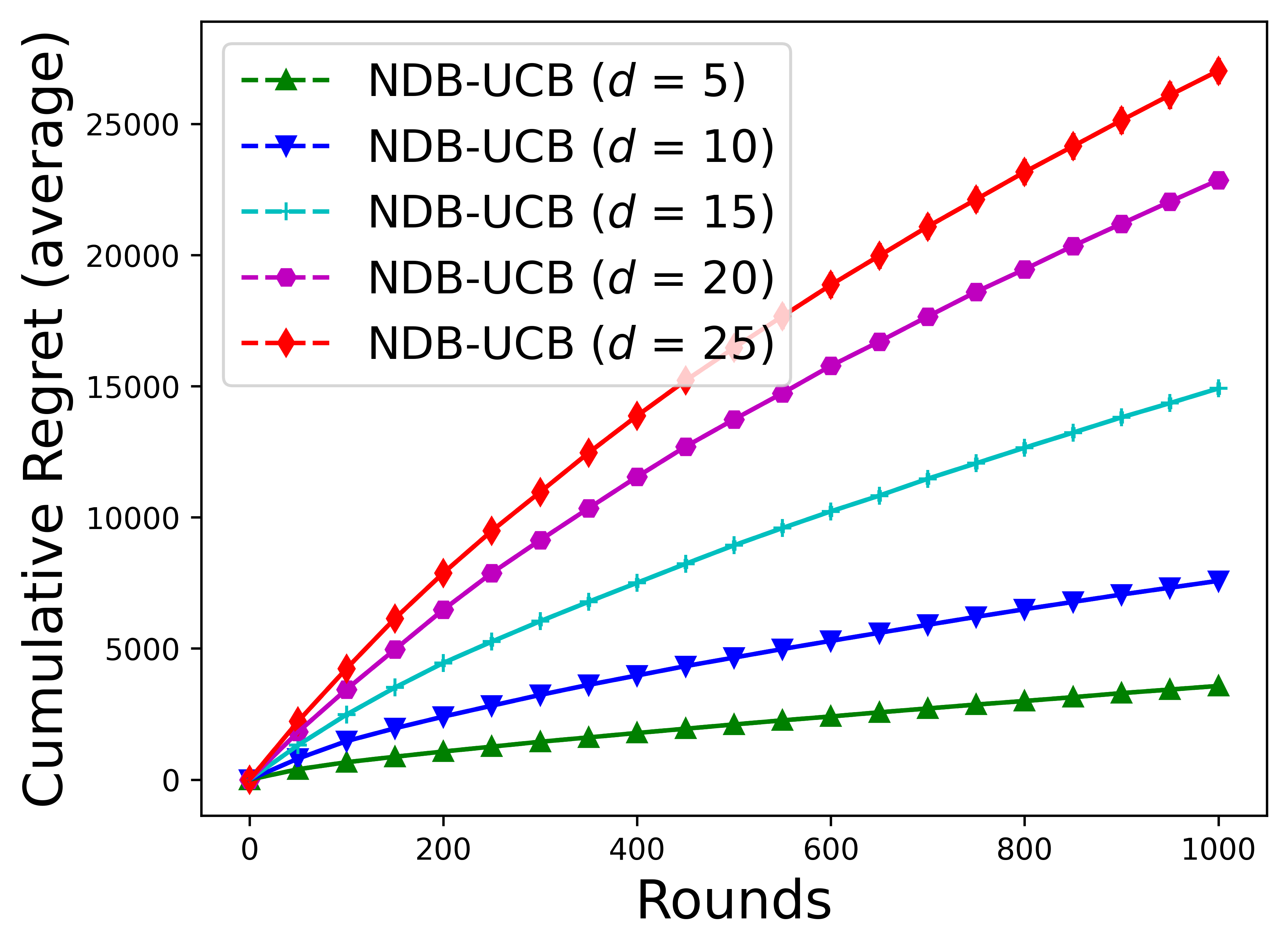}}
	\subfloat[Varying $d$  (Weak)]{\label{fig:ucb_square_dim_weak}
		\includegraphics[width=0.24\linewidth]{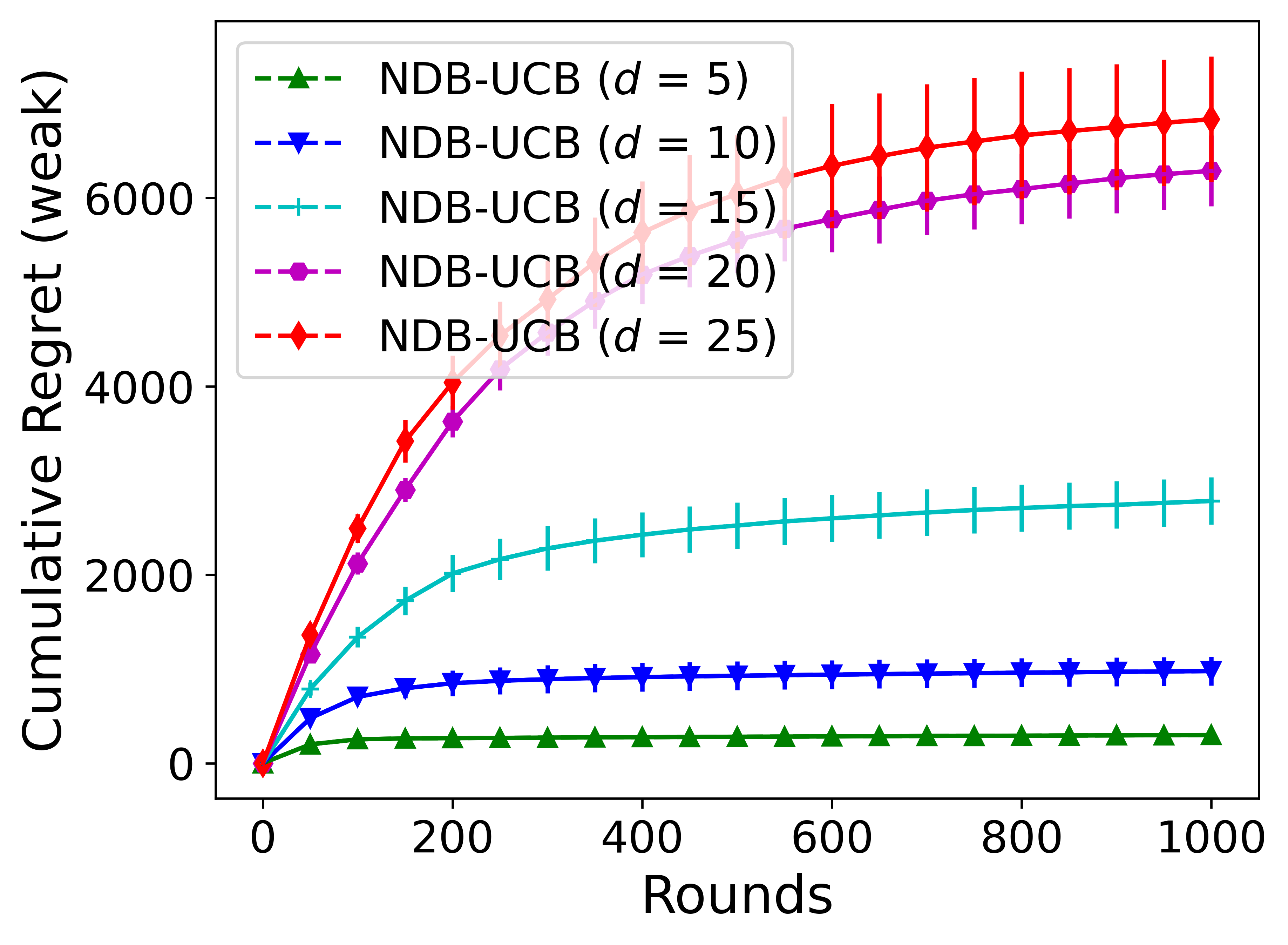}}
	\caption{
        Cumulative regret (average and weak) of \textbf{NDB-UCB} vs. different number of arms $(K)$ and dimension of the context-arm feature vector $(d)$ for Square reward function $(i.e., 10(x^\top \theta)^2)$.
	}
	\label{fig:neuraldb_ucb_ablations}
    % \vspace{-3mm}
\end{figure}
\begin{figure}[!ht]
    \vspace{-5mm}
	\centering
    \subfloat[Varying $d$ (Average)]{\label{fig:ts_square_dim_avg}
		\includegraphics[width=0.24\linewidth]{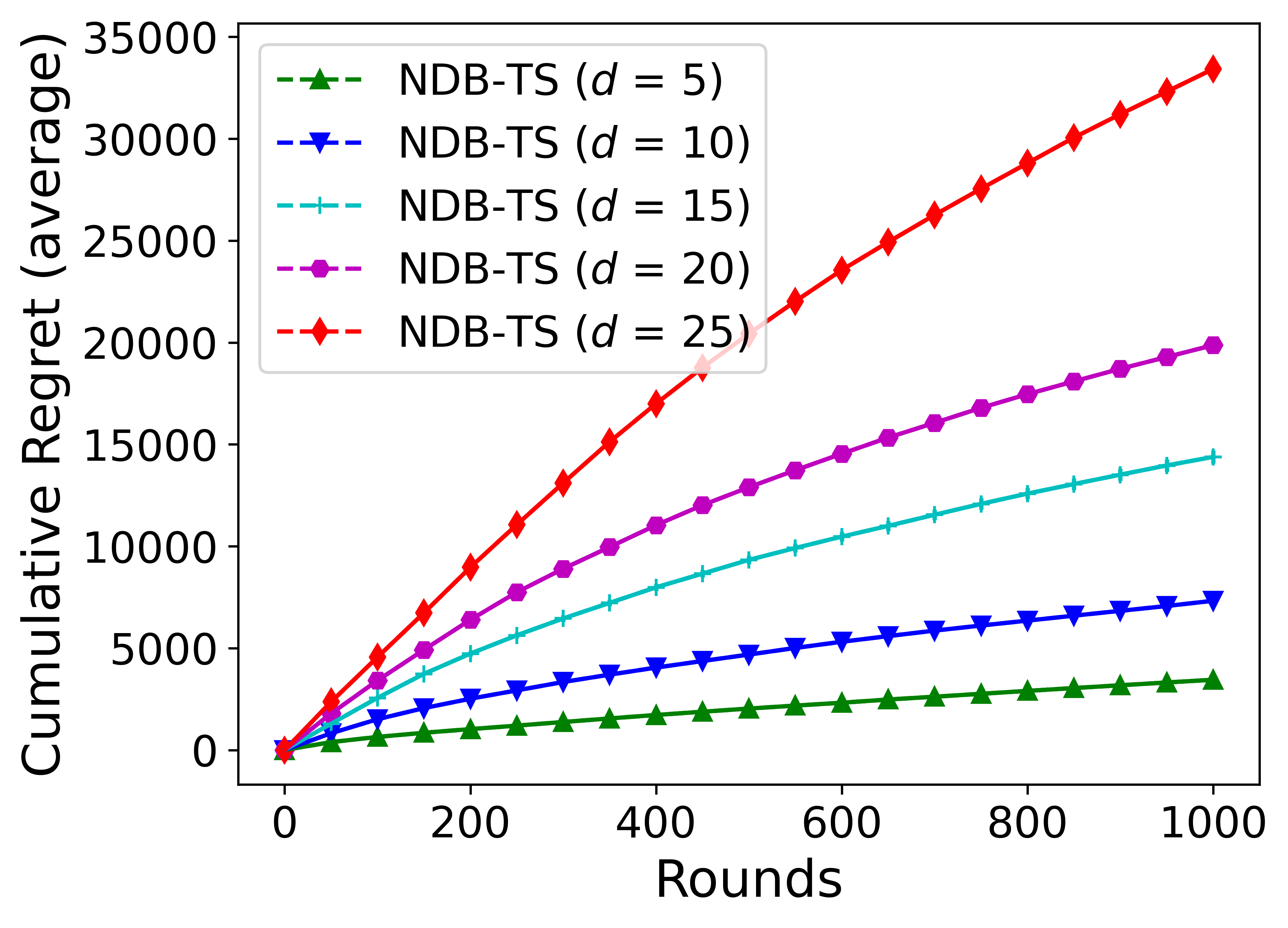}}
	\subfloat[Varying $d$  (Weak)]{\label{fig:ts_square_dim_weak}
		\includegraphics[width=0.24\linewidth]{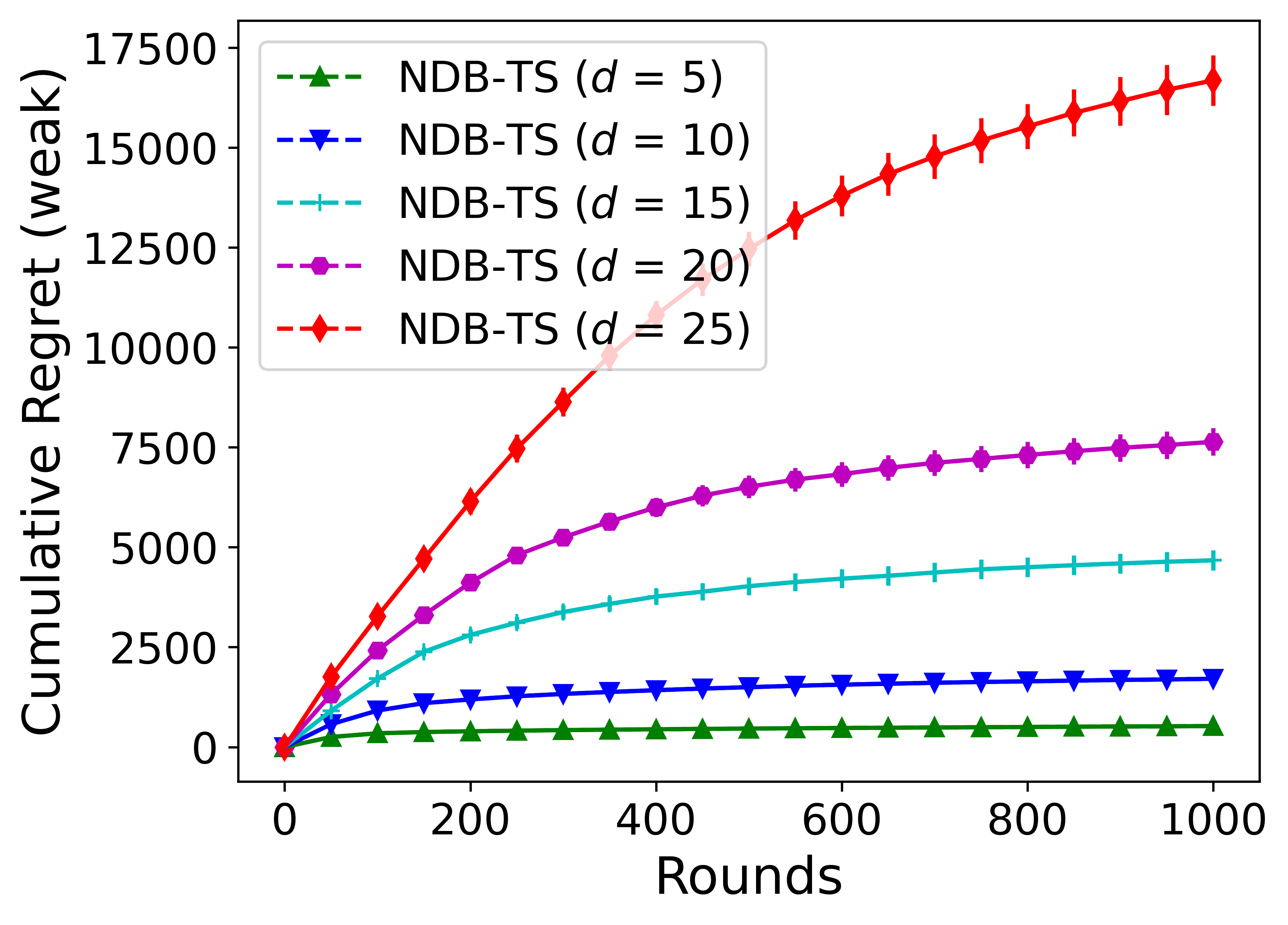}}
    \subfloat[Varying $K$  (Average)]{\label{fig:ts_square_arms_avg}
		\includegraphics[width=0.24\linewidth]{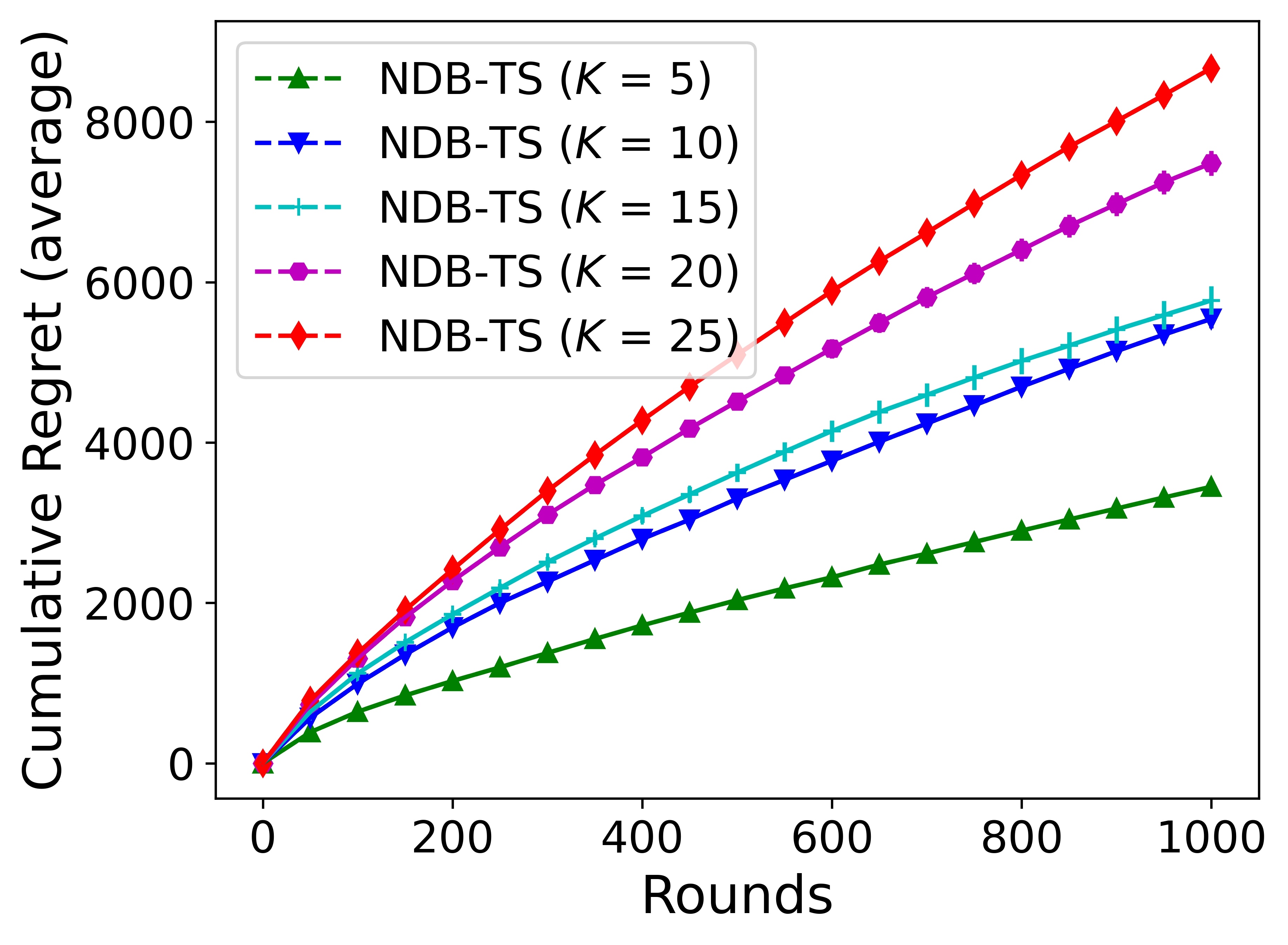}}
	\subfloat[Varying $K$  (Weak)]{\label{fig:ts_square_arms_weak}
		\includegraphics[width=0.24\linewidth]{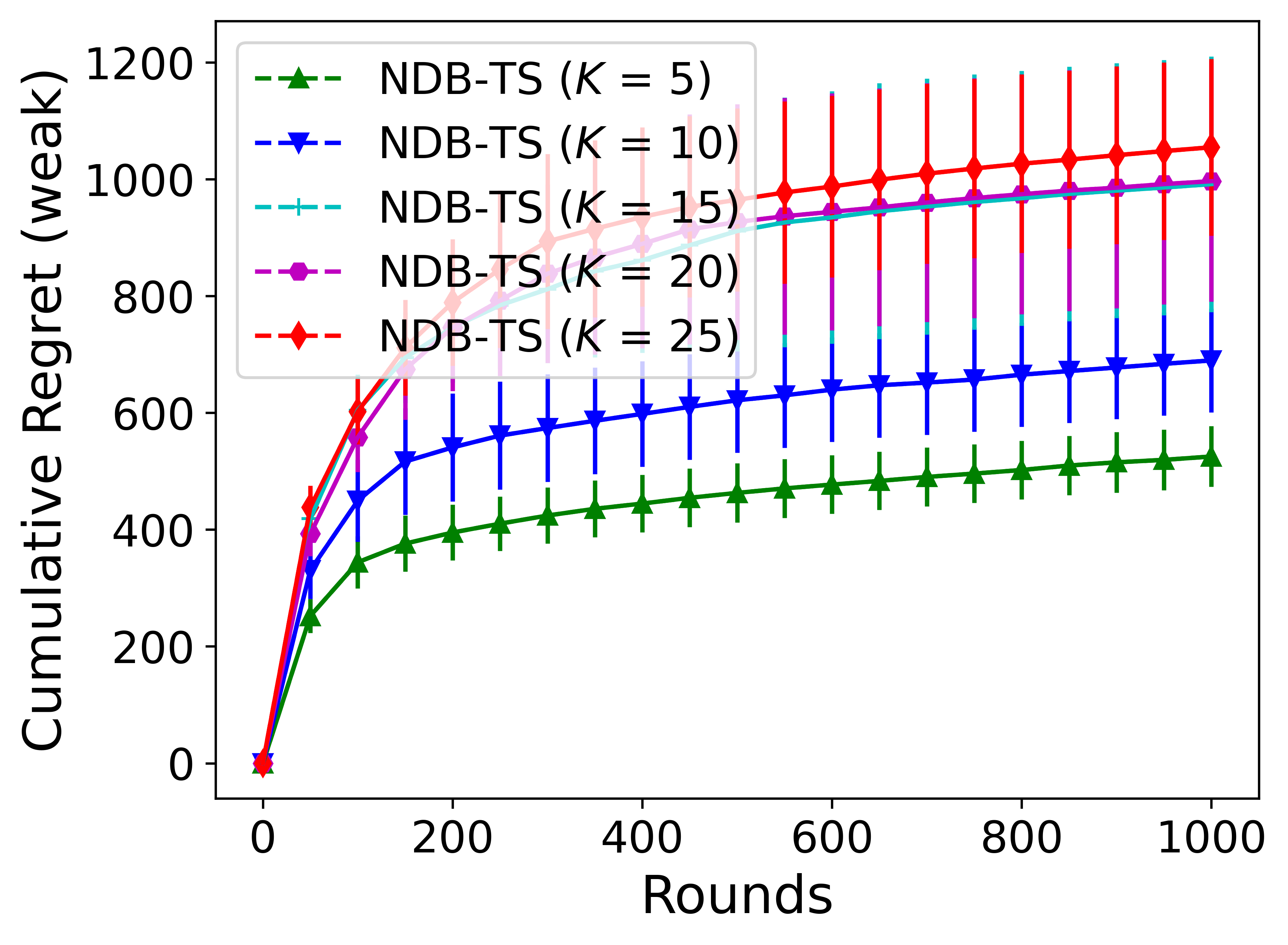}}
  
	\caption{
         Cumulative regret (average and weak) of \textbf{NDB-TS} vs. different number of arms $(K)$ and dimension of the context-arm feature vector $(d)$ for Square reward function $(i.e., 10(x^\top \theta)^2)$.
	}
	\label{fig:neuraldb_ts_ablations}
    \vspace{-3mm}
\end{figure}

\textbf{Practical challenges.}~
Applying algorithms in scenarios requiring fast online interactions can be challenging for two reasons: updating the neural network (NN) and the arm-selection strategy. To address the first challenge, we can use a batched version of our algorithm, where the NN is updated only after a fixed interval. To address the second challenge, we can use an  $\epsilon_t$-greedy method for selecting arms when computing optimistic values using UCB or TS is not computationally feasible.

\textbf{Computational resources used for all experiments.}~ All the experiments are run on a server with AMD EPYC 7543 32-Core Processor, 256GB RAM, and 8 GeForce RTX 3080.

    \section{Related Work}
    \label{sec:related_work}
    %!TEX root =  main.tex

We now review the relevant work, especially in dueling bandits and contextual bandits, to our problem.

\textbf{Finite-Armed Dueling Bandits.}~
Learning from pairwise or $K$-wise comparisons has been thoroughly explored in the literature. 
In the context of finite-armed dueling bandits, the learner only observes a pairwise preference between two selected arms, and the goal is to find the best arm \citep{ICML09_yue2009interactively,ICML11_yue2011beat,JCSS12_yue2012k}. 
In dueling bandit literature, different criteria (e.g., Borda winner, Condorcet winner, Copeland winner, or von Neumann winner) are used to find the best arm while focusing on minimizing the regret only using preference feedback \citep{WSDM14_zoghi2014relative,ICML14_ailon2014reducing,ICML14_zoghi2014relative,COLT15_komiyama2015regret,ICML15_gajane2015relative,UAI18_saha2018battle,AISTATS19_saha2019active, ALT19_saha2019pac,ICML23_zhu2023principled}. 
We refer the readers to \citep{JMLR21_bengs2021preference} for a detailed survey on various works covering different settings of dueling bandits.

\textbf{Contextual Bandits.}~
Many real-life applications in online recommendation, advertising, web search, and e-commerce can be modeled as contextual bandits \citep{NOW19_slivkins2019introduction, Book_lattimore2020bandit}.
In the contextual bandit setting, a learner receives a context (information before selecting an arm), selects an action for that context, and then receives a reward for the selected action. 
To deal with the large (or infinite) number of context-action pairs, the mean reward of each action is assumed to be parameterized by an unknown function of action-context features, e.g., linear \citep{WWW10_li2010contextual,AISTATS11_chu2011contextual,NIPS11_abbasi2011improved,ICML13_agrawal2013thompson}, GLM \citep{NIPS10_filippi2010parametric,ICML17_li2017provably,NIPS17_jun2017scalable}, and non-linear \citep{UAI13_valko2013finite,ICML17_chowdhury2017kernelized, zhou2020neural, zhang2020neural}. 
We adopt the simplest neural contextual bandit algorithms for dealing with non-linear rewards, i.e., NeuralUCB \citep{zhou2020neural} and NeuralTS \citep{zhang2020neural} to our setting, as also done in earlier works \citep{ICLR23_dai2022federated,ArXiv23_lin2023use}. 
Since NeuralUCB or NeuralTS primarily influences the arm selection strategy, we can incorporate any variants of these algorithms by making appropriate modifications to \cref{assumption:main} and \cref{thm:confBound}.
These adoptions can be challenging because of our setting solely relies on pairwise comparisons.

\textbf{Contextual Dueling Bandits.}~
The closest work to ours is contextual dueling bandits \citep{NeurIPS21_saha2021optimal,ALT22_saha2022efficient,ICML22_bengs2022stochastic,arXiv23_di2023variance,arXiv24_li2024feelgood}. Specifically, \cite{NeurIPS21_saha2021optimal,ALT22_saha2022efficient} proposed algorithms for contextual linear dueling bandits with pairwise and subset-wise preference feedback and established lower bounds for contextual preference bandits using a logistic link function. 
Whereas \cite{ICML22_bengs2022stochastic} generalized the setting to the contextual linear stochastic transitivity model, \cite{arXiv23_di2023variance} considered the variance-aware algorithm, and \cite{arXiv24_li2024feelgood} proposed an algorithm based on Feel-Good Thompson Sampling \citep{JMDS22zhang2022feel}. However, there are two key differences: non-linear reward function and arm-selection strategy. 
Existing work only considers the linear reward function, which may not be practical in many real-life applications. Our work fills this gap in the literature and generalizes the existing setting by considering the non-linear reward function in contextual dueling bandits. 
Furthermore, the existing works use different ways to select the pair of arms, leading to different arm-selection strategies than ours. Due to the differences in arm-selection strategy and non-linear reward function (which is estimated using an NN), our regret analysis is completely different than existing works.

    \section{Conclusion}
    \label{sec:conclusion}
    %!TEX root =  main.tex

Due to their prevalence in many real-life applications, from online recommendations to ranking web search results, we consider contextual dueling bandit problems that can have a complex and non-linear reward function. 
We used a neural network to estimate this reward function using human preference feedback observed for the previously selected arms.
We proposed upper confidence bound- and Thompson sampling-based algorithms with sub-linear regret guarantees for contextual dueling bandits. 
Experimental results using synthetic functions corroborate our theoretical results.
Our algorithms and theoretical results can also provide insights into the celebrated reinforcement learning with human feedback (RLHF) algorithm, such as a theoretical guarantee on the quality of the learned reward model.
We also extend our results to contextual bandit problems with binary feedback, which is in itself a non-trivial contribution.
A limitation of our work is that we currently do not account for problems where multiple arms are selected simultaneously (multi-way preference), which is an interesting future direction. 
Another future topic is to apply our algorithms to important real-world problems involving preference or binary feedback, e.g., LLM alignment using human feedback.

    \newpage
    \section*{Ethics Statement}
    \label{app:sec:broader:impacts}
    The contributions of our work are primarily theoretical. Therefore, we do not foresee any immediate negative societal impact in the short term.
    Regarding our longer-term impacts, as discussed in \cref{sec:connections:with:rlhf}, our algorithms can be potentially adopted to improve online RLHF. 
    On the positive side, our work can lead to better and more efficient alignment of LLMs through improved online RLHF, which could benefit society.
    On the other hand, the potential negative societal impacts arising from RLHF may also apply to our work. On the other hand, potential mitigation measures to prevent the misuse of RLHF would also help safeguard the potential misuse of our algorithms.

    \section*{Acknowledgements}
    This research is supported by the National Research Foundation (NRF), Prime Minister’s Office, Singapore under its Campus for Research Excellence and Technological Enterprise (CREATE) programme. The Mens, Manus, and Machina (M3S) is an interdisciplinary research group (IRG) of the Singapore MIT Alliance for Research and Technology (SMART) centre.
    DesCartes: this research is supported by the National Research Foundation, Prime Minister’s Office, Singapore under its Campus for Research Excellence and Technological Enterprise (CREATE) programme.

    \bibliographystyle{iclr2025_conference}
    \bibliography{references}

    \newpage
    \appendix
    %!TEX root =  main.tex

\section{Theoretical Analysis for \ref{alg:NDB-UCB}}
\label{app:sec:theoretical:analysis}

Let $\{x_{(n)}\}_{n=1}^{TK}$ be a set of all possible context-arm feature vectors: $\{x_{t,a}\}_{1\le t \le T, 1\le a \le K}$, where $n=K(t-1) + a$. Define 

$$
\widetilde{\mathbf{H}}_{p,q}^{(1)} = \mathbf{\Sigma}_{p,q}^{(1)} = \langle x_{(p)}, x_{(q)}  \rangle, \\
\mathbf{A}_{p,q}^{(l)} =\begin{pmatrix}
	\mathbf{\Sigma}_{p,q}^{(l)} & \mathbf{\Sigma}_{p,q}^{(l)} &\\
	\mathbf{\Sigma}_{p,q}^{(l)} &\mathbf{\Sigma}_{q,q}^{(l)} &
\end{pmatrix},
$$

$$
\mathbf{\Sigma}_{p,q}^{(l+1)} = 2\mathbb{E}_{(u,v)\sim\mathcal{N}(0,\mathbf{A}_{p,q}^{(l)} )}[\max\{u,0\}\max\{v,0\}],
$$

$$
\widetilde{\mathbf{H}}_{p,q}^{(l+1)} = 2\widetilde{\mathbf{H}}_{p,q}^{(l)}\mathbb{E}_{(u,v)\sim\mathcal{N}(0,\mathbf{A}_{p,q}^{(l)} )}[\mathbb{1}(u \ge 0)\mathbb{1}(v \ge 0)] + \mathbf{\Sigma}_{p,q}^{(l+1)}.
$$

Then, $\mathbf{H} = (\widetilde{\mathbf{H}}^{(L)} + \mathbf{\Sigma}^{(L)})/2$ is called the neural tangent kernel (NTK) matrix on the set of context-arm feature vectors $\{x_{(n)}\}_{n=1}^{TK}$.

Next, we first list the specific conditions we need for the width $m$ of the NN:
\begin{equation}
	\begin{split}
	&m \geq C T^4K^4L^6\log(T^2K^2L/\delta) / \lambda_0^4,\\
	&m(\log m)^{-3} \geq C \kappa_\mu^{-3} T^{8} L^{21} \lambda^{-5} ,\\
	&m(\log m)^{-3} \geq C \kappa_\mu^{-3} T^{14} L^{21} \lambda^{-11} L_\mu^6,\\
	&m(\log m)^{-3} \geq C T^{14} L^{18} \lambda^{-8},
	\end{split}
	\label{eq:conditions:on:m}
\end{equation}
for some absolute constant $C>0$.
To ease exposition, we express these conditions above as 
$m \geq \text{poly}(T, L, K, 1/\kappa_\mu, L_\mu, 1/\lambda_0, 1/\lambda, \log(1/\delta))$.

To simplify exposition, we use an error probability of $\delta$ for all probabilistic statements. Our final results hold naturally by taking a union bound over all required $\delta$'s.
The lemma below shows that the ground-truth utility function $f$ can be expressed as a linear function.
\begin{lem}[Lemma B.3 of \cite{zhang2020neural}]
\label{lemma:linear:utility:function}
As long as the width $m$ of the NN is wide enough:
\[
	m \geq C_0 T^4K^4L^6\log(T^2K^2L/\delta) / \lambda_0^4,
\]
then with probability of at least $1-\delta$, there exits a $\theta_f$ such that 
\[ 
	f(x) = \langle g(x;\theta_0), \theta_f - \theta_0 \rangle, \sqrt{m} \norm{\theta_f - \theta_0}_2 \leq \sqrt{2\mathbf{h}^{\top} \mathbf{H}^{-1} \mathbf{h}} \leq B.
\]
for all $x\in\mathcal{X}_{t},\forall t\in[T]$.
\end{lem}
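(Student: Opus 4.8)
The plan is to build $\theta_f$ explicitly out of the initial-gradient features $\{g(x;\theta_0)\}$ evaluated on the (finite) set of all $TK$ context-arm feature vectors, and then to control $\norm{\theta_f-\theta_0}_2$ using the fact that, for a wide enough network, the empirical gradient Gram matrix is close to the NTK matrix $\mathbf{H}$.

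First I would list all $TK$ context-arm feature vectors $\{x_{(n)}\}_{n=1}^{TK}$ as in the definition of $\mathbf{H}$ above, stack the scaled initial gradients into the $p\times TK$ matrix $\bar{G}\doteq\frac{1}{\sqrt{m}}\left[g(x_{(1)};\theta_0),\dots,g(x_{(TK)};\theta_0)\right]$, and set $\mathbf{h}=(f(x_{(1)}),\dots,f(x_{(TK)}))$. The key quantitative input is the standard over-parameterization concentration for the NTK (the result underlying the analyses of \cite{zhou2020neural,zhang2020neural}, built on \cite{NeurIPS18_jacot2018neural}): under the stated width condition $m\geq C_0 T^4K^4L^6\log(T^2K^2L/\delta)/\lambda_0^4$, with probability at least $1-\delta$ one has $\norm{\bar{G}^{\top}\bar{G}-\mathbf{H}}_2\leq\lambda_0/2$. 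Combining this with $\mathbf{H}\succeq\lambda_0\mathbf{I}$ from \cref{assumption:main} shows that $\bar{G}^{\top}\bar{G}$ is invertible and $\bar{G}^{\top}\bar{G}\succeq\mathbf{H}-\tfrac{\lambda_0}{2}\mathbf{I}\succeq\tfrac12\mathbf{H}$, hence $(\bar{G}^{\top}\bar{G})^{-1}\preceq 2\mathbf{H}^{-1}$.

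Next I would define $\theta_f\doteq\theta_0+\frac{1}{\sqrt{m}}\bar{G}(\bar{G}^{\top}\bar{G})^{-1}\mathbf{h}$. Then $\bar{G}^{\top}(\theta_f-\theta_0)=\frac{1}{\sqrt{m}}\mathbf{h}$, i.e.\ reading off the $n$-th coordinate, $\langle g(x_{(n)};\theta_0),\theta_f-\theta_0\rangle=f(x_{(n)})$ for every $n$; since each $x\in\mathcal{X}_t$ ($t\in[T]$) occurs among the $x_{(n)}$, the identity $f(x)=\langle g(x;\theta_0),\theta_f-\theta_0\rangle$ holds for all of them. For the norm, $\sqrt{m}\,\norm{\theta_f-\theta_0}_2=\norm{\bar{G}(\bar{G}^{\top}\bar{G})^{-1}\mathbf{h}}_2=\sqrt{\mathbf{h}^{\top}(\bar{G}^{\top}\bar{G})^{-1}\mathbf{h}}\leq\sqrt{2\,\mathbf{h}^{\top}\mathbf{H}^{-1}\mathbf{h}}\leq B$, using the PSD ordering above and the definition of $B$. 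A union bound over the single $\delta$ finishes the argument.

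The only non-elementary ingredient, and hence the main obstacle, is the concentration bound $\norm{\bar{G}^{\top}\bar{G}-\mathbf{H}}_2\leq\lambda_0/2$ under the given polynomial width requirement: this is where the dependence on $T,K,L,1/\lambda_0$ enters, and proving it from scratch would require the layer-by-layer control of the hidden-layer covariances and of the gradient inner products around their infinite-width limits for ReLU networks at initialization, followed by a union bound over the $O(T^2K^2)$ pairs. Since this is exactly Lemma B.3 of \cite{zhang2020neural} (whose proof in turn invokes the Gram-matrix perturbation bounds of \cite{zhou2020neural}), I would cite it rather than reproduce it; everything else in the lemma is then linear algebra.
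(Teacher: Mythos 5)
The paper never proves this lemma itself---it imports it verbatim as Lemma B.3 of \cite{zhang2020neural}---and your reconstruction is precisely the standard argument behind that cited result: define $\theta_f-\theta_0=\frac{1}{\sqrt{m}}\bar{G}\left(\bar{G}^{\top}\bar{G}\right)^{-1}\mathbf{h}$ so that the interpolation identity holds by construction, and bound $\mathbf{h}^{\top}\left(\bar{G}^{\top}\bar{G}\right)^{-1}\mathbf{h}\leq 2\,\mathbf{h}^{\top}\mathbf{H}^{-1}\mathbf{h}$ using the Gram-to-NTK concentration together with $\mathbf{H}\succeq\lambda_0\mathbf{I}$. Your proposal is correct, and the one non-elementary ingredient, the bound $\norm{\bar{G}^{\top}\bar{G}-\mathbf{H}}_2\leq\lambda_0/2$ at the stated width, is rightly identified and delegated to \cite{zhou2020neural,zhang2020neural}, exactly as the source proof does.
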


Let $y_s = \mathbbm{1}(x_{s,1}\succ x_{s,2})$, then we can write $\mathbb{P}\left( y_s=1 \right) = \mu \left( h(x_{s,1};\theta) - h(x_{s,2};\theta) \right)$ and $\mathbb{P}\left( y_s=0 \right) = 1 - \mu \left( h(x_{s,1};\theta) - h(x_{s,2};\theta) \right) = \mu \left( h(x_{s,2};\theta) - h(x_{s,1};\theta) \right)$.

\subsection{Theoretical Guarantee about the Neural Network}
\label{app:subsec:theory:NN}
The following lemma gives an upper bound on the distance between $\theta_t$ and $\theta_0$:
\begin{lem}
    \label{lemma:bound:diff:theta_t:theta_o}
    We have that $\norm{\theta_t - \theta_0}_2 \leq 2\sqrt{\frac{ t}{m\lambda}},\forall t\in[T]$.
\end{lem}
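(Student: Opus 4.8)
\textbf{Proof proposal for \cref{lemma:bound:diff:theta_t:theta_o}.}

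The plan is to exploit the fact that $\theta_t$ is defined as the minimizer of the loss function $\cL_t(\theta)$ in \cref{eq:customized:loss:function}, and simply compare the loss value at the minimizer $\theta_t$ with the loss value at the reference point $\theta_0$. Since $\theta_t$ minimizes $\cL_t$, we have $\cL_t(\theta_t) \le \cL_t(\theta_0)$. The key observation is that each summand $-\log\mu\bigl((-1)^{1-y_s}[h(x_{s,1};\theta) - h(x_{s,2};\theta)]\bigr)$ is nonnegative (because $\mu$ takes values in $(0,1)$, so $-\log\mu(\cdot) \ge 0$), hence the first term of $\cL_t(\theta_t)$ is $\ge 0$ and we obtain
\[
	\frac{\lambda}{2}\norm{\theta_t - \theta_0}_2^2 \le \cL_t(\theta_t) \le \cL_t(\theta_0).
\]
It then remains to upper bound $\cL_t(\theta_0)$.

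For the upper bound on $\cL_t(\theta_0)$: the regularization term vanishes at $\theta = \theta_0$, so $\cL_t(\theta_0) = -\frac{1}{m}\sum_{s=1}^{t-1}\log\mu\bigl((-1)^{1-y_s}[h(x_{s,1};\theta_0) - h(x_{s,2};\theta_0)]\bigr)$. Now invoke the consequence of \cref{assumption:main} noted right after it in the text, namely $h(x;\theta_0) = 0$ for all context-arm feature vectors. Therefore each argument of $\mu$ is $0$, giving $\mu(0) = 1/2$ and $-\log\mu(0) = \log 2$. Summing the $t-1$ terms yields $\cL_t(\theta_0) = \frac{(t-1)\log 2}{m} \le \frac{t\log 2}{m} \le \frac{t}{m}$ (using $\log 2 < 1$). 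Combining with the previous display, $\frac{\lambda}{2}\norm{\theta_t - \theta_0}_2^2 \le \frac{t}{m}$, so $\norm{\theta_t - \theta_0}_2 \le \sqrt{\frac{2t}{m\lambda}} \le 2\sqrt{\frac{t}{m\lambda}}$, which is the claimed bound (in fact with a slightly better constant).

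The main subtlety — really the only point that needs care — is justifying $h(x;\theta_0) = 0$. This is not automatic from random initialization; it relies on the specific symmetric initialization scheme of neural bandits (the last assumption in \cref{assumption:main}, $x_j = x_{j+d/2}$, combined with the mirrored initialization of $\theta_0$ as in Algorithm 1 of \cite{zhang2020neural}), which the excerpt explicitly states ensures $h(x;\theta_0) = 0$ for all relevant $x$. If one did not want to rely on exact cancellation, an alternative would be to bound $|h(x;\theta_0)|$ by a small quantity (e.g. using standard NTK initialization bounds showing $|h(x;\theta_0)| = \widetilde{O}(1)$ with high probability) and carry a harmless additive constant through; but given the stated setup, exact equality holds and the clean bound follows. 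Everything else is elementary: nonnegativity of $-\log\mu$, optimality of $\theta_t$, and arithmetic.
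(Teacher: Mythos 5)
Your proposal is correct and follows essentially the same route as the paper's own proof: compare $\cL_t(\theta_t)\le\cL_t(\theta_0)$, use the nonnegativity of the cross-entropy term together with $h(x;\theta_0)=0$ (from the symmetric initialization and \cref{assumption:main}) to get $\cL_t(\theta_0)\le t/m$, and conclude $\norm{\theta_t-\theta_0}_2\le\sqrt{2t/(m\lambda)}\le 2\sqrt{t/(m\lambda)}$. The only minor difference is presentational — you state the nonnegativity and the $\log 2<1$ step slightly more explicitly — but the argument is the same.
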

\begin{proof}
    As $\mu(\cdot) \in [0,1]$, then using \cref{eq:customized:loss:function} gives us
    \als{
	    \frac{1}{2}\lambda \norm{\theta_t - \theta_0}^2_2 &\leq \cL_t(\theta_t) \leq \cL_t(\theta_0) \\
	    & = - \frac{1}{m} \sum^{t-1}_{s=1} \Big[ \mathbbm{1}_{x_{t,1}\succ x_{t,2}}\log \mu \left( h(x_{s,1};\theta_0) - h(x_{s,2};\theta_0) \right) + \\
	    &\qquad (1 - \mathbbm{1}_{x_{t,1}\succ x_{t,2}}) \log \mu \left( h(x_{s,2};\theta_0) - h(x_{s,1};\theta_0) \right) \Big]+  \frac{1}{2}\lambda \norm{\theta_0  - \theta_0}^2_{2}\\
	    & \stackrel{(a)}{=} - \frac{1}{m} \sum^{t-1}_{s=1} \Big[ \mathbbm{1}_{x_{t,1}\succ x_{t,2}}\log \mu \left( 0 \right) +  (1 - \mathbbm{1}_{x_{t,1}\succ x_{t,2}}) \log \mu \left( 0 \right) \Big]\\
	    & = - \frac{1}{m} \sum^{t-1}_{s=1} \log 0.5 \\
	    &\leq \frac{1}{m} t (-\log 0.5)\\
	    &\stackrel{(b)}{\leq} \frac{t}{m}.
    }
    Step $(a)$ follow because $h(x;\theta_0)=0,\forall x\in\cX,t\in[T]$ which is ensured by \cref{assumption:main}, step $(b)$ follows because $-\log 0.5 < 1$.
    Therefore, we have that $\norm{\theta_t - \theta_0}_2 \leq \sqrt{2\frac{t}{m\lambda}} \leq 2\sqrt{\frac{t}{m\lambda}}$.
\end{proof}

Now, \cref{lemma:bound:diff:theta_t:theta_o} allows us to obtain the following lemmas regarding the gradients of the NN.

\begin{lem}
    \label{lemma:approx:error:gradient:norms}
    Let $\tau = 2\sqrt{\frac{ t}{m\lambda}}$. 
    Then for absolute constants $C_3,C_1>0$, with probability of at least $1-\delta$,
    \[
    	\norm{g(x;\theta_t)}_2 \leq C_3\sqrt{mL},
    \]
    \[
    	\norm{g(x;\theta_0) - g(x;\theta_t)}_{2} \leq C_1 \sqrt{m\log m} \tau^{1/3} L^{7/2} = C_1 m^{1/3} \sqrt{\log m} \left(\frac{t}{\lambda}\right)^{1/3} L^{7/2},
    \]
    for all $x\in\mathcal{X}_t,t\in[T]$.
\end{lem}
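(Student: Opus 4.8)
\textbf{Proof proposal for \cref{lemma:approx:error:gradient:norms}.}
The plan is to invoke the standard ``gradients of a wide ReLU network are well-behaved near initialization'' toolbox from the NTK literature, exactly as it is used in \cite{zhou2020neural,zhang2020neural,NIPS19_cao2019generalization}, and to feed it the radius bound $\norm{\theta_t-\theta_0}_2\le\tau=2\sqrt{t/(m\lambda)}$ from \cref{lemma:bound:diff:theta_t:theta_o}. First I would record that, under \cref{assumption:main} (feature vectors of unit norm, the symmetry condition $x_j=x_{j+d/2}$, and the standard initialization), the relevant over-parameterization lemmas apply uniformly over the finite set of $TK$ context-arm feature vectors. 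Concretely, there is an absolute constant such that at initialization $\norm{g(x;\theta_0)}_2\le C\sqrt{mL}$ for every $x$ with high probability (this is the usual bound on the norm of the initial Jacobian of a depth-$L$ width-$m$ network; see Lemma B.5 / B.6 in \cite{zhang2020neural} or the analogous statement in \cite{zhou2020neural}), and this holds simultaneously for all $x\in\mathcal{X}_t,t\in[T]$ after a union bound over the $TK$ vectors, absorbed into the probability $\delta$ and the poly-in-$m$ conditions of \eqref{eq:conditions:on:m}.

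The second ingredient is a Lipschitz-type perturbation bound for the gradient map: there is an absolute constant $C_1$ such that for any $\theta$ with $\norm{\theta-\theta_0}_2\le\tau$ and $\tau$ small enough (which is guaranteed once $m$ exceeds the polynomial threshold in \eqref{eq:conditions:on:m}, since then $\tau=2\sqrt{t/(m\lambda)}$ is as small as needed), one has
\[
    \norm{g(x;\theta)-g(x;\theta_0)}_2 \le C_1\sqrt{m\log m}\,\tau^{1/3}L^{7/2}
\]
for all $x$; this is precisely the perturbation lemma used in \cite{zhou2020neural,zhang2020neural} (it follows from bounding the change in each layer's pre-activations and in the ReLU activation patterns as $\theta$ moves within a ball of radius $\tau$, which introduces the $\tau^{1/3}$ and the $\sqrt{\log m}$ through Gaussian-concentration estimates on the random initialization). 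Applying this with $\theta=\theta_t$ and substituting $\tau=2\sqrt{t/(m\lambda)}$ gives $\tau^{1/3}=(2)^{1/3}m^{-1/6}(t/\lambda)^{1/3}$, so $\sqrt{m\log m}\,\tau^{1/3}=\Theta\big(m^{1/3}\sqrt{\log m}\,(t/\lambda)^{1/3}\big)$, which is exactly the claimed second bound (after renaming the absolute constant $C_1$). The first bound then follows immediately by the triangle inequality: $\norm{g(x;\theta_t)}_2\le\norm{g(x;\theta_0)}_2+\norm{g(x;\theta_t)-g(x;\theta_0)}_2\le C\sqrt{mL}+o(\sqrt{mL})\le C_3\sqrt{mL}$, where the second term is lower order in $m$ once the conditions on $m$ hold (since $m^{1/3}\sqrt{\log m}\ll m^{1/2}$).

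The only real obstacle is bookkeeping rather than mathematics: one must make sure the two cited perturbation/initialization lemmas hold \emph{with the specific radius} $\tau=2\sqrt{t/(m\lambda)}$ and \emph{uniformly over all $t\in[T]$ and all $x\in\mathcal{X}_t$} on a single high-probability event. This is handled by checking that the conditions on $m$ in \eqref{eq:conditions:on:m} dominate the conditions required by those lemmas for the largest relevant radius ($t=T$) and that a union bound over the $TK$ context-arm vectors only costs a $\log(TK/\delta)$ factor, which is already absorbed. Once that is in place, the two displayed inequalities are direct substitutions, so I would keep the proof short: cite \cref{lemma:bound:diff:theta_t:theta_o} for the radius, cite the NTK gradient lemmas for the two bounds at a generic point within that radius, substitute $\tau$, and close with the triangle inequality for the first claim.
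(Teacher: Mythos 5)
Your proposal is correct and follows essentially the same route as the paper, which simply checks that $\tau = 2\sqrt{t/(m\lambda)}$ (from \cref{lemma:bound:diff:theta_t:theta_o}) meets the radius requirement of Lemmas B.5 and B.6 of \cite{zhang2020neural} and then cites those results; your extra triangle-inequality step for the first bound is unnecessary since the cited gradient-norm lemma already holds for any $\theta$ in the ball, but it is harmless and yields the same conclusion.
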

\begin{proof}
    It can be easily verified that our $\tau = 2\sqrt{\frac{t}{m\lambda}}$ 
    satisfies the requirement on $\tau$ specified in Lemmas B.5 and B.6 from \cite{zhang2020neural}. Therefore, the results from Lemmas B.5 and B.6 from \cite{zhang2020neural} are applicable for $\theta_t$ because our \cref{lemma:bound:diff:theta_t:theta_o} guarantees that $\norm{\theta_t-\theta_0}_2\leq \tau$.
\end{proof}

In addition, Lemmas B.4 from \cite{zhou2020neural} allows us to obtain the following lemma, which shows that the output of the NN can be approximated by its linearization.
\begin{lem}
    \label{lemma:approx:error:linear:nn}
    Let $\tau \triangleq 2\sqrt{\frac{t}{m\lambda}}$. 
    Let $\varepsilon'_{m,t} \triangleq C_2 m^{-1/6}\sqrt{\log m} L^3 \left(\frac{t}{\lambda}\right)^{4/3}$.
    Then for some absolute constant $C_2>0$, with probability of at least $1-\delta$,
    \als{
        |h(x;\theta_t) - \langle \theta_t - \theta_0, g(x;\theta_0) \rangle| \leq C_2 \tau^{4/3} L^3 \sqrt{m\log m} &= C_2 m^{-1/6}\sqrt{\log m} L^3 \left(\frac{t}{\lambda}\right)^{4/3} = \varepsilon'_{m,t},
    }
    for all $x\in\mathcal{X}_t,t\in[T]$.
\end{lem}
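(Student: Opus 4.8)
\textbf{Proof proposal for Lemma~\ref{lemma:approx:error:linear:nn}.}
The plan is to invoke the NTK linearization bound of \cite{zhou2020neural} (their Lemma B.4), which states that for a wide enough NN and any parameter vector $\theta$ satisfying $\norm{\theta - \theta_0}_2 \leq \tau$ with $\tau$ in the admissible range, one has $|h(x;\theta) - \langle \theta - \theta_0, g(x;\theta_0)\rangle| \leq C_2 \tau^{4/3} L^3 \sqrt{m \log m}$ for all inputs $x$ with $\norm{x}_2 = 1$. The only thing to check before applying this off-the-shelf result is that our trained parameters $\theta_t$ actually lie in a ball of the required radius around $\theta_0$; this is exactly the content of \cref{lemma:bound:diff:theta_t:theta_o}, which gives $\norm{\theta_t - \theta_0}_2 \leq 2\sqrt{t/(m\lambda)} = \tau$.

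First I would fix $t \in [T]$ and set $\tau = 2\sqrt{t/(m\lambda)}$ as in the statement. I would verify that this choice of $\tau$ satisfies the hypotheses required by Lemma B.4 of \cite{zhou2020neural} (an upper bound of the form $\tau \leq \mathrm{poly}(\ldots) m^{-\text{const}}$, plus the lower bound on $m$), which follows directly from our conditions on $m$ in \cref{eq:conditions:on:m} exactly as in the proof of \cref{lemma:approx:error:gradient:norms}. Then I would apply that lemma with $\theta = \theta_t$, using \cref{lemma:bound:diff:theta_t:theta_o} to justify that $\theta_t$ is in the required ball, to conclude
\[
    |h(x;\theta_t) - \langle \theta_t - \theta_0, g(x;\theta_0)\rangle| \leq C_2 \tau^{4/3} L^3 \sqrt{m \log m}
\]
with probability at least $1 - \delta$, for all $x \in \mathcal{X}_t$. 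Finally I would substitute $\tau^{4/3} = (2\sqrt{t/(m\lambda)})^{4/3} = 2^{4/3} m^{-2/3} (t/\lambda)^{2/3}$, so that $\tau^{4/3}\sqrt{m} = 2^{4/3} m^{-1/6} (t/\lambda)^{2/3}$; absorbing the constant $2^{4/3}$ into $C_2$ and recalling $\varepsilon'_{m,t} = C_2 m^{-1/6}\sqrt{\log m}\, L^3 (t/\lambda)^{4/3}$ — note the paper's definition carries exponent $4/3$ on $(t/\lambda)$, which I would take as given — yields the claimed bound. A union bound over $t \in [T]$ (and over the finitely many context-arm vectors) then gives the statement uniformly, with the $\delta$ rescaled as noted in the text.

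There is essentially no real obstacle here: this lemma is a direct corollary of a known NTK approximation result once the radius bound \cref{lemma:bound:diff:theta_t:theta_o} is in place. The only mild subtlety is bookkeeping — making sure the polynomial conditions on $m$ in \cref{eq:conditions:on:m} are strong enough that $\tau$ falls in the valid range for Lemma B.4 of \cite{zhou2020neural}, and tracking the powers of $m$, $t$, $\lambda$, $L$ through the substitution so that the final exponents match the stated $\varepsilon'_{m,t}$. Both are routine given that the analogous substitution was already carried out for the gradient-perturbation bound in \cref{lemma:approx:error:gradient:norms}.
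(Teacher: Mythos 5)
Your proposal matches the paper's own argument: the paper likewise obtains this lemma by directly invoking Lemma B.4 of \cite{zhou2020neural} with $\theta=\theta_t$, using \cref{lemma:bound:diff:theta_t:theta_o} to guarantee $\norm{\theta_t-\theta_0}_2\leq\tau$ and the conditions on $m$ in \cref{eq:conditions:on:m} to ensure $\tau$ lies in the admissible range. Your side remark about the exponent is also apt: the substitution genuinely yields $(t/\lambda)^{2/3}$, and the paper's displayed "equality" with $(t/\lambda)^{4/3}$ is the same harmless looseness (a valid upper bound whenever $t\geq\lambda$), so nothing in your argument is missing relative to the paper.
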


An immediate consequence of \cref{lemma:approx:error:linear:nn} is the following lemma.
\linearNN*

\begin{proof}
	By re-arranging the left-hand side and then using \cref{lemma:approx:error:linear:nn}, we get 
	\begin{equation}
		\begin{split}
			|\langle \phi(x)&-\phi(x'), \theta_t - \theta_0 \rangle - (h(x;\theta_t) - h(x';\theta_t)) | \\
			&= |\langle \phi(x), \theta_t - \theta_0 \rangle - h(x;\theta_t) + h(x';\theta_t) - \langle \phi(x'), \theta_t - \theta_0 \rangle |\\
			&\leq |\langle \phi(x), \theta_t - \theta_0 \rangle - h(x;\theta_t)| + |h(x';\theta_t) - \langle \phi(x'), \theta_t - \theta_0 \rangle |\\
			&\leq 2 C_2 m^{-1/6}\sqrt{\log m} L^3 \left(\frac{t}{\lambda}\right)^{4/3}\\
			&= 2\varepsilon'_{m,t}. \qedhere
		\end{split}
		\label{eq:diff:between:nn:and:linear:dueling}
	\end{equation}
\end{proof}

\subsection{Proof of Confidence Ellipsoid}
\label{app:proof:subsec:conf:ellip}
In our next proofs, we denote $\phi'_s \triangleq g(x_{s,1};\theta_0) - g(x_{s,2};\theta_0)$, $\widetilde{\phi}'_s \triangleq g(x_{s,1};\theta_t) - g(x_{s,2};\theta_t)$, and $\widetilde{h}_{s,t} \triangleq h(x_{s,1};\theta_t)-h(x_{s,2};\theta_t)$.
Recall that $p$ is the total number of parameters of the NN.
We next prove the confidence ellipsoid for our algorithm, including \cref{lemma:conf:ellip} and \cref{thm:confBound} below.
\begin{lem}
\label{lemma:conf:ellip}
Let $\beta_T \triangleq \frac{1}{\kappa_\mu} \sqrt{ \widetilde{d} + 2\log(1/\delta)}$.
Assuming that the conditions on $m$ from \cref{eq:conditions:on:m} are satisfied.
With probability of at least $1-\delta$, we have that
\[
	\sqrt{m} \norm{\theta_{f} - \theta_{t}}_{V_{t-1}} \leq  \beta_T + B \sqrt{\frac{\lambda}{\kappa_\mu}} + 1, \qquad \forall t\in[T].
\]
\end{lem}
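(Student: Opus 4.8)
The plan is to reproduce the classical generalized-linear confidence-ellipsoid argument (as in \cite{ICML17_li2017provably,ICML22_bengs2022stochastic}), but carried out around the \emph{linearization} of the trained network, with all NTK approximation errors tracked explicitly. Throughout I write $\phi'_s \doteq \phi(x_{s,1})-\phi(x_{s,2}) = g(x_{s,1};\theta_0)-g(x_{s,2};\theta_0)$, $\psi_s \doteq \phi'_s/\sqrt m$, $\widetilde\phi'_s \doteq g(x_{s,1};\theta_t)-g(x_{s,2};\theta_t)$, $\widetilde h_{s,t} \doteq h(x_{s,1};\theta_t)-h(x_{s,2};\theta_t)$, $\bar h_{s,t}\doteq\langle\phi'_s,\theta_t-\theta_0\rangle$, and $f'_s\doteq f(x_{s,1})-f(x_{s,2})=\langle\phi'_s,\theta_f-\theta_0\rangle$ (the last identity is exact by \cref{lemma:linear:utility:function}). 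Note that $V_{t-1}=\sum_{s=1}^{t-1}\psi_s\psi_s^{\top}+\tfrac{\lambda}{\kappa_\mu}I$, and that the noise $\epsilon_s\doteq y_s-\mu(f'_s)\in[-1,1]$ has conditional mean zero, hence is conditionally $1$-sub-Gaussian.

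First I would start from the first-order optimality condition \cref{eq:setting:loss:gradient:to:zero} satisfied by $\theta_t$, substitute $y_s=\mu(f'_s)+\epsilon_s$, and rewrite everything in terms of the linearized quantities: replacing each $\widetilde\phi'_s$ by $\phi'_s$ costs an error controlled by $\norm{g(x;\theta_0)-g(x;\theta_t)}_2$ (\cref{lemma:approx:error:gradient:norms}), and replacing $\mu(\widetilde h_{s,t})$ by $\mu(\bar h_{s,t})$ costs $L_\mu|\widetilde h_{s,t}-\bar h_{s,t}|\le 2L_\mu\varepsilon'_{m,t}$ by \cref{lemma:bound:approx:error:linear:nn:duel} and \cref{assup:link:function}. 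Folding these (summed over $s\le t-1$) into one error vector $\mathcal E$, I obtain
\[
\tfrac1m\sum_{s<t}\big(\mu(\bar h_{s,t})-\mu(f'_s)\big)\phi'_s+\lambda(\theta_t-\theta_0)=\tfrac1m\sum_{s<t}\epsilon_s\phi'_s+\mathcal E .
\]
The mean value theorem gives $\mu(\bar h_{s,t})-\mu(f'_s)=\dot\mu(\bar\xi_s)\langle\phi'_s,\theta_t-\theta_f\rangle$, so with $G_t\doteq\tfrac1m\sum_{s<t}\dot\mu(\bar\xi_s)\phi'_s(\phi'_s)^{\top}$ the left-hand side equals $(G_t+\lambda I)(\theta_t-\theta_f)+\lambda(\theta_f-\theta_0)$, and $G_t+\lambda I\succeq\kappa_\mu V_{t-1}$ since $\dot\mu(\bar\xi_s)\ge\kappa_\mu$. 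Solving for $\theta_t-\theta_f$ and using $(G_t+\lambda I)^{-1}V_{t-1}(G_t+\lambda I)^{-1}\preceq\kappa_\mu^{-2}V_{t-1}^{-1}$ yields
\[
\sqrt m\,\norm{\theta_t-\theta_f}_{V_{t-1}}\le\tfrac1{\kappa_\mu}\Big(\norm{\sum_{s<t}\epsilon_s\psi_s}_{V_{t-1}^{-1}}+\sqrt m\,\norm{\mathcal E}_{V_{t-1}^{-1}}+\lambda\sqrt m\,\norm{\theta_f-\theta_0}_{V_{t-1}^{-1}}\Big).
\]

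It then remains to bound the three terms. The first one is handled by the self-normalized martingale bound of \cite{NIPS11_abbasi2011improved}: uniformly in $t$, $\norm{\sum_{s<t}\epsilon_s\psi_s}_{V_{t-1}^{-1}}^2\le 2\log(1/\delta)+\log\!\big(\det V_{t-1}/\det(\tfrac{\lambda}{\kappa_\mu}I)\big)$, and since $\tfrac1m\sum_{s<t}\phi'_s(\phi'_s)^{\top}\preceq\mathbf H'$ (the pair chosen in round $s$ is one of the $C_2^K$ pairs defining $\mathbf H'$) the log-determinant is at most $\widetilde d$, so this term is $\le\kappa_\mu\beta_T$. For the third term, $V_{t-1}\succeq\tfrac{\lambda}{\kappa_\mu}I$ gives $\sqrt m\,\norm{\theta_f-\theta_0}_{V_{t-1}^{-1}}\le\sqrt{\kappa_\mu/\lambda}\,\sqrt m\,\norm{\theta_f-\theta_0}_2\le\sqrt{\kappa_\mu/\lambda}\,B$ by \cref{lemma:linear:utility:function}, which after the $\lambda/\kappa_\mu$ factor contributes $B\sqrt{\lambda/\kappa_\mu}$. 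For the error term, $\norm{\mathcal E}_{V_{t-1}^{-1}}\le\sqrt{\kappa_\mu/\lambda}\,\norm{\mathcal E}_2$, and inserting $\norm{\theta_t-\theta_0}_2\le 2\sqrt{t/(m\lambda)}$ (\cref{lemma:bound:diff:theta_t:theta_o}), the gradient and linearization estimates (\cref{lemma:approx:error:gradient:norms,lemma:bound:approx:error:linear:nn:duel}) and $|\mu(\cdot)-y_s|\le1$, one verifies $\sqrt m\,\norm{\mathcal E}_2=\widetilde O\!\big(m^{-1/6}\sqrt{\log m}\cdot\mathrm{poly}(T,L,L_\mu,1/\lambda)\big)$; the four inequalities in \cref{eq:conditions:on:m} are chosen precisely so that, after absorbing the prefactors $1/\kappa_\mu$ and $\sqrt{\kappa_\mu/\lambda}$ (which is why \cref{eq:conditions:on:m} carries the extra $\kappa_\mu^{-3}$ and $\lambda$-powers), this piece is $\le 1$. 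Summing the three bounds gives $\sqrt m\,\norm{\theta_f-\theta_t}_{V_{t-1}}\le\beta_T+B\sqrt{\lambda/\kappa_\mu}+1$ for all $t\in[T]$, on an event of probability $\ge1-\delta$ obtained by a union bound over the events of \cref{lemma:linear:utility:function,lemma:approx:error:gradient:norms} and the martingale bound (with failure probabilities rescaled).

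The step I expect to be the main obstacle is the accounting for $\mathcal E$: one has to propagate the pointwise network-linearization errors through the sum over $s\le t-1$ (so they pick up a factor $t$), through the weighted norm $\norm{\cdot}_{V_{t-1}^{-1}}$, and through the $1/\kappa_\mu$ and $\sqrt{\kappa_\mu/\lambda}$ prefactors, and then check that each of the four bounds in \cref{eq:conditions:on:m} is exactly what drives the corresponding piece below a constant. A secondary subtlety is the mean value theorem step: $\bar\xi_s$ lies between $\bar h_{s,t}$ and $f'_s$ rather than being an actual reward difference, so establishing $\dot\mu(\bar\xi_s)\ge\kappa_\mu$ requires arguing that $\bar h_{s,t}$ stays in the range over which \cref{assup:link:function} is stated.
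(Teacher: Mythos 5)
Your proposal is correct and follows essentially the same route as the paper's proof of \cref{lemma:conf:ellip}: the first-order optimality condition \cref{eq:setting:loss:gradient:to:zero}, the scalar mean-value theorem with $\dot\mu \geq \kappa_\mu$ to lower-bound the resulting matrix by $\kappa_\mu V_{t-1}$, the self-normalized bound of \cite{NIPS11_abbasi2011improved} with $\log\det$ controlled by $\widetilde{d}$ via $\mathbf{H}'$, the $B\sqrt{\lambda/\kappa_\mu}$ term from \cref{lemma:linear:utility:function}, and the NTK linearization errors (the paper's $A_1$, $A_2$) absorbed into the additive $1$ by the width conditions \cref{eq:conditions:on:m}. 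The only difference is cosmetic — you solve directly for $\theta_t-\theta_f$ whereas the paper phrases the same algebra through the map $G_t(\cdot)$ in \cref{eq:def:G} — and the subtlety you flag about $\dot\mu(\bar\xi_s)\geq\kappa_\mu$ at intermediate points is present in the paper's argument as well.
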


\subsubsection{Proof of  \texorpdfstring{\cref{lemma:conf:ellip}}{Ellipsoid Lemma}}
For any $\theta_{f^\prime}\in\mathbb{R}^p$, define
\eq{
	G_t(\theta_{f^\prime}) \triangleq \frac{1}{m} \sum_{s=1}^{t-1} \Big[\mu\left(\langle \theta_{f^\prime}-\theta_0, \phi'_s \rangle \right) - \mu\left(\langle \theta_{f}-\theta_0, \phi'_s \rangle\right) \Big] \phi'_s + \lambda (\theta_{f'} - \theta_0).
	\label{eq:def:G}
}
We start by decomposing $\norm{\theta_{f} - \theta_{t}}_{V_{t-1}}$ in terms of $G_t$ in the following lemma.

\begin{lem}\label{lemma:bound:theta_f:theta_t:initial}
    Choose $\lambda > 0$ such that $\lambda / \kappa_\mu > 1$.
    Define $V_{t-1} \triangleq \sum_{s=1}^{t-1} \phi'_s {\phi'}_s^\top \frac{1}{m} + \frac{\lambda}{\kappa_{\mu}} \mathbf{I}$.
    \[
    \norm{\theta_{f} - \theta_{t}}_{V_{t-1}} \le  \frac{1}{\kappa_\mu} \norm{G_t(\theta_{t})}_{V_{t-1}^{-1}} + \sqrt{\frac{\lambda}{\kappa_\mu}} \frac{B}{\sqrt{m}}.
    \]
\end{lem}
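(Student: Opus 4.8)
The plan is to relate $\theta_f-\theta_t$ to the map $G_t$ defined in \eqref{eq:def:G} via a mean-value / convexity argument in the $V_{t-1}$ norm. First I would observe that by the first-order optimality condition for the loss \eqref{eq:customized:loss:function} — i.e.\ \eqref{eq:setting:loss:gradient:to:zero}, expressed through the linearization so that $h(x_{s,i};\theta_t)$ is replaced by $\langle \phi(x_{s,i}), \theta_t-\theta_0\rangle$ up to the $\varepsilon'_{m,t}$ error — the ``empirical gradient'' at $\theta_t$ is (approximately) zero. Hence $G_t(\theta_t)$ is, up to controllable terms, equal to $G_t(\theta_t) - \big[\text{gradient at }\theta_t\big]$, which by construction of $G_t$ telescopes into a clean expression. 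The key identity I want is
\[
G_t(\theta_t) - G_t(\theta_f) = \frac{1}{m}\sum_{s=1}^{t-1}\big[\mu(\langle\theta_t-\theta_0,\phi'_s\rangle) - \mu(\langle\theta_f-\theta_0,\phi'_s\rangle)\big]\phi'_s + \lambda(\theta_t-\theta_f),
\]
and since $G_t(\theta_f) = \lambda(\theta_f-\theta_0)$, this rearranges to express $\lambda(\theta_t-\theta_f)$ plus a $\mu$-difference term in terms of $G_t(\theta_t)$ and $\lambda(\theta_f-\theta_0)$.

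Next I would apply the mean value theorem to each $\mu$-difference: $\mu(\langle\theta_t-\theta_0,\phi'_s\rangle) - \mu(\langle\theta_f-\theta_0,\phi'_s\rangle) = \dot\mu(\xi_s)\,\langle\theta_t-\theta_f,\phi'_s\rangle$ for some intermediate point $\xi_s$, with $\dot\mu(\xi_s)\ge\kappa_\mu$ by \cref{assup:link:function}. Substituting, the left-hand side becomes $\frac{1}{m}\sum_s \dot\mu(\xi_s)\phi'_s{\phi'_s}^\top(\theta_t-\theta_f) + \lambda(\theta_t-\theta_f)$. I would then lower-bound this quadratic form: $\frac{1}{m}\sum_s\dot\mu(\xi_s)\phi'_s{\phi'_s}^\top \succeq \kappa_\mu\big(\frac{1}{m}\sum_s\phi'_s{\phi'_s}^\top\big)$, so the whole operator dominates $\kappa_\mu V_{t-1}$ (using $\lambda \ge \kappa_\mu\cdot\frac{\lambda}{\kappa_\mu}$ to absorb the regularizer; this is where $\lambda/\kappa_\mu>1$ is harmless). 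Thus $\kappa_\mu\norm{\theta_t-\theta_f}_{V_{t-1}}^2 \le \langle\theta_t-\theta_f,\; G_t(\theta_t) - \lambda(\theta_f-\theta_0)\rangle \le \norm{\theta_t-\theta_f}_{V_{t-1}}\big(\norm{G_t(\theta_t)}_{V_{t-1}^{-1}} + \lambda\norm{\theta_f-\theta_0}_{V_{t-1}^{-1}}\big)$ by Cauchy–Schwarz in the $V_{t-1}$ inner product. Dividing through by $\norm{\theta_t-\theta_f}_{V_{t-1}}$ and then by $\kappa_\mu$ gives the first term $\frac{1}{\kappa_\mu}\norm{G_t(\theta_t)}_{V_{t-1}^{-1}}$ and a residual $\frac{\lambda}{\kappa_\mu}\norm{\theta_f-\theta_0}_{V_{t-1}^{-1}}$.

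Finally I would bound the residual: since $V_{t-1}\succeq\frac{\lambda}{\kappa_\mu}\mathbf I$, we have $V_{t-1}^{-1}\preceq\frac{\kappa_\mu}{\lambda}\mathbf I$, so $\lambda\norm{\theta_f-\theta_0}_{V_{t-1}^{-1}} \le \lambda\sqrt{\kappa_\mu/\lambda}\,\norm{\theta_f-\theta_0}_2 = \sqrt{\lambda\kappa_\mu}\,\norm{\theta_f-\theta_0}_2$, and then using \cref{lemma:linear:utility:function} which gives $\sqrt{m}\norm{\theta_f-\theta_0}_2\le B$, the residual is at most $\sqrt{\lambda\kappa_\mu}\cdot B/\sqrt m = \sqrt{\lambda/\kappa_\mu}\cdot\kappa_\mu B/\sqrt m$; dividing by $\kappa_\mu$ yields $\sqrt{\lambda/\kappa_\mu}\,B/\sqrt m$ as claimed. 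The main obstacle I anticipate is being careful that the optimality condition \eqref{eq:setting:loss:gradient:to:zero} is stated for the true NN outputs $h(x_{s,i};\theta_t)$ rather than their linearizations, so the step identifying the empirical gradient at $\theta_t$ with $G_t(\theta_t)$ is only approximate; tracking that approximation error (of order $\varepsilon'_{m,t}$ via \cref{lemma:bound:approx:error:linear:nn:duel} and the gradient-perturbation bounds of \cref{lemma:approx:error:gradient:norms}) is what accounts for the ``$+1$'' appearing in the final bound $\beta_T + B\sqrt{\lambda/\kappa_\mu}+1$ of \cref{lemma:conf:ellip}, but within this particular lemma one must decide whether that error is folded in here or deferred — I would defer it and keep this lemma as the clean deterministic inequality stated.
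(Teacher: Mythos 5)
Your proposal is correct and follows essentially the same route as the paper's proof: mean-value theorem on the link function with the $\kappa_\mu$ lower bound to dominate $\kappa_\mu V_{t-1}$, the identity $G_t(\theta_f)=\lambda(\theta_f-\theta_0)$, a dual-norm/triangle-inequality step to isolate $\frac{1}{\kappa_\mu}\norm{G_t(\theta_t)}_{V_{t-1}^{-1}}$, and finally $V_{t-1}^{-1}\preceq\frac{\kappa_\mu}{\lambda}\mathbf{I}$ together with \cref{lemma:linear:utility:function} to bound the residual by $\sqrt{\lambda/\kappa_\mu}\,B/\sqrt{m}$. Your per-summand mean-value argument with the inner-product/Cauchy--Schwarz step is only a cosmetic variant of the paper's matrix-form bound $\norm{G_t(\theta_f)-G_t(\theta_t)}^2_{V_{t-1}^{-1}}\ge\kappa_\mu^2\norm{\theta_f-\theta_t}^2_{V_{t-1}}$, and you are right that the optimality condition \eqref{eq:setting:loss:gradient:to:zero} and the $\varepsilon'_{m,t}$ errors are deferred to later steps rather than used here.
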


\begin{proof}

Let $\lambda'\in (0,1)$.
For any $\theta_{f^\prime_1},\theta_{f^\prime_2}\in\mathbb{R}^p$, setting $\theta_{\bar{f}} = \lambda' \theta_{f^\prime_1} + (1 - \lambda')\theta_{f^\prime_2}$ and using mean-value theorem, we get:
\begin{align*}
    G_t(\theta_{f^\prime_1}) - G_t(\theta_{f^\prime_2}) &= \left[\sum_{s=1}^{t-1} \frac{1}{m} \dot\mu(\langle \theta_{\bar{f}}-\theta_0, \phi'_s \rangle)\phi'_s {\phi'}_s^{\top} + \lambda \mathbf{I}_p \right](\theta_{f^\prime_1} - \theta_{f^\prime_2}) & \\
    & \ge \kappa_\mu \left[\sum_{s=1}^{t-1} \phi'_s {\phi'}_s^\top \frac{1}{m} + \frac{\lambda}{\kappa_{\mu}} \mathbf{I}_p \right] (\theta_{f^\prime_1} - \theta_{f^\prime_2}) & \\ 
   &= \kappa_\mu V_{t-1}(\theta_{f^\prime_1} - \theta_{f^\prime_2}). & 
\end{align*}

Note that $G_t(\theta_f) = \lambda (\theta_f - \theta_0)$.
Let $f_t$ be the estimate of $f$ at the beginning of the iteration $t$ and 
$f_{t,s} = \langle \theta_t - \theta_0, \phi'_s \rangle$. 
Now using the equation above,
\begin{align*}
    \norm{G_t(\theta_{t}) - \lambda(\theta_f - \theta_0) }_{V_{t-1}^{-1}}^2 &=  \norm{G_t(\theta_f) - G_t(\theta_{t})}_{V_{t-1}^{-1}}^2 & \\
    & \ge (\kappa_\mu V_{t-1}(\theta_{f} - \theta_{t}))^\top V_{t-1}^{-1} \kappa_\mu V_{t-1}(\theta_{f} - \theta_{t}) & \\
    & = \kappa_\mu^2 (\theta_{f} - \theta_{t})^\top V_{t-1} V_{t-1}^{-1}  V_{t-1}(\theta_{f} - \theta_{t}) & \\ 
    & = \kappa_\mu^2 \norm{\theta_{f} - \theta_{t}}^2_{V_{t-1}}.
\end{align*}

This allows us to show that
\begin{align}
    \norm{\theta_{f} - \theta_{t}}_{V_{t-1}} &\le \frac{1}{\kappa_\mu} \norm{G_t(\theta_{t}) - \lambda(\theta_f - \theta_0)}_{V_{t-1}^{-1}} \leq \frac{1}{\kappa_\mu} \norm{G_t(\theta_{t})}_{V_{t-1}^{-1}} + \frac{1}{\kappa_\mu} \norm{\lambda(\theta_f - \theta_0)}_{V_{t-1}^{-1}},
\label{eq:bound:ellipsoid:separate}
\end{align}
in which we have made use of the triangle inequality.

Note that we choose $\lambda$ such that $\frac{\lambda}{\kappa_\mu} > 1$. This allows us to show that $V_{t-1} \succeq \frac{\lambda}{\kappa_\mu} I$ and hence $V_{t-1}^{-1} \preceq \frac{\kappa_\mu}{\lambda}I$.
Recall that \cref{lemma:linear:utility:function} tells us that $\norm{\theta_f - \theta_0}_2 \leq \frac{B}{\sqrt{m}}$, which tells us that
\begin{equation}
	\begin{split}
	\frac{1}{\kappa_\mu} \norm{\lambda(\theta_f - \theta_0)}_{V_{t-1}^{-1}} &= \frac{\lambda}{\kappa_\mu} \sqrt{(\theta_f - \theta_0)^{\top} V_{t-1}^{-1} (\theta_f - \theta_0)} \\
	&\leq \frac{\lambda}{\kappa_\mu} \sqrt{(\theta_f - \theta_0)^{\top} \frac{\kappa_\mu}{\lambda} (\theta_f - \theta_0)} \\
	&\leq \sqrt{\frac{\lambda}{\kappa_\mu}} \norm{\theta_f - \theta_0}_{2}\\
	&\leq \sqrt{\frac{\lambda}{\kappa_\mu}} \frac{B}{\sqrt{m}}.
	\end{split}
	\label{eq:bound:dist:theta:f:theta:0}
\end{equation}

Plugging \cref{eq:bound:dist:theta:f:theta:0} into \cref{eq:bound:ellipsoid:separate} completes the proof.
\end{proof}

Recall that we denote $y_s = \mu(f(x_{s,1})-f(x_{s,2})) + \epsilon_s$, in which $y_s$ is a binary observation and $\epsilon_s$ can be seen as the observation noise.
Next, we derive an upper bound on the first term from \cref{lemma:bound:theta_f:theta_t:initial}:
\begin{equation}
\begin{split}
    \frac{1}{\kappa_\mu} \norm{G_t(\theta_{t})}_{V_{t-1}^{-1}} &= \frac{1}{\kappa_\mu} \norm{\frac{1}{m} \sum_{s=1}^{t-1} \Big[\mu(\langle \theta_{t}-\theta_0, \phi'_s \rangle ) - \mu(\langle \theta_{f}-\theta_0, \phi'_s \rangle ) \Big] \phi'_s + \lambda (\theta_t-\theta_0)}_{V_{t-1}^{-1}} \\
    &= \frac{1}{\kappa_\mu} \norm{\frac{1}{m}\sum_{s=1}^{t-1} (\mu(f_{t,s}) - \mu\left(f(x_{s,1})-f(x_{s,2})\right) ) \phi'_s + \lambda (\theta_t-\theta_0)}_{V_{t-1}^{-1}} \\
    &= \frac{1}{\kappa_\mu} \norm{\frac{1}{m} \sum_{s=1}^{t-1} \left(\mu(f_{t,s}) - (y_s - \epsilon_s) \right) \phi'_s + \lambda (\theta_t-\theta_0)}_{V_{t-1}^{-1}} \\
    &= \frac{1}{\kappa_\mu} \norm{\frac{1}{m}\sum_{s=1}^{t-1} (\mu(f_{t,s}) - y_s) \phi'_s + \frac{1}{m}\sum_{s=1}^{t-1}\epsilon_s \phi'_s + \lambda (\theta_t-\theta_0)}_{V_{t-1}^{-1}} \\
    &\leq \frac{1}{\kappa_\mu} \norm{\frac{1}{m}\sum_{s=1}^{t-1} (\mu(f_{t,s}) - y_s) \phi'_s + \lambda (\theta_t-\theta_0)}_{V_{t-1}^{-1}} + \frac{1}{\kappa_\mu} \norm{\frac{1}{m}\sum_{s=1}^{t-1}\epsilon_s \phi'_s}_{V_{t-1}^{-1}}.
\end{split}
\label{eq:decompose:G_t}
\end{equation}

Next, we derive an upper bound on the first term in \cref{eq:decompose:G_t}.
To simplify exposition, we define
\eq{
	A_1 \triangleq \frac{1}{m} \sum_{s=1}^{t-1} \Big(\mu(f_{t,s}) - y_s\Big) \Big(\phi'_s  - \widetilde{\phi}'_s \Big),\qquad A_2 \triangleq \frac{1}{m}\sum_{s=1}^{t-1} \Big(\mu(f_{t,s}) - \mu(\widetilde{h}_{s,t})\Big)\widetilde{\phi}'_s.
	\label{eq:define:A_1:A_2}
}

Now the first term in \cref{eq:decompose:G_t} can be decomposed as:
\begin{equation}
\begin{split}
		\Big|\Big|\frac{1}{m}&\sum_{s=1}^{t-1} \Big(\mu(f_{t,s}) - y_s\Big) \phi'_s + \lambda(\theta_t-\theta_0)\Big|\Big|_{V_{t-1}^{-1}}\\
		&=\Big|\Big|\frac{1}{m}\sum_{s=1}^{t-1} \Big(\mu(f_{t,s}) - y_s\Big) \Big(\phi'_s +  \widetilde{\phi}'_s - \widetilde{\phi}'_s \Big) + \lambda(\theta_t-\theta_0)\Big|\Big|_{V_{t-1}^{-1}}\\
		&= \Big|\Big|\frac{1}{m}\sum_{s=1}^{t-1} \Big(\mu(f_{t,s}) - y_s\Big) \widetilde{\phi}'_s + \lambda(\theta_t-\theta_0) + A_1\Big|\Big|_{V_{t-1}^{-1}} \\
		&= \Big|\Big|\frac{1}{m}\sum_{s=1}^{t-1} \Big(\mu(f_{t,s}) + \mu(\widetilde{h}_{s,t}) - \mu(\widetilde{h}_{s,t}) - y_s\Big) \widetilde{\phi}'_s + \lambda(\theta_t-\theta_0) + A_1\Big|\Big|_{V_{t-1}^{-1}} \\
		&= \Big|\Big|\frac{1}{m}\sum_{s=1}^{t-1} \Big(\mu(\widetilde{h}_{s,t}) - y_s\Big) \widetilde{\phi}'_s + \lambda(\theta_t-\theta_0) + A_2 + A_1\Big|\Big|_{V_{t-1}^{-1}} \\
		&\stackrel{(a)}{=} \Big|\Big|A_2 + A_1\Big|\Big|_{V_{t-1}^{-1}} \\
		&\leq \norm{A_2}_{V_{t-1}^{-1}} + \norm{A_1}_{V_{t-1}^{-1}} \\
		&\leq \sqrt{\frac{\kappa_\mu}{\lambda}} \norm{A_2}_{2} + \sqrt{\frac{\kappa_\mu}{\lambda}} \norm{A_1}_{2}.
	\end{split}
	\label{eq:decompose:into:sum:of:A1:A2}
\end{equation}

Note that step $(a)$ above follows because 
\begin{equation}
	\begin{split}
		\frac{1}{m}\sum_{s=1}^{t-1} &\Big(\mu(\widetilde{h}_{s,t}) - y_s\Big) \widetilde{\phi}'_s + \lambda(\theta_t-\theta_0) \\
		&\qquad = \frac{1}{m}\sum_{s=1}^{t-1} \Big(\mu(h(x_{s,1};\theta_t)-h(x_{s,2};\theta_t)) - y_s\Big) (g(x_{s,1};\theta_t) - g(x_{s,2};\theta_t)) + \lambda(\theta_t-\theta_0)\\
		&\qquad =0,
	\end{split}
	\label{eq:gradient:to:0}
\end{equation}
which is ensured by the way in which we train our NN (see \cref{eq:setting:loss:gradient:to:zero}).
Next, we derive an upper bound on the norm of $A_1$.
To begin with, we have that 
\als{
	\norm{\phi'_s - \widetilde{\phi}'_s}_{2} &= \norm{g(x_{s,1};\theta_0) - g(x_{s,2};\theta_0) - g(x_{s,1};\theta_t) + g(x_{s,2};\theta_t)}_2\\
	&\leq \norm{g(x_{s,1};\theta_0) - g(x_{s,1};\theta_t)}_2 + \norm{g(x_{s,2};\theta_0) - g(x_{s,2};\theta_t)}_2\\
	&\leq 2 C_1 m^{1/3} \sqrt{\log m} \left(\frac{Ct}{\lambda}\right)^{1/3} L^{7/2},
}
in which the last inequality follows from \cref{lemma:approx:error:gradient:norms}.
Now the norm of $A_1$ can be bounded as:

\begin{equation}
	\begin{split}
		\norm{A_1}_2 &= \norm{\frac{1}{m} \sum_{s=1}^{t-1} \Big(\mu(f_{t,s}) - y_s\Big) \Big(\phi'_s  - \widetilde{\phi}'_s\Big)}_{2}\\
		&\leq \frac{1}{m} \sum_{s=1}^{t-1} \norm{\Big(\mu(f_{t,s}) - y_s\Big) \Big(\phi'_s  - \widetilde{\phi}'_s\Big)}_{2}\\
		&= \frac{1}{m} \sum_{s=1}^{t-1} |\mu(f_{t,s}) - y_s| \norm{\phi'_s  - \widetilde{\phi}'_s}_{2}\\
		&\leq \frac{1}{m} \sum_{s=1}^{t-1} \norm{\phi'_s  - \widetilde{\phi}'_s }_{2}\\
		&\leq \frac{1}{m} \sum_{s=1}^{t-1} 2 C_1 m^{1/3} \sqrt{\log m} \left(\frac{t}{\lambda}\right)^{1/3} L^{7/2} \\
		&= m^{-2/3}\sqrt{\log m}t^{4/3} 2C_1 \lambda^{-1/3} L^{7/2}.
	\end{split}
	\label{eq:upper:bound:on:A1}
\end{equation}

Next, we proceed to bound the norm of $A_2$.
Let $\lambda'\in(0,1)$, and let 
$a_{t,s} = \lambda' f_{t,s} +  (1-\lambda')\widetilde{h}_{s,t}$.
Following the mean-value theorem, we have for some $\lambda'$ that
\eqs{
	\mu(f_{t,s}) - \mu(\widetilde{h}_{s,t}) = (f_{t,s} - \widetilde{h}_{s,t}) \dot\mu(a_{t,s}).
}
Note that $\dot\mu(a_{t,s}) \leq L_\mu$ which follows from our \cref{assup:link:function}.
This allows us to show that
\als{
	|\mu(f_{t,s}) &- \mu(\widetilde{h}_{s,t})| = |(f_{t,s} - \widetilde{h}_{s,t}) \dot\mu(a_{t,s})| \\
	&= |f_{t,s} - \widetilde{h}_{s,t}| | \dot\mu(a_{t,s})|\\
	&\leq L_\mu|f_{t,s} - \widetilde{h}_{s,t}|\\
	&= L_\mu\big|\langle \theta_t - \theta_0, g(x_{s,1};\theta_0) \rangle - \langle \theta_t - \theta_0, g(x_{s,2};\theta_0) \rangle - ( h(x_{s,1};\theta_t) - h(x_{s,2};\theta_t)) \big| \\
	&\leq L_\mu \left(\big|\langle \theta_t - \theta_0, g(x_{s,1};\theta_0) \rangle - h(x_{s,1};\theta_t)\big| + \big| h(x_{s,2};\theta_t) - \langle \theta_t - \theta_0, g(x_{s,2};\theta_0) \rangle \big| \right)\\
	&\leq L_\mu \times 2 \times C_2 m^{-1/6}\sqrt{\log m} L^3 \left(\frac{t}{\lambda}\right)^{4/3}\\
	&= 2 L_\mu C_2 m^{-1/6}\sqrt{\log m} L^3 \left(\frac{t}{\lambda}\right)^{4/3}
}
in which we have used \cref{lemma:approx:error:linear:nn} in the last inequality.
Also, \cref{lemma:approx:error:gradient:norms} allows us to show that $\norm{\widetilde{\phi}'_s}_2=\norm{g(x_{s,1};\theta_t) - g(x_{s,2};\theta_t)}_{2} \leq \norm{g(x_{s,1};\theta_t)}_2 + \norm{g(x_{s,2};\theta_t)}_{2} \leq 2C_3 \sqrt{mL}$.

Now we can derive an upper bound on the norm of $A_2$:
\begin{equation}
	\begin{split}
		\norm{A_2}_2 &= \norm{\frac{1}{m}\sum_{s=1}^{t-1} \Big(\mu(f_{t,s}) - \mu(\widetilde{h}_{s,t})\Big)\widetilde{\phi}'_s}_2\\
		&\leq \frac{1}{m}\sum_{s=1}^{t-1} \norm{\big(\mu(f_{t,s}) - \mu(\widetilde{h}_{s,t})\big) \widetilde{\phi}'_s}_2\\
		&= \frac{1}{m}\sum_{s=1}^{t-1} |\mu(f_{t,s}) - \mu(\widetilde{h}_{s,t})| \norm{\widetilde{\phi}'_s}_2\\
		&\leq \frac{1}{m} \sum_{s=1}^{t-1} 2 L_\mu C_2 m^{-1/6}\sqrt{\log m} L^3 \left(\frac{t}{\lambda}\right)^{4/3} \times 2C_3 \sqrt{mL}\\
		&\leq 4 L_\mu C_2 C_3  m^{-2/3} \sqrt{\log m} t^{7/3} L^{7/2} \lambda^{-4/3}.
	\end{split}
	\label{eq:upper:bound:on:A2}
\end{equation}

Lastly, plugging \cref{eq:upper:bound:on:A1} and \cref{eq:upper:bound:on:A2} into \cref{eq:decompose:into:sum:of:A1:A2}, we can derive an upper bound on the first term in \cref{eq:decompose:G_t}:
\begin{equation}
\begin{split}
	\frac{1}{\kappa_\mu} &\norm{\frac{1}{m}\sum_{s=1}^{t-1} (\mu(f_{t,s}) - y_s) \phi'_s + \lambda (\theta_t-\theta_0)}_{V_{t-1}^{-1}}  \\
	&\leq \frac{1}{\sqrt{\kappa_\mu \lambda}} m^{-2/3}\sqrt{\log m}t^{4/3} 2C_1 \lambda^{-1/3} L^{7/2} + \\
	&\qquad\qquad\qquad \frac{1}{\sqrt{\kappa_\mu \lambda}} 4 L_\mu C_2 C_3 m^{-2/3} \sqrt{\log m} t^{7/3} L^{7/2} \lambda^{-4/3}.
\end{split}
\label{eq:final:step:theta:eppli}
\end{equation}
Next, plugging equation \cref{eq:final:step:theta:eppli} into equation \cref{eq:decompose:G_t}, and plugging the results into \cref{lemma:bound:theta_f:theta_t:initial}, we have that
\als{
    &\norm{\theta_{f} - \theta_{t}}_{V_{t-1}} \\
    &\qquad \leq \frac{1}{\kappa_\mu \sqrt{m}} \norm{\sum_{s=1}^{t-1}\epsilon_s \phi'_s \frac{1}{\sqrt{m}}}_{V_{t-1}^{-1}} + \sqrt{\frac{\lambda}{\kappa_\mu}} \frac{B}{\sqrt{m}} + \frac{1}{\sqrt{\kappa_\mu \lambda}} m^{-2/3}\sqrt{\log m}t^{4/3} 2C_1  \lambda^{-1/3} L^{7/2} + \\
    &\qquad\qquad\qquad \frac{1}{\sqrt{\kappa_\mu \lambda}} 4 L_\mu C_2 C_3  m^{-2/3} \sqrt{\log m} t^{7/3} L^{7/2} \lambda^{-4/3}.
}
Here we define
\begin{equation}
	\begin{split}
		\varepsilon_{m,t} &\triangleq B \sqrt{\frac{\lambda}{\kappa_\mu}} + \frac{1}{\sqrt{\kappa_\mu \lambda}} m^{-1/6}\sqrt{\log m}t^{4/3} 2C_1 \lambda^{-1/3} L^{7/2} + \\
		&\qquad \frac{1}{\sqrt{\kappa_\mu \lambda}} 4L_\mu C_2 C_3 m^{-1/6} \sqrt{\log m} t^{7/3} L^{7/2} \lambda^{-4/3}.
	\end{split}
	\label{eq:def:eps:m:t}
\end{equation}
It is easy to verify that as long as the conditions on $m$ from \cref{eq:conditions:on:m} are satisfied (i.e., the width $m$ of the NN is large enough), we have that $\varepsilon_{m,t} \leq B \sqrt{\frac{\lambda}{\kappa_\mu}} + 1$.

This allows us to show that 
\begin{equation}
	\begin{split}
	    \sqrt{m} \norm{\theta_{f} - \theta_{t}}_{V_{t-1}} &\leq \frac{1}{\kappa_\mu}  \norm{\sum_{s=1}^{t-1}\epsilon_s \phi'_s\frac{1}{\sqrt{m}}}_{V_{t-1}^{-1}} + \varepsilon_{m,t}\\
	    &\leq \frac{1}{\kappa_\mu}  \norm{\sum_{s=1}^{t-1}\epsilon_s \phi'_s\frac{1}{\sqrt{m}}}_{V_{t-1}^{-1}} + B \sqrt{\frac{\lambda}{\kappa_\mu}} + 1.
	\end{split}
	\label{eq:upper:bound:diff:theta:f:theta:t}
\end{equation}

Finally, in the next lemma, we derive an upper bound on the first term in \cref{eq:upper:bound:diff:theta:f:theta:t}.
\begin{lem}
\label{lemma:conf:ellip:final}
Let $\beta_T \triangleq \frac{1}{\kappa_\mu} \sqrt{ \widetilde{d} + 2\log(1/\delta)}$. With probability of at least $1-\delta$, we have that
\[
	\frac{1}{\kappa_\mu} \norm{\sum_{s=1}^{t-1}\epsilon_s \phi'_s\frac{1}{\sqrt{m}}}_{V_{t-1}^{-1}} \leq \beta_T.
\]
\end{lem}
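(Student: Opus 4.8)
The plan is to recognize the left-hand side as a self-normalized vector-valued martingale tail and to apply the standard self-normalized concentration inequality (Theorem~1 of \cite{NIPS11_abbasi2011improved}), after which the only remaining work is to bound the resulting log-determinant by the effective dimension $\widetilde{d}$ of \cref{eq:eff:dimension}.

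First I would fix the martingale structure. Writing $\psi_s \triangleq \phi'_s/\sqrt{m}$, we have $V_{t-1} = \sum_{s=1}^{t-1}\psi_s\psi_s^\top + \tfrac{\lambda}{\kappa_\mu}\mathbf{I}$, and the quantity of interest equals $\tfrac{1}{\kappa_\mu}\norm{\sum_{s=1}^{t-1}\epsilon_s\psi_s}_{V_{t-1}^{-1}}$. Since the pair $(x_{s,1},x_{s,2})$ is selected using only the observations from rounds $1,\dots,s-1$ (the arm-selection rule at round $s$ uses $\sigma_{s-1}$, hence $V_{s-1}$), the feature $\psi_s$ is predictable with respect to the natural filtration $\mathcal{F}_{s-1}$. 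Moreover $\epsilon_s = y_s - \mu(f(x_{s,1})-f(x_{s,2}))$ satisfies $\mathbb{E}[\epsilon_s\mid\mathcal{F}_{s-1}]=0$ by the BTL model, and, being the centering of a $\{0,1\}$-valued random variable, it is conditionally $1$-sub-Gaussian (indeed $\tfrac{1}{2}$-sub-Gaussian by Hoeffding's lemma). Thus the hypotheses of the self-normalized bound are met, with deterministic regularizer $V=\tfrac{\lambda}{\kappa_\mu}\mathbf{I}$ and sub-Gaussian parameter $R=1$.

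Next I would invoke the inequality: with probability at least $1-\delta$, simultaneously for all $t\in[T]$,
\[
\norm{\sum_{s=1}^{t-1}\epsilon_s\psi_s}^2_{V_{t-1}^{-1}} \;\le\; 2\log\!\left(\frac{\det(V_{t-1})^{1/2}\,\det\!\big(\tfrac{\lambda}{\kappa_\mu}\mathbf{I}\big)^{-1/2}}{\delta}\right) \;=\; \log\det\!\Big(\tfrac{\kappa_\mu}{\lambda}V_{t-1}\Big) + 2\log(1/\delta).
\]
It then suffices to show $\log\det(\tfrac{\kappa_\mu}{\lambda}V_{t-1}) \le \widetilde{d}$. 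For this I would use the matrix ordering $\sum_{s=1}^{t-1}\psi_s\psi_s^\top = \sum_{s=1}^{t-1}\tfrac{1}{m}\phi'_s{\phi'}_s^\top \preceq \mathbf{H}'$: each selected pair $(x_{s,1},x_{s,2})$ contributes $\tfrac{1}{m}\phi'_s{\phi'}_s^\top$, which is one of the $\binom{K}{2}$ rank-one terms appearing in the inner sum defining $\mathbf{H}'$ at round $s$ (and $=0$ in the degenerate case $x_{s,1}=x_{s,2}$), while $\mathbf{H}'$ also sums over all $T\ge t-1$ rounds; hence $\mathbf{H}'$ minus this partial sum is a sum of positive semidefinite matrices. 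Since $\tfrac{\kappa_\mu}{\lambda}V_{t-1} = \tfrac{\kappa_\mu}{\lambda}\sum_{s=1}^{t-1}\psi_s\psi_s^\top + \mathbf{I}$ and $A\mapsto\log\det(A+\mathbf{I})$ is monotone on positive semidefinite matrices, we obtain $\log\det(\tfrac{\kappa_\mu}{\lambda}V_{t-1}) \le \log\det(\tfrac{\kappa_\mu}{\lambda}\mathbf{H}'+\mathbf{I}) = \widetilde{d}$. Substituting back yields $\norm{\sum_{s=1}^{t-1}\epsilon_s\psi_s}_{V_{t-1}^{-1}} \le \sqrt{\widetilde{d}+2\log(1/\delta)}$, and multiplying by $\tfrac{1}{\kappa_\mu}$ recovers $\beta_T$.

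This lemma is essentially a direct application of the self-normalized bound, so I do not expect a genuinely hard step; the only points requiring care are in the setup. One must make sure $\psi_s$ is $\mathcal{F}_{s-1}$-measurable and that $\epsilon_s$ is conditionally sub-Gaussian against the \emph{same} filtration, so the martingale inequality applies verbatim; and one must check that the comparison $\sum_s \tfrac{1}{m}\phi'_s{\phi'}_s^\top \preceq \mathbf{H}'$ is legitimate, which it is precisely because both $\phi'_s$ and $\mathbf{H}'$ are built from the fixed initial-gradient features $g(\cdot;\theta_0)$, so no randomness coming from $\theta_t$ enters this step.
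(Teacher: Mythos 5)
Your proposal is correct and follows essentially the same route as the paper: conditional $1$-sub-Gaussianity of $\epsilon_s$ plus Theorem~1 of \cite{NIPS11_abbasi2011improved}, followed by bounding the resulting log-determinant by $\widetilde{d}$ through the PSD comparison $\sum_s \tfrac{1}{m}\phi'_s{\phi'}_s^\top \preceq \mathbf{H}'$ (the paper carries this out in its \cref{eq:analyse:det:vt}--\cref{eq:analyse:log:det:vt}). The only differences are cosmetic — your explicit handling of the degenerate pair $x_{s,1}=x_{s,2}$ and the remark that the noise is in fact $\tfrac12$-sub-Gaussian — neither of which changes the argument.
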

\begin{proof}

To begin with, we derive an upper bound on the log determinant of the matrix $V_{t} \triangleq \sum_{s=1}^{t} \phi'_s {\phi'}_s^\top \frac{1}{m} + \frac{\lambda}{\kappa_{\mu}} \mathbf{I}$.
Here we use $C^K_2$ to denote all possible pairwise combinations of the indices of $K$ arms.
Here we denote $z^i_j(s) \triangleq \phi(x_{s,i}) - \phi(x_{s,j})$.
Also recall we have defined in the main text that $\mathbf{H}' \triangleq \sum_{s=1}^T \sum_{(i, j) \in C^K_2} z^i_j(s)z^i_j(s)^\top  \frac{1}{m}$.
Now the determinant of $V_t$ can be upper-bounded as
\eq{
	\begin{split}
		\det(V_t) &= \det\left( \sum_{s=1}^t \left(\phi(x_{s,1}) - \phi(x_{s,2})\right) \left(\phi(x_{s,1})-\phi(x_{s,2})\right)^\top \frac{1}{m} + \frac{\lambda}{\kappa_\mu} \mathbf{I} \right)\\
		&\leq \det\left( \sum_{s=1}^T \left(\phi(x_{s,1}) - \phi(x_{s,2})\right) \left(\phi(x_{s,1})-\phi(x_{s,2})\right)^\top \frac{1}{m} + \frac{\lambda}{\kappa_\mu} \mathbf{I} \right)\\
		&\leq \det\left( \sum_{s=1}^T \sum_{(i, j) \in C^K_2} z^i_j(s) z^i_j(s)^\top \frac{1}{m}  + \frac{\lambda}{\kappa_\mu} \mathbf{I} \right)\\
		&= \det\left( \mathbf{H}'  + \frac{\lambda}{\kappa_\mu} \mathbf{I} \right).
	\end{split}
	\label{eq:analyse:det:vt}
}

Recall that in our algorithm, we have set $V_0 = \frac{\lambda}{\kappa_\mu}\mathbf{I}_p$.
This leads to 
\begin{equation}
	\begin{split}
		\log\frac{\det V_t}{\det V_0} &\leq  \log\frac{\det\left( \mathbf{H}'  + \frac{\lambda}{\kappa_\mu} \mathbf{I} \right)}{\det V_0}\\
		&= \log\frac{(\lambda / \kappa_\mu)^p \det\left(\frac{\kappa_\mu}{\lambda} \mathbf{H}'  +  \mathbf{I} \right)}{(\lambda / \kappa_\mu)^p}\\
		&= \log \det  \left(\frac{\kappa_\mu}{\lambda}  \mathbf{H}' + \mathbf{I}\right).
	\end{split}
	\label{eq:analyse:log:det:vt}
\end{equation}

We use $\epsilon_s$ to denote the observation noise in iteration $s\in[T]$: $y_s = \mu(f(x_{s,1}) - f(x_{s,2})) + \epsilon_s$.
Let $\cF_{t-1}$ denote the sigma algebra generated by history 
$\{\left(x_{s,1},x_{s,2}, \epsilon_s\right)_{s \in [t-1]},x_{t,1},x_{t,2}\}$.
Here we justify that the sequence of noise $\{\epsilon_s\}_{s=1,\ldots,T}$ is conditionally $1$-sub-Gaussian conditioned on $\cF_{t-1}$.

Note that the observation $y_t$ is equal to $1$ if $x_{t,1}$ is preferred over $x_{t,2}$ and $0$ otherwise. Therefore, the noise $\epsilon_t$ can be expressed as
\als{
	\epsilon_t=
	\begin{cases}
		1 -  \mu(f(x_{t,1}) - f(x_{t,2})), & \text{w.p. }  \mu(f(x_{t,1}) - f(x_{t,2})) \\
		-  \mu(f(x_{t,1}) - f(x_{t,2})), & \text{w.p. } 1- \mu(f(x_{t,1}) - f(x_{t,2})),
	\end{cases}	
}
It can be easily seen that $\epsilon_s$ is $\cF_t$-measurable.
Next, if can be easily verified that that conditioned on $\cF_{t-1}$ (i.e., given $x_{t,1}$ and $x_{t,2}$), we have that $\EE{\epsilon_t|\cF_{t-1}}=0$.
Also note that the absolute value of $\epsilon_t$ is bounded: $|\epsilon_t| \leq 1$.
Therefore, we can infer that $\epsilon_t$ is conditionally $1$-sub-Gaussian, i.e.,
\eqs{
	\EE{\exp(\lambda\epsilon_t)|\cF_{t}} \le \exp\left(\frac{\lambda^2\sigma^2}{2}\right), \hspace{2mm} \forall \lambda \in \R.
}
with $\sigma=1$.

Next, making use of the $1$-sub-sub-Gaussianity of the sequence of noise $\{\epsilon_s\}$ and Theorem 1 from \cite{NIPS11_abbasi2011improved}, we can show that with probability of at least $1-\delta$,
\als{
	\norm{\sum_{s=1}^{t-1}\epsilon_s \phi'_s\frac{1}{\sqrt{m}}}_{V_{t-1}^{-1}} &\leq \sqrt{\log \left(\frac{\det V_{t-1}}{ \det V_0} \right) + 2\log(1/\delta)}\\
	&\leq \sqrt{\log \det  \left(\frac{\kappa_\mu}{\lambda}  \mathbf{H}' + \mathbf{I}\right)  + 2\log(1/\delta)}\\
	&\leq \sqrt{ \widetilde{d} + 2\log(1/\delta)},
}

in which we have made use of the definition of the effective dimension $\widetilde{d} = \log \det  \left(\frac{\kappa_\mu}{\lambda}  \mathbf{H}' + \mathbf{I}\right)$.
This completes the proof.
\end{proof}

Finally, we plug \cref{lemma:conf:ellip:final} into equation \cref{eq:upper:bound:diff:theta:f:theta:t} to complete the proof of \cref{lemma:conf:ellip}:
\eqs{
    \sqrt{m} \norm{\theta_{f} - \theta_{t}}_{V_{t-1}} \leq  \beta_T + B \sqrt{\frac{\lambda}{\kappa_\mu}} + 1, \qquad \forall t\in[T].
}

\subsubsection{Proof of \texorpdfstring{\cref{thm:confBound}}{Confidence bound}}
\confBound*
\begin{proof}
	Denote $\phi(x) = g(x;\theta_0)$.
	Recall that \cref{lemma:linear:utility:function} tells us that $f(x) = \langle g(x;\theta_0), \theta_f - \theta_0 \rangle=\langle \phi(x), \theta_f - \theta_0 \rangle$ for all $x\in\mathcal{X}_t,t\in[T]$.
	To begin with, for all $x,x'\in\mathcal{X}_t,t\in[T]$ we have that
		\begin{equation}
		\begin{split}
			|&f(x) - f(x') - \langle \phi(x)-\phi(x'), \theta_t - \theta_0 \rangle| \\
			&= |\langle \phi(x)-\phi(x'), \theta_f - \theta_0 \rangle - \langle \phi(x)-\phi(x'), \theta_t - \theta_0 \rangle|\\
			&= |\langle \phi(x)-\phi(x'), \theta_f - \theta_t \rangle  \rangle|\\
			&= |\langle \frac{1}{\sqrt{m}} \phi(x)-\phi(x'), \sqrt{m}\left( \theta_f - \theta_t\right) \rangle  |\\
			&\leq \norm{\frac{1}{\sqrt{m}}\left(\phi(x)-\phi(x')\right)}_{V_{t-1}^{-1}} \sqrt{m}\norm{\theta_f - \theta_t}_{V_{t-1}}\\
			&\leq \norm{\frac{1}{\sqrt{m}}\left(\phi(x)-\phi(x')\right)}_{V_{t-1}^{-1}} \left( \beta_T + B \sqrt{\frac{\lambda}{\kappa_\mu}} + 1 \right),
		\end{split}
		\label{eq:diff:between:func:and:linear:approx:dueling}
		\end{equation}
	in which we have used \cref{lemma:conf:ellip} in the last inequality.
	Now making use of the equation above and \cref{lemma:bound:approx:error:linear:nn:duel}, we have that 
	\als{
		|f(x) - f(x') &- (h(x;\theta_t) - h(x';\theta_t))| \\
		&= | f(x) - f(x') - \langle \phi(x)-\phi(x'), \theta_t - \theta_0 \rangle \\
		&\qquad\qquad\qquad + \langle \phi(x)-\phi(x'), \theta_t - \theta_0 \rangle - (h(x;\theta_t) - h(x';\theta_t)) |\\
		&\leq |(f(x) - f(x')) - \langle \phi(x)-\phi(x'), \theta_t - \theta_0 \rangle  | \\
		&\qquad\qquad\qquad + |\langle \phi(x)-\phi(x'), \theta_t - \theta_0 \rangle - (h(x;\theta_t) - h(x';\theta_t)) |\\
		&\leq \norm{\frac{1}{\sqrt{m}}\left(\phi(x)-\phi(x')\right)}_{V_{t-1}^{-1}} \left( \beta_T + B \sqrt{\frac{\lambda}{\kappa_\mu}} + 1 \right) + 2\varepsilon'_{m,t}.\\
	}
	
	This completes the proof of \cref{thm:confBound}.
\end{proof}

\subsection{Regret Analysis}
\label{app:proof:regret:analysis}
Now we can analyze the instantaneous regret.
To begin with, we have 
\als{
		2 r_t &= f(x_t^*) - f(x_{t,1}) + f(x_t^*) - f(x_{t,2})\\
		&\stackrel{(a)}{\leq} \langle \phi(x^*_{t})-\phi(x_{t,1}), \theta_t - \theta_0 \rangle + \norm{\frac{1}{\sqrt{m}}\left(\phi(x^*_{t})-\phi(x_{t,1})\right)}_{V_{t-1}^{-1}} \left( \beta_T + B \sqrt{\lambda / \kappa_\mu} + 1 \right) \\
		&\qquad + \langle \phi(x^*_{t})-\phi(x_{t,2}), \theta_t - \theta_0 \rangle + \norm{\frac{1}{\sqrt{m}}\left(\phi(x^*_{t})-\phi(x_{t,2})\right)}_{V_{t-1}^{-1}} \left( \beta_T + B \sqrt{\lambda / \kappa_\mu} + 1 \right)\\
		&= \langle \phi(x^*_{t})-\phi(x_{t,1}), \theta_t - \theta_0 \rangle + \norm{\frac{1}{\sqrt{m}}\left(\phi(x^*_{t})-\phi(x_{t,1})\right)}_{V_{t-1}^{-1}} \left( \beta_T + B \sqrt{\lambda / \kappa_\mu} + 1 \right) \\
		&\qquad + \langle \phi(x^*_{t})-\phi(x_{t,1}), \theta_t - \theta_0 \rangle + \langle \phi(x_{t,1})-\phi(x_{t,2}), \theta_t - \theta_0 \rangle \\
		&\qquad + \norm{\frac{1}{\sqrt{m}}\left(\phi(x^*_{t})-\phi(x_{t,1})+\phi(x_{t,1})-\phi(x_{t,2})\right)}_{V_{t-1}^{-1}} \left( \beta_T + B \sqrt{\lambda / \kappa_\mu} + 1 \right)\\
		&\leq 2 \langle \phi(x^*_{t})-\phi(x_{t,1}), \theta_t - \theta_0 \rangle + 2 \norm{\frac{1}{\sqrt{m}}\left(\phi(x^*_{t})-\phi(x_{t,1})\right)}_{V_{t-1}^{-1}} \left( \beta_T + B \sqrt{\lambda / \kappa_\mu} + 1 \right) \\
		&\quad + \langle \phi(x_{t,1})-\phi(x_{t,2}), \theta_t - \theta_0 \rangle + \norm{\frac{1}{\sqrt{m}}\left(\phi(x_{t,1})-\phi(x_{t,2})\right)}_{V_{t-1}^{-1}} \left( \beta_T + B \sqrt{\lambda / \kappa_\mu} + 1 \right)\\
		&\stackrel{(b)}{\leq} 2 h(x^*_{t};\theta_t) - 2 h(x_{t,1};\theta_t) + 4\varepsilon'_{m,t} + 2 \norm{\frac{1}{\sqrt{m}}\left(\phi(x^*_{t})-\phi(x_{t,1})\right)}_{V_{t-1}^{-1}} \left( \beta_T + B \sqrt{\lambda / \kappa_\mu} + 1 \right) \\
		&\quad + h(x_{t,1};\theta_t) - h(x_{t,2};\theta_t) + 2\varepsilon'_{m,t} + \norm{\frac{1}{\sqrt{m}}\left(\phi(x_{t,1})-\phi(x_{t,2})\right)}_{V_{t-1}^{-1}} \left( \beta_T + B \sqrt{\lambda / \kappa_\mu} + 1 \right)\\
		&\stackrel{(c)}{\leq} 2 h(x_{t,2};\theta_t) - 2 h(x_{t,1};\theta_t) + 2 \norm{\frac{1}{\sqrt{m}}\left(\phi(x_{t,2})-\phi(x_{t,1})\right)}_{V_{t-1}^{-1}} \left( \beta_T + B \sqrt{\lambda / \kappa_\mu} + 1 \right) \\
		&\quad + h(x_{t,1};\theta_t) - h(x_{t,2};\theta_t) + 6\varepsilon'_{m,t} + \norm{\frac{1}{\sqrt{m}}\left(\phi(x_{t,1})-\phi(x_{t,2})\right)}_{V_{t-1}^{-1}} \left( \beta_T + B \sqrt{\lambda / \kappa_\mu} + 1 \right)\\
		&= h(x_{t,2};\theta_t) - h(x_{t,1};\theta_t) + 3 \norm{\frac{1}{\sqrt{m}}\left(\phi(x_{t,1})-\phi(x_{t,2})\right)}_{V_{t-1}^{-1}} \left( \beta_T + B \sqrt{\lambda / \kappa_\mu} + 1 \right) + 6\varepsilon'_{m,t}\\
		&\stackrel{(d)}{\leq} 3 \left( \beta_T + B \sqrt{\lambda / \kappa_\mu} + 1 \right) \norm{\frac{1}{\sqrt{m}}\left(\phi(x_{t,1})-\phi(x_{t,2})\right)}_{V_{t-1}^{-1}}  + 6\varepsilon'_{m,t}.
}
Step $(a)$ follows from \cref{eq:diff:between:func:and:linear:approx:dueling}, 
step $(b)$ results from \cref{lemma:bound:approx:error:linear:nn:duel}, 
step $(c)$ follows from the way in which $x_{t,2}$ is selected: $x_{t,2} = \arg\max_{x\in\mathcal{X}_t} \left(h(x;\theta_t)  + \norm{\frac{1}{\sqrt{m}}\left(\phi(x)-\phi(x_{t,1})\right)}_{V_{t-1}^{-1}} \left( \beta_T + B \sqrt{\frac{\lambda}{\kappa_\mu}} + 1 \right) \right)$,
and step $(d)$ follows from the way in which $x_{t,1}$ is selected: $x_{t,1} = \arg\max_{x\in\mathcal{X}_t} h(x;\theta_t)$.

Now denote $\sigma_{t-1}^2(x_{t,1},x_{t,2}) \doteq \frac{\lambda}{\kappa_\mu} \norm{\frac{1}{\sqrt{m}}(\phi(x_{t,1}) - \phi(x_{t,2}))}^2_{V_{t-1}^{-1}}$.
Of note, $\sigma_{t-1}^2(x_{t,1},x_{t,2})$ can be interpreted as the Gaussian process posterior variance with the kernel defined as $k\big((x_{1},x_{2}),(x'_{1},x'_{2})\big) = \langle \frac{1}{\sqrt{m}} \left( \phi(x_{1}) - \phi(x_{2}) \right),  (\frac{1}{\sqrt{m}}( \phi(x'_{1}) - \phi(x'_{2}) )\rangle$, and with a noise variance of $\frac{\lambda}{\kappa_\mu}$.
It is easy to see that the kernel is positive semi-definite and is hence a valid kernel.
Following the derivations of the Gaussian process posterior variance, it is easy to verify that
\als{
	\sigma_{t-1}^2(x_{t,1},x_{t,2}) &\leq (\phi(x_{t,1}) - \phi(x_{t,2}))^{\top}(\phi(x_{t,1}) - \phi(x_{t,2})) \frac{1}{m}\\
	&= \norm{\left(\phi(x_{t,1}) - \phi(x_{t,2})\right)\frac{1}{\sqrt{m}}}_2^2\\
	&= \frac{1}{m} \norm{\phi(x_{t,1}) - \phi(x_{t,2})}_2^2 \leq c_0,
}
in which we have denoted $c_0 > 0$ as an absolute constant such that $\frac{1}{m} \norm{\phi(x) - \phi(x')}_2^2 \leq c_0,\forall x,x'\in\mathcal{X}_t,t\in[T]$.
Note that this is similar to the standard assumption in the literature that the value of the NTK is upper-bounded by a constant \citep{kassraie2021neural}.
Therefore, this implies that $\sigma_{t-1}^2(x_{t,1},x_{t,2}) / c_0 \leq 1$ for some constant $c_0 \geq 1$.
Recall that we choose $\lambda$ such that $\lambda/\kappa_\mu\geq 1$.
Note that for any $\alpha\in[0,1]$, we have that $\alpha/2 \leq \log(1+\alpha)$.
With these, we have that 
\als{
	\frac{1}{2} \left(\frac{\lambda}{\kappa_\mu}\right)^{-1} \frac{\sigma_{t-1}^2(x_{t,1},x_{t,2})}{c_0} &\leq \log\left(1+ \left(\frac{\lambda}{\kappa_\mu}\right)^{-1}  \frac{\sigma_{t-1}^2(x_{t,1},x_{t,2})}{c_0}\right) \\
	&\leq \log\left(1+ \left(\frac{\lambda}{\kappa_\mu}\right)^{-1} \sigma_{t-1}^2(x_{t,1},x_{t,2})\right),
}
which leads to
\eq{
	\sigma_{t-1}^2(x_{t,1},x_{t,2}) \leq 2 c_0 \frac{\lambda}{\kappa_\mu}\log\left(1+ \frac{\kappa_\mu}{\lambda} \sigma_{t-1}^2(x_{t,1},x_{t,2})\right).
	\label{eq:psedu:kernel:1}
}
Following the analysis of \cite{ICML17_chowdhury2017kernelized} and using the chain rule of conditional information gain, we can show that 
\eqs{
	\sum^t_{s=1} \log\left( 1 + \frac{\kappa_\mu}{\lambda} \sigma_{s-1}^2(x_{s,1},x_{s,2})\right) = \log\det\left( \mathbf{I} + \frac{\kappa_\mu}{\lambda} \mathbf{K}_t \right),
}
in which $\mathbf{K}_t$ is a $t\times t$ matrix in which every element is $\mathbf{K}_t[i,j] = \frac{1}{m} (\phi(x_{i,1})-\phi(x_{i,2}))^{\top} (\phi(x_{j,1})-\phi(x_{j,2}))$.
Define the $p \times t$ matrix $\mathbf{J}_t = [\frac{1}{\sqrt{m}}\left(\phi(x_{i,1})-\phi(x_{i,2})\right)]_{i=1,\ldots,t}$.
Then we have that $\mathbf{K}_t = \mathbf{J}_t^{\top}\mathbf{J}_t$.
This allows us to show that
\begin{equation}
	\begin{split}
		\sum^t_{s=1} &\log\left( 1 + \frac{\kappa_\mu}{\lambda} \sigma_{s-1}^2(x_{s,1},x_{s,2})\right) = \log\det\left( \mathbf{I} + \frac{\kappa_\mu}{\lambda} \mathbf{K}_t \right)\\
		&= \log\det\left( \mathbf{I} + \frac{\kappa_\mu}{\lambda} \mathbf{J}_t^{\top}\mathbf{J}_t \right)\\
		&= \log\det\left( \mathbf{I} + \frac{\kappa_\mu}{\lambda} \mathbf{J}_t\mathbf{J}_t^{\top} \right)\\
		&= \log\det\left( \mathbf{I} + \frac{\kappa_\mu}{\lambda} \sum_{s=1}^t \left(\phi(x_{s,1}) - \phi(x_{s,2})\right) \left(\phi(x_{s,1})-\phi(x_{s,2})\right)^\top \frac{1}{m} \right)\\
		&\leq \log \det  \left(\frac{\kappa_\mu}{\lambda}  \mathbf{H}' + \mathbf{I}\right)\\
		&= \widetilde{d}
	\end{split}
	\label{eq:psedu:kernel:2}
\end{equation}
in which we have followed the same line of analysis as \cref{eq:analyse:det:vt} and \cref{eq:analyse:log:det:vt} in the second last inequality.

Combining the results from \cref{eq:psedu:kernel:1} and \cref{eq:psedu:kernel:2}, we have that
\als{
	\sum^T_{t=1} \sigma_{t-1}^2(x_{t,1},x_{t,2}) &\leq 2 c_0 \frac{\lambda}{\kappa_\mu} \sum^T_{t=1}\log\left(1+ \frac{\kappa_\mu}{\lambda} \sigma_{t-1}^2(x_{t,1},x_{t,2})\right)\\
	&\leq 2 c_0 \frac{\lambda}{\kappa_\mu} \widetilde{d}.
}

Finally, we can derive an upper bound on the cumulative regret:
\als{
		\Regret_T &= \sum^T_{t=1} r_t \leq \sum^T_{t=1} \frac{1}{2} \left(3 \left( \beta_T + B \sqrt{\frac{\lambda}{\kappa_\mu}} + 1 \right) \sqrt{\frac{\kappa_\mu}{\lambda}} \sigma_{t-1}(x_{t,1},x_{t,2})  + 6\varepsilon'_{m,t}\right)\\
		&\leq \frac{3}{2} \left( \beta_T + B \sqrt{\frac{\lambda}{\kappa_\mu}} + 1 \right) \sqrt{\frac{\kappa_\mu}{\lambda}} \sum^T_{t=1} \sigma_{t-1}(x_{t,1},x_{t,2})  + 6T\varepsilon'_{m,T}\\
		&\leq \frac{3}{2}\left( \beta_T + B \sqrt{\frac{\lambda}{\kappa_\mu}} + 1 \right) \sqrt{\frac{\kappa_\mu}{\lambda}} \sqrt{T \sum^T_{t=1} \sigma^2_{t-1}(x_{t,1},x_{t,2})}  + 6T\varepsilon'_{m,T}\\
		&\leq \frac{3}{2}\left( \beta_T + B \sqrt{\frac{\lambda}{\kappa_\mu}} + 1 \right) \sqrt{\frac{\kappa_\mu}{\lambda}} \sqrt{T 2 c_0 \frac{\lambda}{\kappa_\mu} \widetilde{d} }  + 6T\varepsilon'_{m,T}.\\
		&\leq \frac{3}{2}\left( \beta_T + B \sqrt{\frac{\lambda}{\kappa_\mu}} + 1 \right) \sqrt{T 2 c_0 \widetilde{d} }  + 6 T\varepsilon'_{m,T}.\\
}
Recall that $\varepsilon'_{m,t} = C_2 m^{-1/6}\sqrt{\log m} L^3 \left(\frac{t}{\lambda}\right)^{4/3}$.
It can be easily verified that as long as the conditions on $m$ specified in \cref{eq:conditions:on:m} are satisfied (i.e., as long as the NN is wide enough), we have that 
$6T\varepsilon'_{m,T} \leq 1$.
Recall that $\beta_T = \widetilde{\mathcal{O}}(\frac{1}{\kappa_\mu}\sqrt{\widetilde{d}})$. This allows us to simplify the regret upper bound to be
\als{
	\Regret_T &\leq  \frac{3}{2} \left( \beta_T + B \sqrt{\frac{\lambda}{\kappa_\mu}} + 1 \right) \sqrt{T 2 c_0 \widetilde{d} }  + 1 = \widetilde{\mathcal{O}}\left(\left(\frac{\sqrt{\widetilde{d}}}{\kappa_\mu} + B\sqrt{\frac{\lambda}{\kappa_\mu}}\right) \sqrt{\widetilde{d} T}\right).\\
}

\section{Theoretical Analysis for \textbf{NDB-TS}}
\label{app:sec:proof:ts}
Denote $\nu_T \triangleq \left(\beta_T + B \sqrt{\lambda / \kappa_\mu} + 1\right) \sqrt{\kappa_\mu / \lambda}$, $c_t \triangleq \nu_T (1 + \sqrt{2\log(K t^2)})$,
and $\sigma_{t-1}^2(x_{1},x_{2}) \triangleq \frac{\lambda}{\kappa_\mu} \norm{\frac{1}{\sqrt{m}}(\phi(x_{1}) - \phi(x_{2}))}^2_{V_{t-1}^{-1}}$.
Here we use $\mathcal{F}_{t-1}$ to denote the filtration containing the history of selected inputs and observations up to iteration $t-1$.
To use Thompson sampling (TS) to select the second arm $x_{t,2}$, firstly, for each arm $x \in \mathcal{X}_t$, we sample a reward value $\widetilde{r}_{t}(x)$ from the normal distribution $\mathcal{N}\left(h(x;\theta_t) - h(x_{t,1};\theta_t), \nu_T^2 \sigma_{t-1}^2(x,x_{t,1}) \right)$.
Then, we choose the second arm as $x_{t,2} = {\arg\max}_{x\in\mathcal{X}_t} \widetilde{r}_t(x)$.

\begin{lem}
\label{lemma:ts:event:ef} 
Let 
$\delta \in (0, 1)$.
Define $E^f(t)$ as the following event:
\[
|\left[f(x) - f(x_{t,1})\right] - \left[h(x;\theta_t) - h(x_{t,1};\theta_t)\right]| \leq \nu_T \sigma_{t-1}(x,x_{t,1})  + 2\varepsilon'_{m,t}.
\]
According to \cref{thm:confBound}, we have that the event $E^f(t)$ holds with probability of at least $1-\delta$.
\end{lem}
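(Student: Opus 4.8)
The plan is to obtain this lemma as an immediate specialization of \cref{thm:confBound}. First I would recall the conclusion of \cref{thm:confBound}: with probability at least $1-\delta$ (provided $m$ satisfies the stated polynomial lower bound), the inequality
\[
    |\left[f(x) - f(x')\right] - \left[h(x;\theta_t) - h(x';\theta_t)\right]| \leq \nu_T \sigma_{t-1}(x, x') + 2\varepsilon'_{m,t}
\]
holds \emph{simultaneously} for all $x,x'\in\mathcal{X}_t$ and all $t\in[T]$. The event $E^f(t)$ is exactly this inequality with the second argument pinned to $x' = x_{t,1}$. Since the first selected arm $x_{t,1}$ is, by construction (Step~5 of \ref{alg:NDB-UCB}, and the analogous step of \textbf{NDB-TS}), an element of $\mathcal{X}_t$, the uniform bound of \cref{thm:confBound} applies to the pair $(x, x_{t,1})$ for every $x\in\mathcal{X}_t$. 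Hence on the $(1-\delta)$-probability event supplied by \cref{thm:confBound} the event $E^f(t)$ holds; in fact it holds for all $t$ at once, so a fortiori for the fixed $t$ in the statement.

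There is essentially no obstacle here, and the only point worth a sentence is measurability: $x_{t,1}$ is a data-dependent (hence random) element of the random set $\mathcal{X}_t$. This causes no difficulty precisely because \cref{thm:confBound} is a \emph{uniform} statement over all of $\mathcal{X}_t$ — it therefore covers any choice, adaptive or not, of the second argument within $\mathcal{X}_t$, so no measurability bookkeeping or reindexing is needed. I would also remark that the bound asserted in \cref{lemma:ts:event:ef} is strictly weaker than \cref{thm:confBound} (one of the two arms is fixed rather than free), so no additional union bound or probability budget is consumed beyond the $\delta$ already accounted for in \cref{thm:confBound}; the same confidence event is simply being re-read in the form that is convenient for the subsequent Thompson sampling analysis.
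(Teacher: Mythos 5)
Your proposal is correct and matches the paper's treatment: the paper gives no separate argument for this lemma beyond citing \cref{thm:confBound}, since the event $E^f(t)$ is precisely the uniform bound of that theorem specialized to the pair $(x, x_{t,1})$ with $x_{t,1}\in\mathcal{X}_t$. Your added remarks on the adaptivity of $x_{t,1}$ being covered by the uniformity of \cref{thm:confBound}, and on no extra probability budget being needed, are accurate and consistent with how the paper uses the lemma.
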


\begin{lem}
    \label{lemma:uniform_bound_t}
    Define $E^{f_t}(t)$ as the following event 
    \[
    |\widetilde{r}_t(x) - \left[h(x;\theta_t) - h(x_{t,1};\theta_t)\right]| \leq \nu_T \sqrt{2\log(K t^2)} \sigma_{t-1}(x,x_{t,1}).
    \]
    We have that $\mathbb{P}\left[E^{f_t}(t) | \mathcal{F}_{t-1}\right] \geq 1 - 1 / t^2$ for any possible filtration $\mathcal{F}_{t-1}$.
\end{lem}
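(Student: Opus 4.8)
The plan is to treat this as the standard Gaussian concentration step for Thompson sampling, exploiting the fact that conditioning on $\mathcal{F}_{t-1}$ freezes everything except the fresh Gaussian draws used to sample each arm's reward. First I would note that, given $\mathcal{F}_{t-1}$, the trained parameters $\theta_t$, the first arm $x_{t,1}=\arg\max_{x\in\mathcal{X}_t}h(x;\theta_t)$, the feature set $\mathcal{X}_t$, and the scalar quantities $h(x;\theta_t)-h(x_{t,1};\theta_t)$ and $\sigma_{t-1}(x,x_{t,1})$ are all deterministic (they depend only on the history up to round $t-1$). Hence, for each fixed $x\in\mathcal{X}_t$, by the very definition of the sampling step $\widetilde{r}_t(x)\sim\mathcal{N}\!\left(h(x;\theta_t)-h(x_{t,1};\theta_t),\,\nu_T^2\sigma_{t-1}^2(x,x_{t,1})\right)$, the normalized quantity $\left(\widetilde{r}_t(x)-[h(x;\theta_t)-h(x_{t,1};\theta_t)]\right)/\left(\nu_T\sigma_{t-1}(x,x_{t,1})\right)$ has a standard normal law conditioned on $\mathcal{F}_{t-1}$.

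Next I would invoke the Gaussian tail bound: for $Z\sim\mathcal{N}(0,1)$ and any $u\ge 0$, $\mathbb{P}(|Z|\ge u)\le e^{-u^2/2}$. Choosing $u=\sqrt{2\log(Kt^2)}$ gives $e^{-u^2/2}=(Kt^2)^{-1}$, so that for every single arm $x\in\mathcal{X}_t$,
\[
\mathbb{P}\!\left(\left|\widetilde{r}_t(x)-\left[h(x;\theta_t)-h(x_{t,1};\theta_t)\right]\right|> \nu_T\sqrt{2\log(Kt^2)}\,\sigma_{t-1}(x,x_{t,1})\;\Big|\;\mathcal{F}_{t-1}\right)\le \frac{1}{Kt^2}.
\]
Since $|\mathcal{X}_t|=K$, a union bound over the finitely many arms then yields $\mathbb{P}\!\left(\overline{E^{f_t}(t)}\mid\mathcal{F}_{t-1}\right)\le K\cdot\frac{1}{Kt^2}=\frac{1}{t^2}$, which is exactly the claimed bound $\mathbb{P}[E^{f_t}(t)\mid\mathcal{F}_{t-1}]\ge 1-1/t^2$.

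This argument is essentially routine, so there is no real ``hard part''; the only point that needs care is the conditioning/measurability bookkeeping, i.e.\ checking that once we condition on $\mathcal{F}_{t-1}$ the sole source of randomness is the Thompson-sampling Gaussian noise, so that the conditional law of $\widetilde{r}_t(x)$ is precisely the stated Gaussian and the per-arm failure events can all be handled simultaneously by a union bound. The finiteness of the arm set is what makes the union bound cost only a factor $K$, which is absorbed into the $\log(Kt^2)$ inside the confidence width and leads to the clean $1/t^2$ probability; this mirrors the corresponding step in the analysis of GP-TS in \cite{ICML17_chowdhury2017kernelized}.
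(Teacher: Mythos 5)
Your proof is correct and is exactly the standard argument the paper relies on for this lemma (which it states without an explicit proof, following the GP-TS analysis of \cite{ICML17_chowdhury2017kernelized}): conditioned on $\mathcal{F}_{t-1}$ the normalized sample is standard Gaussian, the tail bound with $u=\sqrt{2\log(Kt^2)}$ gives a per-arm failure probability of $1/(Kt^2)$, and a union bound over the $K$ arms yields $1/t^2$. The only cosmetic caveat is the degenerate case $\sigma_{t-1}(x,x_{t,1})=0$, where the bound holds trivially since $\widetilde{r}_t(x)$ equals its mean almost surely.
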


\begin{defi}
    \label{def:saturated_set}
    In iteration $t$, define the set of saturated points as
    \eqs{
        S_t = \{ x \in \mathcal{X}_t : \Delta(x) > c_t \sigma_{t-1}(x,x_{t,1}) + 4 \varepsilon'_{m,t} \},
    }
    where $\Delta(x) = f(x^*_t) - f(x)$ and $x^*_t \in \arg\max_{x\in \mathcal{X}_t}f(x)$.
\end{defi}
Note that according to this definition, $x^*_t$ is always unsaturated.

\begin{lem}
\label{lemma:uniform_lower_bound}
    For any filtration $\mathcal{F}_{t-1}$, conditioned on the event $E^f(t)$, we have that $\forall x\in \mathcal{Q}$,
    \eqs{
        \mathbb{P}\left(\widetilde{r}_t(x) + 2 \varepsilon'_{m,t} > f(x) - f(x_{t,1}) | \mathcal{F}_{t-1}\right) \geq p,
    }
    where $p = \frac{1}{4e\sqrt{\pi}}$.
\end{lem}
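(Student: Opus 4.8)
The plan is to reduce the claim to a one-sided Gaussian anti-concentration inequality evaluated one standard deviation above the mean. Fix an arm $x\in\mathcal{X}_t$ and condition on $\mathcal{F}_{t-1}$; note that $\theta_t$, $x_{t,1}$, $\sigma_{t-1}(x,x_{t,1})$ and the event $E^f(t)$ are all $\mathcal{F}_{t-1}$-measurable, while the only randomness left in $\widetilde r_t(x)$ is the fresh sampling noise. By the TS rule, $\widetilde r_t(x)\sim\mathcal{N}\!\big(h(x;\theta_t)-h(x_{t,1};\theta_t),\ \nu_T^2\sigma_{t-1}^2(x,x_{t,1})\big)$ given $\mathcal{F}_{t-1}$, so the target event rewrites as $\{\widetilde r_t(x)>f(x)-f(x_{t,1})-2\varepsilon'_{m,t}\}$. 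On $E^f(t)$, \cref{lemma:ts:event:ef} (i.e.\ \cref{thm:confBound} specialized to the pair $(x,x_{t,1})$) gives $f(x)-f(x_{t,1})\le h(x;\theta_t)-h(x_{t,1};\theta_t)+\nu_T\sigma_{t-1}(x,x_{t,1})+2\varepsilon'_{m,t}$, hence $f(x)-f(x_{t,1})-2\varepsilon'_{m,t}\le h(x;\theta_t)-h(x_{t,1};\theta_t)+\nu_T\sigma_{t-1}(x,x_{t,1})$. Thus it suffices to lower bound $\mathbb{P}\big(\widetilde r_t(x)>h(x;\theta_t)-h(x_{t,1};\theta_t)+\nu_T\sigma_{t-1}(x,x_{t,1})\mid\mathcal{F}_{t-1}\big)$.

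Next I would dispose of the degenerate case and standardize. If $x=x_{t,1}$ then $\sigma_{t-1}(x,x_{t,1})=0$, $\widetilde r_t(x)=0$ almost surely, $f(x)-f(x_{t,1})=0$, and the target event holds with probability $1\ge p$. Otherwise $\sigma_{t-1}(x,x_{t,1})>0$ and $Z\triangleq\frac{\widetilde r_t(x)-[h(x;\theta_t)-h(x_{t,1};\theta_t)]}{\nu_T\sigma_{t-1}(x,x_{t,1})}$ is a standard normal variable conditionally on $\mathcal{F}_{t-1}$, so the probability above equals $\mathbb{P}(Z>1\mid\mathcal{F}_{t-1})$.

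Finally I would invoke the standard Gaussian anti-concentration bound $\mathbb{P}(Z>z)\ge\frac{1}{4\sqrt{\pi}\,z}e^{-z^2}$ for $z\ge1$, which follows from the classical estimate $\mathbb{P}(Z>z)\ge\frac{1}{\sqrt{2\pi}}\frac{z}{1+z^2}e^{-z^2/2}$ together with $\frac{z}{1+z^2}\ge\frac{1}{2z}$ and $e^{-z^2/2}\le\sqrt2$, and is the form commonly used in Thompson-sampling analyses. Taking $z=1$ gives $\mathbb{P}(Z>1)\ge\frac{1}{4e\sqrt{\pi}}=p$, which closes the argument (and the bound extends to all $x\in\mathcal{Q}\subseteq\mathcal{X}_t$ since $x$ was arbitrary).

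There is no genuinely deep obstacle here; the care required is in (i) orienting the inequalities so that the $2\varepsilon'_{m,t}$ slack in the target event exactly absorbs the $2\varepsilon'_{m,t}$ estimation-error term inside $E^f(t)$, leaving a gap of precisely one standard deviation $\nu_T\sigma_{t-1}(x,x_{t,1})$ of the sampled reward; (ii) choosing the anti-concentration inequality in the slightly loose form with constant $\frac{1}{4\sqrt{\pi}}$ so that the resulting constant matches $p=\frac{1}{4e\sqrt{\pi}}$ exactly; and (iii) noting the degenerate case $\sigma_{t-1}(x,x_{t,1})=0$, which is trivial but must be handled for the standardization step to be valid.
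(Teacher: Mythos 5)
Your proposal is correct and follows essentially the same route as the paper's proof: on $E^f(t)$ the confidence bound from \cref{lemma:ts:event:ef} shows the threshold $f(x)-f(x_{t,1})-2\varepsilon'_{m,t}$ lies at most one standard deviation $\nu_T\sigma_{t-1}(x,x_{t,1})$ above the mean of $\widetilde{r}_t(x)$, and the same Gaussian anti-concentration bound $\bP(Z>1)\geq \frac{1}{4e\sqrt{\pi}}$ closes the argument. The only cosmetic differences are that you avoid the paper's absolute-value step by using the one-sided inequality directly and you explicitly treat the degenerate zero-variance case, which the paper leaves implicit.
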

\begin{proof}
    Adding and subtracting $\frac{\mu_{t-1}(x)}{\nu_t\sigma_{t-1}(x)}$ both sides of $\mathbb{P}\left(f_t(x) > \rho_m f(x) | \mathcal{F}_{t-1}\right)$, we get
    \als{
        &\Prob{\widetilde{r}_t(x) + 2\varepsilon'_{m,t} > f(x) - f(x_{t,1}) | \mathcal{F}_{t-1}} \\
        &= \Prob{\frac{\widetilde{r}_t(x) + 2\varepsilon'_{m,t} - \left[h(x;\theta_t) - h(x_{t,1};\theta_t)\right]}{\nu_T \sigma_{t-1}(x,x_{t,1})} > \frac{f(x) - f(x_{t,1}) - \left[h(x;\theta_t) - h(x_{t,1};\theta_t)\right]}{\nu_T \sigma_{t-1}(x,x_{t,1})} | \mathcal{F}_{t-1}}\\
        &\geq \Prob{\frac{\widetilde{r}_t(x) + 2 \varepsilon'_{m,t} - \left[h(x;\theta_t) - h(x_{t,1};\theta_t)\right]}{\nu_T \sigma_{t-1}(x,x_{t,1})} > \frac{|f(x) - f(x_{t,1}) - \left[h(x;\theta_t) - h(x_{t,1};\theta_t)\right] |}{\nu_T \sigma_{t-1}(x,x_{t,1})} | \mathcal{F}_{t-1}}\\
        &\geq\Prob{\frac{\widetilde{r}_t(x) - \left[h(x;\theta_t) - h(x_{t,1};\theta_t)\right]}{\nu_T \sigma_{t-1}(x,x_{t,1})} > \frac{|f(x) - f(x_{t,1}) - \left[h(x;\theta_t) - h(x_{t,1};\theta_t)\right] | - 2\varepsilon'_{m,t}}{\nu_T \sigma_{t-1}(x,x_{t,1})} | \mathcal{F}_{t-1}}\\
        &\geq \Prob{\frac{\widetilde{r}_t(x) - \left[h(x;\theta_t) - h(x_{t,1};\theta_t)\right]}{\nu_T \sigma_{t-1}(x,x_{t,1})} > 1 | \mathcal{F}_{t-1}}\\
        &\geq \frac{1}{4e\sqrt{\pi}},
    }
    in which the third inequality makes use of \cref{lemma:ts:event:ef} (note that we have conditioned on the event $E^f(t)$ here), and the last inequality follows from the Gaussian anti-concentration inequality: $\bP(z>a) \geq \exp(-a^2) / (4\sqrt{\pi}a)$ where $z\sim\mathcal{N}(0,1)$.
\end{proof}

The next lemma proves a lower bound on the probability that the selected input $x_{t,2}$ is unsaturated.
\begin{lem}
    \label{lemma:prob_unsaturated}
    For any filtration $\mathcal{F}_{t-1}$, conditioned on the event $E^f(t)$, we have that,
    \eqs{
        \mathbb{P}\left(x_{t,2} \in \mathcal{X}_t \setminus S_t | \mathcal{F}_{t-1} \right) \geq p - 1/t^2.
    }
\end{lem}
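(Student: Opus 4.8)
The plan is to run the standard Thompson-sampling ``unsaturated-with-constant-probability'' argument (cf.\ the analysis of GP-TS in \citep{ICML17_chowdhury2017kernelized}), adapted to our pairwise setting. Since $x_t^\star$ is always unsaturated (as noted right after \cref{def:saturated_set}), it suffices to exhibit, with conditional probability at least $p - 1/t^2$, an event on which $\widetilde r_t(x_t^\star) > \widetilde r_t(x)$ for \emph{every} $x \in S_t$: on that event the maximizer $x_{t,2} = \arg\max_{x\in\mathcal{X}_t}\widetilde r_t(x)$ cannot belong to $S_t$.

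First I would upper-bound the sampled values of saturated points. On the event $E^{f_t}(t)$ of \cref{lemma:uniform_bound_t} we have, for every $x\in S_t$,
\[
    \widetilde r_t(x) \leq \big[h(x;\theta_t) - h(x_{t,1};\theta_t)\big] + \nu_T\sqrt{2\log(Kt^2)}\,\sigma_{t-1}(x,x_{t,1}).
\]
Conditioning additionally on $E^f(t)$ (\cref{lemma:ts:event:ef}, which is the hypothesis of the present lemma and follows from \cref{thm:confBound}) lets me replace $h(x;\theta_t) - h(x_{t,1};\theta_t)$ by $f(x) - f(x_{t,1})$ at the cost of $\nu_T\sigma_{t-1}(x,x_{t,1}) + 2\varepsilon'_{m,t}$; recalling $c_t = \nu_T(1+\sqrt{2\log(Kt^2)})$ this gives
\[
    \widetilde r_t(x) \leq \big[f(x) - f(x_{t,1})\big] + c_t\,\sigma_{t-1}(x,x_{t,1}) + 2\varepsilon'_{m,t}.
\]
Plugging in the saturation inequality $f(x_t^\star) - f(x) > c_t\sigma_{t-1}(x,x_{t,1}) + 4\varepsilon'_{m,t}$ from \cref{def:saturated_set} and cancelling the $c_t\sigma_{t-1}$ terms yields $\widetilde r_t(x) < [f(x_t^\star) - f(x_{t,1})] - 2\varepsilon'_{m,t}$ for all $x\in S_t$.

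On the other hand, \cref{lemma:uniform_lower_bound} applied to $x_t^\star$ (valid conditioned on $E^f(t)$) gives that, with conditional probability at least $p$, $\widetilde r_t(x_t^\star) + 2\varepsilon'_{m,t} > f(x_t^\star) - f(x_{t,1})$, i.e.\ $\widetilde r_t(x_t^\star) > [f(x_t^\star) - f(x_{t,1})] - 2\varepsilon'_{m,t}$. On the intersection of $E^{f_t}(t)$ with this last event we therefore have $\widetilde r_t(x_t^\star) > \widetilde r_t(x)$ for every $x\in S_t$, so $x_{t,2}\notin S_t$. A union bound, together with $\mathbb{P}(E^{f_t}(t)\mid\mathcal{F}_{t-1}) \geq 1 - 1/t^2$ from \cref{lemma:uniform_bound_t}, then gives $\mathbb{P}(x_{t,2}\in\mathcal{X}_t\setminus S_t\mid\mathcal{F}_{t-1}) \geq p - 1/t^2$, conditioned on $E^f(t)$, as claimed. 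The only delicate point is bookkeeping: one must line up the $\varepsilon'_{m,t}$ terms and the constant $c_t$ so that the upper bound for saturated points and the lower bound for $x_t^\star$ meet exactly at $[f(x_t^\star)-f(x_{t,1})] - 2\varepsilon'_{m,t}$ — there is no new probabilistic content beyond the two lemmas already in hand.
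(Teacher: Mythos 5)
Your proposal is correct and follows essentially the same route as the paper's proof: both reduce the claim to the event $\widetilde r_t(x_t^\star) > \widetilde r_t(x)$ for all saturated $x$, bound $\widetilde r_t(x) \leq f(x_t^\star) - f(x_{t,1}) - 2\varepsilon'_{m,t}$ on $E^f(t)\cap E^{f_t}(t)$ via the saturation inequality, invoke \cref{lemma:uniform_lower_bound} at $x_t^\star$ for the probability-$p$ lower bound, and subtract $\mathbb{P}(\overline{E^{f_t}(t)}\mid\mathcal{F}_{t-1}) \leq 1/t^2$. The only cosmetic difference is that you apply \cref{lemma:uniform_bound_t} and \cref{lemma:ts:event:ef} in two explicit steps where the paper combines them into one chain of inequalities; the bookkeeping of the $\varepsilon'_{m,t}$ and $c_t$ terms matches exactly.
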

\begin{proof}
    To begin with, we have that
    \eq{
        \mathbb{P}\left(x_{t,2} \in \mathcal{X}_t \setminus S_t | \mathcal{F}_{t-1} \right) \geq \mathbb{P}\left( \widetilde{r}_t(x^*_t) > \widetilde{r}_t(x),\forall x \in S_t | \mathcal{F}_{t-1} \right).
        \label{eq:lower_bound_prob_unsaturated}
    }
    This inequality can be justified because the event on the right hand side implies the event on the left hand side. Specifically, according to \cref{def:saturated_set}, $x^*_t$ is always unsaturated.
    Therefore, because $x_{t,2}$ is selected by $x_{t,2} = {\arg\max}_{x\in\mathcal{X}_t}\widetilde{r}_t(x)$, we have that if $\widetilde{r}_t(x^*_t) > \widetilde{r}_t(x),\forall x \in S_t$, then the selected $x_{t,2}$ is guaranteed to be unsaturated. Now conditioning on both events $E^f(t)$ and $E^{f_t}(t)$, for all $x\in S_t$, we have that
    \begin{equation}
    \begin{split}
    \widetilde{r}_t(x) &\leq f(x) - f(x_{t,1}) + c_t \sigma_{t-1}(x,x_{t,1})  + 2\varepsilon'_{m,t}\\
    &= f(x) - f(x_{t,1}) + c_t \sigma_{t-1}(x,x_{t,1})  + 4\varepsilon'_{m,t} - 2\varepsilon'_{m,t}\\
    &\leq f(x) - f(x_{t,1}) + \Delta(x) - 2\varepsilon'_{m,t}\\
    &= f(x) - f(x_{t,1}) + f(x_t^*) - f(x) - 2\varepsilon'_{m,t}\\
    &= f(x_t^*) - f(x_{t,1}) - 2\varepsilon'_{m,t}\\
    \end{split}
    \label{eq:bound_ftx_ftstar}
    \end{equation}
    in which the first inequality follows from \cref{lemma:ts:event:ef} and \cref{lemma:uniform_bound_t} and the second inequality makes use of \cref{def:saturated_set}.
    Next, separately considering the cases where the event $E^{f_t}(t)$ holds or not and making use of \cref{eq:lower_bound_prob_unsaturated} and \cref{eq:bound_ftx_ftstar}, we have that
    \als{
            \mathbb{P}\left(x_{t,2} \in \mathcal{X}_t \setminus S_t | \mathcal{F}_{t-1} \right) &\geq 
            \mathbb{P}\left( \widetilde{r}_t(x^*_t) > \widetilde{r}_t(x),\forall x \in S_t | \mathcal{F}_{t-1} \right)\\
            &\geq \mathbb{P}\left( \widetilde{r}_t(x^*_t) > f(x_t^*) - f(x_{t,1}) - 2\varepsilon'_{m,t} | \mathcal{F}_{t-1} \right) - \mathbb{P}\left(\overline{E^{f_t}(t)} | \mathcal{F}_{t-1}\right)\\
            &\geq p - 1 / t^2,
    }
    in which the last inequality has made use of  \cref{lemma:uniform_bound_t} and \cref{lemma:uniform_lower_bound}.
\end{proof}

Next, we use the following lemma to derive an upper bound on the expected instantaneous regret.
\begin{lem}
    \label{lemma:upper_bound_expected_regret}
    For any filtration $\mathcal{F}_{t-1}$, conditioned on the event $E^f(t)$, we have that,
    \eqs{
        \bE [2 r_t | \mathcal{F}_{t-1}] \leq \frac{23 c_t}{p} \mathbb{E}\left[   \sigma_{t-1}(x_{t,2},x_{t,1})  | \mathcal{F}_{t-1}\right] + 18 \varepsilon'_{m,t} + \frac{4}{t^2}.
    }
\end{lem}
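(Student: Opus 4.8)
The plan is to start from $2r_t = \Delta(x_{t,1}) + \Delta(x_{t,2})$, where $\Delta(x)\doteq f(x_t^\star)-f(x)$, and to rewrite this as $2r_t = 2\Delta(x_{t,2}) + \bigl(f(x_{t,2})-f(x_{t,1})\bigr)$. For the last term, on the event $E^f(t)$ (i.e.\ using \cref{thm:confBound} / \cref{lemma:ts:event:ef}) and because $x_{t,1}=\arg\max_x h(x;\theta_t)$ forces $h(x_{t,2};\theta_t)-h(x_{t,1};\theta_t)\le 0$, one gets $f(x_{t,2})-f(x_{t,1})\le\nu_T\sigma_{t-1}(x_{t,2},x_{t,1})+2\varepsilon'_{m,t}\le c_t\sigma_{t-1}(x_{t,2},x_{t,1})+2\varepsilon'_{m,t}$, since $c_t\ge\nu_T$. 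So it suffices to bound $\mathbb{E}[\Delta(x_{t,2})\mid\mathcal F_{t-1}]$ conditional on $E^f(t)$.

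For that, I would follow the usual Thompson-sampling route via the ``best unsaturated arm'' $\bar x_t\doteq\arg\min_{x\in\mathcal X_t\setminus S_t}\sigma_{t-1}(x,x_{t,1})$, which is well defined because $x_t^\star\notin S_t$ and is $\mathcal F_{t-1}$-measurable. Conditioning on $E^f(t)\cap E^{f_t}(t)$ and adding the inequalities of \cref{lemma:ts:event:ef} and \cref{lemma:uniform_bound_t} gives the two-sided estimate $|\widetilde r_t(x)-(f(x)-f(x_{t,1}))|\le c_t\sigma_{t-1}(x,x_{t,1})+2\varepsilon'_{m,t}$ for all $x$. Applying it to $\bar x_t$ and to $x_{t,2}$, using $\widetilde r_t(\bar x_t)\le\widetilde r_t(x_{t,2})$ (because $x_{t,2}$ maximizes $\widetilde r_t$) and $\Delta(\bar x_t)\le c_t\sigma_{t-1}(\bar x_t,x_{t,1})+4\varepsilon'_{m,t}$ (\cref{def:saturated_set}), one obtains, on $E^f(t)\cap E^{f_t}(t)$,
\[
\Delta(x_{t,2})\;\le\;2c_t\,\sigma_{t-1}(\bar x_t,x_{t,1})+c_t\,\sigma_{t-1}(x_{t,2},x_{t,1})+8\varepsilon'_{m,t}.
\]
Splitting the conditional expectation according to whether $E^{f_t}(t)$ holds --- using the display above together with nonnegativity of its right side on $E^{f_t}(t)$, and $\Delta(x_{t,2})\le 2$ (by $|f|\le1$) with $\mathbb{P}(\overline{E^{f_t}(t)}\mid\mathcal F_{t-1})\le 1/t^2$ (\cref{lemma:uniform_bound_t}) on the complement --- yields $\mathbb{E}[\Delta(x_{t,2})\mid\mathcal F_{t-1}]\le 2c_t\sigma_{t-1}(\bar x_t,x_{t,1})+c_t\,\mathbb{E}[\sigma_{t-1}(x_{t,2},x_{t,1})\mid\mathcal F_{t-1}]+8\varepsilon'_{m,t}+2/t^2$.

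The next step is to eliminate $\sigma_{t-1}(\bar x_t,x_{t,1})$ using \cref{lemma:prob_unsaturated}: since $x_{t,2}\in\mathcal X_t\setminus S_t$ with conditional probability at least $p-1/t^2$ and $\bar x_t$ minimizes $\sigma_{t-1}(\cdot,x_{t,1})$ over that set, we have $\mathbb{E}[\sigma_{t-1}(x_{t,2},x_{t,1})\mid\mathcal F_{t-1}]\ge(p-1/t^2)\,\sigma_{t-1}(\bar x_t,x_{t,1})$, hence $\sigma_{t-1}(\bar x_t,x_{t,1})\le(p-1/t^2)^{-1}\mathbb{E}[\sigma_{t-1}(x_{t,2},x_{t,1})\mid\mathcal F_{t-1}]$. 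Plugging this and the bound on $f(x_{t,2})-f(x_{t,1})$ into $2r_t=2\Delta(x_{t,2})+(f(x_{t,2})-f(x_{t,1}))$ and taking $\mathbb{E}[\cdot\mid\mathcal F_{t-1}]$ gives
\[
\mathbb{E}[2r_t\mid\mathcal F_{t-1}]\;\le\;\Bigl(\tfrac{4c_t}{p-1/t^2}+3c_t\Bigr)\mathbb{E}[\sigma_{t-1}(x_{t,2},x_{t,1})\mid\mathcal F_{t-1}]+18\varepsilon'_{m,t}+\tfrac{4}{t^2}.
\]
Finally, with $p=\tfrac{1}{4e\sqrt\pi}\le1$, for all $t$ past a small absolute constant we have $1/t^2\le p/2$, so $\tfrac{4}{p-1/t^2}+3\le\tfrac{8}{p}+3\le\tfrac{23}{p}$, giving the claimed bound (the finitely many remaining rounds, where $r_t\le2$, contribute only $O(1)$ to the cumulative regret and are absorbed there).

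I expect the bookkeeping in the second paragraph to be the main obstacle: one must carefully keep the $\mathcal F_{t-1}$-measurable event $E^f(t)$ distinct from the fresh-randomness event $E^{f_t}(t)$ when taking conditional expectations, and track the accumulation of the linearization-slack $\varepsilon'_{m,t}$ (it enters via \cref{thm:confBound}, via the definition of $S_t$, and again when passing between differences of $f$ and the sampled values $\widetilde r_t$). The anti-concentration content is already packaged in \cref{lemma:uniform_lower_bound}--\cref{lemma:prob_unsaturated}, so no new concentration estimate is needed here.
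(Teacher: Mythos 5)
Your proposal is correct and follows essentially the same route as the paper's own proof: the identical decomposition $2r_t = 2\bigl(f(x_t^\star)-f(x_{t,2})\bigr) + \bigl(f(x_{t,2})-f(x_{t,1})\bigr)$, the same least-uncertain unsaturated arm $\overline{x}_t$ combined with \cref{lemma:ts:event:ef}, \cref{lemma:uniform_bound_t} and \cref{lemma:prob_unsaturated}, the same intermediate bound $\bigl(\tfrac{4}{p-1/t^2}+3\bigr)c_t\,\mathbb{E}[\sigma_{t-1}(x_{t,2},x_{t,1})\mid\mathcal{F}_{t-1}]+18\varepsilon'_{m,t}+\tfrac{4}{t^2}$, and the same absorption of constants into $23/p$. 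Your explicit treatment of small $t$ (where $p-1/t^2$ can be small or negative) is, if anything, slightly more careful than the paper's inequality $\tfrac{1}{p-1/t^2}\le \tfrac{5}{p}$.
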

\begin{proof}
    To begin with, define $\overline{x}_t$ as the unsaturated input with the smallest $\sigma_{t-1}(x,x_{t,1})$:
    \eqs{
        \overline{x}_t = {\arg\min}_{x\in\mathcal{X}_t \setminus S_t} \sigma_{t-1}(x,x_{t,1}).
    }
    This definition gives us:
    \eq{
	    \begin{split}
	    \mathbb{E}\left[\sigma_{t-1}(x_{t,2},x_{t,1}) | \mathcal{F}_{t-1}\right] &\geq \mathbb{E}\left[ \sigma_{t-1}(x_{t,2},x_{t,1}) | \mathcal{F}_{t-1}, x_t \in \mathcal{X}_t \setminus S_t\right]\mathbb{P}\left(x_{t,2} \in \mathcal{X}_t \setminus S_t | \mathcal{F}_{t-1}\right) \\
	    &\geq \sigma_{t-1}(\overline{x}_t,x_{t,1}) (p-1/t^2),
	    \end{split}
	    \label{eq:x_bar:sigma}
    }
    in which the second inequality makes use of \cref{lemma:prob_unsaturated}, as well as the definition of $\overline{x}_t$.
    
    Next, conditioned on both events $E^{f}(t)$ and $E^{f_t}(t)$, we can decompose the instantaneous regret as
    \begin{equation}
    \begin{split}
        2 r_t &= f(x_t^*) - f(x_{t,1}) + f(x_t^*) - f(x_{t,2})\\
        &= f(x_t^*) - f(x_{t,2}) + f(x_{t,2}) - f(x_{t,1}) + f(x_t^*) - f(x_{t,2})\\
        &= 2 \left[f(x_t^*) - f(x_{t,2}) \right] + f(x_{t,2}) - f(x_{t,1}).
    \end{split}
    \label{eq:decompose:instr:regret:1}
    \end{equation}
    Next, we separately analyze the two terms above. Firstly, we have that
    \begin{equation}
	    \begin{split}
		    f(x_t^*) &- f(x_{t,2}) = f(x_t^*)  - f(\overline{x}_t) + f(\overline{x}_t) - f(x_{t,2})\\
		    &= \Delta(\overline{x}_t) + \left[f(\overline{x}_t) - f(x_{t,1}) \right] - \left[f(x_{t,2}) - f(x_{t,1})\right]\\
		    &\leq \Delta(\overline{x}_t) + \widetilde{r}_t(\overline{x}_t) + c_t \sigma_{t-1}(\overline{x}_t,x_{t,1})  + 2\varepsilon'_{m,t} - \widetilde{r}_t(x_{t,2}) + c_t \sigma_{t-1}(x_{t,2},x_{t,1})  + 2\varepsilon'_{m,t}\\
		    &\leq \Delta(\overline{x}_t)  + c_t \sigma_{t-1}(\overline{x}_t,x_{t,1})  + c_t \sigma_{t-1}(x_{t,2},x_{t,1})  + 4 \varepsilon'_{m,t}\\
		    &\leq c_t \sigma_{t-1}(\overline{x}_t,x_{t,1}) + 4 \varepsilon'_{m,t}  + c_t \sigma_{t-1}(\overline{x}_t,x_{t,1})  + c_t \sigma_{t-1}(x_{t,2},x_{t,1})  + 4 \varepsilon'_{m,t}\\
		    &\leq 2 c_t \sigma_{t-1}(\overline{x}_t,x_{t,1}) + c_t \sigma_{t-1}(x_{t,2},x_{t,1})  + 8 \varepsilon'_{m,t},
	    \end{split}
	    \label{eq:decompose:instr:regret:2}
    \end{equation}
    in which the first inequality follows from \cref{lemma:ts:event:ef} and \cref{lemma:uniform_bound_t}, the second inequality follows from the way in which $x_{t,2}$ is selected: $x_{t,2} = {\arg\max}_{x\in\mathcal{X}_t}\widetilde{r}_t(x)$ which guarantees that $\widetilde{r}_t(\overline{x}_t) \leq \widetilde{r}_t(x_{t,2})$. The third inequality follows because $\overline{x}_t$ is unsaturated. Next, we analyze the second term from \cref{eq:decompose:instr:regret:1}.
    \begin{equation}
	    \begin{split}
		    f(x_{t,2}) - f(x_{t,1}) &\leq h(x_{t,2};\theta_t) - h(x_{t,1};\theta_t) + \nu_T \sigma_{t-1}(x_{t,2},x_{t,1})  + 2\varepsilon'_{m,t}\\
		    &\leq \nu_T \sigma_{t-1}(x_{t,2},x_{t,1})  + 2\varepsilon'_{m,t}\\
		    &\leq c_t \sigma_{t-1}(x_{t,2},x_{t,1})  + 2\varepsilon'_{m,t},
	    \end{split}
	    \label{eq:decompose:instr:regret:3}
    \end{equation}
    in which the first inequality follows from \cref{lemma:ts:event:ef}, the second inequality results from the way in which $x_{t,1}$ is selected: $x_{t,1}=\arg\max_{x\in\mathcal{X}_t}h(x;\theta_t)$, 
    and the third inequality follows because $\nu_T \leq c_t$ by definition.
    Now we can plug \cref{eq:decompose:instr:regret:2} and \cref{eq:decompose:instr:regret:3} into \cref{eq:decompose:instr:regret:1}:
    \begin{equation}
    \begin{split}
        2 r_t &\leq 2\left( 2 c_t \sigma_{t-1}(\overline{x}_t,x_{t,1}) + c_t \sigma_{t-1}(x_{t,2},x_{t,1})  + 8 \varepsilon'_{m,t} \right) + c_t \sigma_{t-1}(x_{t,2},x_{t,1})  + 2\varepsilon'_{m,t}\\
        &\leq 4 c_t \sigma_{t-1}(\overline{x}_t,x_{t,1}) + 3 c_t \sigma_{t-1}(x_{t,2},x_{t,1}) + 18 \varepsilon'_{m,t}.
    \end{split}
    \label{eq:decompose:instr:regret:4}
    \end{equation}
 
    Next, by separately considering the cases where the event $E^{f_t}(t)$ holds and otherwise, we are ready to upper-bound the expected instantaneous regret:
    \als{
            \mathbb{E}[2 r_t | \mathcal{F}_{t-1}] &\leq \mathbb{E}[  4 c_t \sigma_{t-1}(\overline{x}_t,x_{t,1}) + 3 c_t \sigma_{t-1}(x_{t,2},x_{t,1}) + 18 \varepsilon'_{m,t}   | \mathcal{F}_{t-1}] + \frac{4}{t^2}\\
            &\leq \mathbb{E}\left[  4 c_t \sigma_{t-1}(x_{t,2},x_{t,1}) \frac{1}{p-1/t^2} + 3 c_t \sigma_{t-1}(x_{t,2},x_{t,1}) + 18 \varepsilon'_{m,t}   | \mathcal{F}_{t-1}\right] + \frac{4}{t^2}\\
            &= c_t  \left( \frac{4}{p-1/t^2} + 3 \right) \mathbb{E}\left[   \sigma_{t-1}(x_{t,2},x_{t,1})  | \mathcal{F}_{t-1}\right] + 18 \varepsilon'_{m,t} + \frac{4}{t^2}\\
            &\leq c_t  \frac{23}{p} \mathbb{E}\left[   \sigma_{t-1}(x_{t,2},x_{t,1})  | \mathcal{F}_{t-1}\right] + 18 \varepsilon'_{m,t} + \frac{4}{t^2}
    }
    in which the first inequality have made use of \cref{eq:decompose:instr:regret:4},
    the second inequality results from \cref{eq:x_bar:sigma}, and the last inequality follows because $\frac{1}{p-1/t^2} \leq 5/p$ and $1 \leq 1/p$.
\end{proof}

Next, we define the following stochastic process $(Y_t:t=0,\ldots,T)$, which we prove is a super-martingale in the subsequent lemma by making use of \cref{lemma:upper_bound_expected_regret}.
\begin{defi}
    \label{def:stochastic:process}
    Define $Y_0=0$, and for all $t=1,\ldots,T$,
    \eqs{
        \overline{r}_t=r_t \mathbb{I}\{E^{f}(t)\},\quad
        X_t = \overline{r}_t - \frac{23 c_t}{2 p} \sigma_{t-1}(x_{t,2},x_{t,1})  - 9 \varepsilon'_{m,t} - \frac{2}{t^2}, \text{ and } \quad 
        Y_t=\sum^t_{s=1}X_s.
    }
\end{defi}

\begin{lem}
    \label{lemma:sup:martingale}
    $(Y_t:t=0,\ldots,T)$ is a super-martingale with respect to the filtration $\mathcal{F}_t$.
\end{lem}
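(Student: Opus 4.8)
The plan is to verify the three defining properties of a super-martingale for $(Y_t : t = 0,\ldots,T)$ with respect to $(\mathcal{F}_t)$: that each $Y_t$ is $\mathcal{F}_t$-measurable, that $\mathbb{E}|Y_t| < \infty$, and that $\mathbb{E}[Y_t \mid \mathcal{F}_{t-1}] \leq Y_{t-1}$ for every $t \in [T]$. Measurability is immediate, since $Y_t$ is a finite sum of the $X_s$, and each $X_s$ is built from $\overline{r}_s$, $\sigma_{s-1}(x_{s,2},x_{s,1})$, and the deterministic quantities $c_s$, $\varepsilon'_{m,s}$, $1/s^2$, all of which are $\mathcal{F}_s$-measurable. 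Integrability follows because $|r_t| \leq 2$ (from $|f(x)| \leq 1$ in \cref{assumption:main}), $\sigma_{t-1}(x_{t,2},x_{t,1}) \leq \sqrt{c_0 \lambda / \kappa_\mu}$, and the remaining terms are deterministic and finite, so each $X_t$ is bounded and hence $\mathbb{E}|Y_t| < \infty$. Thus the crux is to show $\mathbb{E}[X_t \mid \mathcal{F}_{t-1}] \leq 0$ for all $t$.

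The key observation that makes this clean is that the event $E^f(t)$ is $\mathcal{F}_{t-1}$-measurable: $\theta_t$ is trained only on the history $\{(x_{s,1},x_{s,2},y_s)\}_{s=1}^{t-1}$, so the first arm $x_{t,1} = \arg\max_{x} h(x;\theta_t)$ is determined by $\mathcal{F}_{t-1}$, and $E^f(t)$ is an event concerning only the fixed functions $f$, $h(\cdot;\theta_t)$ and $\sigma_{t-1}(\cdot,x_{t,1})$ — none of which involve the Thompson-sampling randomness used to draw $x_{t,2}$. Hence I would split into the two complementary $\mathcal{F}_{t-1}$-measurable events $E^f(t)$ and $\overline{E^f(t)}$. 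On $\overline{E^f(t)}$ we have $\overline{r}_t = 0$, so $X_t = - \frac{23 c_t}{2p}\sigma_{t-1}(x_{t,2},x_{t,1}) - 9\varepsilon'_{m,t} - 2/t^2 \leq 0$ pointwise, because $\sigma_{t-1}(x_{t,2},x_{t,1})$, $\varepsilon'_{m,t}$, $1/t^2$ and $c_t/p$ are all nonnegative; in particular $\mathbb{E}[X_t \mid \mathcal{F}_{t-1}] \leq 0$ there. On $E^f(t)$ we have $\overline{r}_t = r_t$, and \cref{lemma:upper_bound_expected_regret} gives $\mathbb{E}[2 r_t \mid \mathcal{F}_{t-1}] \leq \frac{23 c_t}{p}\mathbb{E}[\sigma_{t-1}(x_{t,2},x_{t,1}) \mid \mathcal{F}_{t-1}] + 18\varepsilon'_{m,t} + 4/t^2$; dividing by $2$ and rearranging yields precisely $\mathbb{E}[X_t \mid \mathcal{F}_{t-1}] = \mathbb{E}[r_t \mid \mathcal{F}_{t-1}] - \frac{23 c_t}{2p}\mathbb{E}[\sigma_{t-1}(x_{t,2},x_{t,1}) \mid \mathcal{F}_{t-1}] - 9\varepsilon'_{m,t} - 2/t^2 \leq 0$. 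Combining the two cases gives $\mathbb{E}[X_t \mid \mathcal{F}_{t-1}] \leq 0$, and therefore $\mathbb{E}[Y_t \mid \mathcal{F}_{t-1}] = Y_{t-1} + \mathbb{E}[X_t \mid \mathcal{F}_{t-1}] \leq Y_{t-1}$, which is the desired super-martingale property.

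I do not anticipate a serious obstacle: the lemma is essentially a packaging of \cref{lemma:upper_bound_expected_regret} into a stochastic-process statement. The one point that deserves care is the justification that $E^f(t)$ (and $x_{t,1}$) is $\mathcal{F}_{t-1}$-measurable, since this is exactly what licenses pulling the indicator $\mathbb{I}\{E^f(t)\}$ out of the conditional expectation and splitting the argument into the ``good'' and ``bad'' cases; without it, the reduction to \cref{lemma:upper_bound_expected_regret} would not be clean. A secondary, trivial-but-used point is the nonnegativity of $\sigma_{t-1}(x_{t,2},x_{t,1})$, $\varepsilon'_{m,t}$ and $1/t^2$, which is what forces $X_t \leq 0$ on $\overline{E^f(t)}$.
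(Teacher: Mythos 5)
Your proposal is correct and follows essentially the same route as the paper: show $\mathbb{E}[X_t \mid \mathcal{F}_{t-1}] \leq 0$ by splitting on the $\mathcal{F}_{t-1}$-measurable event $E^f(t)$, invoking \cref{lemma:upper_bound_expected_regret} when it holds and noting $\overline{r}_t = 0$ (with the remaining terms nonpositive) when it fails. Your additional remarks on measurability, integrability, and why $E^f(t)$ is $\mathcal{F}_{t-1}$-measurable are points the paper leaves implicit, but they do not change the argument.
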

\begin{proof}
    As $X_t = Y_t - Y_{t-1}$, we have
    \als{
        \mathbb{E}\left[Y_t - Y_{t-1} | \mathcal{F}_{t-1}\right] &= \mathbb{E}\left[X_t | \mathcal{F}_{t-1}\right]\\
        &=\mathbb{E}\left[\overline{r}_t - \frac{23 c_t}{2 p} \sigma_{t-1}(x_{t,2},x_{t,1})  - 9 \varepsilon'_{m,t} - \frac{2}{t^2} | \mathcal{F}_{t-1}\right]\\
        &=\mathbb{E}\left[\overline{r}_t | \mathcal{F}_{t-1}\right] - \left[\frac{23 c_t}{2 p} \mathbb{E}\Lb \sigma_{t-1}(x_{t,2},x_{t,1}) | \mathcal{F}_{t-1} \Rb + 9 \varepsilon'_{m,t} + \frac{2}{t^2}
        \right]\\
        &\leq 0.
    }
    When the event $E^{f}(t)$ holds, the last inequality follows from \cref{lemma:upper_bound_expected_regret}; when $E^{f}(t)$ is false, $\overline{r}_t=0$ and hence the inequality trivially holds.
\end{proof}

Lastly, we are ready to prove the upper bound on the cumulative regret of ou algorithm by applying the Azuma-Hoeffding Inequality to the stochastic process defined above.

\begin{proof}
    To begin with, we derive an upper bound on $|Y_t - Y_{t-1}|$:
    \als{
        |Y_t - Y_{t-1}| &= |X_t| \leq |\overline{r}_t| + \frac{23 c_t}{2 p} \sigma_{t-1}(x_{t,2},x_{t,1}) + 9 \varepsilon'_{m,t} + \frac{2}{t^2}\\
        &\leq 2 + \frac{23 c_t }{2 p} c_0 + 9 \varepsilon'_{m,t} + 2\\
        &\leq \frac{1}{p} \Lp 4 + 12 c_t c_0 + 9 \varepsilon'_{m,t} \Rp,
    }
    where the second inequality follows because $\sigma_{t-1}(x_{t,2},x_{t,1}) \leq c_0$, and the last inequality follows since $\frac{1}{p}\geq 1$.
    Now we are ready to apply the Azuma-Hoeffding Inequality to $(Y_t:t=0,\ldots,T)$ with an error probability of $\delta$:
    \eqs{
        \begin{split}
            \sum^T_{t=1}\overline{r}_t &\leq \sum^T_{t=1} \frac{23 c_t}{2 p} \sigma_{t-1}(x_{t,2},x_{t,1}) + \sum^T_{t=1} 9 \varepsilon'_{m,t} + \sum^T_{t=1} \frac{2}{t^2} \\
            &\qquad\qquad\qquad + \sqrt{2\log(1/\delta)\sum^T_{t=1} \Lp \frac{1}{p} \Lp 4 + 12 c_t c_0 + 9 \varepsilon'_{m,t} \Rp \Rp^2 }\\
            &\leq 12c_T \sum^T_{t=1} \sigma_{t-1}(x_{t,2},x_{t,1}) + 9T \varepsilon'_{m,T} + 2\sum^T_{t=1}1/t^2 \\
            &\qquad\qquad\qquad + \Lp \frac{1}{p} \Lp 4 + 12 c_T c_0 + 9 \varepsilon'_{m,T} \Rp \Rp \sqrt{2T\log(1/\delta)} \\
            &\leq 12c_T \sqrt{T 2 c_0 \frac{\lambda}{\kappa_\mu} \widetilde{d} } + 9T \varepsilon'_{m,T} + \frac{\pi^2}{3} + \frac{ 4 + 12 c_T c_0 + 9 \varepsilon'_{m,T} }{p} \sqrt{2T\log(1/\delta)}.
        \end{split}
        \label{eq:use:azuma:hoedding}
    }
    The second inequality makes use of the fact that $c_t$ and $\varepsilon'_{m,t}$ are both monotonically increasing in $t$.
    The last inequality follows because $\sum^T_{t=1} \sigma_{t-1}(x_{t,1},x_{t,2}) \leq \sqrt{T 2 c_0 \frac{\lambda}{\kappa_\mu} \widetilde{d} }$ which we have shown in the proof of the UCB algorithm, and $\sum^T_{t=1}1/t^2\leq\pi^2/6$.
    Note that \cref{eq:use:azuma:hoedding} holds with probability $\geq1-\delta$.
    Also note that $\overline{r}_t=r_t$ with probability of $\geq 1-\delta$ because the event $E^f(t)$ holds with probability of $\geq1-\delta$ (\cref{lemma:ts:event:ef}). 
    Therefore, replacing $\delta$ by $\delta/2$, the upper bound from \cref{eq:use:azuma:hoedding} is an upper bound on $\Regret_T=\sum^T_{t=1}r_t$ with probability of $1-\delta$.

    Lastly, recall we have defined that $\nu_T \doteq \left(\beta_T + B \sqrt{\lambda / \kappa_\mu} + 1\right) \sqrt{\kappa_\mu / \lambda}$, $c_t \triangleq \nu_T (1 + \sqrt{2\log(K t^2)})$, and $\beta_T = \widetilde{\mathcal{O}}(\frac{1}{\kappa_\mu}\sqrt{\widetilde{d}})$. 
    This implies that $c_T = \widetilde{O}\left(\left(\frac{1}{\kappa_\mu}\sqrt{\widetilde{d}} + B \sqrt{\lambda / \kappa_\mu} \right) \sqrt{\kappa_\mu / \lambda}\right) = \widetilde{O}\left(\sqrt{\frac{\widetilde{d}}{\kappa_\mu \lambda}} + B\right)$.
    Also recall that as long as the conditions on $m$ specified in \cref{eq:conditions:on:m} are satisfied (i.e., as long as the NN is wide enough), we can ensure that $9T\varepsilon'_{m,T} \leq 1$.
    Therefore, the final regret upper bound can be expressed as:
    \als{
	    \Regret_T &= \widetilde{O}\left( \left(\sqrt{\frac{\widetilde{d}}{\kappa_\mu \lambda}} + B\right) \sqrt{T \frac{\lambda}{\kappa_\mu} \widetilde{d} } + \left(\sqrt{\frac{\widetilde{d}}{\kappa_\mu \lambda}} + B\right) \sqrt{T} \right)\\
	    &=\widetilde{O}\left( \left(\sqrt{\frac{\widetilde{d}}{\kappa_\mu \lambda}} + B\right) \sqrt{T \frac{\lambda}{\kappa_\mu} \widetilde{d} } \right) =\widetilde{O}\left( \left( \frac{\sqrt{\widetilde{d}}}{\kappa_\mu} + B \sqrt{\frac{\lambda}{\kappa_\mu}}\right) \sqrt{T  \widetilde{d} } \right).
	}

    This completes the proof. As we can see, our TS algorithm enjoys the same asymptotic regret upper bound as our UCB algorithm, ignoring the log factors.
\end{proof}

    \section{Additional experiments for \texorpdfstring{{\cref{sec:experiments}}}{experiments}}
    \label{sec:more_experiments}
    %!TEX root =  main.tex

\begin{figure}[H]
	\centering
    \subfloat[$10(x^\top \theta)^2$ (Average)]{\label{fig:Square10_avg}
		\includegraphics[width=0.24\linewidth]{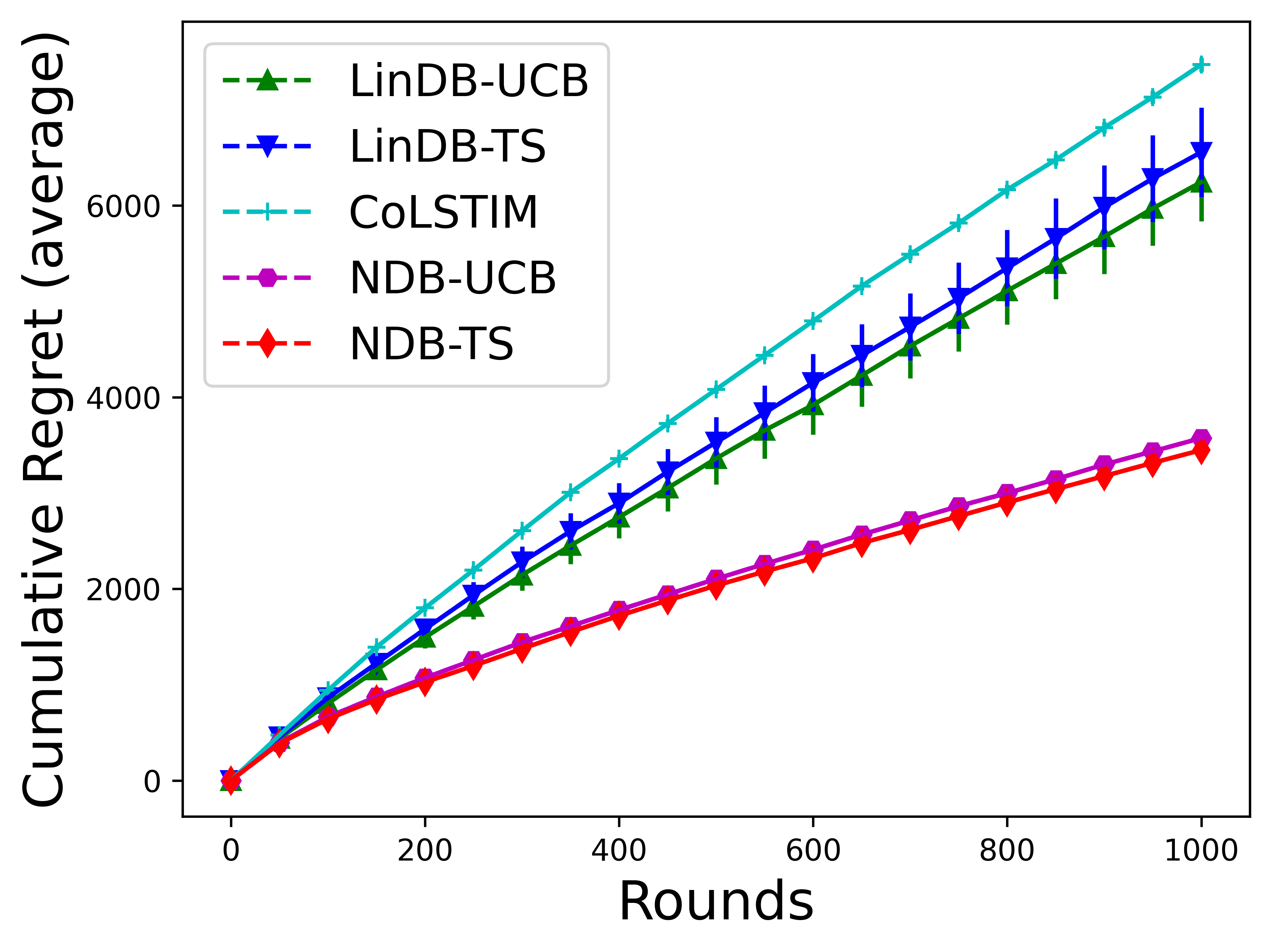}}
	\subfloat[$10(x^\top \theta)^2$ (Weak)]{\label{fig:Square10_weak}
		\includegraphics[width=0.24\linewidth]{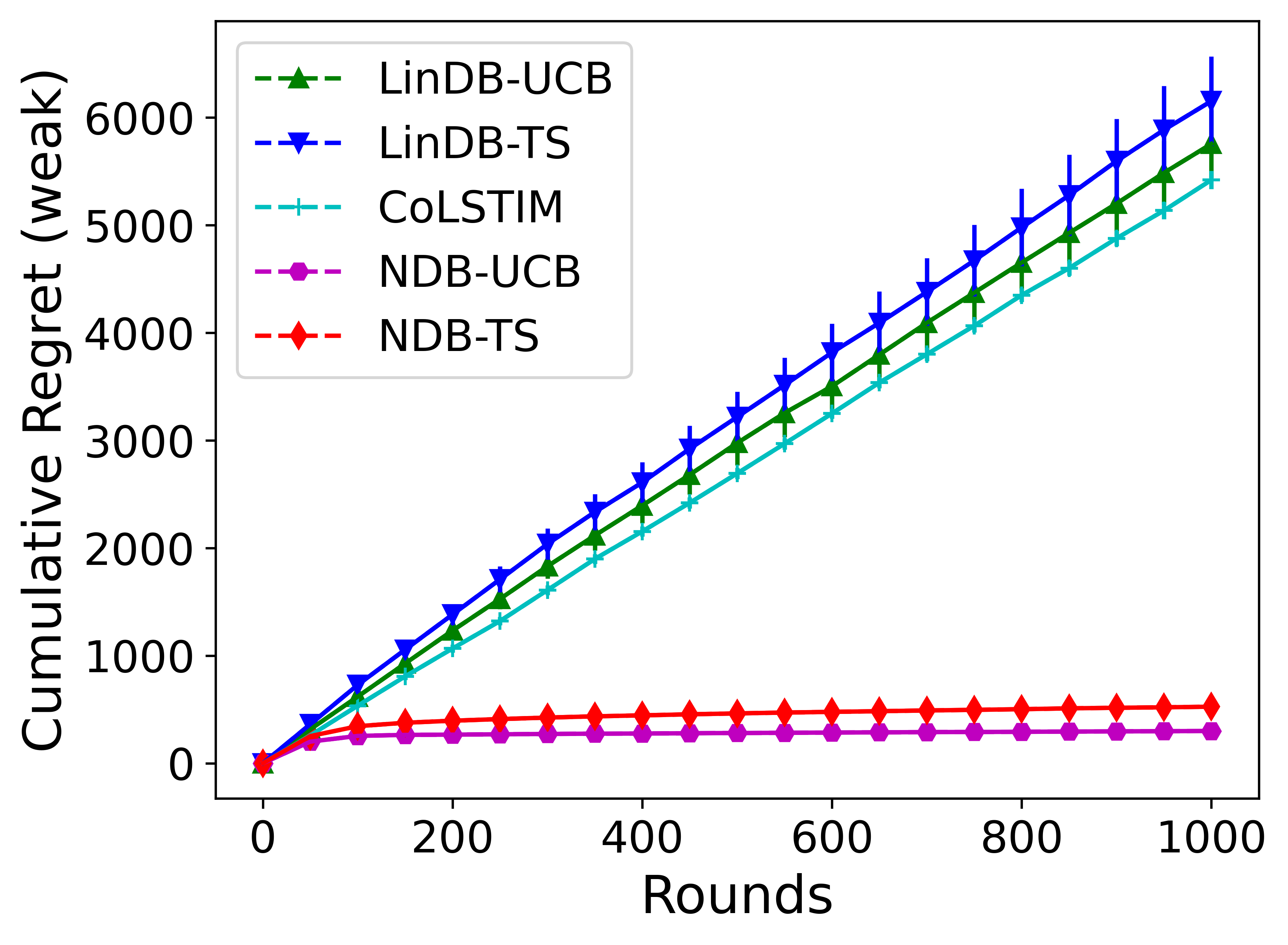}}
	\subfloat[$20(x^\top \theta)^2$ (Average)]{\label{fig:square20_avg}
		\includegraphics[width=0.24\linewidth]{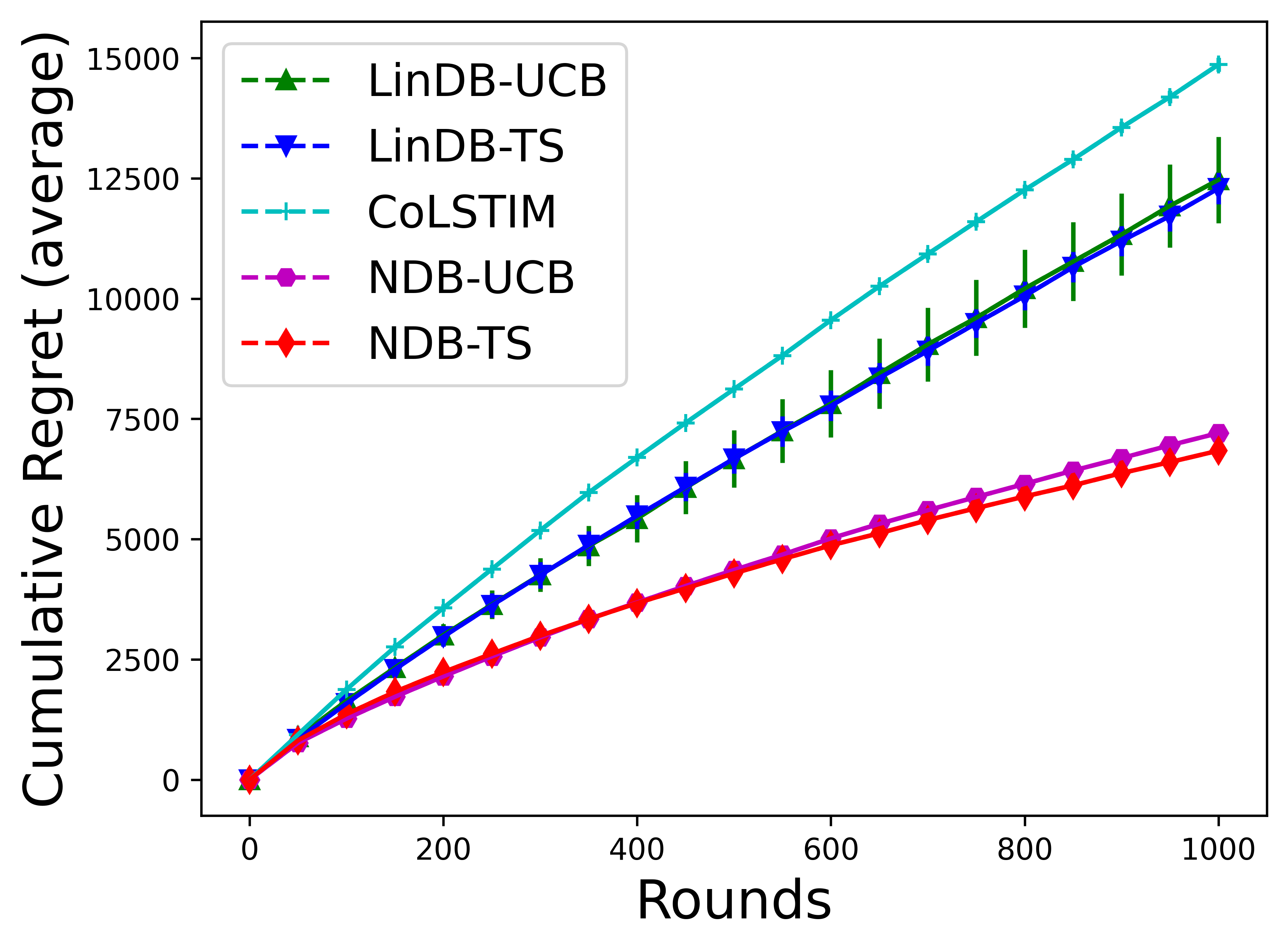}}
	\subfloat[$20(x^\top \theta)^2$ (Weak)]{\label{fig:square20_weak}
		\includegraphics[width=0.24\linewidth]{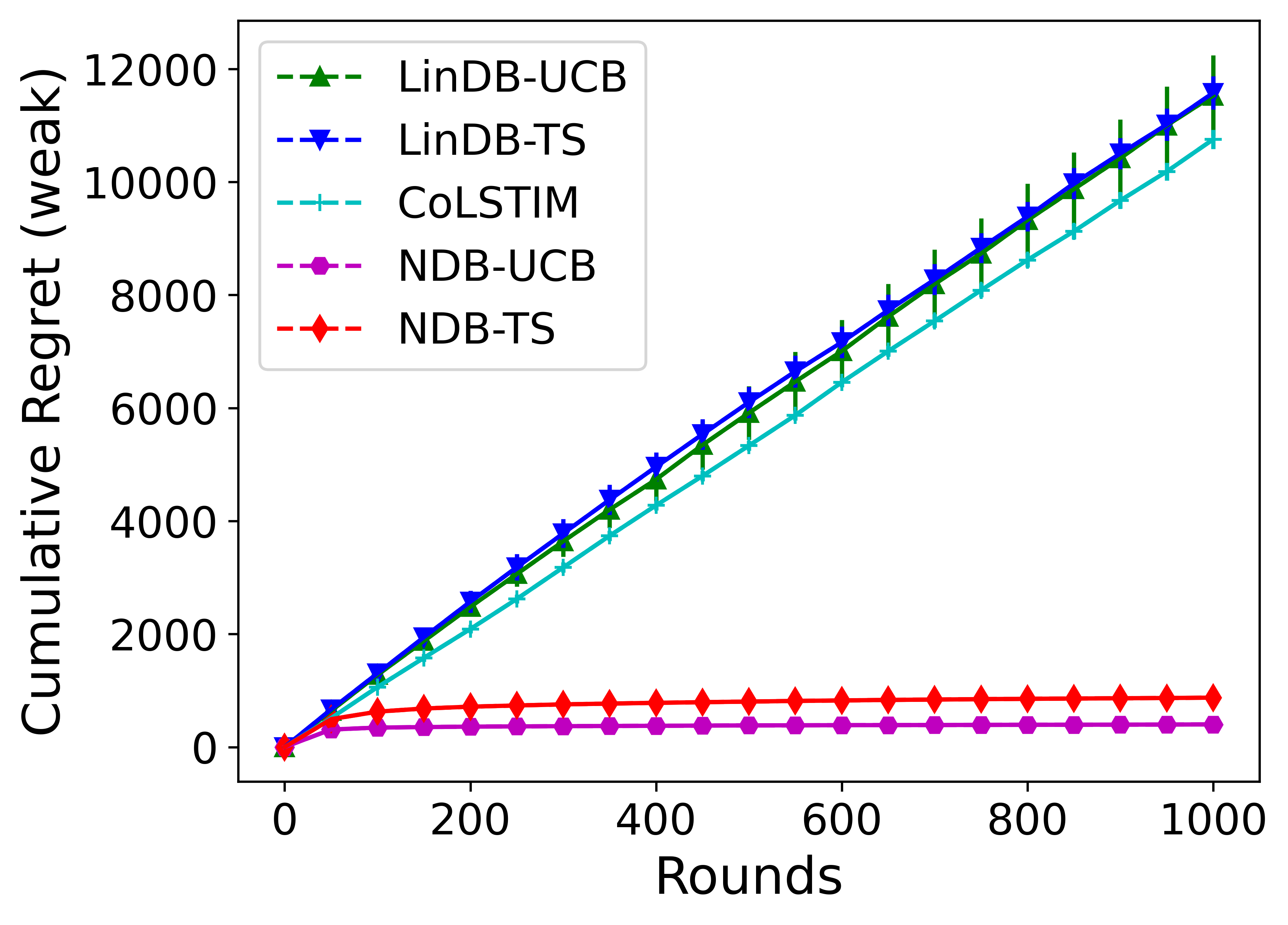}} \\
    \subfloat[$30(x^\top \theta)^2$ (Average)]{\label{fig:Square30_avg}
		\includegraphics[width=0.24\linewidth]{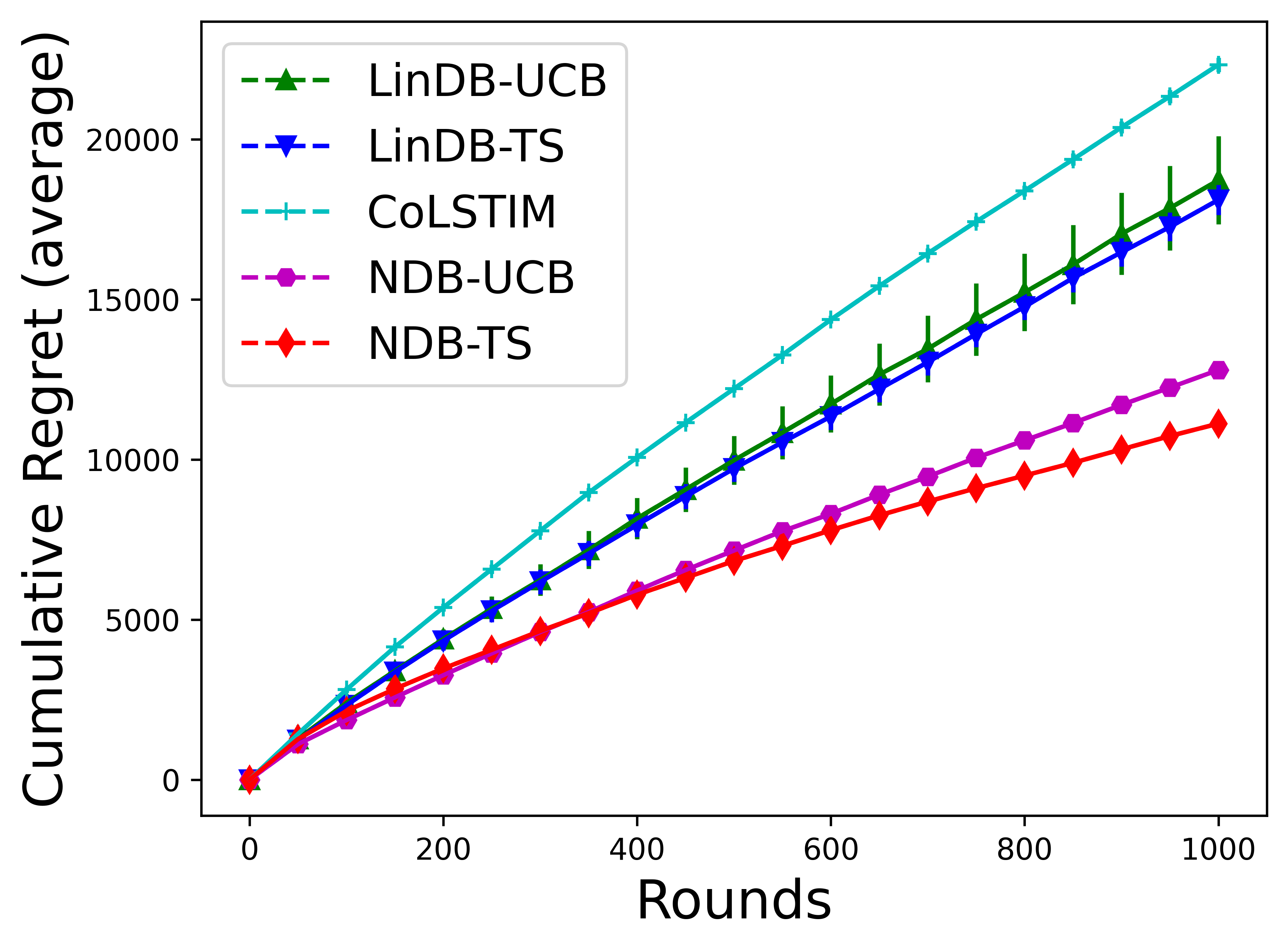}}
	\subfloat[$30(x^\top \theta)^2$ (Weak)]{\label{fig:Square30_weak}
		\includegraphics[width=0.24\linewidth]{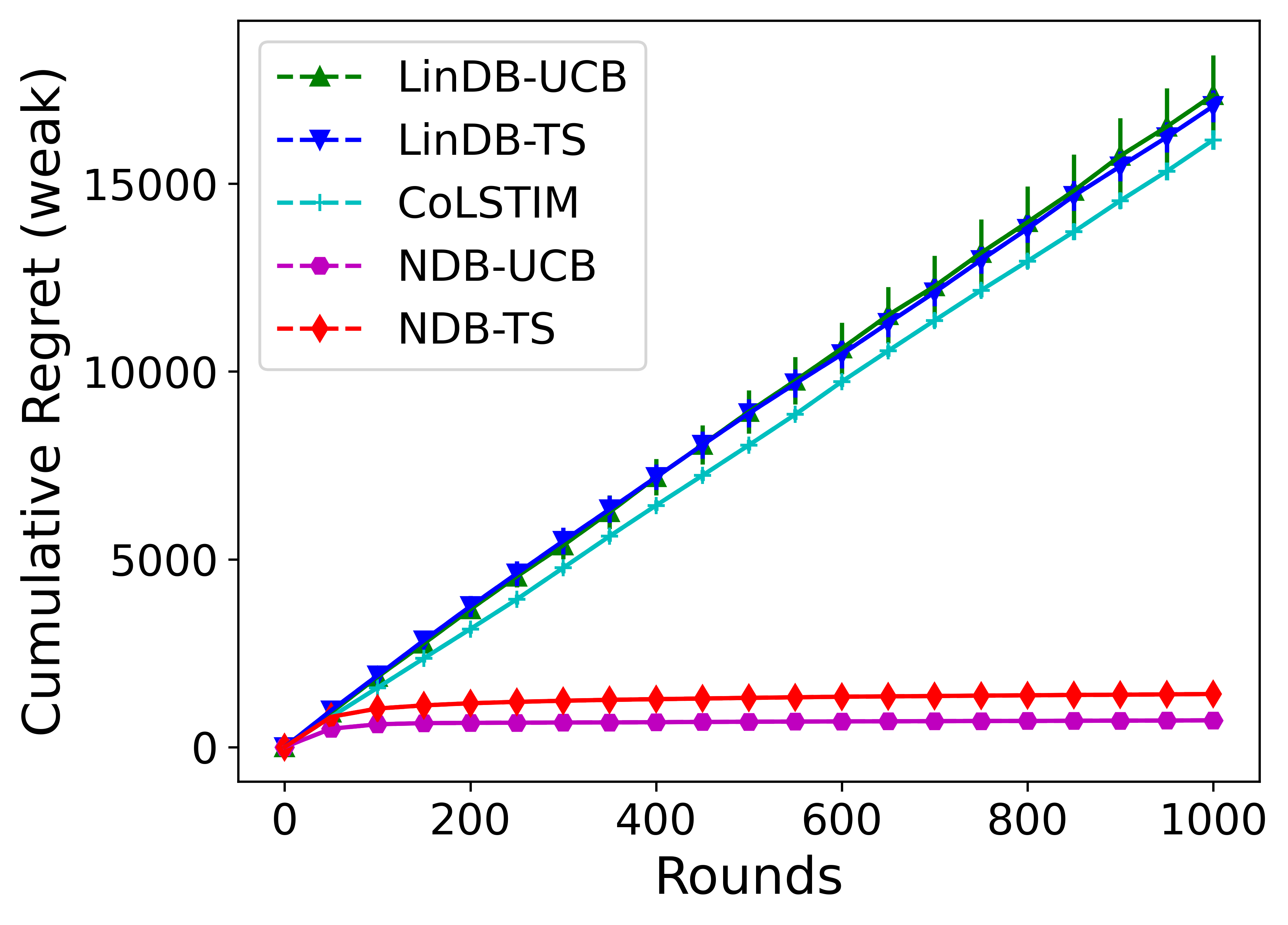}}
    \subfloat[$40(x^\top \theta)^2$ (Average)]{\label{fig:Square40_avg}
		\includegraphics[width=0.24\linewidth]{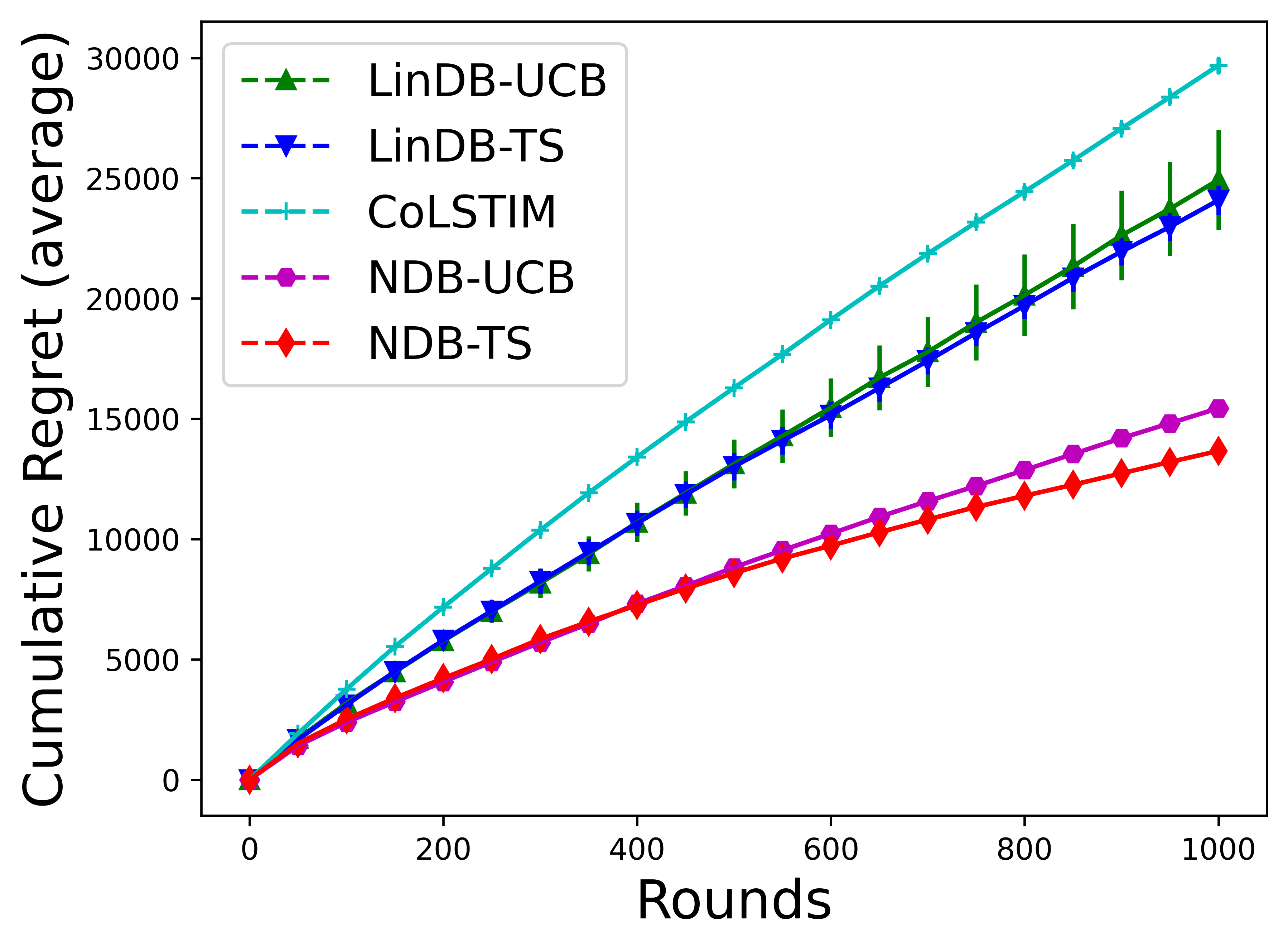}}
	\subfloat[$40(x^\top \theta)^2$ (Weak)]{\label{fig:Square40_weak}
		\includegraphics[width=0.24\linewidth]{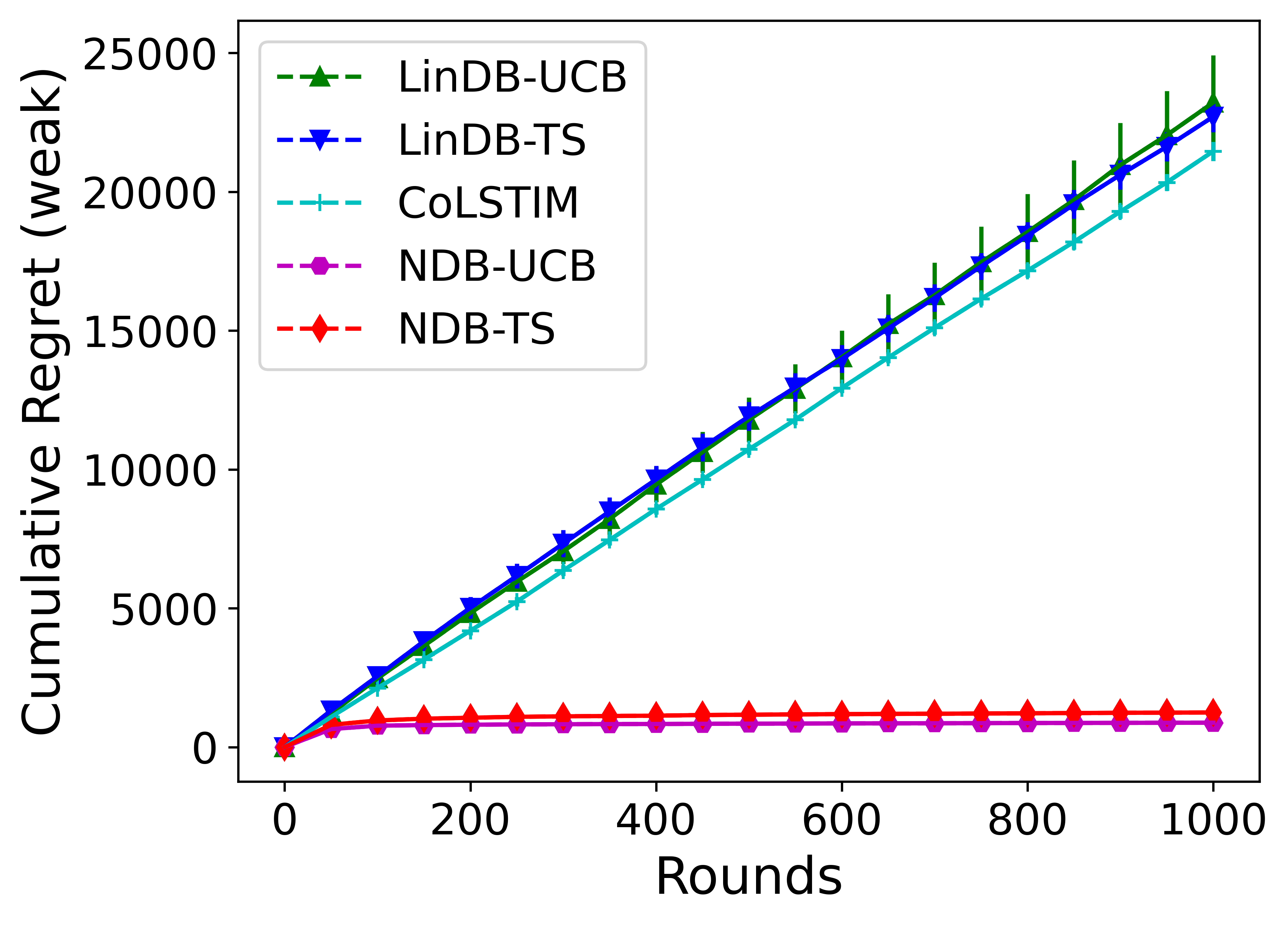}}
	\caption{
        Comparisons of cumulative regret (average and weak) of different dueling bandits algorithms for different form of Square function, i.e., $a(x^\top \theta)^2$, where $a =\{10, 20, 30, 40\}$. 
	}
	\label{fig:compareAlgosSquare}
\end{figure}

\begin{figure}[H]
	\centering
	\subfloat[$cos(3x^\top \theta)$ (Average)]{\label{fig:cosine3in_avg}
		\includegraphics[width=0.24\linewidth]{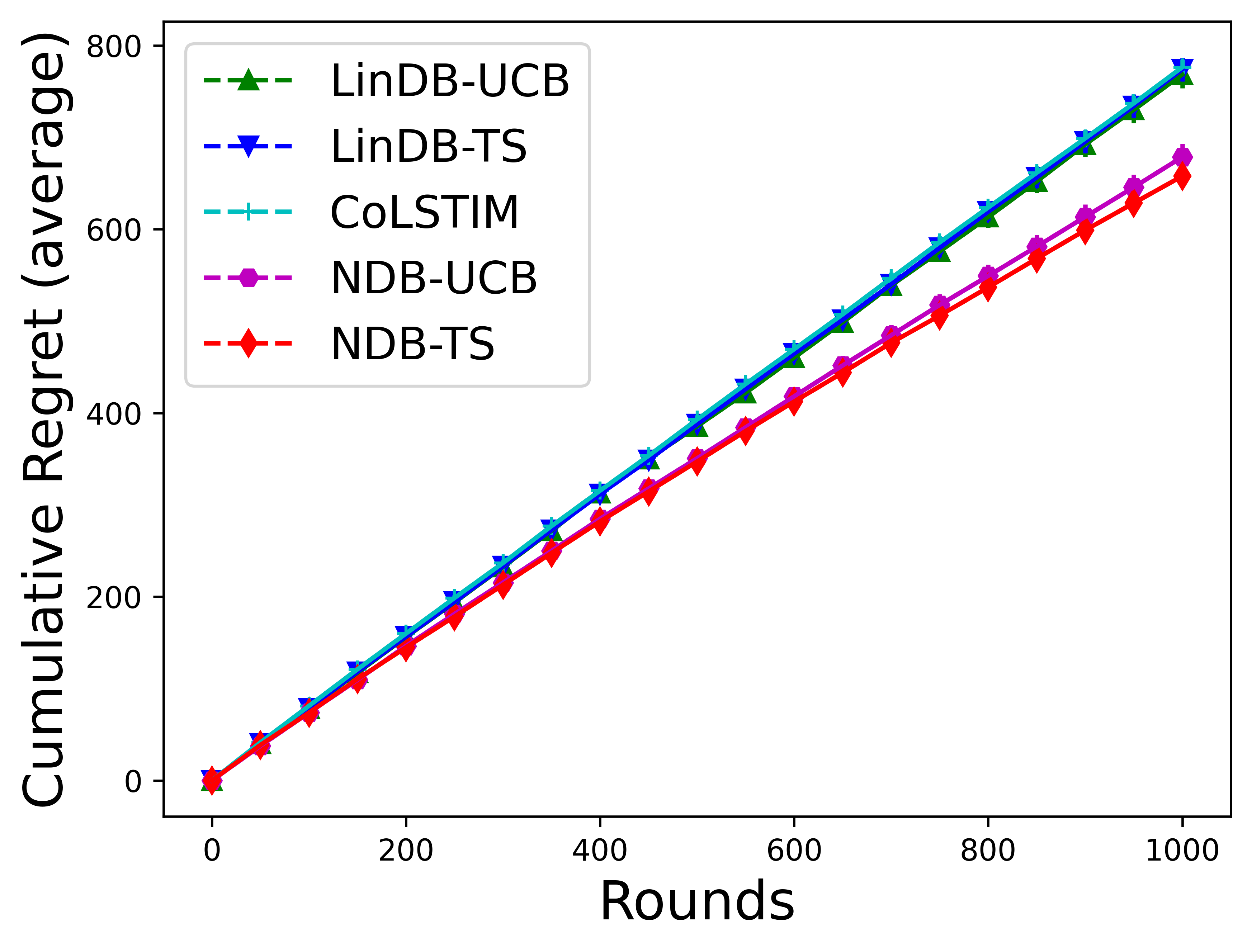}}
	\subfloat[$cos(3x^\top \theta)$ (Weak)]{\label{fig:cosine3in_weak}
		\includegraphics[width=0.24\linewidth]{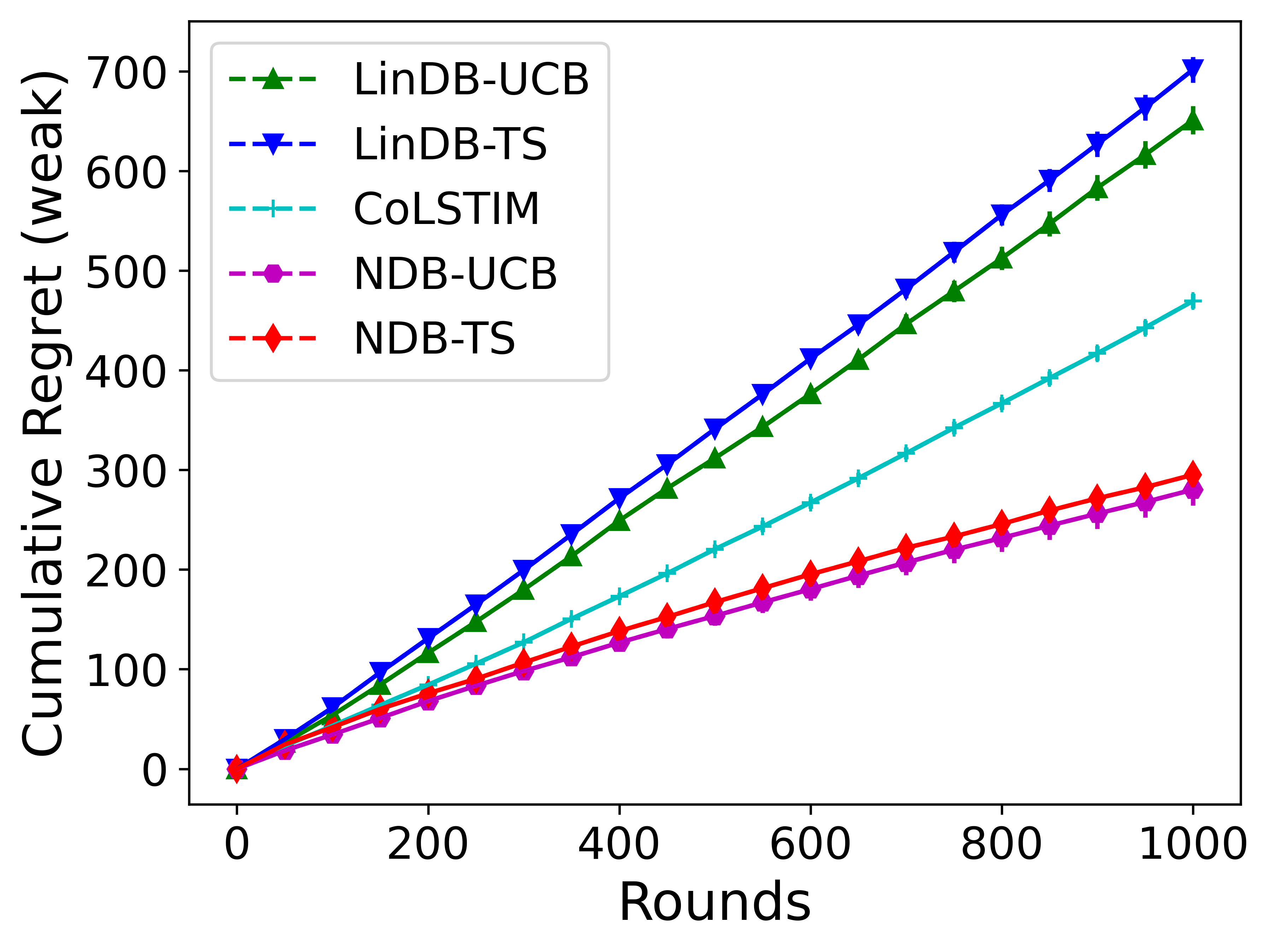}}
    \subfloat[$3cos(x^\top \theta)$ (Average)]{\label{fig:cosine3_avg}
		\includegraphics[width=0.24\linewidth]{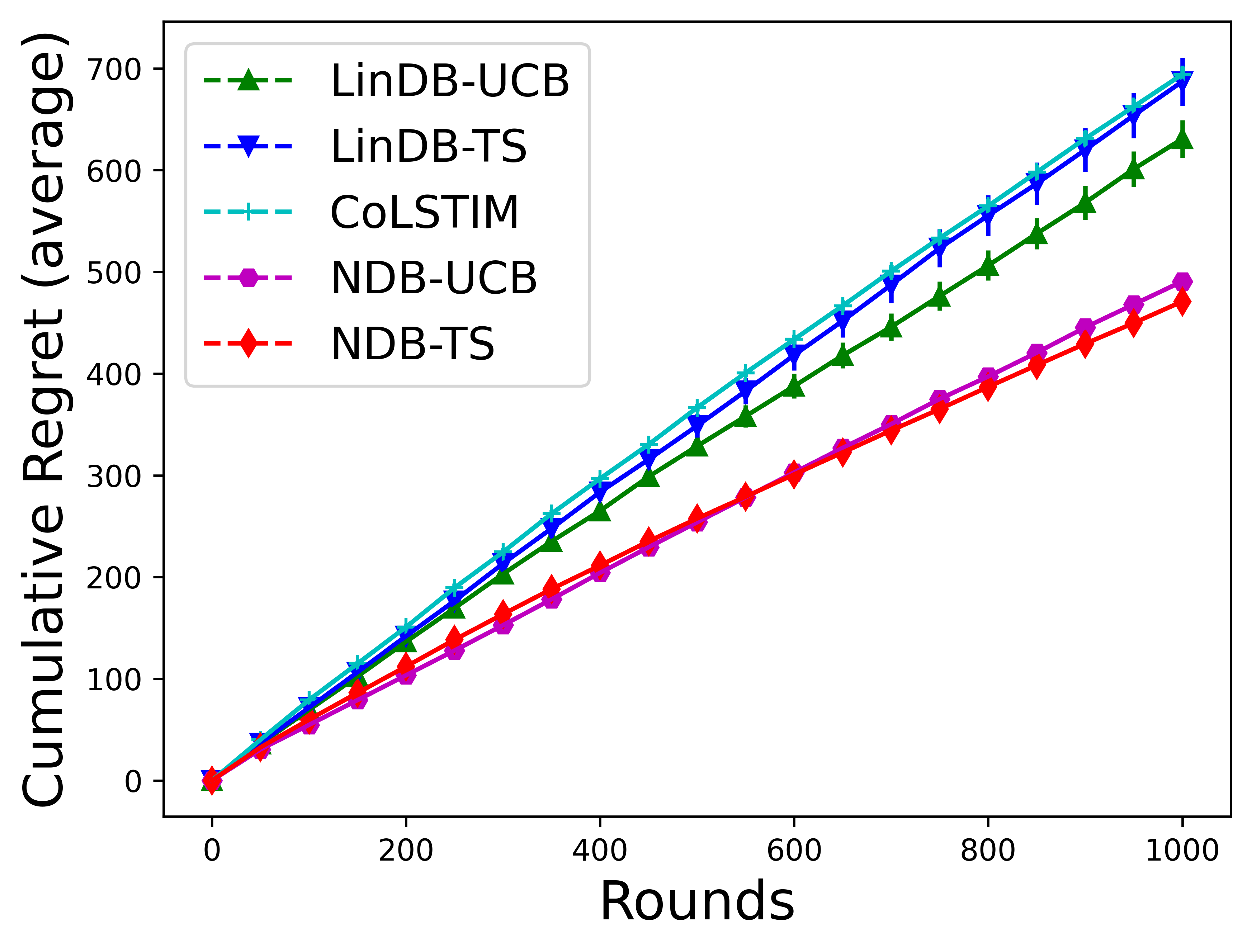}}
	\subfloat[$3cos(x^\top \theta)$ (Weak)]{\label{fig:cosine3_weak}
		\includegraphics[width=0.24\linewidth]{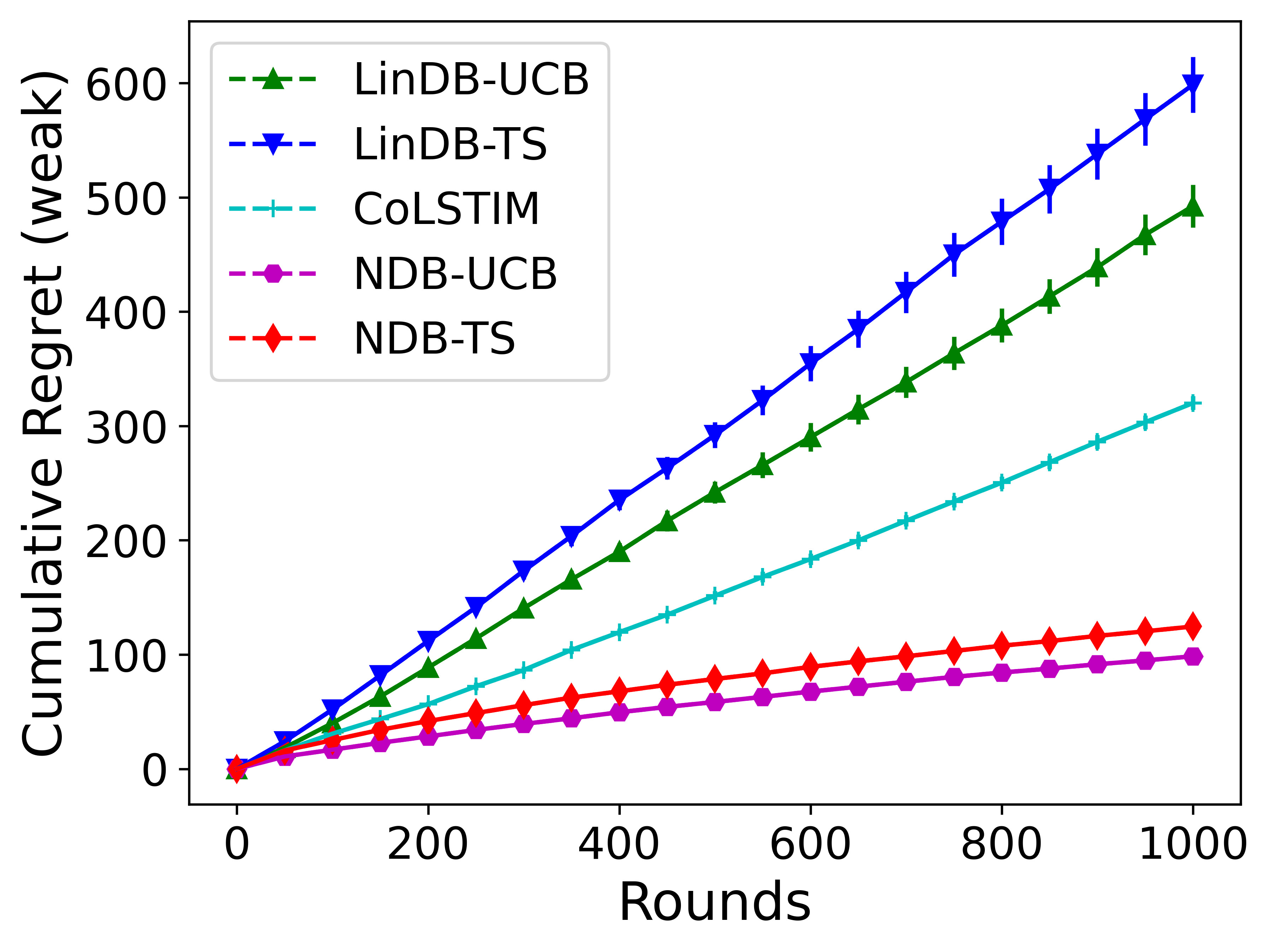}} \\
    \subfloat[$5cos(x^\top \theta)$ (Average)]{\label{fig:Cosine5_avg}
		\includegraphics[width=0.24\linewidth]{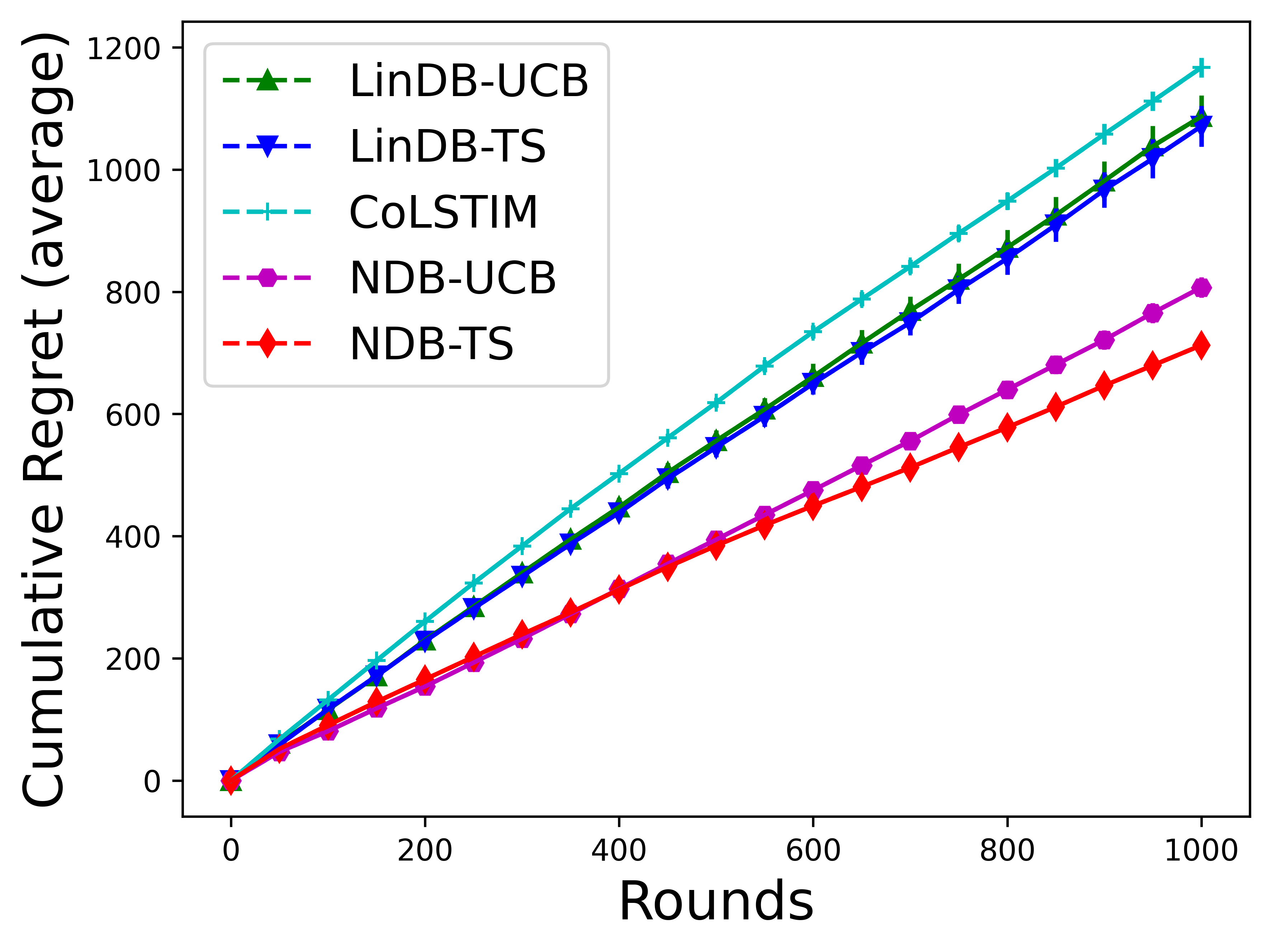}}
	\subfloat[$5cos(x^\top \theta)$ (Weak)]{\label{fig:Cosine5_weak}
		\includegraphics[width=0.24\linewidth]{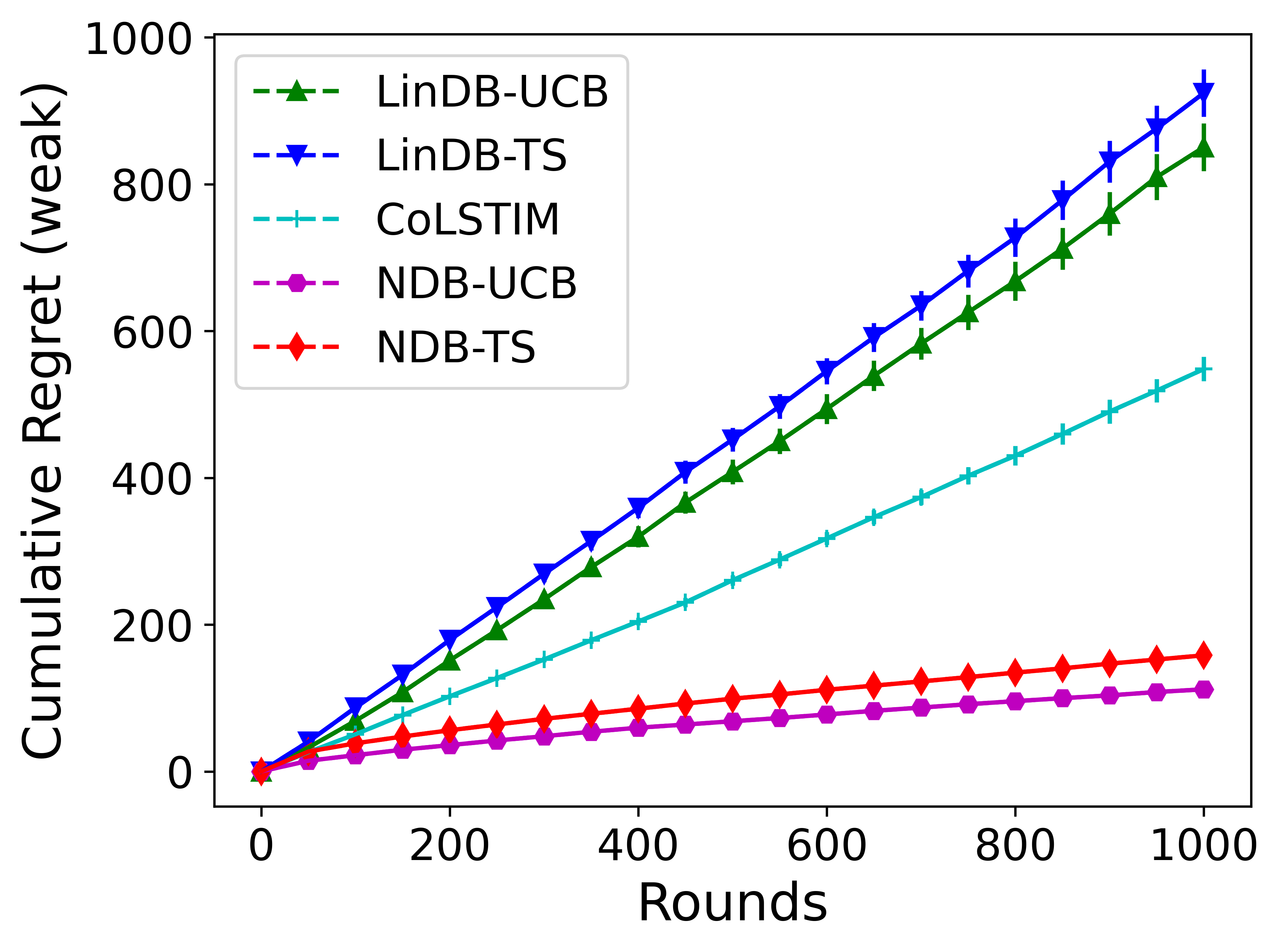}}
    \subfloat[$10cos(x^\top \theta)$ (Average)]{\label{fig:cosine10_avg}
		\includegraphics[width=0.24\linewidth]{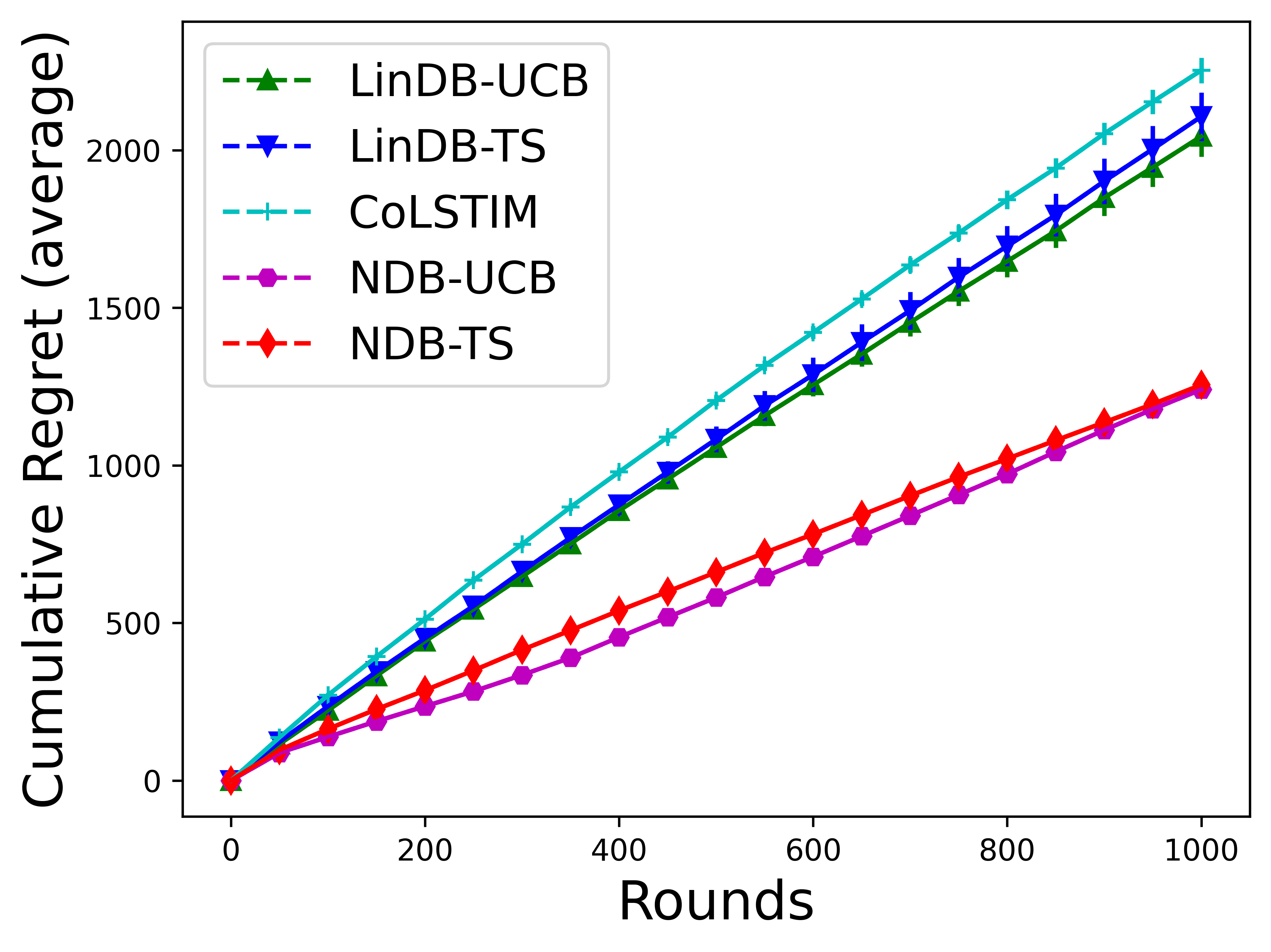}}
	\subfloat[$10cos(x^\top \theta)$ (Weak)]{\label{fig:cosine10_weak}
		\includegraphics[width=0.24\linewidth]{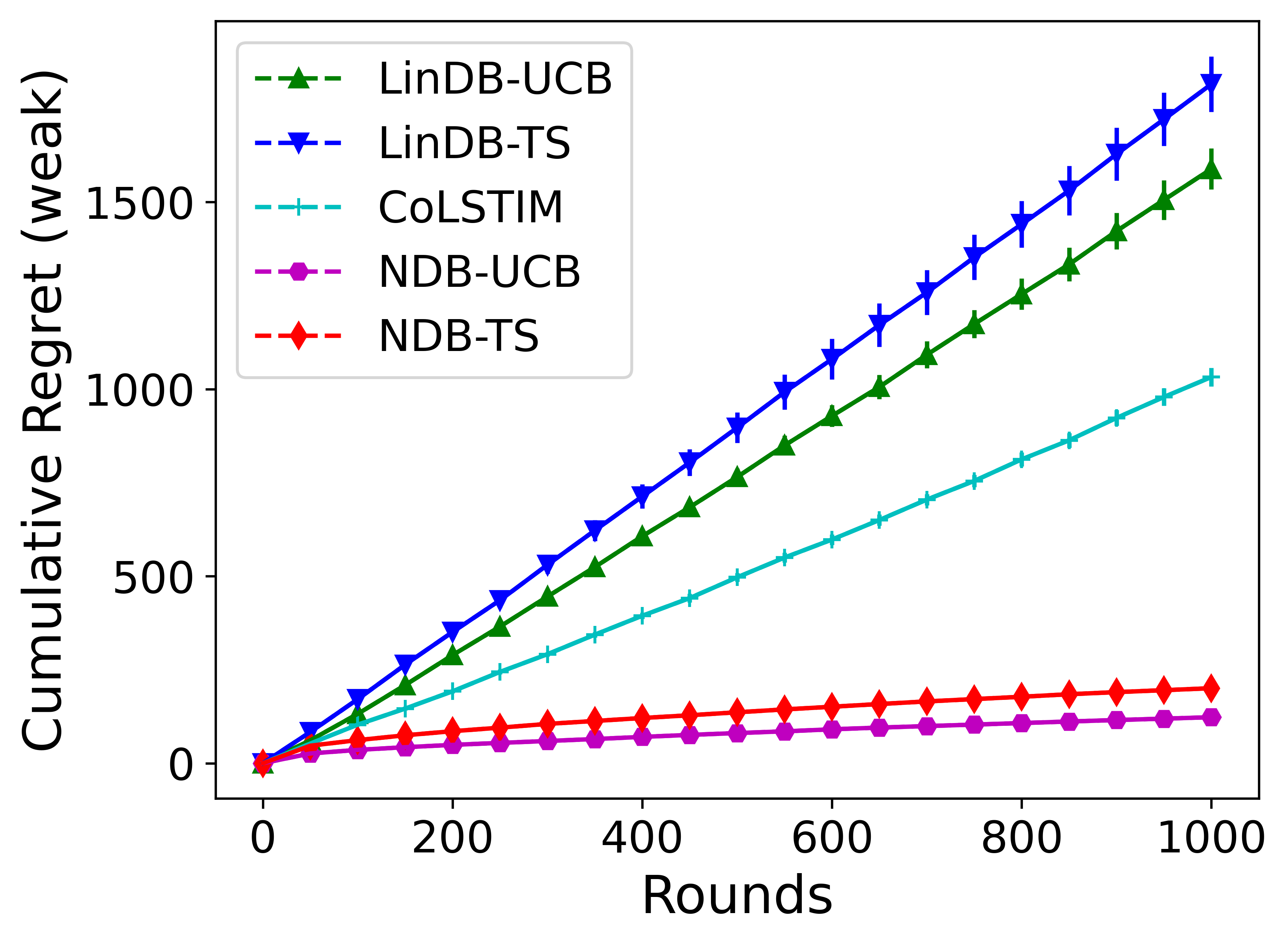}}
	\caption{
        Comparisons of cumulative regret (average and weak) of different dueling bandits algorithms for different form of Cosine function, i.e., $cos(3x^\top \theta)$ and $b cos(x^\top \theta)$, where $b =\{3, 5, 10\}$. 
	}
	\label{fig:compareAlgosCosine}
\end{figure}

\subsection{Computational Efficiency and Scalability}
To discuss the computational efficiency and scalability of our proposed algorithms, we consider the following two key aspects:

\textbf{Size of the neural network (NN):}~ 
The primary computational complexity of our proposed algorithms from the NN used to estimate the latent reward function. Let $d$ be the dimension of the context-arm feature vector (input dimension for the NN). Assume the NN has $L$ hidden layers, each with $m$ neurons. Then, the inference cost from the input layer to the first hidden layer is $O(dm)$, the inference cost for hidden layers is $O(Lm^2)$, and the inference cost for the final layer is $O(m)$. Therefore, the overall inference cost for each context-arm pair is $O(dm + Lm^2 + m)$. Note that $p=dm + Lm^2 + m$ is the total number of NN parameters. The training time for the NN is $O(ECLm^2)$, where $E$ is the number of training epochs and $C$ is the number of context-arm pairs observations. Therefore, choosing the appropriate size of NN for the given problem is very important, as having smaller NNs may not accurately approximate the latent reward function, while larger NNs can incur significant training and inference costs.

\textbf{Number of contexts and arms:}~ 
Let $K$ be the number of arms and $p$ be the total number of NN parameters. Since we use the gradients with respect to the current estimate of NN as context-arm features, the gradient computation cost for each context is $O(Kdp)$, where $d$ is the dimension of the context-arm feature vector. The computational costs for computing reward estimate and confidence terms are $O(Kp)$ and $O(Kp^2)$, respectively. For selection arms, the computational cost for the first arm is $O(Kp + K)$, which includes estimating the reward for each context-arm pair $O(Kp)$ and then selecting the arm with the highest estimated reward $O(K)$; and the computational cost for the second arm is $O(Kp + (K-1)p^2)$, i.e., computing estimated reward $O(Kp)$ and confidence terms $O((K-1)p^2)$ for all arm with respect to the first selected arm  Thus, the overall computational cost of our proposed algorithms for selecting a pair of arms for each context is $O(Kdp + Kp^2)$. Since each context-arm pair is independent of others, each arm's gradients, reward estimate, and optimistic term can be computed in parallel. Consequently, the computational cost for selecting a pair of arms for each context can reduced to $O(dp + p^2)$.

\subsection{Comparision with existing contextual dueling bandit algorithms}
\label{asec:algos_cdb}
\cite{NeurIPS21_saha2021optimal} proposed a contextual linear dueling bandits algorithm using a logistic link function with pairwise and subset-wise preference feedback. Whereas  \cite{ICML22_bengs2022stochastic} generalized the setting to the contextual linear stochastic transitivity model, and  \cite{arXiv24_li2024feelgood}  proposed an algorithm based on Feel-Good Thompson Sampling \citep{JMDS22zhang2022feel}. These existing works only consider the linear reward function, which may not be practical in many real-life applications. Our work fills this gap in the literature and generalizes the existing setting by considering the non-linear reward function in contextual dueling bandits. Furthermore, the arm-selection strategies are different, as  \cite{NeurIPS21_saha2021optimal} and  \cite{arXiv24_li2024feelgood} select the most informative pair of arms, and  \cite{ICML22_bengs2022stochastic} select the first arm optimistically and select the second arm that beats the first arm optimistically. In contrast, we select the first arm greedily to ensure the best-performing arm is always selected and then the second arm optimistically to focus on exploration, making the arm-selection process computationally efficient as our algorithms only need to compare $K-1$ pairs instead of all possible pairs (i.e., $O(K^2)$). Since we use an NN-based reward estimator, our optimistic term calculations for the context-arm pairs differ from the existing work. Therefore, our regret analysis uses techniques different from existing algorithms for contextual dueling bandits. 

\subsection{What if first and second arm are same}
Since our proposed algorithms have a sub-linear regret guarantee, they will eventually select the best arm and recommend it in duels. This implies that for a given context, both the first and second arms chosen may ultimately be the same, representing the best-performing arm. Therefore, the first and second arms can be the same in our proposed algorithms. However, observations in which both arms are the same do not provide any useful preference information for estimating the latent reward function, as no comparison is being made. Including such observations for estimating latent reward function would only contribute to noise and result in a constant loss. Thus, we have to exclude these observations from the reward estimation.

    \section{Neural Contextual Bandits with Binary Feedback}
    \label{sec:neural_binary}
    %!TEX root =  main.tex

We extend our results to the neural contextual bandit problem in which a learner only observes binary feedback for the selected arms (note that the learner only selects one arm in every iteration). Observing binary feedback is very common in many real-life applications, e.g.,  click or not in online recommendation and treatment working or not in clinical trials \citep{ICML17_li2017provably,ICML20_faury2020improved}.

\paragraph{Contextual bandits with binary feedback.}
We consider a contextual bandit problem with binary feedback. In this setting, we assume that the action set is denoted by $\cA$. Let $\cX_t \subset \R^d$ denote the set of all context-arm feature vectors in the round $t$ and $x_{t,a}$ represent the context-arm feature vector for context $c_t$ and an arm $a \in \cA$. 
At the beginning of round $t$, the environment generates context-arm feature vectors $\{x_{t,a}\}_{a \in \cA}$ and the learner selects an arm $a_t$, whose corresponding context-arm feature vector is given by $x_{t,a}$.
After selecting the arm, the learner observes a stochastic binary feedback $y_t \in \{0,1\}$ for the selected arm. 
We assume the binary feedback follows a Bernoulli distribution, where the probability of $y_t = 1$ for context-arm feature vector $x_{t,a}$ is given by
$ 
    \Prob{y_t = 1| x_{t,a}} = \mu\Lp f(x_{t,a}) \Rp,
$
where $\mu: \R \rightarrow [0,1]$ is a continuously differentiable and Lipschitz with constant $L_\mu$, e.g., logistic function, i.e., $\mu(x) = 1/(1 + \mathrm{e}^{-x} )$. The link function must also satisfy $\kappa_\mu \doteq \inf_{x \in \cX} \dot{\mu}(f(x)) > 0$ for all arms.

\paragraph{Performance measure.}
The learner's goal is to select the best arm for each context, denoted by $x_t^\star = \argmax_{x \in \cX_t} f(x)$. Since the reward function $f$ is unknown, the learner uses available observations $\{x_{s,a}, y_s\}_{s=1}^{t-1}$ to estimate the function $f$ and then use the estimated function to select the arm $a_t$ for context $x_t$.
After selecting the arm $a_t$, the learner incurs an instantaneous regret, $r_t =   \mu\Lp f(x_t^\star) \Rp - \mu\Lp f(x_{t,a}) \Rp$.
For $T$ contexts, the (cumulative) regret of a policy that selects action $a_t$ for a context observed in round $t$ is given by
$
	\Regret_T = \sum_{t=1}^T r_t =  \sum_{t=1}^T \Lb \mu\Lp f(x_t^\star) \Rp - \mu\Lp f(x_{t,a}) \Rp \Rb.
$
Any good policy should have sub-linear regret, i.e., $\lim_{T \to \infty}{\Regret_T}/T = 0.$
Having sub-linear regret implies that the policy will eventually select the best arm for the given contexts.

\subsection{Reward function estimation using neural network and our algorithms}
To estimate the unknown reward function $f$, we use a fully connected neural network (NN) with parameters $\theta$ as used in the \cref{sec:neural_db}.
The context-arm feature vector selected by the learner in round $s$ is denoted by $x_{s,a} \in \cX_s$, and the observed stochastic binary feedback is denoted by $y_s$.
At the beginning of round $t$, we use the current history of observations $\left\{(x_{s,a}, y_s)\right\}_{s=1}^{t-1}$ and use it to train the neural network (NN) by minimizing the following loss function (using gradient descent):
\eq{
    \cL_t(\theta) = - \frac{1}{m} \sum^{t-1}_{s=1} \Big[ y_s\log \mu \left( h(x_{s,a};\theta)\right) + (1-y_s)\log \Lp 1 - \mu \left(h(x_{s,a};\theta) \right) \Rp \Big] +  \frac{\lambda \norm{\theta  - \theta_0}^2_{2}}{2},
    \label{eq:binary:loss:function}
}
where $\theta_0$ represents the initial parameters of the NN. 
With the trained NN, we use UCB- and TS-based algorithms that handle the exploration-exploitation trade-off efficiently.

\paragraph{UCB-based algorithm.}
\label{subsec:binary_algorithm}
We propose a UCB-based algorithm named \ref{alg:CNBF-UCB}, which works as follows: At the beginning of the round $t$, it trains the NN using available observations. After receiving a context, the algorithm selects the arm optimistically as follows:
\eq{
    x_{t,a} = \arg\max_{x\in\mathcal{X}_t} \Lb h(x;\theta_t) + \nu_T \sigma_{t-1}(x) \Rb,
    \label{eq:choose:arm}
}
where 
$
    \sigma_{t-1}^2(x) \doteq \frac{\lambda}{\kappa_\mu} \norm{\frac{g(x;\theta_0)}{\sqrt{m}}}^2_{V_{t-1}^{-1}},
$
in which $V_t \doteq \sum_{s=1}^t g(x;\theta_0) g(x;\theta_0)^\top \frac{1}{m} + \frac{\lambda}{\kappa_{\mu}} \mathbf{I}$,
$\nu_T \doteq (\beta_T + B \sqrt{\lambda / \kappa_\mu} + 1) \sqrt{\kappa_\mu / \lambda}$ in which $\beta_T \doteq \frac{1}{\kappa_\mu} \sqrt{ \widetilde{d}_b + 2\log(1/\delta)}$ and $\widetilde{d}_b$ is the \emph{effective dimension}. We define the effective dimension later in this section (see \cref{eq:eff:dimension:binary}), which is different from \cref{eq:eff:dimension}. 

\begin{algorithm}[!ht]
	\renewcommand{\thealgorithm}{\bf NCBF-UCB}
	\floatname{algorithm}{}
	\caption{Algorithm for Neural Contextual Bandits with Binary Feedback based on UCB}
	\label{alg:CNBF-UCB}
	\begin{algorithmic}[1]
		\STATE \hspace{-0.44cm}\textbf{Tuning parameters:} $\delta \in (0,1)$ and $\lambda > 0$
		\FOR{$t= 1, \ldots, T$}
            \STATE Train the NN using $\left\{(x_{s,a}, y_s)\right\}_{s=1}^{t-1}$ by minimizing the loss function defined in \cref{eq:binary:loss:function}
            \STATE Receive a context and $\mathcal{X}_t$ denotes the corresponding context-arm feature vectors
            \STATE Select $x_{t,a} = \arg\max_{x\in\mathcal{X}_t} \Lb h(x;\theta_t) + \nu_T \sigma_{t-1}(x) \Rb$ (as given in \cref{eq:choose:arm})
		\STATE Observe preference feedback binary $y_t$
		\ENDFOR
	\end{algorithmic}
\end{algorithm}

\paragraph{TS-based algorithm.}
We also propose TS-based algorithm named \textbf{NCBF-TS}, which is similar to \ref{alg:CNBF-UCB} except to select the arm $x_{t,a}$, it firstly samples a reward $r_t(x) \sim \mathcal{N}\left(h(x;\theta_t), \nu_T^2 \sigma_{t-1}^2(x) \right)$ for every arm $x\in\mathcal{X}_t$ and then selects the arm $x_{t,a} = {\arg\max}_{x\in\mathcal{X}_t} {r}_t(x)$.

\paragraph{Regret analysis.}
Let $K$ denote the finite number of available arms. 
Our analysis here makes use of the same assumptions as the analysis in \cref{sec:neural_db} (i.e., \cref{assup:link:function} and \cref{assumption:main}).
Let $\mathbf{H}_b \doteq \sum_{s=1}^T \sum^K_{i=1} g(x_{s,i};\theta_0) g(x_{s,i};\theta_0)^\top  \frac{1}{m}$. 
We now define the \emph{effective dimension} as follows:
\eq{
    \widetilde{d}_b = \log \det  \left(\frac{\kappa_\mu}{\lambda}  \mathbf{H}_b + \mathbf{I}\right).
    \label{eq:eff:dimension:binary}
}
Compared to $\mathbf{H}'$ defined in \cref{subsec:analysis}, $\mathbf{H}_b$ contains only $T\times K$ contexts, which is less than the $T\times K \times (K-1)$ contexts in $\mathbf{H}'$. Therefore, our $\widetilde{d}_b$ is expected to be generally smaller than in the neural dueling bandit feedback, as binary reward is more informative than preference feedback.
A key step in our proof is that minimizing the loss function \cref{eq:customized:loss:function} allows us to achieve the following:
\eq{
    \frac{1}{m}\sum_{s=1}^{t-1} \Lb \mu \left( h(x_{s,a};\theta_t)\right) - y_s \Rb g(x_{s,a};\theta_t) + \lambda (\theta_t - \theta_0) = 0.
    \label{eq:setting:loss:gradient:to:zero:binary}
}

We use the above fact to prove the following confidence ellipsoid result as done in linear reward function \citep{ICML17_li2017provably,NIPS17_jun2017scalable,ICML20_faury2020improved}. 
\begin{restatable}{thm}{confBoundBinary}
	\label{thm:confBound:binary}  
    Let $\delta\in(0,1)$, 
    $\varepsilon'_{m,t} \doteq C_2 m^{-1/6}\sqrt{\log m} L^3 \left(\frac{t}{\lambda}\right)^{4/3}$ for some absolute constant $C_2>0$.
    As long as $m \geq \text{poly}(T, L, K, 1/\kappa_\mu, L_\mu, 1/\lambda_0, 1/\lambda, \log(1/\delta))$, then with probability of at least $1-\delta$, we have 
    \[
        |f(x) - h(x;\theta_t)| \leq \nu_T \sigma_{t-1}(x) + \varepsilon'_{m,t},
    \]
    for all $x\in\mathcal{X}_t, t\in[T]$.
\end{restatable}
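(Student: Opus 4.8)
The plan is to mirror the proof of \cref{thm:confBound} for neural dueling bandits, simplified to the single-arm binary-feedback setting. First I would write the ground-truth reward as a linear function: by the analog of \cref{lemma:linear:utility:function}, for $m$ large enough there is a $\theta_f$ with $f(x)=\langle g(x;\theta_0),\theta_f-\theta_0\rangle$ and $\sqrt{m}\norm{\theta_f-\theta_0}_2\le \sqrt{2\mathbf{h}^\top\mathbf{H}^{-1}\mathbf{h}}\le B$, for all $x\in\mathcal{X}_t,t\in[T]$, with probability $\ge 1-\delta$. Next, \cref{lemma:approx:error:linear:nn} gives $|h(x;\theta_t)-\langle g(x;\theta_0),\theta_t-\theta_0\rangle|\le\varepsilon'_{m,t}$; note that unlike the dueling case there is only one arm term here, so the bound is $\varepsilon'_{m,t}$ rather than $2\varepsilon'_{m,t}$, which is exactly why the stated theorem has $\varepsilon'_{m,t}$ and not $2\varepsilon'_{m,t}$. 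Combining these and using Cauchy--Schwarz, $|f(x)-h(x;\theta_t)|\le |\langle g(x;\theta_0),\theta_f-\theta_t\rangle|+\varepsilon'_{m,t}\le \norm{\tfrac{1}{\sqrt{m}}g(x;\theta_0)}_{V_{t-1}^{-1}}\,\sqrt{m}\norm{\theta_f-\theta_t}_{V_{t-1}}+\varepsilon'_{m,t}$, so the whole statement reduces to the confidence ellipsoid $\sqrt{m}\norm{\theta_f-\theta_t}_{V_{t-1}}\le\beta_T+B\sqrt{\lambda/\kappa_\mu}+1$, after which I substitute the definitions $\nu_T=(\beta_T+B\sqrt{\lambda/\kappa_\mu}+1)\sqrt{\kappa_\mu/\lambda}$ and $\sigma_{t-1}^2(x)=\tfrac{\lambda}{\kappa_\mu}\norm{\tfrac{1}{\sqrt{m}}g(x;\theta_0)}_{V_{t-1}^{-1}}^2$.

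For the ellipsoid, I would define $G_t(\theta)\triangleq\frac{1}{m}\sum_{s=1}^{t-1}\big[\mu(\langle\theta-\theta_0,g(x_{s,a};\theta_0)\rangle)-\mu(\langle\theta_f-\theta_0,g(x_{s,a};\theta_0)\rangle)\big]g(x_{s,a};\theta_0)+\lambda(\theta-\theta_0)$. A mean-value-theorem argument identical to \cref{lemma:bound:theta_f:theta_t:initial}, using $\dot\mu(\cdot)\ge\kappa_\mu$ (now with $\kappa_\mu=\inf_x\dot\mu(f(x))$) and $G_t(\theta_f)=\lambda(\theta_f-\theta_0)$, gives $\norm{\theta_f-\theta_t}_{V_{t-1}}\le\frac{1}{\kappa_\mu}\norm{G_t(\theta_t)}_{V_{t-1}^{-1}}+\sqrt{\lambda/\kappa_\mu}\,\frac{B}{\sqrt m}$. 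Writing $y_s=\mu(f(x_{s,a}))+\epsilon_s$ and repeatedly adding and subtracting $\mu(h(x_{s,a};\theta_t))$ and the trained gradients $g(x_{s,a};\theta_t)$, I would decompose $G_t(\theta_t)=\frac1m\sum_s\epsilon_s g(x_{s,a};\theta_0)+\big[\text{stationarity term}\big]+A_1+A_2$, where the stationarity term vanishes by \cref{eq:setting:loss:gradient:to:zero:binary}, $A_1$ is controlled by $\norm{g(x_{s,a};\theta_0)-g(x_{s,a};\theta_t)}_2$ via \cref{lemma:approx:error:gradient:norms}, and $A_2$ is controlled by $|\mu(\langle\theta_t-\theta_0,g(x_{s,a};\theta_0)\rangle)-\mu(h(x_{s,a};\theta_t))|\le L_\mu\varepsilon'_{m,t}$ together with $\norm{g(x;\theta_t)}_2\le C_3\sqrt{mL}$. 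As in \cref{eq:def:eps:m:t}, $A_1,A_2$ contribute only negative powers of $m$ and are absorbed into a ``$+1$'' once $m\ge\mathrm{poly}(T,L,K,1/\kappa_\mu,L_\mu,1/\lambda_0,1/\lambda,\log(1/\delta))$ as in \cref{eq:conditions:on:m}.

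For the leading noise term, I would argue as in \cref{lemma:conf:ellip:final}: $\epsilon_t=y_t-\mu(f(x_{t,a}))$ has conditional mean zero and $|\epsilon_t|\le1$, hence is conditionally $1$-sub-Gaussian, and then apply Theorem~1 of \cite{NIPS11_abbasi2011improved} with $V_t=\sum_{s=1}^t g(x_{s,a};\theta_0)g(x_{s,a};\theta_0)^\top\frac1m+\frac{\lambda}{\kappa_\mu}\mathbf{I}$ to get $\frac{1}{\kappa_\mu}\norm{\frac{1}{\sqrt m}\sum_s\epsilon_s g(x_{s,a};\theta_0)}_{V_{t-1}^{-1}}\le\frac{1}{\kappa_\mu}\sqrt{\log(\det V_{t-1}/\det V_0)+2\log(1/\delta)}$. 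Since in each round the single selected arm is one of the $K$ arms, $\sum_{s=1}^{t-1}g(x_{s,a};\theta_0)g(x_{s,a};\theta_0)^\top\preceq\sum_{s=1}^T\sum_{i=1}^K g(x_{s,i};\theta_0)g(x_{s,i};\theta_0)^\top=m\mathbf{H}_b$, so $\log(\det V_{t-1}/\det V_0)\le\log\det(\frac{\kappa_\mu}{\lambda}\mathbf{H}_b+\mathbf{I})=\widetilde d_b$, yielding the bound $\beta_T=\frac{1}{\kappa_\mu}\sqrt{\widetilde d_b+2\log(1/\delta)}$. Plugging back completes the ellipsoid bound and hence the theorem.

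The main obstacle is essentially re-running the ellipsoid computation of \cref{app:proof:subsec:conf:ellip} with single-arm features while keeping careful track of three changes: the strong-convexity constant is now $\kappa_\mu=\inf_x\dot\mu(f(x))$ (no pairwise difference); several ``$2$'' factors from the dueling analysis drop out because each round contributes only one linearization error, which is what produces $\varepsilon'_{m,t}$ in place of $2\varepsilon'_{m,t}$; and the determinant bound must be routed through $\mathbf{H}_b$ (with $TK$ feature vectors) rather than $\mathbf{H}'$, giving the smaller effective dimension $\widetilde d_b$. Verifying that the polynomial width conditions in \cref{eq:conditions:on:m}, or a minor variant thereof, still suffice to absorb $A_1$ and $A_2$ into the ``$+1$'' is the only place needing bookkeeping, but it introduces no new ideas beyond those already used for \cref{thm:confBound}.
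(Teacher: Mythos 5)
Your proposal is correct and follows essentially the same route as the paper's own proof: reduce to the confidence ellipsoid via the linear representation of $f$, the single-arm linearization error $\varepsilon'_{m,t}$, and Cauchy--Schwarz, then re-run the dueling ellipsoid argument with single-arm features $g(x_{s,a};\theta_0)$, the stationarity condition \eqref{eq:setting:loss:gradient:to:zero:binary}, the $A_1,A_2$ error terms absorbed by the width condition, and the self-normalized bound routed through $\mathbf{H}_b$ to get $\widetilde d_b$. The bookkeeping you flag (dropped factors of $2$, the modified $\kappa_\mu$, and the smaller effective dimension) matches exactly what the paper does in its Appendix analysis for \textbf{NCBF-UCB}.
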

Similar to \cref{thm:confBound}, as long as the NN is wide enough (i.e., if the conditions on $m$ in \cref{eq:conditions:on:m} are satisfied. More details are in \cref{app:sec:theoretical:analysis}), we have that $\varepsilon'_{m,t} = \mathcal{O}(1/T)$.
Also note that in contrast to \cref{thm:confBound} whose confidence ellipsoid is in terms of reward differences, our confidence ellipsoid in \cref{thm:confBound:binary} is in terms of the value of the reward function. This is because in contrast to neural dueling bandits (\cref{sec:neural_db}), here we get to collect an observation for every selected arm.

In the following results, we state the regret upper bounds of our proposed algorithms for neural contextual bandits with binary feedback.
\begin{thm}[\ref{alg:CNBF-UCB}]
    \label{theorem:regret:bound:ucb:binary}
    Let $\lambda > \kappa_\mu$, 
    $B$ be a constant such that $\sqrt{2\mathbf{h}^{\top} \mathbf{H}^{-1} \mathbf{h}} \leq B$, 
    and $c_0 > 0$ be an absolute constant such that $\frac{1}{m} \norm{g(x_{s,i};\theta_0)}_2^2 \leq c_0,\forall x \in\mathcal{X}_t,t\in[T]$.
    For $m \geq \text{poly}(T, L, K, 1/\kappa_\mu, L_\mu, 1/\lambda_0, 1/\lambda, \log(1/\delta))$, then with probability of at least $1-\delta$, we have
    \[
        \Regret_T = \widetilde{O}\left( \left( \frac{\sqrt{\widetilde{d}_b}}{\kappa_\mu} + B \sqrt{\frac{\lambda}{\kappa_\mu}}\right) \sqrt{T  \widetilde{d}_b } \right)
    \]
\end{thm}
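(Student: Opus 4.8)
The plan is to closely follow the argument for \cref{theorem:regret:bound:ucb}, but with the per-arm confidence ellipsoid \cref{thm:confBound:binary} in place of the pairwise one, since here a binary reward is observed for the single selected arm in each round. \emph{Per-round regret.} Condition on the probability-$(1-\delta)$ event of \cref{thm:confBound:binary}. As $\mu$ is nondecreasing and $L_\mu$-Lipschitz (\cref{assup:link:function}) and $f(x_t^\star)\ge f(x_{t,a})$, the mean-value theorem gives $r_t=\mu(f(x_t^\star))-\mu(f(x_{t,a}))\le L_\mu\,(f(x_t^\star)-f(x_{t,a}))$. Applying \cref{thm:confBound:binary} to $x_t^\star$, then the optimistic rule \cref{eq:choose:arm} (which gives $h(x_{t,a};\theta_t)+\nu_T\sigma_{t-1}(x_{t,a})\ge h(x_t^\star;\theta_t)+\nu_T\sigma_{t-1}(x_t^\star)$), and then \cref{thm:confBound:binary} to $x_{t,a}$, yields
\[
f(x_t^\star)\ \le\ f(x_{t,a})+2\nu_T\,\sigma_{t-1}(x_{t,a})+2\varepsilon'_{m,t},
\]
so that $r_t\le 2L_\mu\big(\nu_T\sigma_{t-1}(x_{t,a})+\varepsilon'_{m,t}\big)$.

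\emph{Summing the posterior-variance proxies.} This is the GP-UCB-style information-gain step, identical in structure to \cref{app:proof:regret:analysis}: the constant-NTK assumption $\tfrac1m\norm{g(x;\theta_0)}_2^2\le c_0$ gives $\sigma_{t-1}^2(x_{t,a})\le c_0$, and combining $\alpha/2\le\log(1+\alpha)$ on $[0,1]$ with the chain rule for the information gain of the kernel $k(x,x')=\langle g(x;\theta_0)/\sqrt{m},\ g(x';\theta_0)/\sqrt{m}\rangle$ gives $\sum_{t=1}^T\sigma_{t-1}^2(x_{t,a})\le 2c_0\tfrac{\lambda}{\kappa_\mu}\log\det\big(\mathbf{I}+\tfrac{\kappa_\mu}{\lambda}\mathbf{K}_T\big)$, with $\mathbf{K}_T$ the Gram matrix of $\{g(x_{t,a};\theta_0)/\sqrt{m}\}_{t\le T}$. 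Since the selected-arm Gram matrix is dominated by the full $T\times K$ one (the determinant/monotonicity step of \cref{eq:analyse:det:vt}--\cref{eq:analyse:log:det:vt}), this $\log\det$ is at most $\widetilde{d}_b=\log\det\big(\tfrac{\kappa_\mu}{\lambda}\mathbf{H}_b+\mathbf{I}\big)$, so $\sum_{t=1}^T\sigma_{t-1}^2(x_{t,a})\le 2c_0\tfrac{\lambda}{\kappa_\mu}\widetilde{d}_b$.

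\emph{Assembling.} By Cauchy--Schwarz, $\Regret_T\le 2L_\mu\nu_T\sqrt{T\sum_{t=1}^T\sigma_{t-1}^2(x_{t,a})}+2L_\mu\sum_{t=1}^T\varepsilon'_{m,t}\le 2L_\mu\nu_T\sqrt{2c_0T\tfrac{\lambda}{\kappa_\mu}\widetilde{d}_b}+2L_\mu T\varepsilon'_{m,T}$. Using $\nu_T\sqrt{\lambda/\kappa_\mu}=\beta_T+B\sqrt{\lambda/\kappa_\mu}+1=\widetilde{O}\big(\tfrac{\sqrt{\widetilde{d}_b}}{\kappa_\mu}+B\sqrt{\lambda/\kappa_\mu}\big)$ and the fact that the width conditions \cref{eq:conditions:on:m} make $\varepsilon'_{m,t}=\mathcal{O}(1/T)$ (so $2L_\mu T\varepsilon'_{m,T}=\mathcal{O}(1)$), this simplifies to $\Regret_T=\widetilde{O}\big((\tfrac{\sqrt{\widetilde{d}_b}}{\kappa_\mu}+B\sqrt{\lambda/\kappa_\mu})\sqrt{T\widetilde{d}_b}\big)$, as claimed.

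\emph{Main obstacle.} The three steps above are routine once \cref{thm:confBound:binary} is in hand; the real content is that theorem, which we are permitted to assume here. Compared with \cref{thm:confBound}, the observation model is different --- a single binary $y_t$ per chosen arm --- so the loss-gradient stationarity \cref{eq:setting:loss:gradient:to:zero:binary} replaces \cref{eq:setting:loss:gradient:to:zero}, one must recheck that $\epsilon_t=y_t-\mu(f(x_{t,a}))$ is conditionally $1$-sub-Gaussian, and the NTK-linearization error $\varepsilon'_{m,t}$ must again be controlled uniformly in $t$ under \cref{eq:conditions:on:m}; these are where care is needed. A smaller point is verifying that $\widetilde{d}_b$, built from the $T\times K$ matrix $\mathbf{H}_b$, dominates the realized log-determinant of the selected-arm Gram matrix, which is immediate once one notes that adding the unselected arms' rank-one terms only increases the determinant.
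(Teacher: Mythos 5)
Your proposal is correct and follows essentially the same route as the paper's proof: condition on \cref{thm:confBound:binary}, bound the per-round regret by $2\nu_T\sigma_{t-1}(x_{t,a})+2\varepsilon'_{m,t}$ via the optimistic selection rule, sum the variance proxies through the $\log\det$/effective-dimension argument to get $\sum_t\sigma^2_{t-1}(x_{t,a})\le 2c_0\tfrac{\lambda}{\kappa_\mu}\widetilde{d}_b$, and finish with Cauchy--Schwarz and the width conditions making $T\varepsilon'_{m,T}=\mathcal{O}(1)$. The only cosmetic difference is your explicit $L_\mu$-Lipschitz step converting $\mu(f(x_t^\star))-\mu(f(x_{t,a}))$ into $f$-differences (the paper's appendix bounds $f(x_t^\star)-f(x_{t,a})$ directly), which only affects a constant factor and is, if anything, more faithful to the regret definition in \cref{sec:neural_binary}.
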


\begin{thm}[\textbf{NCBF-TS}]
    \label{theorem:regret:bound:ts:binary}
    Under the conditions as those in \cref{theorem:regret:bound:ucb:binary} holds, then with probability of at least $1-\delta$, we have 
    \[
        \Regret_T = \widetilde{O}\left( \left( \frac{\sqrt{\widetilde{d}_b}}{\kappa_\mu} + B \sqrt{\frac{\lambda}{\kappa_\mu}}\right) \sqrt{T  \widetilde{d}_b } \right).
    \]
\end{thm}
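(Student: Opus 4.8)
The plan is to transfer the Thompson-sampling analysis of \textbf{NDB-TS} in \cref{app:sec:proof:ts} to this single-arm, binary-feedback setting, the only structural change being that the confidence ellipsoid is now on reward \emph{values} (\cref{thm:confBound:binary}) rather than on reward \emph{differences}. First I would fix two high-probability events. The concentration event $E^f(t)$ asks that $|f(x)-h(x;\theta_t)|\le \nu_T\sigma_{t-1}(x)+\varepsilon'_{m,t}$ for all $x\in\mathcal{X}_t$; by \cref{thm:confBound:binary} it holds with probability at least $1-\delta$. The sampling event $E^{f_t}(t)$ asks that $|\widetilde r_t(x)-h(x;\theta_t)|\le \nu_T\sqrt{2\log(Kt^2)}\,\sigma_{t-1}(x)$ for all $x\in\mathcal{X}_t$, where $\widetilde r_t(x)\sim\mathcal{N}(h(x;\theta_t),\nu_T^2\sigma_{t-1}^2(x))$; by a Gaussian tail bound plus a union bound over the $K$ arms it holds with probability at least $1-1/t^2$. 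Set $c_t\triangleq\nu_T(1+\sqrt{2\log(Kt^2)})$ and, with $\Delta(x)\triangleq f(x_t^\star)-f(x)$, call $x$ saturated if $\Delta(x)>c_t\sigma_{t-1}(x)+2\varepsilon'_{m,t}$; note $x_t^\star$ is never saturated.

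Next I would follow the analogues of \cref{lemma:uniform_lower_bound,lemma:prob_unsaturated}: conditioned on $E^f(t)$, the Gaussian anti-concentration inequality $\mathbb{P}(z>a)\ge e^{-a^2}/(4\sqrt\pi a)$ gives $\mathbb{P}(\widetilde r_t(x_t^\star)+\varepsilon'_{m,t}>f(x_t^\star)\mid\mathcal{F}_{t-1})\ge p\triangleq 1/(4e\sqrt\pi)$, whence the chosen arm $x_{t,a}=\argmax_{x}\widetilde r_t(x)$ is unsaturated with probability at least $p-1/t^2$. Letting $\overline x_t$ be the unsaturated arm with smallest $\sigma_{t-1}(\cdot)$, I would decompose $r_t=[f(x_t^\star)-f(\overline x_t)]+[f(\overline x_t)-f(x_{t,a})]$ and bound each piece using $E^f(t)$, $E^{f_t}(t)$, the selection rule, and unsaturatedness of $\overline x_t$; this is shorter than the dueling version since there is no second arm, and yields $\mathbb{E}[r_t\mid\mathcal{F}_{t-1}]\le \frac{C c_t}{p}\,\mathbb{E}[\sigma_{t-1}(x_{t,a})\mid\mathcal{F}_{t-1}]+C'\varepsilon'_{m,t}+C''/t^2$. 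I would then form the super-martingale $Y_t=\sum_{s\le t}\big(r_s\mathbb{I}\{E^f(s)\}-\tfrac{Cc_s}{2p}\sigma_{s-1}(x_{s,a})-\tfrac{C'}{2}\varepsilon'_{m,s}-C''/s^2\big)$, whose increments are bounded via $\sigma_{t-1}(x)\le\sqrt{c_0\lambda/\kappa_\mu}$ and $|r_t|\le 2$, and apply the Azuma--Hoeffding inequality to control $\sum_t r_t$ with probability at least $1-\delta$ (noting $r_t=\overline r_t$ off an event of probability $\le\delta$).

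To convert this into the stated rate I would reuse the elliptical-potential/determinant argument from the \textbf{NCBF-UCB} analysis of \cref{theorem:regret:bound:ucb:binary}, with $V_t=\sum_{s\le t}g(x_{s,a};\theta_0)g(x_{s,a};\theta_0)^\top/m+\frac{\lambda}{\kappa_\mu}\mathbf{I}$ and $\det V_t\le\det\!\big(\mathbf{H}_b+\frac{\lambda}{\kappa_\mu}\mathbf{I}\big)$, to get $\sum_{t=1}^T\log(1+\tfrac{\kappa_\mu}{\lambda}\sigma_{t-1}^2(x_{t,a}))\le\widetilde{d}_b$; then, since $\sigma_{t-1}^2(x)\le c_0\lambda/\kappa_\mu$ and $\tfrac\alpha2\le\log(1+\alpha)$ for $\alpha\in[0,1]$, I obtain $\sum_{t=1}^T\sigma_{t-1}^2(x_{t,a})\le 2c_0\tfrac{\lambda}{\kappa_\mu}\widetilde{d}_b$ and, by Cauchy--Schwarz, $\sum_{t=1}^T\sigma_{t-1}(x_{t,a})\le\sqrt{2c_0\tfrac{\lambda}{\kappa_\mu}T\widetilde{d}_b}$. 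Substituting $c_T=\widetilde{O}(\sqrt{\widetilde{d}_b/(\kappa_\mu\lambda)}+B)$, $\beta_T=\widetilde{O}(\sqrt{\widetilde{d}_b}/\kappa_\mu)$, and $T\varepsilon'_{m,T}=O(1)$ (valid under the width conditions \cref{eq:conditions:on:m}) gives the claimed $\Regret_T=\widetilde{O}\big((\tfrac{\sqrt{\widetilde{d}_b}}{\kappa_\mu}+B\sqrt{\tfrac{\lambda}{\kappa_\mu}})\sqrt{T\widetilde{d}_b}\big)$. The main obstacle is not any single hard estimate but the setup underlying \cref{thm:confBound:binary}: verifying that the binary noise $\epsilon_s=y_s-\mu(f(x_{s,a}))$ is conditionally $1$-sub-Gaussian (it is mean-zero given $\mathcal{F}_{s-1}$ and bounded in $[-1,1]$), and that the loss-gradient identity \cref{eq:setting:loss:gradient:to:zero:binary} yields a \emph{value-form} confidence ellipsoid built from the single-arm features $g(x;\theta_0)$ — so the relevant Gram matrix, and hence the effective dimension $\widetilde{d}_b$, is over $T\times K$ vectors rather than the $T\times K\times(K-1)$ difference vectors of the dueling case; once that is in place the TS regret argument above transfers essentially verbatim.
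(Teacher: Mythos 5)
Your proposal is correct and takes essentially the same route as the paper: the paper omits the detailed proof of this theorem, saying only that it "follows a similar spirit" to the \textbf{NDB-TS} analysis, and your argument—value-form confidence ellipsoid from \cref{thm:confBound:binary}, the two events $E^f(t)$ and $E^{f_t}(t)$, the saturated-set/anti-concentration step, the super-martingale with Azuma--Hoeffding, and the elliptical-potential bound $\sum_{t=1}^T\sigma_{t-1}^2(x_{t,a})\le 2c_0\tfrac{\lambda}{\kappa_\mu}\widetilde{d}_b$ reused from the \textbf{NCBF-UCB} analysis—is exactly that intended adaptation. The only cosmetic deviation is your looser increment bound $\sigma_{t-1}(x)\le\sqrt{c_0\lambda/\kappa_\mu}$ (the tighter $\sqrt{c_0}$ holds since $V_{t-1}\succeq\tfrac{\lambda}{\kappa_\mu}\mathbf{I}$), which only affects constants absorbed by the $\widetilde{O}$ notation.
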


Note that in terms of asymptotic dependencies (ignoring the log factors), our UCB- and TS- algorithms have similar growth rates. All missing proofs and additional details are in \cref{app:sec:binary:details}.

\textbf{Comparison with Neural Bandits.}
The regret upper bounds of our \textbf{NCBF-UCB} and \textbf{NCBF-TS} algorithms are worse than the regret of NeuralUCB \cite{zhou2020neural} and NeuralTS \cite{zhang2020neural}: $\widetilde{O}(\widetilde{d}_b \sqrt{T})$ (with $\kappa_\mu=1$) because of our extra dependency on $\kappa_\mu$ and $L_\mu$.\footnote{Note that our effective dimension $\widetilde{d}_b$ is defined using $\mathbf{H}_b$ \cref{eq:eff:dimension:binary}, while the effective dimension $\widetilde{d}'$ in \cite{zhou2020neural} and \cite{zhang2020neural} are defined using $\mathbf{H}$. However, as we have discussed in \cref{footnote:eff:dim}, $\widetilde{d}'$ has the same order of growth as $\log \det  \left( \mathbf{H}_b/\lambda + \mathbf{I}\right)$.
So, our regret upper bounds are comparable with those from \cite{zhou2020neural} and \cite{zhang2020neural}.
}
Specifically, note that $\kappa_\mu<1$, therefore, the regret bounds in \cref{theorem:regret:bound:ucb:binary} and \cref{theorem:regret:bound:ts:binary} are increased as a result of the dependency on $\kappa_\mu$. 
In addition, the dependency on $L_\mu$ also places an extra requirement on the width $m$ of the NN.
Therefore, our regret bounds are worse than that of standard neural bandit algorithms that do not depend on $\kappa_\mu$ and $L_\mu$.
This can be attributed to the additional difficulty of our problem setting, i.e., we only have access to binary feedback, whereas standard neural bandits \cite{zhou2020neural,zhang2020neural} can use continuous observations.

Also note that our regret upper bounds here (\cref{theorem:regret:bound:ucb:binary} and \cref{theorem:regret:bound:ts:binary}) are expected to be smaller than those of neural dueling bandits (\cref{theorem:regret:bound:ucb} and \cref{theorem:regret:bound:ts}), because $\widetilde{d}_b$ here is likely to be smaller than $\widetilde{d}$ from \cref{theorem:regret:bound:ucb} and \cref{theorem:regret:bound:ts}. This may be attributed to the extra difficulty in the feedback in neural dueling bandits, i.e., only pairwise comparisons are available.

\subsection{Theoretical Analysis}
\label{app:sec:binary:details}

In this section, we show the proof of \cref{theorem:regret:bound:ucb:binary} and \cref{theorem:regret:bound:ts:binary} for neural contextual bandits with binary feedback (\cref{sec:neural_binary}). 
We can largely reuse the proof from \cref{app:sec:theoretical:analysis}, and here we will only highlight the changes we need to make to the proof in \cref{app:sec:theoretical:analysis}.

To begin with, in our analysis here, we adopt the same requirement on the width of the NN specified in \cref{eq:conditions:on:m}.
First of all, \cref{lemma:linear:utility:function} still holds in this case, which allows us to approximate the unknown utility function $f$ using a linear function.
It is also easy to verify that \cref{lemma:bound:diff:theta_t:theta_o} still holds.
As a consequence, \cref{lemma:approx:error:gradient:norms} and \cref{lemma:approx:error:linear:nn} both hold naturally.

\subsubsection{Proof of Confidence Ellipsoid}
Similar to the our proof in \cref{app:proof:subsec:conf:ellip}, in iteration $s$, we denote $\phi'_s \triangleq g(x_{s};\theta_0)$, $\widetilde{\phi}'_s \triangleq g(x_{s};\theta_t)$, and $\widetilde{h}_{s,t} \triangleq h(x_{s};\theta_t)$.
Here we show how the proof of \cref{lemma:conf:ellip} should be modified.
For any $\theta_{f^\prime}\in\mathbb{R}^p$, define
\begin{equation}
G_t(\theta_{f^\prime}) \triangleq \frac{1}{m} \sum_{s=1}^{t-1} \Big[\mu\left(\langle \theta_{f^\prime}-\theta_0, \phi'_s \rangle \right) - \mu\left(\langle \theta_{f}-\theta_0, \phi'_s \rangle\right) \Big] \phi'_s + \lambda (\theta_{f'} - \theta_0).
\label{eq:def:G:bianry}
\end{equation}
Note that the definition of $G_t$ in \cref{eq:def:G:bianry} is exactly the same as that in \cref{eq:def:G}, except that here we use a modified definition of $\phi'_s$.
Note that here $V_t$ is defined as $V_t \doteq \sum_{s=1}^t g(x_s;\theta_0) g(x_s;\theta_0)^\top \frac{1}{m} + \frac{\lambda}{\kappa_{\mu}} \mathbf{I}=\sum_{s=1}^t \phi'_s {\phi'_s}^\top \frac{1}{m} + \frac{\lambda}{\kappa_{\mu}} \mathbf{I}$.
In addition, the definition of $f_{t,s}$ remains: $f_{t,s} = \langle \theta_t - \theta_0, \phi'_s \rangle$.
With the modified definitions of $V_{t-1}$, we can easily show that \cref{lemma:bound:theta_f:theta_t:initial} remains valid.
Note that here the binary observation can be expressed as $y_s = \mu(f(x_{s})) + \epsilon_s$, in which $\epsilon_s$ is the observation noise.
It is easy to verify that the decomposition in \cref{eq:decompose:G_t} remains valid.

Next, defining $A_1$ and $A_2$ in the same way as \cref{eq:define:A_1:A_2}, it is easy to verify that \cref{eq:decompose:into:sum:of:A1:A2} is still valid. Note that during the proof of \cref{eq:decompose:into:sum:of:A1:A2}, we have made use of \cref{eq:setting:loss:gradient:to:zero:binary}, which allows us to ensure the validity of $\frac{1}{m}\sum_{s=1}^{t-1} \Big(\mu(\widetilde{h}_{s,t}) - y_s\Big) \widetilde{\phi}'_s + \lambda(\theta_t-\theta_0) = 0$ in \cref{eq:gradient:to:0}.
This is ensured by the way we train our neural network with the binary observations.
Next, we derive an upper bound on the norm of $A_1$.
To begin with, we have that 
\als{
	\norm{\phi'_s - \widetilde{\phi}'_s}_{2} &= \norm{g(x_{s};\theta_0) - g(x_{s};\theta_t) }_2\\
	&\leq C_1 m^{1/3} \sqrt{\log m} \left(\frac{Ct}{\lambda}\right)^{1/3} L^{7/2},
}
in which the inequality follows from \cref{lemma:approx:error:gradient:norms}. Then, the proof in \cref{eq:upper:bound:on:A1} can be reused to show that
\eqs{
	\norm{A_1}_2 = \norm{\frac{1}{m} \sum_{s=1}^{t-1} \Big(\mu(f_{t,s}) - y_s\Big) \Big(\phi'_s  - \widetilde{\phi}'_s\Big)}_{2} \leq  m^{-2/3}\sqrt{\log m}t^{4/3} C_1 {\widetilde{C}}^{1/3} \lambda^{-1/3} L^{7/2}.
}
Note that the upper bound above is smaller than that from \cref{eq:upper:bound:on:A1} by a factor of $2$.
Similarly, we can follow the proof of \cref{eq:upper:bound:on:A2} to derive an upper bound on the norm of $A_2$:
\als{
	\norm{A_2}_2 &= \norm{\frac{1}{m}\sum_{s=1}^{t-1} \Big(\mu(f_{t,s}) - \mu(\widetilde{h}_{s,t})\Big)\widetilde{\phi}'_s}_2 \leq 2 L_\mu C_2 C_3 {\widetilde{C}}^{4/3} m^{-2/3} \sqrt{\log m} t^{7/3} L^{7/2} \lambda^{-4/3},
}
in which the upper bound is also smaller than that from \cref{eq:upper:bound:on:A2} by a factor of $2$.
As a result, defining $\varepsilon_{m,t}$ in the same way as \cref{eq:def:eps:m:t} (except that the second and third terms in $\varepsilon_{m,t}$ are reduced by a factor of $2$), we can show that \cref{eq:upper:bound:diff:theta:f:theta:t} is still valid:
\begin{equation}
	\begin{split}
	    \sqrt{m} \norm{\theta_{f} - \theta_{t}}_{V_{t-1}} &\leq  \frac{1}{\kappa_\mu}  \norm{\sum_{s=1}^{t-1}\epsilon_s \phi'_s\frac{1}{\sqrt{m}}}_{V_{t-1}^{-1}} + B \sqrt{\frac{\lambda}{\kappa_\mu}} + 1.
	\end{split}
	\label{eq:upper:bound:diff:theta:f:theta:t:binary}
\end{equation}
Now we derive an upper bound on the first term in \cref{eq:upper:bound:diff:theta:f:theta:t:binary} in the next lemma, which is proved by modifying the proof of \cref{lemma:conf:ellip:final}.

\begin{lem}
\label{lemma:conf:ellip:final:binary}
Let $\beta_T \doteq \frac{1}{\kappa_\mu} \sqrt{ \widetilde{d}_b + 2\log(1/\delta)}$. With probability of at least $1-\delta$, we have that
\[
	\frac{1}{\kappa_\mu} \norm{\sum_{s=1}^{t-1}\epsilon_s \phi'_s\frac{1}{\sqrt{m}}}_{V_{t-1}^{-1}} \leq \beta_T.
\]
\end{lem}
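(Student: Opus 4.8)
The plan is to transcribe the proof of \cref{lemma:conf:ellip:final} almost line by line, the only substantive change being bookkeeping about which feature vectors appear. Recall that in this binary-feedback setting $\phi'_s = g(x_s;\theta_0)$ is the gradient of the single arm selected in round $s$, and $V_{t-1} = \sum_{s=1}^{t-1} \phi'_s {\phi'_s}^\top \frac{1}{m} + \frac{\lambda}{\kappa_\mu}\mathbf{I}$ with $V_0 = \frac{\lambda}{\kappa_\mu}\mathbf{I}_p$. The proof has three ingredients: a determinant bound, a sub-Gaussianity check, and an application of the self-normalized martingale inequality.

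First, I would bound $\log(\det V_{t-1} / \det V_0)$ by $\widetilde{d}_b$. The selected gradients $\{g(x_s;\theta_0)\}_{s=1}^{t-1}$ form a sub-collection of the full set of context-arm gradients $\{g(x_{s,i};\theta_0)\}_{1\le s\le T,\,1\le i\le K}$, and each rank-one term $g g^\top / m$ is positive semidefinite, so $V_{t-1} \preceq \mathbf{H}_b + \frac{\lambda}{\kappa_\mu}\mathbf{I}$ in the Loewner order, hence $\det V_{t-1} \le \det(\mathbf{H}_b + \frac{\lambda}{\kappa_\mu}\mathbf{I})$. Factoring $(\lambda/\kappa_\mu)^p$ out of numerator and denominator exactly as in \cref{eq:analyse:log:det:vt} gives $\log(\det V_{t-1}/\det V_0) \le \log\det(\frac{\kappa_\mu}{\lambda}\mathbf{H}_b + \mathbf{I}) = \widetilde{d}_b$. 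Second, I would verify that the noise is conditionally $1$-sub-Gaussian: writing $y_s = \mu(f(x_s)) + \epsilon_s$ with $y_s\in\{0,1\}$, we have $\epsilon_s = 1-\mu(f(x_s))$ with probability $\mu(f(x_s))$ and $\epsilon_s = -\mu(f(x_s))$ otherwise, so with $\cF_{s-1}$ the sigma-algebra generated by the history up to and including $x_s$, it follows that $\epsilon_s$ is $\cF_s$-measurable, $\EE{\epsilon_s\mid\cF_{s-1}}=0$, and $|\epsilon_s|\le 1$; a zero-mean random variable taking values in $[-1,1]$ is $1$-sub-Gaussian. This is the single-reward analogue of the argument in the proof of \cref{lemma:conf:ellip:final}.

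Third, I would apply the self-normalized tail bound (Theorem 1 of \cite{NIPS11_abbasi2011improved}) to the vector-valued martingale $\sum_s \epsilon_s \phi'_s \frac{1}{\sqrt{m}}$ with regularizer $V_0 = \frac{\lambda}{\kappa_\mu}\mathbf{I}$, which yields, with probability at least $1-\delta$ and uniformly over $t$,
\[
	\norm{\sum_{s=1}^{t-1}\epsilon_s \phi'_s\frac{1}{\sqrt{m}}}_{V_{t-1}^{-1}} \leq \sqrt{\log\frac{\det V_{t-1}}{\det V_0} + 2\log(1/\delta)} \leq \sqrt{\widetilde{d}_b + 2\log(1/\delta)},
\]
and dividing by $\kappa_\mu$ gives exactly $\beta_T = \frac{1}{\kappa_\mu}\sqrt{\widetilde{d}_b + 2\log(1/\delta)}$. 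There is no genuinely hard step here — the argument is a transcription of \cref{lemma:conf:ellip:final} with $\mathbf{H}'$ replaced by $\mathbf{H}_b$ and pairwise-difference gradients replaced by single-arm gradients. The only point deserving care is the determinant comparison in the first step: one must confirm that the adaptively built matrix $V_{t-1}$ is dominated by the fixed matrix $\mathbf{H}_b + \frac{\lambda}{\kappa_\mu}\mathbf{I}$, which holds precisely because the selected gradients are among those summed in $\mathbf{H}_b$ and each contributing rank-one term is PSD; the sub-Gaussianity verification and the self-normalized bound are then entirely standard.
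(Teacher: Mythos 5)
Your proposal is correct and follows essentially the same route as the paper's proof: bound $\log(\det V_{t-1}/\det V_0)$ by $\widetilde{d}_b$ via the fact that the selected single-arm gradients are a PSD-dominated sub-collection of those in $\mathbf{H}_b$, verify that the bounded zero-mean noise $\epsilon_s$ is conditionally $1$-sub-Gaussian, and apply the self-normalized bound of \cite{NIPS11_abbasi2011improved}. The only difference is presentational (you make the Loewner-order justification of the determinant comparison explicit, which the paper leaves implicit), so there is nothing substantive to add.
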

\begin{proof}
Note that in the main text, we have the following modified definitions: $\mathbf{H}_b \doteq \sum_{s=1}^T \sum_{i \in K} g(x_{s,i};\theta_0) g(x_{s,i};\theta_0)^\top  \frac{1}{m}$, and 
$\widetilde{d}_b = \log \det  \left(\frac{\kappa_\mu}{\lambda}  \mathbf{H}_b + \mathbf{I}\right)$.

To begin with, we derive an upper bound on the log determinant of the matrix $V_{t} \doteq \sum_{s=1}^{t} g(x_s;\theta_0) g(x_s;\theta_0)^\top \frac{1}{m} + \frac{\lambda}{\kappa_{\mu}} \mathbf{I}$.
Now the determinant of $V_t$ can be upper-bounded as
\als{
	    \det(V_t) &= \det\left( \sum_{s=1}^t g(x_s;\theta_0) g(x_s;\theta_0)^\top \frac{1}{m} + \frac{\lambda}{\kappa_\mu} \mathbf{I} \right)\\
	    &\leq \det\left( \sum_{s=1}^T \sum_{i \in K}g(x_{s,i};\theta_0) g(x_{s,i};\theta_0)^\top \frac{1}{m} + \frac{\lambda}{\kappa_\mu} \mathbf{I} \right)\\
	    &= \det\left( \mathbf{H}_b  + \frac{\lambda}{\kappa_\mu} \mathbf{I} \right).
}

Recall that in our algorithm, we have set $V_0 = \frac{\lambda}{\kappa_\mu}\mathbf{I}_p$.
This leads to 
\als{
		\log\frac{\det V_t}{\det V_0} &\leq  \log\frac{\det\left( \mathbf{H}_b  + \frac{\lambda}{\kappa_\mu} \mathbf{I} \right)}{\det V_0}\\
		&= \log\frac{(\lambda / \kappa_\mu)^p \det\left(\frac{\kappa_\mu}{\lambda} \mathbf{H}_b  +  \mathbf{I} \right)}{(\lambda / \kappa_\mu)^p}\\
		&= \log \det  \left(\frac{\kappa_\mu}{\lambda}  \mathbf{H}_b + \mathbf{I}\right).
}
Next, following the same line of argument in the proof of \cref{lemma:conf:ellip:final} about the observation noise $\epsilon$, we can easily show that in this case of neural contextual bandits with binary observation, the sequence of noise $\{\epsilon_s\}$ is also conditionally 1-sub-Gaussian.

Next, making use of the $1$-sub-sub-Gaussianity of the sequence of noise $\{\epsilon_s\}$ and Theorem 1 from \cite{NIPS11_abbasi2011improved}, we can show that with probability of at least $1-\delta$,
\als{
	\norm{\sum_{s=1}^{t-1}\epsilon_s \phi'_s\frac{1}{\sqrt{m}}}_{V_{t-1}^{-1}} &\leq \sqrt{\log \left(\frac{\det V_{t-1}}{ \det V_0} \right) + 2\log(1/\delta)}\\
	&\leq \sqrt{\log \det  \left(\frac{\kappa_\mu}{\lambda}  \mathbf{H}_b + \mathbf{I}\right)  + 2\log(1/\delta)}\\
	&\leq \sqrt{ \widetilde{d}_b + 2\log(1/\delta)},
}
in which we have made use of the definition of the effective dimension $\widetilde{d}_b = \log \det  \left(\frac{\kappa_\mu}{\lambda}  \mathbf{H}_b + \mathbf{I}\right)$.
This completes the proof.
\end{proof}

Finally, plugging \cref{lemma:conf:ellip:final:binary} into \cref{eq:upper:bound:diff:theta:f:theta:t:binary} allows us to show that \cref{lemma:conf:ellip} remains valid:
\eq{
	\sqrt{m} \norm{\theta_{f} - \theta_{t}}_{V_{t-1}} \leq  \beta_T + B \sqrt{\frac{\lambda}{\kappa_\mu}} + 1, \qquad \forall t\in[T].
	\label{eq:lemma:6:reproduced}
}

Now we can prove the confidence ellipsoid in \cref{thm:confBound:binary}:
\confBoundBinary*
\begin{proof}

Denote $\phi(x) = g(x;\theta_0)$.
Recall that \cref{lemma:linear:utility:function} tells us that $f(x) = \langle g(x;\theta_0), \theta_f - \theta_0 \rangle=\langle \phi(x), \theta_f - \theta_0 \rangle$ for all $x\in\mathcal{X}_t,t\in[T]$.
To begin with, for all $x\in\mathcal{X}_t,t\in[T]$ we have that
\begin{equation}
\begin{split}
|&f(x) - \langle \phi(x), \theta_t - \theta_0 \rangle| = |\langle \phi(x), \theta_f - \theta_0 \rangle - \langle \phi(x), \theta_t - \theta_0 \rangle|\\
&= |\langle \phi(x), \theta_f - \theta_t \rangle  \rangle|\\
&= |\langle \frac{1}{\sqrt{m}} \phi(x), \sqrt{m}\left( \theta_f - \theta_t\right) \rangle  |\\
&\leq \norm{\frac{1}{\sqrt{m}}\phi(x)}_{V_{t-1}^{-1}} \sqrt{m}\norm{\theta_f - \theta_t}_{V_{t-1}}\\
&\leq \norm{\frac{1}{\sqrt{m}}\phi(x)}_{V_{t-1}^{-1}} \left( \beta_T + B \sqrt{\frac{\lambda}{\kappa_\mu}} + 1 \right),
\end{split}
\label{eq:diff:between:func:and:linear:approx:dueling:binary}
\end{equation}
in which we have used \cref{lemma:conf:ellip} (reproduced in \cref{eq:lemma:6:reproduced}) in the last inequality.
Now making use of the equation above and \cref{lemma:approx:error:linear:nn}, we have that 
\als{
	|&f(x) - h(x;\theta_t) | \\
	&= | f(x) - \langle \phi(x), \theta_t - \theta_0 \rangle + \langle \phi(x), \theta_t - \theta_0 \rangle - h(x;\theta_t)  |\\
	&\leq |f(x) - \langle \phi(x), \theta_t - \theta_0 \rangle  | + |\langle \phi(x), \theta_t - \theta_0 \rangle - h(x;\theta_t) |\\
	&\leq \norm{\frac{1}{\sqrt{m}}\phi(x)}_{V_{t-1}^{-1}} \left( \beta_T + B \sqrt{\frac{\lambda}{\kappa_\mu}} + 1 \right) + \varepsilon'_{m,t},
}
in which the last inequality follows from \cref{eq:diff:between:func:and:linear:approx:dueling:binary} and \cref{lemma:approx:error:linear:nn}.

Recall that we have defined in the paper $\sigma_{t-1}^2(x) \doteq \frac{\lambda}{\kappa_\mu} \norm{\frac{g(x;\theta_0)}{\sqrt{m}}}^2_{V_{t-1}^{-1}}$, and $\nu_T \doteq (\beta_T + B \sqrt{\lambda / \kappa_\mu} + 1) \sqrt{\kappa_\mu / \lambda}$ in which $\beta_T \doteq \frac{1}{\kappa_\mu} \sqrt{ \widetilde{d}_b + 2\log(1/\delta)}$.
This completes the proof of \cref{thm:confBound:binary}.
\end{proof}

\subsubsection{Regret Analysis}
Now we can analyze the instantaneous regret:
\als{
	r_t &= f(x_t^*) - f(x_t)\\
	&\leq h(x_t^*;\theta_t) + \nu_T\sigma_{t-1}(x^*_t) + \epsilon'_{m,t} - h(x_t;\theta_t) + \nu_T\sigma_{t-1}(x_t) + \varepsilon'_{m,t}\\
	&\leq h(x_t;\theta_t) + \nu_T\sigma_{t-1}(x_t) - h(x_t;\theta_t) + \nu_T\sigma_{t-1}(x_t) + 2 \varepsilon'_{m,t}\\
	&= 2 \nu_T\sigma_{t-1}(x_t) + 2 \varepsilon'_{m,t}.
}

Next, the subsequent analysis in \cref{app:proof:regret:analysis} follows by replacing $\sigma_{t-1}(x_{t,1},x_{t,2})$ by $\sigma_{t-1}(x_t)$, which allows us to show that
\eqs{
	\sum^T_{t=1} \sigma_{t-1}^2(x_{t}) \leq 2 c_0 \frac{\lambda}{\kappa_\mu} \widetilde{d}_b.
}

Finally, we can derive an upper bound on the cumulative regret:
\als{
	\Regret_T &= \sum^T_{t=1} r_t \leq \sum^T_{t=1} \left(2 \nu_T\sigma_{t-1}(x_t) + 2 \epsilon'_{m,t}\right)\\
	&\leq 2 \sum^T_{t=1} \nu_T\sigma_{t-1}(x_t) + 2 \sum^T_{t=1} \epsilon'_{m,t}\\
	&\leq 2 \nu_T \sqrt{ T \sum^T_{t=1} \sigma^2_{t-1}(x_t) } + 2 T \epsilon'_{m,T}\\
	&\leq 2 \nu_T \sqrt{ T  2 c_0 \frac{\lambda}{\kappa_\mu} \widetilde{d}_b  } + 2 T \epsilon'_{m,T}.
}

Again it can be easily verified that as long as the conditions on $m$ specified in \cref{eq:conditions:on:m} are satisfied (i.e., as long as the NN is wide enough), we have that $2T\varepsilon'_{m,T} \leq 1$.
Also recall that $\beta_T = \widetilde{\mathcal{O}}(\frac{1}{\kappa_\mu}\sqrt{\widetilde{d}_b})$, and $\nu_T \doteq (\beta_T + B \sqrt{\lambda / \kappa_\mu} + 1) \sqrt{\kappa_\mu / \lambda} = \widetilde{O}(\frac{1}{\sqrt{\kappa_\mu}}\sqrt{\widetilde{d}_b} + B + \sqrt{\kappa_\mu / \lambda})$.
This allows us to simplify the regret upper bound to be
\als{
	\Regret_T &\leq  2 \nu_T \sqrt{ T  2 c_0 \frac{\lambda}{\kappa_\mu} \widetilde{d}_b  } + 1 \\
	&= \widetilde{\mathcal{O}}\left(\left(\frac{\sqrt{\widetilde{d}_b}}{\kappa_\mu} + B\sqrt{\frac{\lambda}{\kappa_\mu}}\right) \sqrt{\widetilde{d}_b T}\right).
}

The proof for the Thompson sampling algorithm follows a similar spirit, which we omit here.

\subsection{Experiments for Neural GLM Bandits}

\begin{figure}[H]
	\centering
    \subfloat[$10(x^\top \theta)^2$]{\label{fig:Square_glm}
		\includegraphics[width=0.24\linewidth]{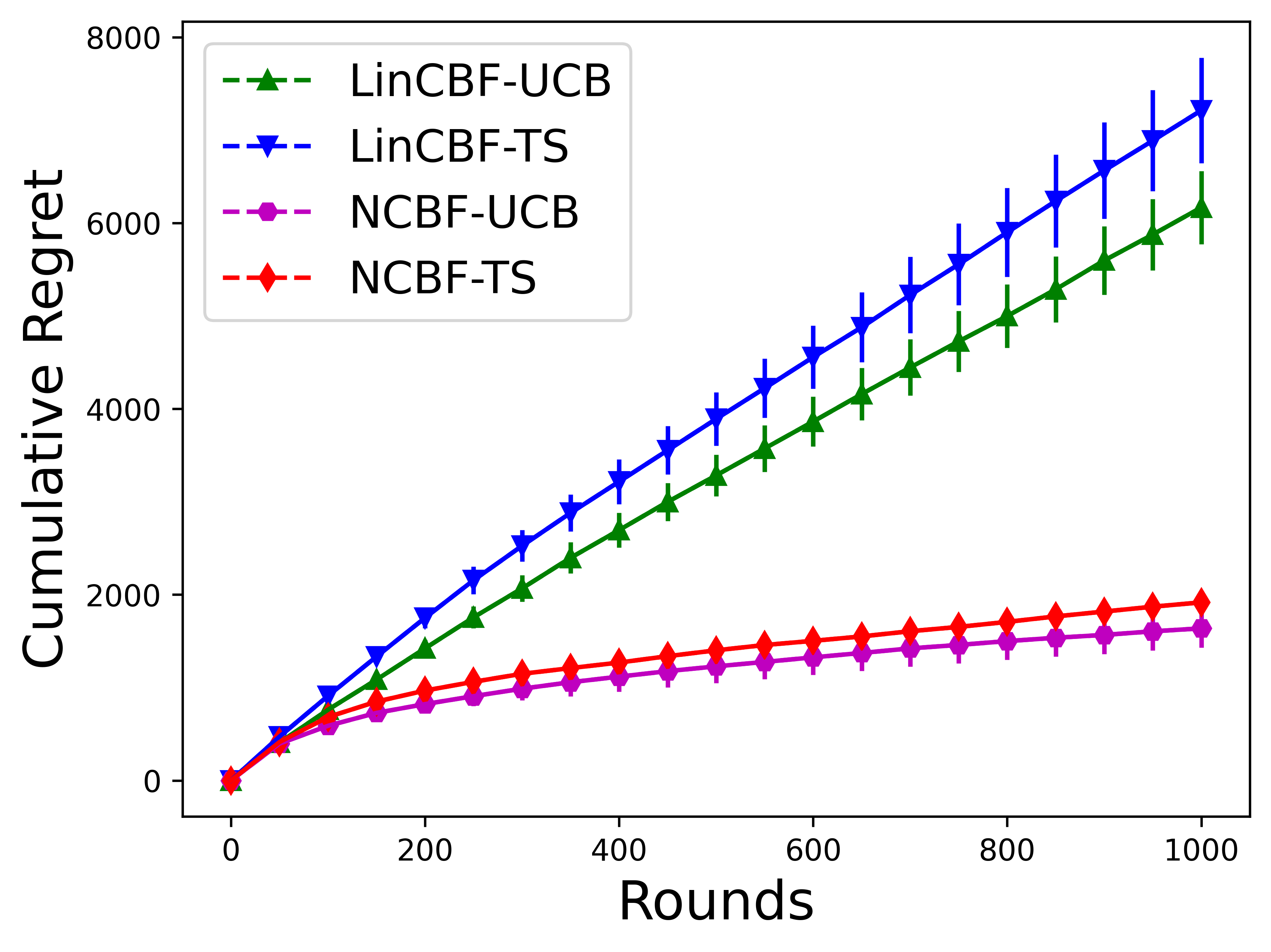}}
    \subfloat[$40(x^\top \theta)^2$]{\label{fig:Square40_glm}
		\includegraphics[width=0.24\linewidth]{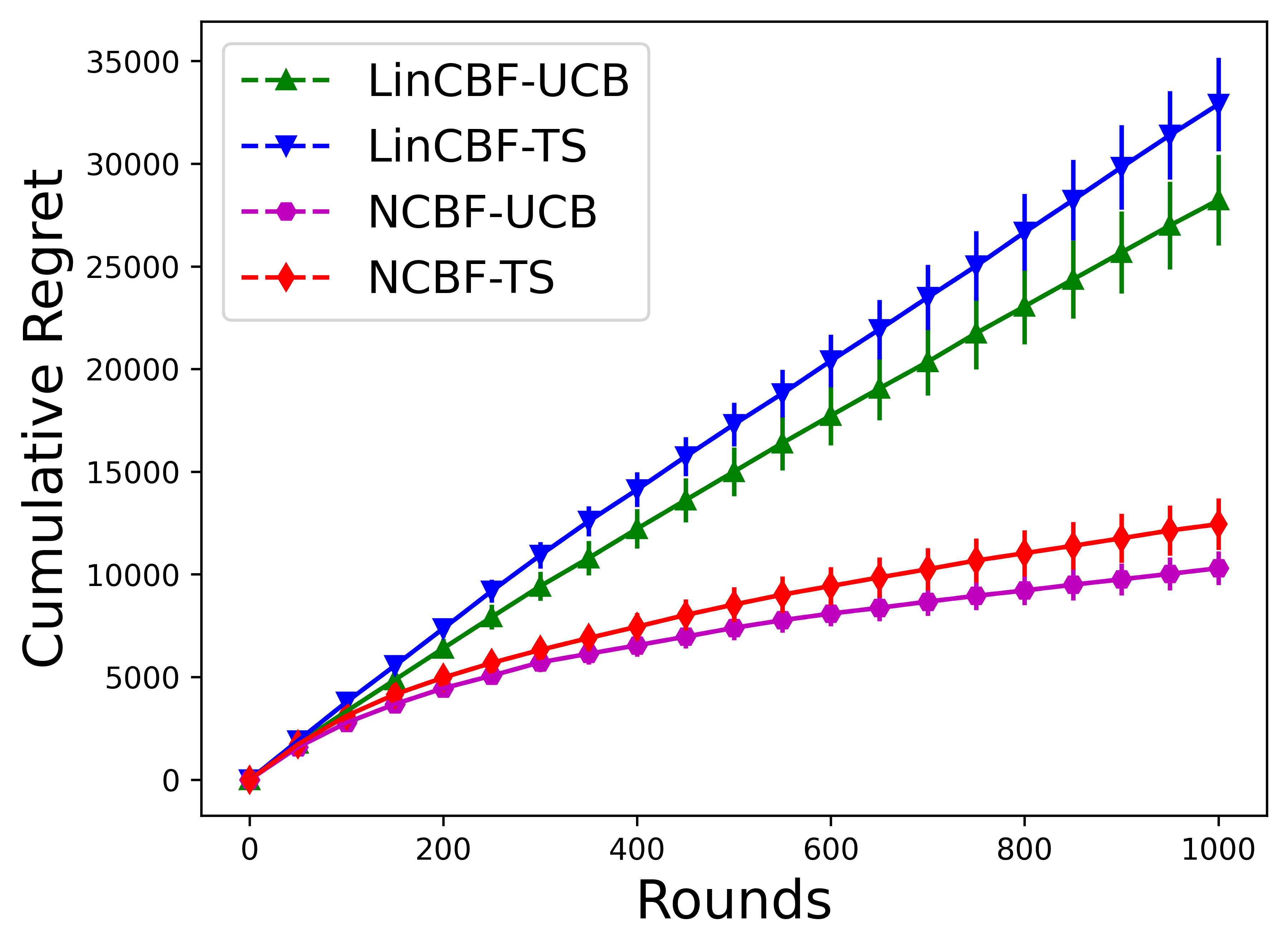}}
	\subfloat[$cos(3x^\top \theta)$]{\label{fig:cosine_glm}
		\includegraphics[width=0.24\linewidth]{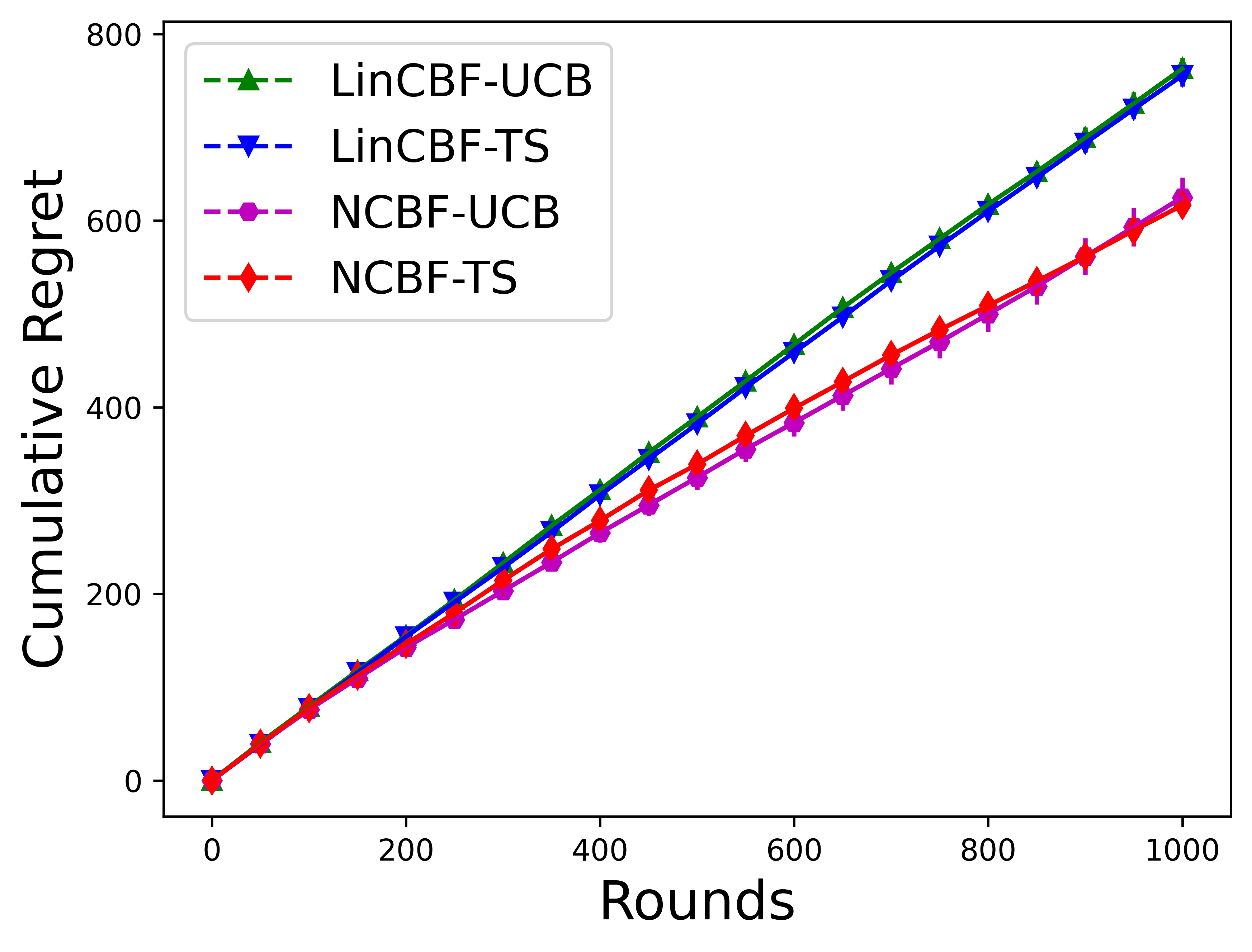}}
	\subfloat[$10cos(x^\top \theta)$]{\label{fig:cosine10_glm}
		\includegraphics[width=0.24\linewidth]{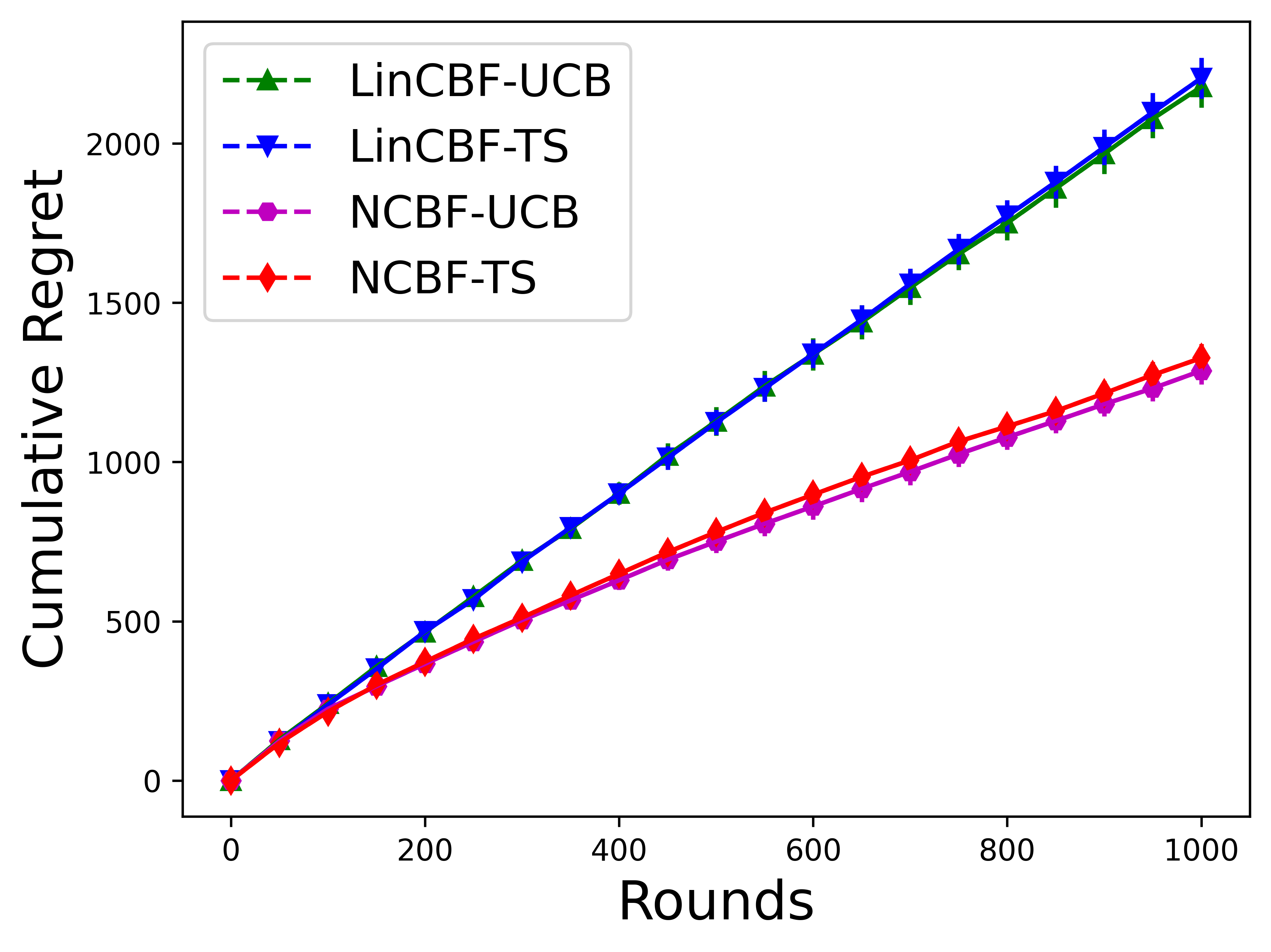}}
	\caption{
       Comparing cumulative regret of GLM bandits algorithms for non-linear reward functions.
	}
	\label{fig:compareAlgosGLM}
\end{figure}

\begin{figure}[H]
	\centering
    \subfloat[Varying $d$ (UCB)]{\label{fig:glm_ucb_square_dim_avg}
		\includegraphics[width=0.24\linewidth]{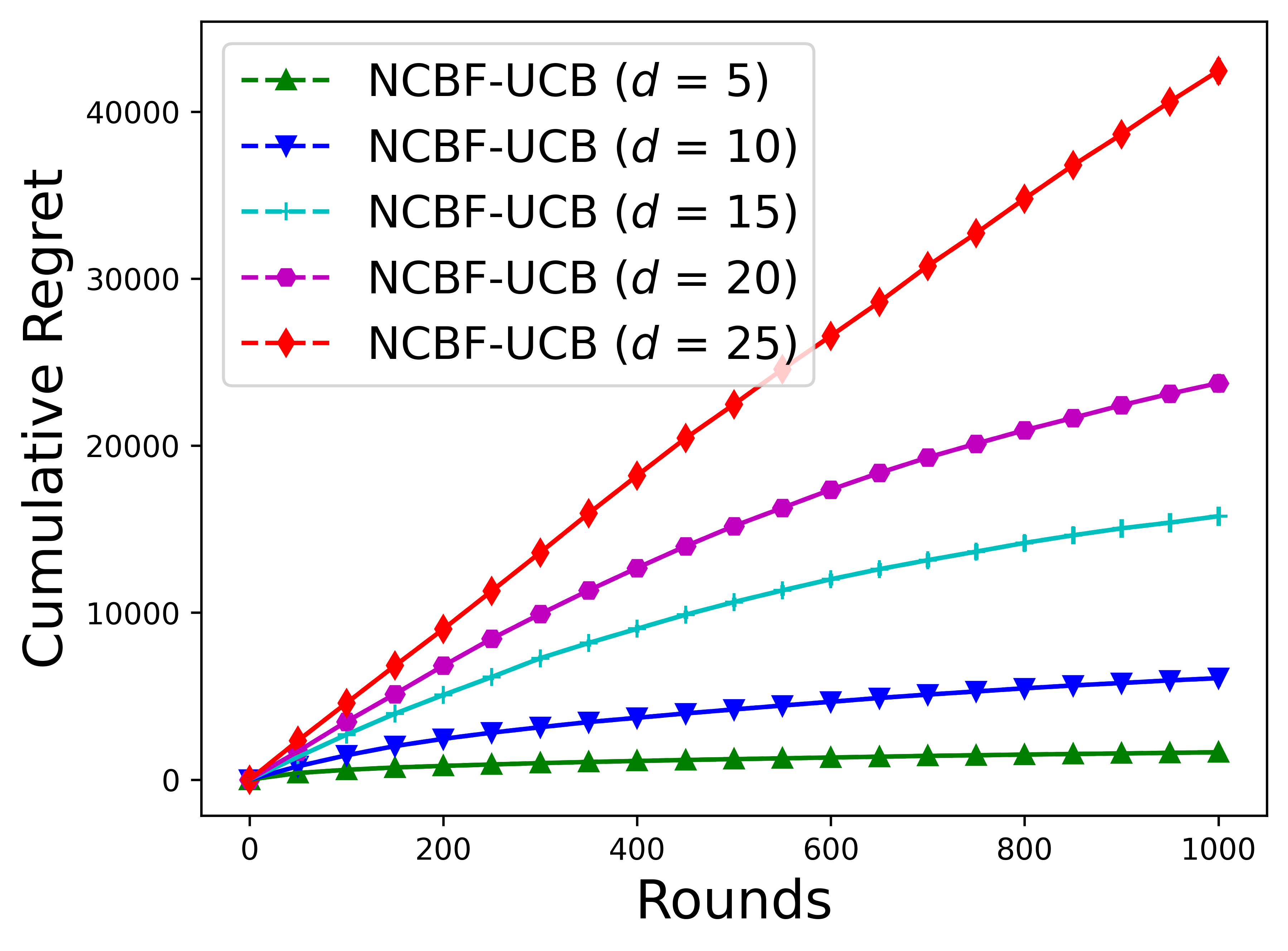}}
	\subfloat[Varying $K$  (UCB)]{\label{fig:glm_ucb_square_arms_weak}
		\includegraphics[width=0.24\linewidth]{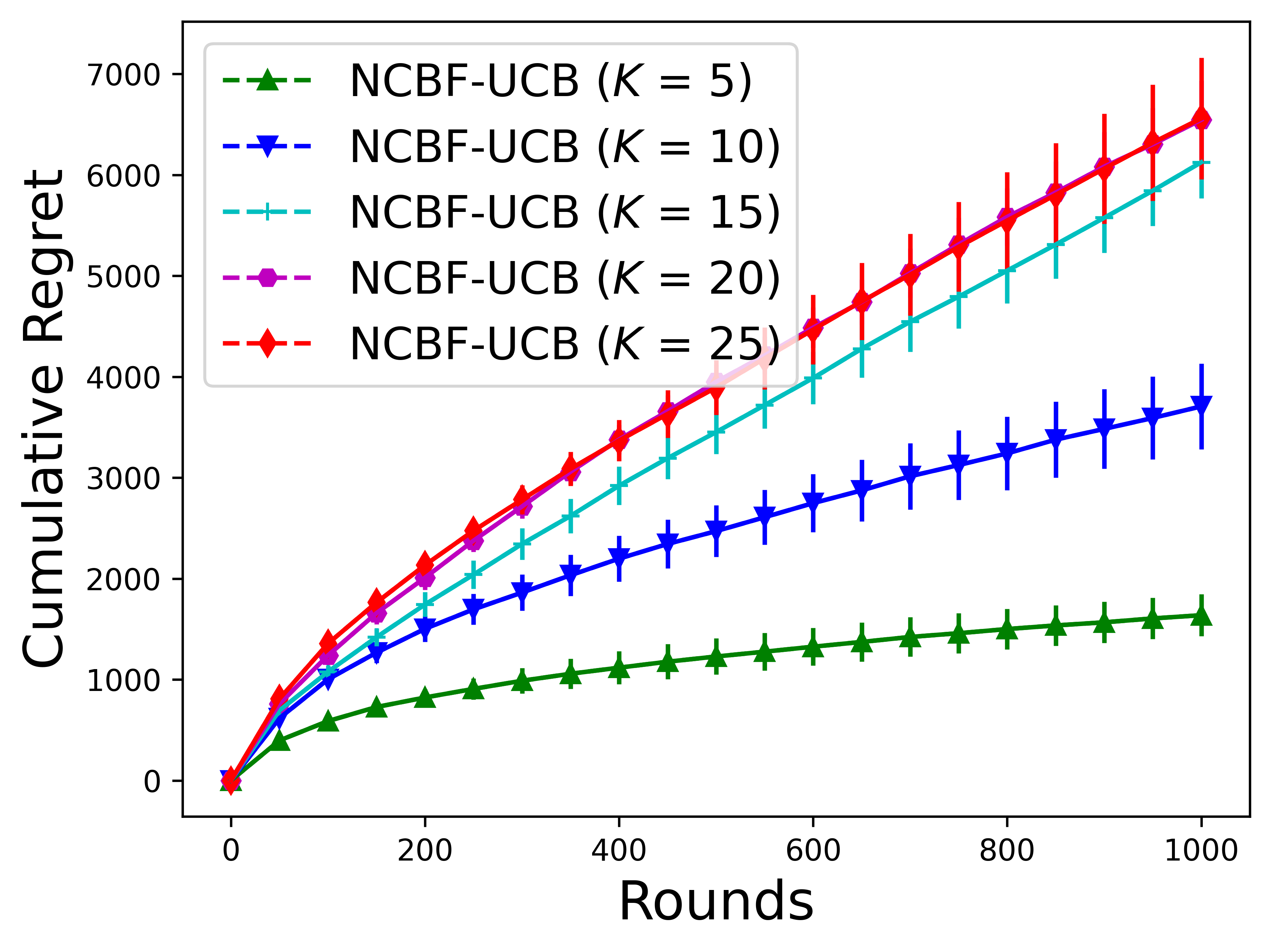}}
    \subfloat[Varying $d$  (TS)]{\label{fig:glm_ts_square_dim_avg}
		\includegraphics[width=0.24\linewidth]{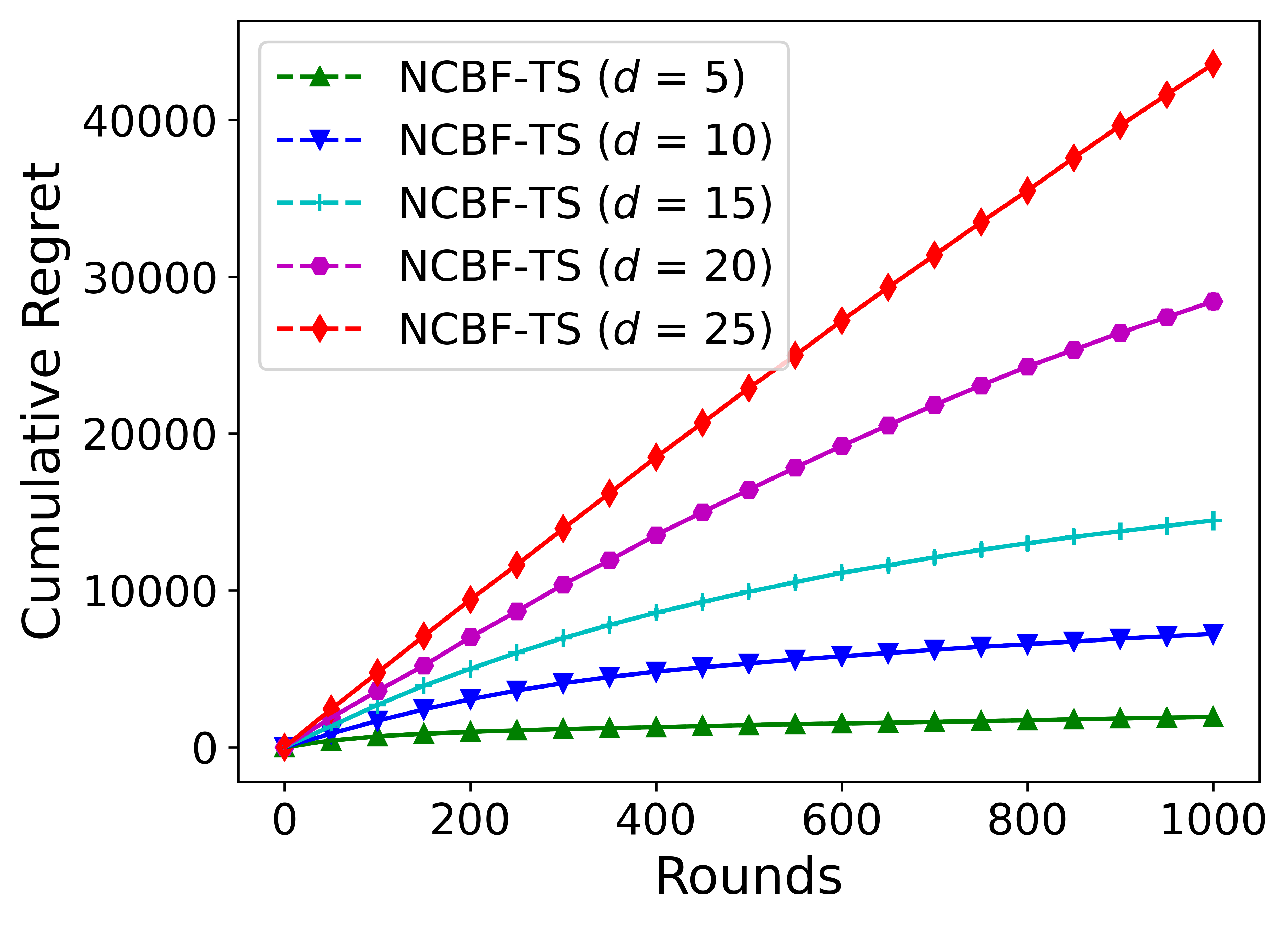}}
	\subfloat[Varying $K$  (TS)]{\label{fig:glm_ts_square_arms_weak}
		\includegraphics[width=0.24\linewidth]{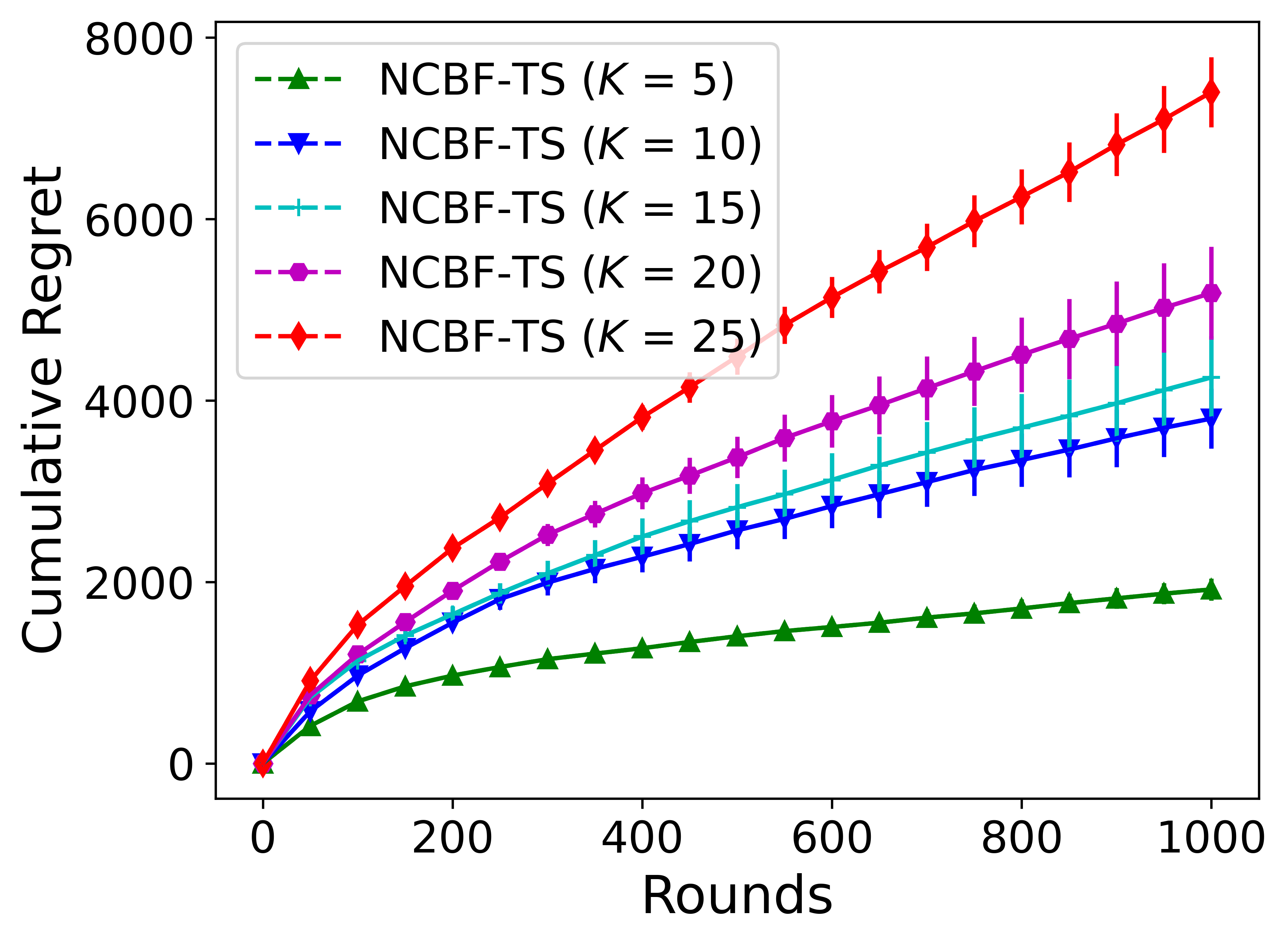}}
  
	\caption{
         Cumulative regret of \cref{alg:CNBF-UCB} and \textbf{NCBF-TS} vs. different number of arms $(K)$ and dimension of the context-arm feature vector $(d)$ for Square reward function $(i.e., 10(x^\top \theta)^2)$.
	}
	\label{fig:neuralglm _ablations}
\end{figure}

    \section{Theoretical Insights for Reinforcement Learning with Human Feedback}
    \label{sec:connections:with:rlhf}
    %!TEX root =  main.tex

Our algorithms and theoretical results can also provide insights on the celebrated \emph{reinforcement learning with human feedback} (RLHF) algorithm \citep{chaudhari2024rlhf}, which has been the most widely used method for the alignment of large language models (LLMs).
In RLHF, we are given a dataset of user preferences, in which every data point consists of a prompt and a pair of responses generated by the LLM, as well as a binary observation indicating which response is preferred by the user.
Following our notations in \cref{sec:problem}, the action $x_{t,1}$ (resp.~$x_{t,2}$) corresponds to the concatenation of the prompt and the first response (resp.~second response). 
Of note, RLHF is also based on the assumption that the user preference over a pair of responses is governed by the BTL model (\cref{sec:problem}).
That is, the binary observation $y_t$ is sampled from a Bernoulli distribution, in which the probability that the first response is preferred over the second response is given by $\Prob{x_{t,1} \succ x_{t,2}} = \mu\Lp f(x_{t,1}) - f(x_{t,2}) \Rp$.
Here $f$ is often referred to as the reward function, which is equivalent to the latent utility function $f$ in our setting (\cref{sec:problem}).

Typically, RLHF consists of two steps: \emph{(a)} learning a reward model using the dataset of user preferences and \emph{(b)} fine-tuning the LLM to maximize the learned reward model using reinforcement learning.
In step \emph{(a)}, same as our algorithms, \emph{RLHF also uses an NN $h$} (which takes as input the embedding from a pre-trained LLM) \emph{to learn the reward model by minimizing the loss function \eqref{eq:customized:loss:function}}.
The accuracy of the learned reward model is crucial for the success of RLHF \citep{chaudhari2024rlhf}.
Importantly, \emph{our \cref{thm:confBound} provides a theoretical guarantee on the quality of the learned reward model $h$}, i.e., an upper bound on the estimation error of the estimated reward differences between any pair of responses.
Therefore, {our \cref{thm:confBound} provides a theoretically principled measure of the accuracy of the learned reward model in RLHF, which can potentially be used to evaluate the quality of the learned reward model}.

In addition, some recent works have proposed the paradigm of online/iterative RLHF \citep{bai2022training,menick2022teaching,arXiv23_mehta2023sample,arXiv24_ji2024reinforcement,arXiv24_das2024provably} to further improve the alignment of LLMs. In online RLHF, the RLHF procedure is repeated multiple times.
Specifically, after an LLM is fine-tuned to maximize the learned reward model, it is then used to generate pairs of responses to be used to query the user for preference feedback; then, the newly collected preference data is added to the preference dataset to be used to train a new reward model, which is again used to fine-tune the LLM.
In this case, as the alignment of the LLM is improved after every round, the newly generated responses by the improved LLM are expected to achieve progressively higher reward values, which have been shown to lead to better alignment of LLMs \citep{bai2022training,menick2022teaching,arXiv23_mehta2023sample,arXiv24_ji2024reinforcement,arXiv24_das2024provably}.
In every round, we can let the LLM generate a large number of responses (i.e., the actions in our setting, see \cref{sec:problem}), from which \emph{we can use our algorithms to select two responses $x_{t,1}$ and $x_{t,2}$ to be shown to the user for preference feedback}.
In addition, our algorithm can also potentially be used to select the prompts shown to the user, which correspond to the contexts in our problem setting (\cref{sec:problem}).
Our theoretical results guarantee that our algorithms can help select responses with high reward values (\cref{theorem:regret:bound:ucb} and \cref{theorem:regret:bound:ts}).
Therefore, \emph{our algorithms can be used to improve the efficiency of online RLHF}.

    \section{Response Optimization in Large Language Models (LLMs)}
    \label{sec:llm_experiment}
    %!TEX root =  main.tex

As demonstrated in recent works \citep{bai2022training,menick2022teaching,arXiv23_mehta2023sample, arXiv24_das2024provably, arXiv24_ji2024reinforcement}, using responses with higher reward values can significantly improve the alignment of LLMs, especially in online/iterative RLHF where the responses generated by the initial LLM tend to have low rewards. Motivated by this, we use our proposed algorithms, \ref{alg:NDB-UCB} and \textbf{NDB-TS}, to select the responses with higher estimated reward values for a given prompt. In our experiment, we use the following setting from \cite{arXiv24_lin2024prompt}: In each iteration, a user provides a prompt (context in our setting) to ChatGPT, and then ChatGPT generates $50$ responses (arms in our setting). We use Sentence-BER~\citep{EMNLP19_reimers2019sentence} to get embedding representation for each prompt-response pair. Specifically, for iteration $t$, we use $x_{t, i} = (c_t, a_{t, i})$ to denote the embedding representation generated by Sentence-BERT, where $c_t$ denotes the prompt and $a_{t, i}$ represents the $i$-th response. Of note, we adopt a reward model that is pre-trained using the Anthropic Helpfulness and Harmlessness datasets \citep{bai2022training}. For every prompt-response pair $x_{t, i}$, we use the output from this pre-trained reward model as the value of the unknown reward function $f(x_{t, i})$. This approach allows us to simulate the preference feedback in this experiment. Note that our approach to simulating preference feedback is common in the literature~\citep{arXiv24_dwaracherla2024efficient}.
\begin{wrapfigure}[13]{R}{0.4\textwidth}
\begin{minipage}{0.4\textwidth}
\vspace{-8mm}
\begin{figure}[H]
    \centering
    \includegraphics[width=0.76\linewidth]{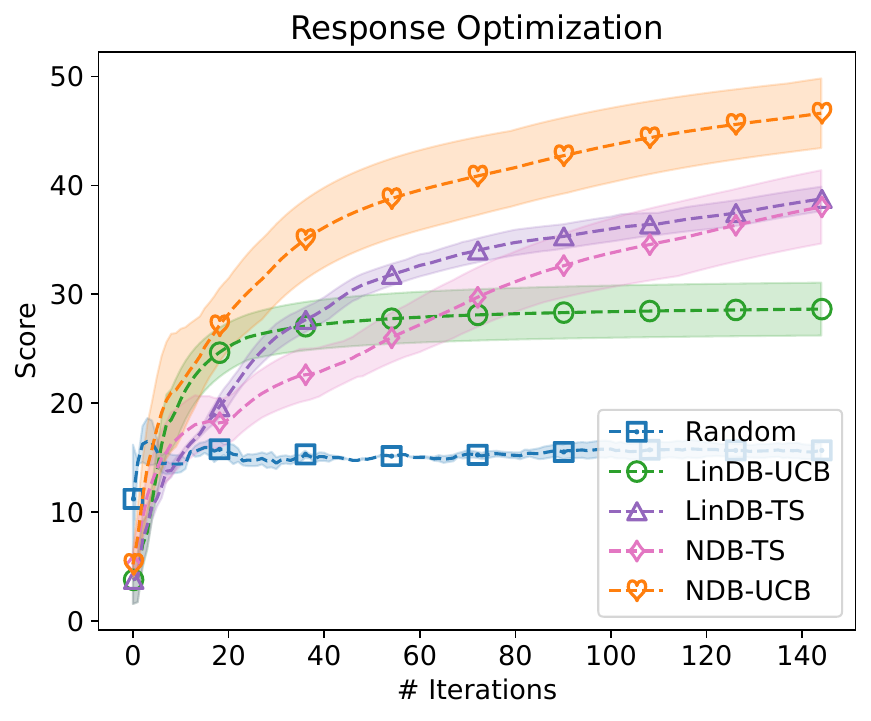}
    \caption{Scores of different algorithms for response optimization.}
    \label{fig:response-selection}
\end{figure}
\end{minipage}
\end{wrapfigure}

The embeddings $x_{t, i}$ are used as input to a trained neural network (NN) that estimates the latent reward model. Our proposed algorithms compute the reward estimate for each prompt-response pair to select the first response (using \cref{eq:choose:arm:1}) and then use it with the optimistic reward estimates of other responses to select the second response (using either \cref{eq:choose:arm:2} or TS-based selection criterion for the second response). This way, our algorithms will ensure the selection of responses with consistently high reward estimates. As shown in \cref{fig:response-selection}, we compare the score (calculated as the sum of rewards for selected responses divided by the number of iterations) after $T$ iterations (multiple of 10) of \ref{alg:NDB-UCB} and \textbf{NDB-TS} and against two baselines: Linear variants (which uses a linear model to estimate latent reward model instead of an NN): Lin-UCB and Lin-TS, and Random (where two responses are selected randomly). The results clearly show that our algorithms achieve the highest score (especially \ref{alg:NDB-UCB}), demonstrating the superior performance of our proposed algorithm in identifying high-reward responses.

The experimental results shown in \cref{fig:response-selection} corroborate our theoretical results, which guarantee the selection of responses with high reward values, as also discussed in \cref{ssec:rlhf} and \cref{sec:connections:with:rlhf}. This result further shows that our proposed algorithms can be used to improve the quality of the human preference dataset, which could then be used to improve the efficiency of online RLHF/DPO for LLM alignment. We leave it to future works to verify that the preference dataset with high-reward responses collected using our algorithms can indeed lead to better LLM alignment, which is beyond the scope of the current work (since the focus of this work is primarily theoretical) and will require significantly more extensive experiments. \\

    \hrule height 0.5mm
	
\end{document}